\theoremstyle{plain}
\newtheorem{theorem}{Theorem}[section]
\newtheorem{proposition}[theorem]{Proposition}
\theoremstyle{definition}
\newtheorem{definition}[theorem]{Definition}
\theoremstyle{remark}
\newtheorem{remark}[theorem]{Remark}
\icmltitlerunning{Nonparametric Involutive Markov Chain Monte Carlo}
\newif\ifdraft
\newif\ifapxAppended
\tikzstyle{state}=[draw, thick, inner sep=0.2em]
\tikzstyle{vector}=[draw, fill=blue!20, minimum height=10em, minimum width=1.3em]
\tikzstyle{elem}=[draw, fill=blue!20, minimum height=1.3em, minimum width=1.3em]
\newcommand\lochanged[1]{{\color{red!60!yellow} #1}}
\newcommand\lochanged[1]{}
\Crefname{theorem}{Thm.}{Thm.}
\Crefname{corollary}{Cor.}{Corollary}
\crefname{proposition}{Prop.}{Propositions}
\Crefname{claim}{Claim}{Claims}
\Crefname{definition}{Def.}{Definitions}
\Crefname{fact}{Fact}{Facts}
\Crefname{conjecture}{Conj.}{Conjectures}
\Crefname{example}{Ex.}{Ex.}
\Crefname{remark}{Rem.}{Remarks}
\Crefname{convention}{Convention}{Conventions}
\Crefname{lemma}{Lem.}{Lemmas}
\Crefname{assumption}{Ass.}{Ass.}
\Crefname{section}{Sec.}{Sec.}
\Crefname{appendix}{App.}{App.}
\Crefname{figure}{Fig.}{Fig.}
\Crefname{algorithm}{Alg.}{Alg.}
\Crefname{listing}{Code}{Codes}
\Crefname{lstlisting}{Listing}{Listings}
\Crefname{trick}{Trick}{Tricks}
\newtheorem{example}{Example}
\Crefname{step}{Step}{Step}
\newcounter{step}
\newcommand*\step[1][]{\refstepcounter{step}~}
\Crefname{hstep}{Step}{Step}
\newcounter{hstep}
\newcommand*\hstep[1][]{\refstepcounter{hstep}~}
\Crefname{vass}{V\!}{V\!}
\newcounter{vass}
\newcommand*\vass[1][]{\refstepcounter{vass} \noindent (V\thevass) {#1}}
\Crefname{hass}{H\!}{H\!}
\newcounter{hass}
\newcommand*\hass[1][]{\refstepcounter{hass} \noindent (H\thehass) {#1}}
\Crefname{cass}{C\!}{C\!}
\newcounter{cass}
\newcommand{\allvass}{\cref{vass: integrable tree,vass: almost surely terminating tree,vass: partial block diagonal inv}}
\newcommand{\allhass}{\cref{hass: integrable tree,hass: almost surely terminating tree,hass: partial block diagonal inv}}
\newcommand{\Nat}{\mathsf{\mathbb{N}}}
\newcommand{\Real}{\mathbb{R}}
\newcommand{\Bool}{\mathbb{2}}
\newcommand{\Dyadic}{\mathbb{D}}
\newcommand{\pReal}{[0,\infty)}
\newcommand{\Borel}{\mathcal{B}}
\newcommand{\dif}{\textrm{d}}
\newcommand{\expandint}[4]{\int_{#1}\ {#2}\ {#3}(\dif{#4})}
\newcommand{\shortint}[3]{\int_{#1}\ {#2}\ \dif{#3}}
\newcommand{\concat}{\mathbin{+\mkern-8mu+}}
\newcommand{\partialto}{\rightharpoonup}
\newcommand{\inv}[1]{{{#1}^{-1}}}
\newcommand{\abs}[1]{|{#1}|}
\newcommand{\id}{\mathsf{id}}
\newcommand{\proj}[2]{\mathsf{take}_{{#2}}}
\newcommand{\take}[1]{\mathsf{take}_{{#1}}}
\newcommand{\drop}[1]{\mathsf{drop}_{{#1}}}
\newcommand{\placeholder}{\cdot}
\newcommand{\domain}[1]{\mathsf{Dom}(#1)}
\newcommand{\support}[1]{\text{Supp}(#1)}
\newcommand{\nsupport}[2]{\text{Supp}^{#2}(#1)}
\newcommand{\anbr}[1]{\langle #1\rangle}
\newcommand{\set}[1]{\{#1\}}
\newcommand{\floor}[1]{\lfloor {#1}\rfloor}
\newcommand{\true}{\mathsf{T}}
\newcommand{\false}{\mathsf{F}}
\newcommand{\kernelto}{\leadsto}
\newcommand{\measure}[1]{\mu_{#1}}
\newcommand{\pdf}[1]{\mathsf{pdf}_{#1}}
\newcommand{\Gau}{\mathcal{N}}
\newcommand{\pdfGau}{\varphi}
\newcommand{\Uni}{\mathcal{U}}
\newcommand{\Bern}{\mathsf{Bern}}
\newcommand{\algebra}[1]{\Sigma_{#1}}
\newcommand{\powerset}{\mathcal{P}}
\renewcommand\vec[1]{\boldsymbol{#1}}
\newcommand{\grad}[1]{{\nabla #1}}
\newcommand{\idmat}{\boldsymbol{I}}
\newcommand{\range}[3]{{#1}^{{#2}..{#3}}}
\newcommand{\seqindex}[2]{{#1}^{#2}}
\newcommand{\len}[1]{|{#1}|}
\newcommand{\states}{\mathbb{S}}
\newcommand{\state}{\boldsymbol{s}}
\newcommand{\stateij}[3]{S_{{#1}{#2}}^{({#3})}}
\newcommand{\stateset}{S}
\newcommand{\smeasure}{\mu_{\states}}
\newcommand{\validstates}{\states^{\textrm{valid}}}
\newcommand{\transkernel}[1]{T_{#1}}
\newcommand{\marg}{\mathfrak{m}}
\newcommand{\sdist}{\pi}
\newcommand{\spdf}{\zeta}
\newcommand{\tdist}{\nu}
\newcommand{\tpdf}{\rho}
\newcommand{\tree}{w}
\newcommand{\trunc}[1]{\extree}
\newcommand{\extree}{\hat{\tree}}
\newcommand{\invo}{{\Phi}}
\newcommand{\ninvo}[1]{{\Phi^{(#1)}}}
\newcommand{\ninvoij}[3]{\Phi_{{#1}{#2}}^{({#3})}}
\newcommand{\samspace}{\mathbb{\Omega}}
\newcommand{\entspace}{\mathbb{X}}
\newcommand{\entstockpdf}{\varphi_\entspace}
\newcommand{\instances}[1]{\mathsf{instance}(#1)}
\newcommand{\parspace}{\mathbb{X}}
\newcommand{\parstockpdf}{\varphi_\parspace}
\newcommand{\auxspace}{\mathbb{Y}}
\newcommand{\auxstockpdf}{\varphi_\auxspace}
\newcommand{\nauxkernel}[1]{{K^{(#1)}}}
\newcommand{\nauxkernelpdf}[1]{\mathsf{pdf}{K^{(#1)}}}
\newcommand{\exauxkernel}{{\hat{K}}}
\newcommand{\exauxpdf}{\mathsf{pdf}{\hat{K}}}
\newcommand{\nstockpdf}[1]{\varphi_{\entspace^{#1}\times\auxspace^{#1}}}
\newcommand{\ikernel}{{K}}
\newcommand{\iinv}{{\Phi}}
\newcommand{\ikpdf}{{\mathsf{pdf}}\ikernel}
\newcommand{\invoass}{projection commutation property}
\newcommand{\accept}{\alpha}
\newcommand{\enta}{\vec{x}}
\newcommand{\entb}{\vec{y}}
\newcommand{\entc}{\vec{z}}
\newcommand{\entelem}{e}
\newcommand{\entseta}{X}
\newcommand{\auxa}{\vec{v}}
\newcommand{\auxb}{\vec{u}}
\newcommand{\auxc}{\vec{w}}
\newcommand{\auxelem}{p}
\newcommand{\auxseta}{V}
\newcommand{\auxsetb}{U}
\newcommand{\imcmca}{\iota}
\newcommand{\mixkernel}{K_M}
\newcommand{\mixpdf}{{\mathsf{pdf}K_M}}
\newcommand{\mixa}{m}
\newcommand{\dirspace}{\mathbb{D}}
\newcommand{\dira}{d}
\newcommand{\leapfrog}{\boldsymbol{L}}
\newcommand{\data}{\mathcal{D}}
\newcommand{\samsp}{\mathbb{\Omega}}
\newcommand{\nstates}[1]{\mathbb{S}^{(#1)}}
\newcommand{\fij}[2]{f_{{#1}\to{#2}}}
\newcommand{\entsp}{{\mathbb{E}}}
\newcommand{\entpdf}[1]{\varphi_{\entsp^{#1}}}
\newcommand{\parsp}{{\mathbb{X}}}
\newcommand{\nparsp}[1]{{\parsp^{({#1})}}}
\newcommand{\nparpdf}[1]{\varphi_{\nparsp{#1}}}
\newcommand{\iparsp}{\iota_{\parsp}}
\newcommand{\auxsp}{{\mathbb{Y}}}
\newcommand{\nauxsp}[1]{{\auxsp^{({#1})}}}
\newcommand{\nauxpdf}[1]{\varphi_{\nauxsp{#1}}}
\newcommand{\iauxsp}{\iota_{\auxsp}}
\newcommand{\nkernel}[1]{K^{({#1})}}
\newcommand{\nkernelpdf}[1]{{\mathsf{pdf}K^{({#1})}}}
\newcommand{\posa}{\vec{q}}
\newcommand{\moma}{\vec{p}}
\newcommand{\slicefn}[1]{s^{({#1})}}
\newcommand{\nelem}[2]{{f^{({#1})}_{{#2}}}}
\newcommand{\mixsp}{{\entsp^{\alpha}}}
\newcommand{\nbij}[1]{f^{(#1)}}
\newcommand{\momflip}{M}
\newcommand{\momstep}{\phi^M}
\newcommand{\posstep}{\phi^P}
\tikzstyle{decision} = [diamond, draw, fill=blue!20,
\tikzstyle{block} = [rectangle, draw, fill=blue!20,
\tikzstyle{algblock} = [rectangle, thick, inner sep=25pt, rounded corners]
\tikzstyle{line} = [draw,-stealth]
\definecolor{oxfordblue}{RGB}{0, 33, 71}
\definecolor{teal}{RGB}{0, 128, 128}
\definecolor{midnightblue}{RGB}{25, 25, 112}
\definecolor{deepblue}{rgb}{0,0,0.5}
\definecolor{purple}{rgb}{0.5,0,0.5}
\definecolor{deepred}{rgb}{0.6,0,0}
\definecolor{deepgreen}{rgb}{0,0.5,0}
\definecolor{teal}{rgb}{0.0, 0.51, 0.5}
\definecolor{background}{rgb}{0.95,0.95,0.95}
\definecolor{orange-red}{rgb}{1.0, 0.27, 0.0}
\definecolor{phthaloblue}{rgb}{0.0, 0.06, 0.54}
\newenvironment{btHighlight}[1][]
{\begingroup\tikzset{bt@Highlight@par/.style={#1}}\begin{lrbox}{\@tempboxa}}
{\end{lrbox}\bt@HL@box[bt@Highlight@par]{\@tempboxa}\endgroup}
\newcommand\btHL[1][]{%
  \begin{btHighlight}[#1]\bgroup\aftergroup\bt@HL@endenv%
}
\def\bt@HL@endenv{%
  \end{btHighlight}%
  \egroup
}
\newcommand{\bt@HL@box}[2][]{%
  \tikz[#1]{%
    \pgfpathrectangle{\pgfpoint{1pt}{0pt}}{\pgfpoint{\wd #2}{\ht #2}}%
    \pgfusepath{use as bounding box}%
    \node[anchor=base west, fill=yellow!30,outer sep=0pt,inner xsep=1pt, inner ysep=0pt, rounded corners=3pt, minimum height=\ht\strutbox+1pt,#1]{\raisebox{1pt}{\strut}\strut\usebox{#2}};
  }%
}
\newcommand\pythonstyle{\lstset{
  language=Python,
  basicstyle=\ttfamily\footnotesize,
  keywordstyle=\em\color{purple},
  commentstyle=\color{white!55!black},
  showstringspaces=false,
  breaklines=true,
  basewidth=0.55em,
  backgroundcolor = \color{background},
  numberstyle=\ttfamily\color{white!55!black},
  numbers=left,
  numbersep=15pt,
  rulecolor=\color{white!90!black},
  xleftmargin=.05\textwidth, xrightmargin=.05\textwidth,
  frame=single,
  frameround=tttt,
  framesep=10pt,
  moredelim=**[is][\btHL]{`}{`},
  moredelim=**[is][{\btHL[fill=green!30,draw=red,dashed,thin]}]{@}{@},
  escapeinside={(@}{@)},
  morekeywords={
    append,range,min,len,sum,pop,from,log,domain,type,product,then,let,True,False,intersect
  },
  emph={[1]
    score,normal,uniform,sample,observe,coin,grad
  },
  emphstyle={[1]\em\ttfamily\color{teal}},
  emph={[2]
    leapfrog, momflip, nleapfrog,LAkernel,
    extend,NPint,
    w, auxkernel, pdfauxkernel, involution, absdetjacinv,
    pdfnormal,
    q, pdfq,
    mixkernel, pdfmixkernel,
    bijection, absdetjacbij,
    slice,
    f, absdetjacf,
    momflip, momstep, posstep,
    momflipslice, momslice, posslice,
    leapfrogslice,
  },
  emphstyle={[2]\ttfamily\color{orange-red}},
  emph={[3]
    pdfpar, pdfaux, indexX, indexY,
    dim,proj,support,instance
  },
  emphstyle={[3]\ttfamily\color{phthaloblue}},
  emph={[4]
    iMCMC,MixtureiMCMC,DirectioniMCMC,PersistentiMCMC,
    MH,HMC,
    NPHMC,NPHMCstep,validstate,HMCint,eNPHMC,eNPHMCstep,accept,supported,
    GenHMC,PersistentHMC,CorruptMom, GenHMCw,
    NPiMCMCstep, eNPiMCMC, eNPiMCMCstep, NPinv, fill, empty, perm,
    AuxUpdate,extree,expdfaux,
    iGMM, NPiMCMC, NPMH, MixtureNPiMCMC, DirectionNPiMCMC, PersistentNPiMCMC,
    MultistepNPiMCMC,
    MixtureMSNPiMCMC,
    DirectionMSNPiMCMC,
    PersistentMSNPiMCMC,
    NPHMC, NPDHMC,
    NPHMCwPersistent,CorruptMom,PersistMom, HMCw,
    NPLookAheadHMC,ExtraLeapfrog,
    LiftedNPMH,NPMHwP,
  },
  emphstyle={[4]\ttfamily\color{deepgreen}},
  emph={[5]
    mixauxkernel, mixinvolution, mixindexX, mixindexY, mixproj,pdfmixauxkernel,
    mixf, mixslice, absdetjacmixf,
    dirauxkernel, dirinvolution, dirindexX, dirindexY, dirproj,pdfdirauxkernel,
    dirf, dirslice, absdetjacdirf, dirL,
    perauxkernel, perinvolution, perindexX, perindexY, perproj,pdfperauxkernel,
    perf, perslice, absdetjacperf, perL, flipdir,
  },
  emphstyle={[5]\ttfamily\color{deepred}},
}}
\newcommand\codeinline[1]{\colorbox{background}{\pythonstyle\lstinline!#1!}}
\newcommand\codein[1]{\colorbox{background}{\pythonstyle\lstinline!#1!}}
\newcommand{\terms}{\Lambda}
\newcommand{\closedterms}{\Lambda^0}
\newcommand{\closedvalues}{\Lambda^0_v}
\newcommand{\sk}{\mathsf{SK}}
\newcommand{\primitives}{\mathcal{F}}
\newcommand{\pcf}[1]{\underline{#1}}
\newcommand{\pcfif}[3]{\mathsf{if}(#1, #2, #3)}
\newcommand{\normal}{\mathsf{normal}}
\newcommand{\coin}{\mathsf{coin}}
\newcommand{\score}[1]{\mathsf{score}(#1)}
\newcommand{\Y}[1]{\mathsf{Y}{#1}}
\newcommand{\freevar}[1]{\mathsf{FV}(#1)}
\newcommand{\terma}{M}
\newcommand{\termb}{N}
\newcommand{\termc}{L}
\newcommand{\reala}{r}
\newcommand{\realb}{q}
\newcommand{\realc}{p}
\newcommand{\realveca}{\vec{r}}
\newcommand{\realvecb}{\vec{q}}
\newcommand{\realset}{R}
\newcommand{\boola}{a}
\newcommand{\boolb}{b}
\newcommand{\boolveca}{\vec{i}}
\newcommand{\boolvecb}{\vec{j}}
\newcommand{\boolset}{B}
\newcommand{\constant}{c}
\newcommand{\funca}{f}
\newcommand{\funcb}{g}
\newcommand{\funcc}{h}
\newcommand{\tyreal}{\mathsf{R}}
\newcommand{\tybool}{\mathsf{B}}
\newcommand{\tyarrow}{\Rightarrow}
\newcommand{\type}[1]{\mathsf{Type}(#1)}
\newcommand{\typair}[2]{\mathsf{Pair}(#1,#2)}
\newcommand{\tylist}[1]{\mathsf{List}(#1)}
\newcommand{\typea}{\sigma}
\newcommand{\typeb}{\tau}
\newcommand{\groundspace}{G}
\newcommand{\valuea}{V}
\newcommand{\redexa}{R}
\newcommand{\contra}{\Delta}
\newcommand{\evalcon}{E}
\newcommand{\config}[3]{\anbr{#1,#2,#3}}
\newcommand{\red}{\longrightarrow}
\newcommand{\emptytrace}{[]}
\newcommand{\Fail}{\mathsf{fail}}
\newcommand{\redplus}{\red^+}
\newcommand{\redstar}{\red^*}
\newcommand{\traces}{\mathbb{T}}
\newcommand{\maxtraces}{\mathbb{T}_{\mathsf{max}}}
\newcommand{\tertraces}{\mathbb{T}_{\mathsf{ter}}}
\newcommand{\tmeasure}{\measure{\traces}}
\newcommand{\valuefn}[1]{\mathsf{value}_{#1}}
\newcommand{\weightfn}[1]{\mathsf{weight}_{#1}}
\newcommand{\oper}[1]{\langle\!\langle{#1}\rangle\!\rangle}
\newcommand{\zip}{\mathsf{zip}}
\newcommand{\trace}{t}
\newcommand{\traceb}{\boldsymbol{\trace}}
\newcommand{\traceset}{A}
\newcommand{\mathhl}[1]{\colorbox{yellow!50}{$\displaystyle #1$}}
\newcommand{\defn}[1]{\emph{\textbf {#1}}}
\newcommand\mypara[1]{\paragraph{#1}}
\begin{document}

\twocolumn[
\icmltitle{Nonparametric Involutive Markov Chain Monte Carlo}



\icmlsetsymbol{equal}{*}

\begin{icmlauthorlist}
\icmlauthor{Carol Mak}{oxford}
\icmlauthor{Fabian Zaiser}{oxford}
\icmlauthor{Luke Ong}{oxford}
\end{icmlauthorlist}

\icmlaffiliation{oxford}{Department of Computer Science, University of Oxford, United Kingdom}

\icmlcorrespondingauthor{Carol Mak}{pui.mak@cs.ox.ac.uk}

\icmlkeywords{Inference algorithm, MCMC, Involutive MCMC}

\vskip 0.3in
]



\printAffiliationsAndNotice{}  

\begin{abstract}
A challenging problem in probabilistic programming is to develop inference algorithms that work for arbitrary programs in a universal probabilistic programming language (PPL).
We present the \emph{nonparametric involutive Markov chain Monte Carlo} (NP-iMCMC) algorithm as a method for constructing MCMC inference algorithms for nonparametric models expressible in universal PPLs.
Building on the unifying \emph{involutive MCMC} framework, and by providing a \emph{general} procedure for driving state movement between dimensions,
we show that NP-iMCMC can generalise numerous existing iMCMC algorithms to work on nonparametric models.
We prove the correctness of the NP-iMCMC sampler.
Our empirical study shows that the existing strengths of several iMCMC algorithms carry over to their nonparametric extensions.
Applying our method to the recently proposed Nonparametric HMC,
\changed[cm]{an instance of (Multiple Step) NP-iMCMC,}
we have constructed several nonparametric extensions (all of which new) that exhibit significant performance improvements.
\end{abstract}

\section{Introduction}
\label{sec: introduction}

\emph{Universal probabilistic programming} \cite{DBLP:conf/uai/GoodmanMRBT08} is the idea of writing probabilistic models in a Turing-complete programming language.
A universal probabilistic programming language (PPL) can express all computable probabilistic models \cite{VakarKS19}, using only a handful of basic programming constructs such as branching and recursion.
In particular, \defn{nonparametric models}, where the number of random variables is not determined \emph{a priori} and possibly unbounded, can be described naturally in a universal PPL.
In programming language terms, this means the number of \codeinline{sample} \changed[fz]{statements} 
is unknown prior to execution.
\changed[fz]{On the one hand, such programs can describe probabilistic models with an unknown number of components,} such as
Bayesian nonparametric models \cite{https://doi.org/10.1111/1467-9868.00095},
variable selection in regression \cite{article},
and models for signal processing \cite{DBLP:conf/aistats/MurrayLKBS18}.
\changed[fz]{On the other hand, there are models} defined on infinite-dimensional spaces, such as probabilistic context free grammars \cite{Manning99},
birth-death models of evolution \cite{DBLP:conf/uai/KudlickaMRS19} and statistical phylogenetics \cite{Ronquist2020.06.16.154443}.

However, since universal PPLs are expressively complete, it is challenging to design and implement inference engines that work for arbitrary programs written in them.
The parameter space of a nonparametric model is a disjoint union of spaces of varying dimensions.
To approximate the posterior distribution via a Markov chain Monte Carlo (MCMC) algorithm (say), the transition kernel will have to switch between (possibly an unbounded number of) states of different dimensions, and to do so reasonably efficiently.
This explains why \changed[cm]{providing theoretical guarantees for MCMC algorithms that work for universal PPLs \cite{DBLP:journals/jmlr/WingateSG11,DBLP:conf/aistats/WoodMM14,DBLP:conf/pkdd/TolpinMPW15,HurNRS15,DBLP:conf/icml/MakZO21}
is very challenging.}
\changed[fz]{For instance,} the original version of Lightweight MH \cite{DBLP:journals/jmlr/WingateSG11} \changed[fz]{was} incorrect \cite{Kiselyov16}.
In fact, most applications requiring Bayesian inference rely on custom MCMC kernels, which are error-prone and time-consuming to design and build.

\paragraph{Contributions}

We introduce \defn{Nonparametric Involutive MCMC} (NP-iMCMC) for designing MCMC samplers for universal PPLs.
It is an extension of the involutive MCMC (iMCMC) framework \cite{DBLP:conf/icml/NeklyudovWEV20,cusumanotowner2020automating} to densities arising from nonparametric models (for background on both, see \cref{sec: background}).
We explain how NP-iMCMC moves between dimensions and how a large class of existing iMCMC samplers can be extended for universal PPLs (\cref{sec: np-imcmc}).
We also discuss necessary assumptions and prove its correctness.
Furthermore, there are general transformations and combinations of NP-iMCMC, to derive more powerful samplers systematically (\cref{sec: combination}), \changed[fz]{for example by making them nonreversible to reduce mixing time}.
Finally, our experimental results show that our method yields significant performance improvements over existing general MCMC approaches (\cref{sec: experiments}).

\emph{All missing proofs are presented in the appendix.}

\paragraph{Notation}

We write $\Gau_n(\enta,\Sigma)$ for the $\enta$-mean $\Sigma$-covariance $n$-dimensional Gaussian with pdf $\pdfGau_n(\enta,\Sigma)$.
For the standard Gaussian $\Gau_n(\boldsymbol{0},\idmat)$, we abbreviate them to $\Gau_n$ and $\pdfGau_n$.
In case $n = 1$, we simply write $\Gau$ and $\pdfGau$.

Given measurable spaces $(X, \Sigma_X)$ and $(Y, \Sigma_Y)$, we write $\ikernel: X \kernelto Y$ to mean a \defn{kernel} of type $\ikernel : X \times \algebra{Y} \to \pReal$.
We say that $\ikernel$ is a \defn{probability kernel} if for all $x \in X$, $\ikernel(x, \placeholder):\algebra{Y} \to \pReal$ is a probability measure.
We write $\ikpdf(x,y)$ as the density of $y \in Y$ in the measure $\ikernel(x, \placeholder)$ assuming a derivative w.r.t.~some reference measure exists.

Unless otherwise specified,
the real space $\Real$ is endowed with the Borel measurable sets $\Borel$ and the  standard Gaussian $\Gau$ measure;
the boolean space $\Bool := \set{\true,\false}$ is endowed with the discrete measurable sets $\algebra{\Bool} := \powerset(\Bool)$ and the measure $\measure{\Bool}$ which assigns either boolean the probability $0.5$.
We write $\range{\enta}{1}{n}$ to mean the $n$-long prefix of the sequence $\enta$.
For any real-valued function $f:X \to \Real$, we define its \defn{support} as $\support{f} := \set{x \in X \mid f(x) > 0}$.

\section{Background}
\label{sec: background}

\subsection{Involutive MCMC}
\label{sec: imcmc}

\begin{figure}[h]
\fbox{
\noindent\begin{minipage}{\columnwidth}
Given a target density $\tpdf$ on a measure space $(X,\algebra{X},\measure{X})$, the iMCMC algorithm generates a Markov chain of samples $\{\enta^{(i)}\}_{i\in\Nat}$ by proposing the next sample $\enta$ using the current sample $\enta_0$, in three steps:
\begin{compactenum}[1.]
  \item $\auxa_0 \sim \ikernel(\enta_0, \placeholder)$:
  sample a value $\auxa_0$ on an auxiliary measure space $(Y,\algebra{Y},\measure{Y})$
  from an \emph{auxiliary kernel} $\ikernel:X \kernelto Y$
  applied to the current sample $\enta_0$.

  \item $(\enta,\auxa) \leftarrow \iinv(\enta_0,\auxa_0)$:
  compute the new state $(\enta,\auxa)$ by applying
  an \emph{involution}\footnote{i.e.~$\iinv = \inv{\iinv}$.} $\iinv : X \times Y \to X\times Y$
  to $(\enta_0,\auxa_0)$.

  \item Accept the proposed sample $\enta$ as the next step with probability given by the \emph{acceptance ratio}
  \[ \min\bigg\{1; \; \frac{\tpdf(\enta)\cdot\ikpdf(\enta, \auxa)}{\tpdf(\enta_0)\cdot\ikpdf(\enta_0, \auxa_0)}\cdot\abs{\det(\grad{\iinv(\enta_0, \auxa_0)})}\bigg\}; \]
  otherwise
  reject the proposal $\enta$ and repeat $\enta_0$.
\end{compactenum}
\end{minipage}
}
\caption{iMCMC Algorithm \label{fig:imcmc algo}}
\end{figure}

Our sampler is built on the recently introduced \defn{involutive Markov chain Monte Carlo} method
\cite{DBLP:conf/icml/NeklyudovWEV20,cusumanotowner2020automating},
a unifying framework for MCMC algorithms.
Completely specified by a target density \changed[fz]{$\tpdf$}, an (auxiliary) kernel \changed[fz]{$\ikernel$} and an involution \changed[fz]{$\iinv$},
the iMCMC algorithm (\cref{fig:imcmc algo}) is conceptually simple.
Yet it is remarkably expressive, describing many existing MCMC samplers,
including Metropolis-Hastings (MH) \cite{metropolis1953equation,hastings1970monte}
with
the ``swap'' involution $\invo(\enta,\auxa) := (\auxa,\enta)$ and the proposal distribution as its auxiliary kernel $\ikernel$;
as well as Gibbs \cite{geman1984stochastic}, Hamiltonian Monte Carlo (HMC) \cite{Neal2011} and
Reversible Jump MCMC (RJMCMC) \cite{Green95}.
Thanks to its schematic nature and generality, we find iMCMC an ideal basis for constructing our nonparametric sampler, NP-iMCMC, for (arbitrary) probabilistic programs.
\changed[lo]{\emph{We stress that NP-iMCMC is applicable to any target density function that is tree representable}}.




\subsection{Tree representable functions}

\begin{figure}[t]
\begin{code}[
  numbers=none,
  xleftmargin=.025\textwidth, xrightmargin=.025\textwidth,
  caption={Infinite Gaussian mixture model},
  label={code: iGMM}
]
  K = floor(abs(sample(normal(0, 1))))
  for i in range(K):
    xs[i] = sample(normal(0, 1))
  for d in data:
    observe d from mixture([normal(x, 1) for x in xs])
  return K
\end{code}
\end{figure}

As is standard in probabilistic programming, our sampler finds the posterior of a program $\terma$ by taking as the target density a map $\tree$, which, given an execution trace,
runs $\terma$ on the sampled values specified by the trace, and returns the weight of such a run.
Hence the support of $\tree$ is the set of traces on which $\terma$ terminates.

This density $w$ must satisfy the \defn{prefix property} \cite{DBLP:conf/icml/MakZO21}: for every trace, there is at most one prefix with strictly positive density.
Such functions are called \defn{tree representable} as they can be presented as a computation tree.
We shall see how our sampler exploits this property to jump across dimensions in \cref{sec: np-imcmc}.


Formally the \defn{trace space} $\traces$ is the disjoint union 
$\bigcup_{n\in\Nat} \Real^n$,
endowed with $\sigma$-algebra $\algebra{\traces} := \set{\bigcup_{n\in\Nat} \entseta_n \mid \entseta_n \in \Borel_n}$ and
the standard Gaussian (of varying dimensions) as measure
$\measure{\traces}(\bigcup_{n\in\Nat} \entseta_n) := \sum_{n\in\Nat} \Gau_n(\entseta_n)$.
We present traces as lists, e.g.~$[-0.2, 3.1, 2.8]$ and $\emptytrace$.
Thus the prefix property is expressible as:
for all traces $\traceb \in \traces$,
there is at most one $k \leq \len{\traceb}$ s.t.~\changed[fz]{the prefix} $\range{\traceb}{1}{k}$ is in $\support{\tree}$.
\changed[fz]{
Note that the prefix property is satisfied by any densities $\tree:\traces \to \pReal$ induced by a probabilistic program }
\changed[cm]{
(\cref{prop: all spcf terms have TR weight function}),}
\changed[fz]{
so this is a mild restriction}

\begin{example}
\label{eg: infinite gmm}
Consider the classic nonparametric infinite Gaussian mixture model (GMM),
which infers the number of Gaussian components from a data set.
It is describable as a program (\cref{code: iGMM}),
where there is a mixture of \changed[lo]{{\codeinline{K}}} Gaussian distributions such that
the \codeinline{i}-th Gaussian has mean \codeinline{xs[i]} and unit variance.
As \codeinline{K} is not pre-determined,
the possible number of components is unbounded, rendering the model nonparametric.
Given a trace $[3.4, -1.2, 1.0, 0.5]$,
the program describes a mixture of three Gaussians centred at $-1.2, 1.0$ and $0.5$;
and it computes the likelihood of generating the set $\data$ of data from such a mixture.
The program has density $\tree: \traces \to \pReal$ (w.r.t.~the trace measure $\measure{\traces}$) with $\tree (\traceb)$ defined as:
\begin{align*}
  \begin{cases}
    \displaystyle
    \prod_{d \in \data} \sum_{i = 1}^{\floor{\abs{\seqindex{\traceb}{1}}}}
    \frac{1}{\floor{\abs{\seqindex{\traceb}{1}}}}
    \pdfGau_{\floor{\abs{\seqindex{\traceb}{1}}}}(d \mid \seqindex{\traceb}{1+i}, 1)
    & \text{if } \len{\traceb}-1 = \floor{\abs{\seqindex{\traceb}{1}}} \\
    0 & \text{otherwise.}
  \end{cases}
\end{align*}
We can check that the density $w$ is tree representable.
\end{example}

\section{Nonparametric involutive MCMC}
\label{sec: np-imcmc}

\subsection{Example: infinite GMM mixture}
\label{sec:eg np-mh}
Consider how a sample for the infinite GMM (\cref{eg: infinite gmm}) can be generated using a nonparametric variant of Metropolis-Hastings (MH), an instance of iMCMC.
Suppose the current sample is $\enta_0 := [3.4, -1.2, 1.0, 0.5]$; and $[4.3, -3.4, -0.1, 1.4]$---a sample from the stock Gaussian
$\Gau_4$---is the value of the initial auxiliary variable $\auxa_0$.
Then, by application of the ``swap'' involution to $(\enta_0, \auxa_0)$,
the proposed state $(\enta, \auxa)$ becomes $([4.3, -3.4, -0.1, 1.4], [3.4, -1.2, 1.0, 0.5])$.
A problem arises if we simply propose $\enta$ as the next sample, as it describes a mixture of \emph{four} Gaussians (notice \codeinline{K} has value $4$) but only \emph{three} means are provided, viz., $-3.4, -0.1, 1.4$.
Hence, the program does not terminate on the trace specified by $\enta$, i.e., $\enta$ is not in the support of $w$, the model's density.

The key idea of NP-iMCMC is to \emph{extend} the initial state $(\enta_0, \auxa_0)$ to
$(\enta_0\concat [\entelem], \auxa_0\concat [\auxelem])$
where \changed[fz]{$\concat$ denotes trace concatenation, and} $\entelem, \auxelem$ are random draws from the stock Gaussian $\Gau$.
Say $-0.7$ and $-0.3$ are the values drawn; the initial state then becomes
$(\enta_0, \auxa_0) = ([3.4, -1.2, 1.0, 0.5, -0.7], [4.3, -3.4, -0.1, 1.4, -0.3])$,
and the proposed state $(\enta, \auxa)$ becomes
$([4.3, -3.4, -0.1, 1.4, -0.3], [3.4, -1.2, 1.0, 0.5, -0.7])$.
Now the program does terminate on a trace specified by the proposed sample $\enta = [4.3, -3.4, -0.1, 1.4, -0.3]$; equivalently $\enta \in \support{w}$.

Notice that if this is not the case, such a process---which extends the initial state by incrementing the dimension---can be repeated until termination happens.
For an \changed[cm]{almost surely (a.s.)} terminating program, this process a.s.~yields a proposed sample. 

Finally, we calculate the acceptance ratio for $\enta \in \support{\tree}$ from the initial sample $\range{\enta_0}{1}{4} \in \support{\tree}$ as
\changed[cm]{
\begin{align*}
  \min\bigg\{1; \;
  &\frac
    {\tree{(\enta)}\cdot\pdfGau_{5}(\enta)\cdot\pdfGau_{5}(\auxa)}
    {\tree{(\range{\enta_0}{1}{4})}\cdot\pdfGau_{5}(\enta_0)\cdot\pdfGau_{5}(\auxa_0)}\bigg\}.
\end{align*}}

\subsection{State space, target density and assumptions}

\label{sec: assumptions}

\changed[cm]{Fix an \emph{parameter (measure) space} $(\parspace, \algebra{\parspace}, \measure{\parspace})$}, which is (intuitively) the product
of the respective measure space of the distribution of ${\tt X}$, with ${\tt X}$ ranging over the random variables of the model in question.
Assume an \emph{auxiliary (probability) space} $(\auxspace, \algebra{\auxspace}, \measure{\auxspace})$.
For simplicity, we assume in this paper\footnote{In \cref{sec: spaces in np-imcmc}, we consider a more general case where $\entspace$ is set to be $\Real \times \Bool$.} that both $\entspace$
and $\auxspace$ are $\Real$;
further $\measure{\parspace}$ has a derivative $\parstockpdf$ w.r.t.~the Lebesgue measure, and $\measure{\auxspace}$ also has a derivative $\auxstockpdf$ w.r.t.~the Lebesgue measure.
Note that it follows from our assumption that $\entspace^n \times \auxspace^n$ is a smooth manifold for each $n$.\footnote{Notation:
For any probability space $({\tt X}, \algebra{{\tt X}}, \measure{{\tt X}})$ such that
$\measure{{\tt X}}$ has derivative $\varphi_{\tt X}$ w.r.t.~the Lebesgue measure.
${\tt X}^n$ is the Cartesian product of $n$ copies of ${\tt X}$;
$\algebra{{\tt X}^n}$ is the $\sigma$-algebra generated by subsets of the form $\prod_{i=1}^n \auxseta_i$ where $\auxseta_i \in \algebra{{\tt X}}$; and
$\measure{{\tt X}^n}$ is the product of $n$ copies of $\measure{{\tt X}}$
which has derivative $\varphi_{{\tt X}^n}$ w.r.t.~the Lebesgue measure.
Note that $({\tt X}^n, \algebra{{\tt X}^n}, \measure{{\tt X}^n})$ is a probability space.}
Now a \defn{state} is a pair of \changed[lo]{\emph{parameter}} and \emph{auxiliary} variables of equal dimension.
Formally the \defn{state space} $\states := \bigcup_{n\in\Nat} (\entspace^n \times \auxspace^n)$
is endowed with the $\sigma$-algebra
$\algebra{\states} := \sigma\set{\entseta_n \times \auxseta_n \mid \entseta_n \in \algebra{\entspace^n}, \auxseta_n \in \algebra{\auxspace^n}, n\in\Nat}$ and
measure
\changed[lo]{$\measure{\states}(\stateset) := \sum_{n \in \Nat} \expandint{\auxspace^n}{\measure{\entspace^n}(\set{\enta\in\entspace^n\mid (\enta,\auxa)\in\stateset})}{\measure{\auxspace^n}}{\auxa}$.}
\lo{N.B. Same typo in thesis.}

Besides the target density function $\tree$,
our algorithm NP-iMCMC requires two additional inputs: \defn{auxiliary kernels} (as an additional source of randomness) and \defn{involutions} (to traverse the \emph{state space}).
Next we present what we assume about the three inputs and discuss some relevant properties.

\paragraph{Target density function}
We only target densities $\tree:\traces \to \pReal$ that are tree representable, where
\changed[cm]{$\traces := \bigcup_{n\in\Nat} \Real^n$.}
Moreover, we assume 
\changed[lo]{two common features of real-world probabilistic programs}:
\begin{compactitem}
  \item[\vass\label{vass: integrable tree}]
  $\tree$ is \defn{integrable},
  i.e.~$Z := \shortint{\traces}{\tree}{\measure{\traces}} < \infty$
  (otherwise, the inference problem is undefined)

  \item[\vass\label{vass: almost surely terminating tree}]
  $\tree$ is \defn{almost surely terminating (AST)},
  i.e.~$\measure{\traces}(\set{\traceb\in \traces \mid \tree(\traceb) > 0}) = 1$
  (otherwise, the loop (\cref{np-imcmc step: extend})
  of the NP-iMCMC algorithm may not
  terminate a.s.).%
  \footnote{\changed[fz]{If a program does not terminate on a trace $\traceb$, the density $\tree(\traceb)$ is defined to be zero.}}
\end{compactitem}
\paragraph{Auxiliary kernel}
We assume, for each dimension $n\in\Nat$, an \defn{auxiliary (probability) kernel}
$\nauxkernel{n}: \entspace^n \kernelto \auxspace^n$
with density function
$\nauxkernelpdf{n}:\entspace^n \times \auxspace^n \to \pReal$
(assuming a derivative w.r.t.~$\measure{\entspace^n \times \auxspace^n}$ exists).

\paragraph{Involution}

We assume, for each dimension $n \in\Nat$, a differentiable endofunction $\ninvo{n}$ on $\entspace^n \times \auxspace^n$ which is \emph{involutive}, i.e.~$\ninvo{n} = \inv{\ninvo{n}}$, and satisfies the \defn{\invoass{}}:
\begin{compactitem}
  \item[\vass\label{vass: partial block diagonal inv}]
  For all $(\enta, \auxa) \in \states$ where $\len{\enta} = m$, if $\range{\enta}{1}{n} \in \support{\tree}$ for some $n$, then for all $k = n, \dots, m$, $\proj{m}{k}(\ninvo{m}(\enta, \auxa)) = \ninvo{k}(\proj{m}{k}(\enta, \auxa)) $
\end{compactitem}
\changed[cm]{where $\proj{\ell_1}{\ell}$ is the projection that takes a state $(\enta,\auxa)$
and returns the state $(\range{\enta}{1}{\ell},\range{\auxa}{1}{\ell})$ with the first $\ell$ coordinates of each component.}
(Otherwise, the sample-component $\enta$ of
the proposal state tested in \cref{np-imcmc step: extend}
may not be an extension of the sample-component of the preceding proposal state.)


\subsection{Algorithm}

Given a probabilistic program $\terma$ with density function $\tree$ on the trace space $\traces$,
a set $\set{\nauxkernel{n}:\entspace^n \kernelto\auxspace^n}$ of auxiliary kernels and
a set $\set{\ninvo{n}:\entspace^n \times \auxspace^n \to \entspace^n \times \auxspace^n}$ of involutions satisfying \cref{vass: partial block diagonal inv,vass: integrable tree,vass: almost surely terminating tree},
we present the 
\defn{NP-iMCMC} algorithm in \cref{fig:np-imcmc algo}.
\begin{figure}[h]
\fbox{
\noindent\begin{minipage}{\columnwidth}
The \defn{NP-iMCMC} generates a Markov chain by proposing the next sample $\enta$ using the current sample $\enta_0$ as follows:
\begin{compactenum}[1.]
  \item{\step\label{np-imcmc step: aux sample}}
  $\auxa_0 \sim \nauxkernel{k_0}(\enta_0, \placeholder)$:
  sample a value $\auxa_0$ on the auxiliary space $\auxspace^{k_0}$
  from the auxiliary kernel $\nauxkernel{k_0}:\entspace^{k_0} \kernelto \auxspace^{k_0}$
  applied to the current sample $\enta_0$ where $k_0 = \len{\enta_0}$.

  \item{\step\label{np-imcmc step: involution}}
  $(\enta,\auxa) \leftarrow \ninvo{n}(\enta_0,\auxa_0)$:
  compute the \emph{proposal state} $(\enta,\auxa)$ by
  applying the involution $\ninvo{n}$ on $\entspace^n\times \auxspace^n$
  to the \emph{initial state} $(\enta_0,\auxa_0)$ where $n = \len{\enta_0}$.

  \item{\step\label{np-imcmc step: extend}}
  Test if for some $k$, $\range{\enta}{1}{k} \in \support{\tree}$. (\emph{Equivalently}: 
  Test if program $\terma$ terminates on the trace specified by the sample-component $\enta$ of the proposal state, or one of its prefixes.)
  If so, proceed to the next step;
  otherwise
  \begin{compactitem}
    \item $(\enta_0, \auxa_0) \leftarrow (\enta_0 \concat [\entelem],\auxa_0 \concat [\auxelem])$:
    extend the initial state to $(\enta_0 \concat [\entelem],\auxa_0 \concat [\auxelem])$ where
    $\entelem$ and $\auxelem$ are samples drawn from
    $\measure{\entspace}$ and $\measure{\auxspace}$,
    \item Go to \cref{np-imcmc step: involution}. 
  \end{compactitem}

  \item{\step\label{np-imcmc step: accept/reject}}
  Accept $\range{\enta}{1}{k}$ 
  as the next sample with probability
  \begin{align*}
  \min\bigg\{1; \;
  &\frac
    {\tree{(\range{\enta}{1}{k})}\cdot\nauxkernelpdf{k}(\range{\enta}{1}{k}, \range{\auxa}{1}{k})}
    {\tree{(\range{\enta_0}{1}{k_0})}\cdot\nauxkernelpdf{k_0}(\range{\enta_0}{1}{k_0}, \range{\auxa_0}{1}{k_0}) } \\
  &\cdot
    \frac
    {\nstockpdf{n}(\enta, \auxa)}
    {\nstockpdf{n}(\enta_0, \auxa_0)}
    \cdot
  \abs{\det(\grad{\ninvo{n}(\enta_0, \auxa_0)})}\bigg\}
  \end{align*}
  where $n = \len{\enta_0}$;
  otherwise reject the proposal and repeat $\range{\enta_0}{1}{k_0}$. 
\end{compactenum}
\end{minipage}
}
\caption{NP-iMCMC Algorithm \label{fig:np-imcmc algo}}
\end{figure}

The heart of NP-iMCMC is \cref{np-imcmc step: extend}, which can drive a state across dimensions.
\cref{np-imcmc step: extend} first checks
if $\range{\enta}{1}{k} \in \support{w}$ for some $k = 1,\dots, k_0$,
(i.e.~if the program $\terma$ terminates on the trace specified by \changed[lo]{some} prefix of $\enta$).
If so, the proposal state is set to $(\range{\enta}{1}{k}, \range{\auxa}{1}{k})$, and the state moves from dimension $k_0$ to $k$.
Otherwise, 
\cref{np-imcmc step: extend} \changed[fz]{repeatedly} 
extends the initial state $(\enta_0, \auxa_0)$ to, say, $(\enta_0\concat\entb_0, \auxa_0\concat\auxb_0)$,
and computes the new proposal state $(\enta\concat\entb, \auxa\concat\auxb)$ by \cref{np-imcmc step: involution},
until the program $\terma$ terminates on the trace specified by $\enta\concat\entb$.
Then, the proposal state becomes $(\enta\concat\entb, \auxa\concat\auxb)$, and the state moves from dimension $k_0$ to dimension $k_0 + \len{\entb}$.

\begin{remark}
\label{rem: np-imcmc ast}
\begin{compactenum}[(i)]
\item The \invoass{}, \cref{vass: partial block diagonal inv}, ensures that the new proposal state computed using $(\enta_0\concat\entb_0, \auxa_0\concat\auxb_0)$ from \cref{np-imcmc step: extend} is of the form $(\enta\concat\entb, \auxa\concat\auxb)$ where $(\enta, \auxa) = \ninvo{\len{\enta_0}}(\enta_0, \auxa_0)$.

\item \cref{vass: almost surely terminating tree}, a.s.~termination of the program $\terma$, ensures that the method of computing a proposal state in \cref{np-imcmc step: extend} almost surely finds a proposal sample $\enta$ such that $M$ terminates on a trace specified by $\enta$.

\item \label{rem:extend not affecting w}
\changed[cm]{
The prefix property of the target density $\tree$ ensures that any proper extension of  current sample $\enta_0$ (of length $k_0$) has zero density, i.e.~$\tree(\enta_0 \concat \entb) = 0$ for all $\entb \not= \emptytrace$.
Hence only the weight of the current sample $\range{\enta_0}{1}{k_0} \in \support{\tree}$
is accounted for in \cref{np-imcmc step: accept/reject}
even when $\enta_0$ is extended.}

\lo{I agree that $\tree(\enta_0 \concat \entb) = 0$ for all $\entb \not= \emptytrace$, but I still don't see the point of saying ``Hence there is no weight to be accounted for when the initial state is extended in \cref{np-imcmc step: extend}''. Why do we need to account for the weight when the initial state is extended in \cref{np-imcmc step: extend}, which is an intermediate stage of the algorithm? What is relevant is that the values of $\tree{(\range{\enta}{1}{k})}$ and $\tree{(\range{\enta_0}{1}{k_0})}$ (in the quotient of \cref{np-imcmc step: accept/reject}) are both non-zero.}

\item If the program $\terma$ is parametric, thus inducing a target density $\tree$ on a fixed dimensional space, then the NP-iMCMC sampler coincides with the iMCMC sampler.
\end{compactenum}
\end{remark}

Using NP-iMCMC (\cref{fig:np-imcmc algo}),
we can formally present the
Nonparametric Metropolis-Hastings (NP-MH) sampler which was introduced in \cref{sec:eg np-mh}.
See \cref{app: np-mh} for details.

\changed[cm]{
\subsection{Generalisations}
\label{sec: generalisations}
}

In the interest of clarity,
we have presented a version of NP-iMCMC in deliberately purified form.
Here we discuss \changed[cm]{three generalisations} of the NP-iMCMC sampler.


\mypara{Hybrid state space}
Many PPLs 
provide continuous and discrete samplers.
The positions of discrete and continuous random variables in an execution trace may vary, because of branching.
We get around this problem by defining the \changed[cm]{parameter space} $\entspace$ to be the \emph{product space} of $\Real$ and $\Bool := \{ \false, \true \}$.
Each value $\seqindex{\traceb}{i}$ in a trace $\traceb$ is paired with a randomly drawn ``partner'' $\trace$ of the other type to make a pair $(\seqindex{\traceb}{i},\trace)$ (or $(\trace, \seqindex{\traceb}{i})$).
Hence, the same idea of ``jumping'' across dimensions can be applied to the state space
$\bigcup_{n\in\Nat} \entspace^n \times \auxspace^{n}$.
\changed[cm]{The resulting algorithm is called
the \emph{Hybrid NP-iMCMC} sampler.}
(See \cref{app: hybrid np-imcmc} for more details.)

\mypara{Computationally heavy involutions}
\cref{np-imcmc step: extend} in the NP-iMCMC sampler may seem inefficient. While it terminates almost surely (thanks to \cref{vass: almost surely terminating tree}),
the expected number of iterations may be infinite.
This is especially bad if the involution is \changed[lo]{computationally expensive}
such as the leapfrog integrator in HMC which requires gradient information of the target density function.
This can be worked around if
for each $n\in\Nat$,
there is an inexpensive
\changed[cm]{\emph{slice} function
$\slicefn{n}:\parspace^n \times\auxspace^n \to \parspace\times\auxspace $ where
$\slicefn{n}(\enta, \auxa)
= (\drop{n-1} \circ \ninvo{n})(\enta, \auxa)$
if $(\enta, \auxa)$ is a $n$-dimensional state such that
$\range{\enta}{1}{k}\in\support{\tree}$
for some $k < n$,
and
$\drop{\ell}$ is the projection that
takes a state $(\enta, \auxa)$ and returns
the state $(\range{\enta}{\ell+1}{\len{\enta}},\range{\auxa}{\ell+1}{\len{\enta}})$
with the first $\ell$ coordinates of each
component dropped.}
Then the new proposal state in \cref{np-imcmc step: extend} can be computed by
applying the function $\slicefn{n}$ to
the recently extended initial state
$(\enta_0, \auxa_0)$, i.e.
$(\enta, \auxa) \leftarrow (\enta \concat [\entelem'], \auxa \concat [\auxelem']) \text{ where }
(\entelem', \auxelem') = \slicefn{n}(\enta_0, \auxa_0)$
instead.
\changed[cm]{(See \cref{sec: slice function} for more details.)}

\changed[cm]{
\mypara{Multiple step NP-iMCMC}
Suppose the involution is a composition of bijective endofunctions,
i.e.~$\ninvo{n} :=
\nelem{n}{L} \circ
\dots \circ \nelem{n}{2} \circ \nelem{n}{1}$
and each endofunction
$\set{\nelem{n}{\ell}}_n$ satisfies the
\invoass{}
and has a slice function $\slicefn{n}_{\ell}$.
A new state can then be computed by applying
the endofunctions
to the initial state \emph{one-by-one}
(instead of in one go as in \cref{np-imcmc step: involution,np-imcmc step: extend}):
For each $\ell=1,\dots,L$,
\begin{compactenum}[1.]
  \item Compute the intermediate state
  $(\enta_\ell, \auxa_\ell)$
  by applying $\nelem{n}{\ell}$
  to $(\enta_{\ell-1}, \auxa_{\ell-1})$
  where $n = \len{\enta_{\ell-1}}$.

  \item Test whether
  $\range{\enta_\ell}{1}{k}$ is in $\support{\tree}$ for some $k$.
  If so, proceed to the next $\ell$;
  otherwise
  \begin{compactitem}
    \item extend the initial state
    $(\enta_0, \auxa_0)$
    with samples drawn from
    $\measure{\entspace}$ and $\measure{\auxspace}$,

    \item
    for $i = 1,\dots,\ell$,
    extend the intermediate states
    $(\enta_i, \auxa_i)$
    with the result of
    $\slicefn{n}_{i}(\enta_{i-1}, \auxa_{i-1})$
    where $n = \len{\enta_{i-1}}$,

    \item go to 2.
  \end{compactitem}
\end{compactenum}
The resulting algorithm is called
the \emph{Multiple Step NP-iMCMC} sampler.
(See \cref{sec: multiple step npimcmc} for more details.)}
This approach was adopted in the recently proposed Nonparametric HMC \cite{DBLP:conf/icml/MakZO21}.
\changed[cm]{(See \cref{app: np-hmc} for more details.)}

\fz{Should we mention \cite{DBLP:conf/icml/MakZO21} more prominently before? The idea of extending the trace is common to both papers.}
\lo{I think this is a natural place to introduce NP-HMC. We could mention under related work the point that trace extension first appeared in NP-HMC.}



\subsection{Correctness}


The NP-iMCMC algorithm is correct in the sense that the invariant distribution of the Markov chain generated by iterating the algorithm in \cref{fig:np-imcmc algo} coincides with the target distribution
$\tdist: \traceset \mapsto \frac{1}{Z} \shortint{\traceset}{\tree}{\measure{\traces}} $
with the normalising constant $Z$.
We present an outline proof here. See \cref{app: proof of correctness} for a full proof
\changed[cm]{of the Hybrid NP-iMCMC algorithm, a generalisation
of NP-iMCMC.}

\anon{LAFI reviewer: Theorem 1 itself proved by reducing NP-iMCMC to iMCMC? That is, does the theorem in some sense construct an involution on the whole state space out of the provided involutions on the finite dimensional ones? This would be useful to clarify.}

Note that we \emph{cannot} reduce NP-iMCMC to iMCMC,
i.e.~the NP-iMCMC sampler cannot be formulated as an instance of the iMCMC sampler
with an involution on the whole state space $\states$.
This is because the dimension of involution
depends on the values of the random samples drawn in \cref{np-imcmc step: extend}.
Instead,
we define a helper algorithm
\changed[cm]{(\cref{sec: e np-imcmc}),
which induces a Markov chain on states and}
does not change the dimension of the involution.

This algorithm first extends the initial state to find the smallest $N$ such that
the program $\terma$ terminates with a trace
specified by some prefix of the sample-component of
the resulting state $(\enta, \auxa)$ after applying the involution $\ninvo{N}$.
Then, it performs the involution $\ninvo{N}$ as per the standard iMCMC sampler.
Hence all stochastic primitives are executed outside of the involution,
and the involution has a fixed dimension.
We identify the \changed[cm]{state distribution (\cref{sec: state distribution})}, and
\changed[cm]{
show that the Markov chain generated by
the auxiliary algorithm has the state distribution
as its invariant distribution (\cref{lemma: e-np-imcmc invariant}).
We then deduce
that its marginalised chain is identical to that
generated by Hybrid NP-iMCMC;
and Hybrid NP-iMCMC has the
target distribution $\tdist$ as its invariant distribution (\cref{lemma: marginalised distribution is the target distribution}).
Since Hybrid NP-iMCMC is a generalisation of NP-iMCMC (\cref{fig:np-imcmc algo}),
we have the following corollary.}

\begin{restatable}[Invariant]{corollary}{NPiMCMCinvariant}
  \label{corollary: invariant (np-imcmc)}
  If all inputs satisfy \allvass{} then
  $\tdist$ is the invariant distribution of the Markov chain generated by iterating the
  algorithm described in \cref{fig:np-imcmc algo}.
\end{restatable}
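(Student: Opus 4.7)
The plan is to deduce \cref{corollary: invariant (np-imcmc)} as an instance of the invariance result for the more general Hybrid NP-iMCMC sampler, whose proof is sketched in this subsection and developed in the appendix. First I would verify that NP-iMCMC (\cref{fig:np-imcmc algo}) is exactly the special case of Hybrid NP-iMCMC obtained by taking $\parspace = \Real$ (and so avoiding the boolean partnering discussed earlier); under this identification \allvass{} match up with the hybrid assumptions, and the corollary reduces to showing that the Hybrid NP-iMCMC kernel leaves $\tdist$ invariant.

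To establish the latter, the core idea is to rewrite the NP-iMCMC kernel as the marginal of an auxiliary chain that does possess a fixed-dimensional involutive structure. Concretely, I would introduce the E-NP-iMCMC algorithm: starting from $(\enta_0, \auxa_0)$ of length $k_0$, first draw fresh coordinates $(\entelem,\auxelem)\sim \measure{\entspace}\otimes\measure{\auxspace}$ \emph{eagerly} until we reach the smallest $N\geq k_0$ such that $\ninvo{N}$ applied to the padded initial state yields a proposal whose sample-component has some prefix in $\support{\tree}$; only then apply $\ninvo{N}$ and accept/reject via the standard iMCMC formula on $\entspace^N \times \auxspace^N$. Thanks to the \invoass{} (\cref{vass: partial block diagonal inv}), commuting the random padding with the involution does not change the joint law of the accepted sample, so E-NP-iMCMC agrees step-for-step in distribution with NP-iMCMC after marginalising out the padding coordinates.

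Next I would identify the invariant distribution $\sdist$ of E-NP-iMCMC on $\states$: an $n$-dimensional state $(\enta,\auxa)$ is assigned density proportional to $\tree(\range{\enta}{1}{k})\cdot\nauxkernelpdf{k}(\range{\enta}{1}{k},\range{\auxa}{1}{k})\cdot \nstockpdf{n-k}(\range{\enta}{k+1}{n},\range{\auxa}{k+1}{n})$, where $k$ is the unique prefix length at which $\tree$ is supported (uniqueness is given by the prefix property). Conditioned on the value of $N$, E-NP-iMCMC is literally an ordinary iMCMC step with target $\tree\cdot\nauxkernelpdf{k}$ tensored with the stock padding densities, auxiliary kernel $\nauxkernel{N}$, and involution $\ninvo{N}$; the classical iMCMC correctness theorem therefore yields invariance of $\sdist$ under the $N$-conditional kernel, and averaging over $N$ preserves this. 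The acceptance ratio collapses to the one in \cref{fig:np-imcmc algo} because the stock densities in numerator and denominator account for precisely the padding coordinates, and \cref{vass: partial block diagonal inv} ensures consistency across dimensions.

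Finally, integrating out the padding coordinates turns $\sdist$ into $\tdist$ on $\support{\tree}$ (using \cref{vass: integrable tree} to normalise), and \cref{vass: almost surely terminating tree} ensures that the extension loop terminates almost surely so that the chain is well-defined. The main obstacle is the dimensional bookkeeping in this last marginalisation, since the E-NP-iMCMC chain changes dimension at every step; the projection commutation property is what ultimately saves the day, by guaranteeing that truncation and extension commute with the family $\{\ninvo{n}\}$ in exactly the way needed for the marginal of $\sdist$ to be preserved by the kernel obtained from E-NP-iMCMC after erasing padding.
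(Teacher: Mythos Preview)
Your high-level strategy---reduce to the hybrid sampler by taking $\entspace=\Real$, introduce a helper chain on states, identify its invariant state distribution $\sdist$, and marginalise back to $\tdist$---matches the paper's. The gaps are in the middle step. First, your state distribution as written has infinite mass: you assign density $\propto \tree(\range{\enta}{1}{k})\,\nauxkernelpdf{k}(\range{\enta}{1}{k},\range{\auxa}{1}{k})\,\nstockpdf{n-k}(\cdots)$ to every $n$-dimensional state with $\range{\enta}{1}{k}\in\support{\tree}$, but a fixed $k$-prefix appears at every dimension $n\ge k$, so summing over $n$ diverges. The paper repairs this by supporting $\sdist$ only on \emph{valid} states---those $(\enta,\auxa)\in\entspace^n\times\auxspace^n$ for which both $\enta$ and the parameter-component of $\ninvo{n}(\enta,\auxa)$ have a prefix in $\support{\tree}$, and no proper projection $\take{m}(\enta,\auxa)$ with $m<n$ already enjoys this property. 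The minimality clause pins down a unique dimension per state, and the \invoass{} is exactly what is needed to show $\ninvo{n}$ maps $\validstates_n$ to itself.

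Second, the argument ``conditioned on $N$, E-NP-iMCMC is ordinary iMCMC, and averaging over $N$ preserves $\sdist$'' does not go through: $N$ depends on the current state and the padding draws, and a state-dependent mixture of $\sdist$-invariant kernels is not in general $\sdist$-invariant. The paper's remedy is to write the helper kernel as a \emph{composition} $T_2\circ T_1$ rather than a mixture. Here $T_1$ takes an arbitrary valid state, extracts its supported prefix, resamples the auxiliary from $\nauxkernel{k_0}$, and pads with stock draws until the result is valid; $T_2$ then applies $\ninvo{n}$ and accept/reject at the resulting fixed dimension $n$. Each of $T_1,T_2$ is shown to preserve $\sdist$ separately---$T_1$ by direct integration exploiting that $\nauxkernel{k_0}$ and the stock padding are probability kernels, $T_2$ by the usual iMCMC detailed-balance computation restricted to each $\validstates_n$---and compositions of invariant kernels are invariant. (Note also that the auxiliary kernel at dimension $N$ is not $\nauxkernel{N}$ as you write, but $\nauxkernel{k_0}$ tensored with the stock measure on the padding coordinates; this is why the acceptance ratio in \cref{fig:np-imcmc algo} uses $\nauxkernelpdf{k_0}$ and $\nauxkernelpdf{k}$ rather than $\nauxkernelpdf{n}$.)
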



\section{Transforming NP-iMCMC samplers}
\label{sec: combination}

The strength of the iMCMC framework lies in its flexibility,
which makes it a useful tool capable of expressing important ideas in existing MCMC algorithms as ``tricks'',
namely
\begin{compactitem}
  \item state-dependent mixture (Trick 1 and 2 in \cite{DBLP:conf/icml/NeklyudovWEV20}),
  \item smart involutions (Trick 3 and 4), and
  \item smart compositions (Trick 5 and 6).
\end{compactitem}
In each of these tricks, the auxiliary kernel and involution take special forms
to equip the resulting sampler with desirable properties such as
higher acceptance ratio and better mixing times.
This enables a ``make to order'' approach in the design of novel MCMC samplers.

A natural question is whether there are similar tricks for the NP-iMCMC framework.
In this section, we examine the tricks discussed in \cite{DBLP:conf/icml/NeklyudovWEV20},
giving requirements for and showing via examples how one can design \emph{novel} NP-iMCMC samplers with bespoke properties by suitable applications of these ``tricks'' to simple NP-iMCMC samplers.
\changed[cm]{Similar applications can be made to the generalisations of NP-iMCMC
such as Hybrid NP-iMCMC (\cref{app: variants}) and Multiple NP-iMCMC (\cref{sec: tecniques on ms np-imcmc}).}
Throughout this section, we consider samplers for a program $\terma$ expressed in a universal PPL which has target density function $\tree$ that is integrable (\cref{vass: integrable tree}) and almost surely terminating (\cref{vass: almost surely terminating tree}).
\lo{We should also assume that $w$ satisfies (\cref{vass: partial block diagonal inv})? After all, the state-dependent mixture construction (\cref{sec:State-dependent mixture}) assumes it.}
\fz{I agree and then we can remove that condition from \cref{sec:State-dependent mixture}.}
\cm{\cref{vass: partial block diagonal inv} is not a condition on $w$, it is a condition on the involutions $\ninvo{n}$. Since the involution takes different forms in different tricks, I thought it is better to state it case-by-case. Let me know what you think.}
\fz{Good point. We can leave it as is.}

\subsection{State-dependent mixture}
\label{sec:State-dependent mixture}
Suppose we want a sampler that chooses a suitable NP-iMCMC sampler depending on the current sample.
This might be beneficial for models that are modular, and where there is already a good sampler for each module.
We can form a \defn{state-dependent mixture} of a
family $\set{\imcmca_\mixa}_{\mixa\in M}$
of NP-iMCMC samplers\footnote{
We treat $\imcmca_\mixa$ as a piece of computer code that
changes the sample via the NP-iMCMC method described in \cref{fig:np-imcmc algo}.}
which
runs $\imcmca_\mixa$ with a weight depending on the current sample.
See \cref{sec: state-dependent mixture of np-imcmc} for details of the algorithm.

\begin{remark}
  This corresponds to Tricks 1 and 2 discussed in \cite{DBLP:conf/icml/NeklyudovWEV20}
  which generalises the
  Mixture Proposal MCMC
  and Sample-Adaptive MCMC samplers.
\end{remark}



\subsection{Auxiliary direction}
\label{sec: auxiliary direction}

Suppose we want to use sophisticated \changed[cm]{bijective} but non-involutive endofunctions
$f^{(n)}$ on $\entspace^n \times \auxspace^n$
to better explore the parameter space and return proposals with a high acceptance ratio.
Assuming
both families $\set{f^{(n)}}_n$ and $\set{\inv{f^{(n)}}}_n$ satisfy the
\invoass{} (\cref{vass: partial block diagonal inv}),
we can construct an NP-iMCMC sampler with \defn{auxiliary direction}, which
\begin{compactitem}
  \item
  samples a direction $\dira \in \dirspace := \set{+,-}$ with equal probability; and
  \item
  generates the next sample by running
  \cref{np-imcmc step: aux sample,np-imcmc step: involution,np-imcmc step: extend,np-imcmc step: accept/reject}
  of the NP-iMCMC sampler using
  $\set{f^{(n)}}_n$ to suggest the proposal sample if $\dira$ is sampled to be $+$;
  otherwise $\set{\inv{f^{(n)}}}_n$ is used.
\end{compactitem}
See \cref{sec: auxiliary direction np-imcmc} for details of the algorithm.

Notice that since the distribution of the direction variable $\dira$
is the discrete uniform distribution,
we do not need to alter the acceptance ratio in \cref{np-imcmc step: accept/reject}.

\begin{example}[NP-HMC]
\label{eg: np-hmc}
We can formulate the recently proposed
Nonparametric Hamiltonian Monte Carlo sampler in \cite{DBLP:conf/icml/MakZO21}
using the
\changed[cm]{(Multiple Step)}
NP-iMCMC framework with auxiliary direction,
in which case the sophisticated non-involutive endofunction is the
leapfrog method $\leapfrog$.
(See \cref{app: np-hmc} for details.)
\end{example}

\begin{remark}
This corresponds to Trick 3 described in \cite{DBLP:conf/icml/NeklyudovWEV20}.
\changed[fz]{Trick 4 from \cite{DBLP:conf/icml/NeklyudovWEV20} cannot be applied
in our framework because the \invoass{} (\cref{vass: partial block diagonal inv})
is not closed under composition.}
\end{remark}





\subsection{Persistence}


Suppose we want a nonreversible sampler, so as to obtain better mixing times.
A typical way of achieving nonreversibility from an originally reversible MCMC sampler
is to reuse the value for a variable
(that is previously resampled in the original reversible sampler)
in the next iteration if the proposed sample is accepted.
In this way, the value of such a variable is allowed to \emph{persist}, making the sampler nonreversible.

A key observation made by \citet{DBLP:conf/icml/NeklyudovWEV20} is that
the composition of reversible iMCMC samplers can yield a nonreversible sampler.
Two systematic techniques to achieve nonreversibility are persistent direction (Trick 5) and an auxiliary kernel (Trick 6).
We present similar approaches for NP-iMCMC samplers.


Suppose there is an NP-iMCMC sampler that uses the auxiliary direction as described in \cref{sec: auxiliary direction},
i.e.~there is a non-involutive bijective endofunction $f^{(n)}$ on $\entspace^n \times \auxspace^n$ for each $n\in\Nat$ such that
$\set{f^{(n)}}_n$ and $\set{\inv{f^{(n)}}}_n$ satisfy the \invoass{}
(\cref{vass: partial block diagonal inv}).
\changed[cm]{In addition, assume there are
two distinct families of auxiliary kernels,
namely
$\set{\nauxkernel{n}_{+}}_n $ and
$\set{\nauxkernel{n}_{-}}_n $.}
The \changed[fz]{corresponding} NP-iMCMC sampler with \defn{persistence}
\begin{compactitem}
  \item
  proposes the next sample by running \cref{np-imcmc step: aux sample,np-imcmc step: involution,np-imcmc step: extend}
  of the NP-iMCMC sampler with $\set{\nauxkernel{n}_{+}}_n $ and $\set{f^{(n)}}_n$ if $\dira$ is sampled to be $+$;
  otherwise $\set{\nauxkernel{n}_{-}}_n $ and $\set{\inv{f^{(n)}}}_n$ are used;

  \item
  accepts the proposed sample with probability indicated in \cref{np-imcmc step: accept/reject}
  of the NP-iMCMC sampler;
  otherwise repeats the current sample and
  \emph{flips the direction $\dira$}.
\end{compactitem}
See \cref{sec: persistent np-imcmc} for details of the algorithm.
The family of kernels and maps indeed \emph{persist} across multiple iterations if the proposals of these iterations are accepted.
\changed[fz]{The intuitive idea behind this is that if a family of kernels and maps perform well (proposals are accepted) in the current part of the sample space, we should keep it, and otherwise switch to its inverse.}

\begin{remark}
  This corresponds to Tricks 5 and 6 described in \cite{DBLP:conf/icml/NeklyudovWEV20},
  which can be found in nonreversible MCMC sampler like
  the Generalised HMC algorithm \cite{HOROWITZ1991247},
  the Look Ahead HMC sampler \cite{DBLP:conf/icml/Sohl-DicksteinMD14,DBLP:journals/jcphy/CamposS15}
  and
  Lifted MH \cite{DBLP:journals/corr/abs-0809-0916}.
\end{remark}

\begin{example}[NP-HMC with Persistence]
\label{ex:np-hmc persistent}
The nonreversible HMC sampler in \cite{HOROWITZ1991247} uses persistence, and, in addition, (partially) reuses the momentum vector from the previous iteration.
As shown in \cite{DBLP:conf/icml/NeklyudovWEV20}, it can be viewed as a composition of iMCMC kernels.
Using the method indicated above, we can  also add persistence to NP-HMC.
(See \cref{app: gen np-hmc} for details.)
\end{example}

\begin{example}[NP-Lookahead-HMC]
\label{ex:np-la-hmc}
\changed[cm]{
Look Ahead HMC \cite{DBLP:conf/icml/Sohl-DicksteinMD14,DBLP:journals/jcphy/CamposS15} can be seen as an HMC sampler with persistence that generates a new state with a varying number of leapfrog steps, depending on the value of the auxiliary variable.
Similarly, we can construct an NP-HMC sampler with Persistence that varies the numbers of leapfrog steps.
(See \cref{app: look ahead np-hmc} for more details.)}
\end{example}

\section{Experiments}
\label{sec: experiments}


\subsection{Nonparametric Metropolis-Hastings}

\begin{figure}[t]
  \includegraphics[width=0.5\textwidth]{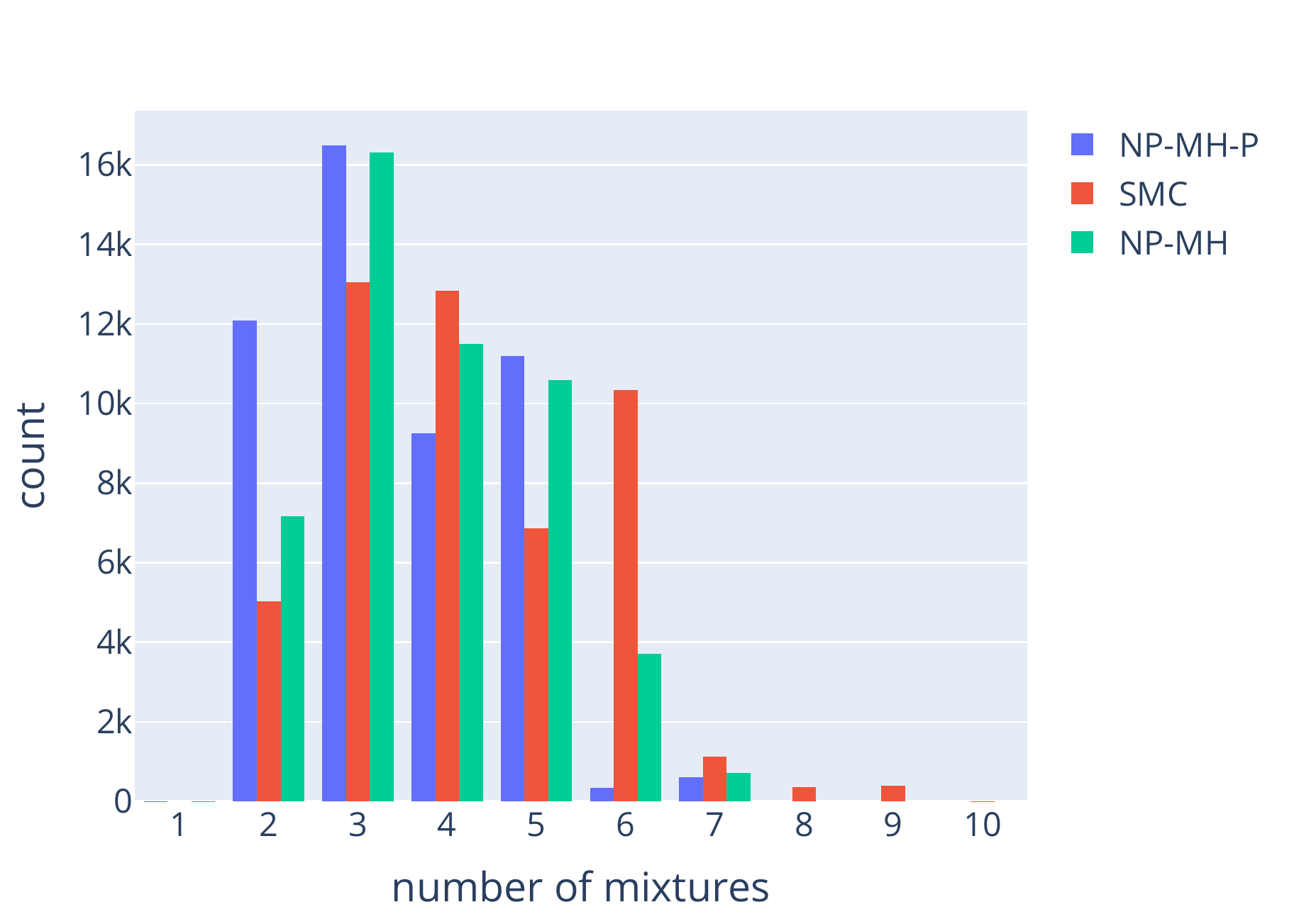}
  \caption{Histogram of the number of components for the infinite GMM; correct posterior is 3.}
  \label{fig: gmm plot}
\end{figure}

We first implemented two simple instances of the NP-iMCMC sampler, namely
NP-MH (\cref{app: np-mh}) and
NP-MH with Persistence (\cref{app: lifted np-mh})
in the Turing language \cite{DBLP:conf/aistats/GeXG18}.\footnote{
\changed[cm]{
The code to reproduce the Turing experiments is available in
\url{https://github.com/cmaarkol/nonparametric-mh}.}
Turing's SMC implementation is nondeterministic (even with a fixed random seed), so its results may vary somewhat, but everything else is exactly reproducible.}
We compared them with Turing's built-in Sequential Monte Carlo (SMC) algorithm on
an infinite Gaussian mixture model (GMM) where the number of mixture components is drawn from a normal distribution.
Posterior inference is performed on 30 data points generated from a ground truth with three components.
The results of ten runs with 5000 iterations each (\cref{fig: gmm plot}) suggest that
\changed[cm]{the NP-iMCMC samplers work pretty well.}

\subsection{Nonparametric Hamiltonian Monte Carlo}

Secondly, we consider Nonparametric HMC \cite{DBLP:conf/icml/MakZO21}, mentioned in \cref{eg: np-hmc} before.
We have seen how the techniques from \cref{sec: combination} can yield nonreversible versions of NP-iMCMC inference algorithms.
Here, we look at \changed[fz]{nonparametric versions of} two extensions \changed[fz]{described in \cite{DBLP:conf/icml/NeklyudovWEV20}}:
persistence (\cref{ex:np-hmc persistent}) and lookahead (\cref{ex:np-la-hmc}).
Persistence means that the previous momentum vector is reused in the next iteration.
It is parametrised by $\alpha \in [0,1]$ where $\alpha = 1$ means no persistence (standard HMC) and $\alpha = 0$ means full persistence (no randomness added to the momentum vector).
Lookahead HMC is parametrised by $K \geq 0$, which is the number of extra iterations (``look ahead'') to try before rejecting a proposed sample (so $K = 0$ corresponds to standard HMC).
\changed[fz]{Detailed descriptions of these algorithms and how they fit into the}
\changed[cm]{(Multiple Step)}
\changed[fz]{NP-iMCMC framework can be found in \cref{app: gen np-hmc,app: look ahead np-hmc}.} 

We evaluate these extensions of NP-HMC on the benchmarks from \cite{DBLP:conf/icml/MakZO21}: a model for the geometric distribution, a model involving a random walk, and an unbounded Gaussian mixture model.
Note that similarly to \cite{DBLP:conf/icml/MakZO21}, we actually work with a discontinuous version of NP-HMC, called \emph{NP-DHMC}, which is a nonparametric extension of discontinuous HMC \cite{NishimuraDL20}.%
\footnote{\changed[fz]{The source code is available at \url{https://github.com/fzaiser/nonparametric-hmc}.}}
The discontinuous version can handle the discontinuities arising from the jumps between dimensions more efficiently.
We don't discuss it in this paper due to lack of space.
However, the modifications necessary to this discontinuous version are the same as for the standard NP-HMC.
\changed[fz]{\citet{DBLP:conf/icml/MakZO21} demonstrated the usefulness of NP-DHMC and how it can obtain better results than other general-purpose inference algorithms like Lightweight Metropolis-Hastings and Random-walk Lightweight Metropolis-Hastings.
Here, we focus on the benefits of nonreversible versions of NP-DHMC, which were derived using the}
\changed[cm]{(Multiple Step)}
\changed[fz]{NP-iMCMC framework.}

\begin{table}[t]
\caption{Geometric distribution example: total variation difference from the ground truth, averaged over 10 runs, and standard deviation. Each run: $10^3$ samples, $L$ leapfrog steps, step size $\epsilon = 0.1$, persistence parameter $\alpha \in \{0.5\}$.}
\label{table:geometric-results}

\begin{tabular}{ccc}
&persistence &TVD from ground truth \\
\hline
$L = 5$ & --- & $0.0524 \pm 0.0069$ \\
$L = 5$ & $\alpha = 0.5$ & $0.0464 \pm 0.0074$ \\
$L = 5$ & $\alpha = 0.1$ & $0.0461 \pm 0.0083$ \\
\hline
$L = 2$ & --- & $0.0768 \pm 0.0181$ \\
$L = 2$ & $\alpha = 0.5$ & $0.0570 \pm 0.0115$ \\
$L = 2$ & $\alpha = 0.1$ & $0.0534 \pm 0.0058$
\end{tabular}
\end{table}

\mypara{Geometric distribution}
The geometric distribution benchmark from \cite{DBLP:conf/icml/MakZO21} illustrates the usefulness of persistence:
we ran NP-DHMC for a step count $L \in \{ 2, 5 \}$ with and without persistence.
As can be seen in \cref{table:geometric-results}, persistence usually decreases the distance from the ground truth.
In fact, the configuration $L = 2, \alpha = 0.1$ is almost as good as $L = 5$ without persistence, despite taking 2.5 times less computing time.

\begin{figure}
\includegraphics[width=\columnwidth]{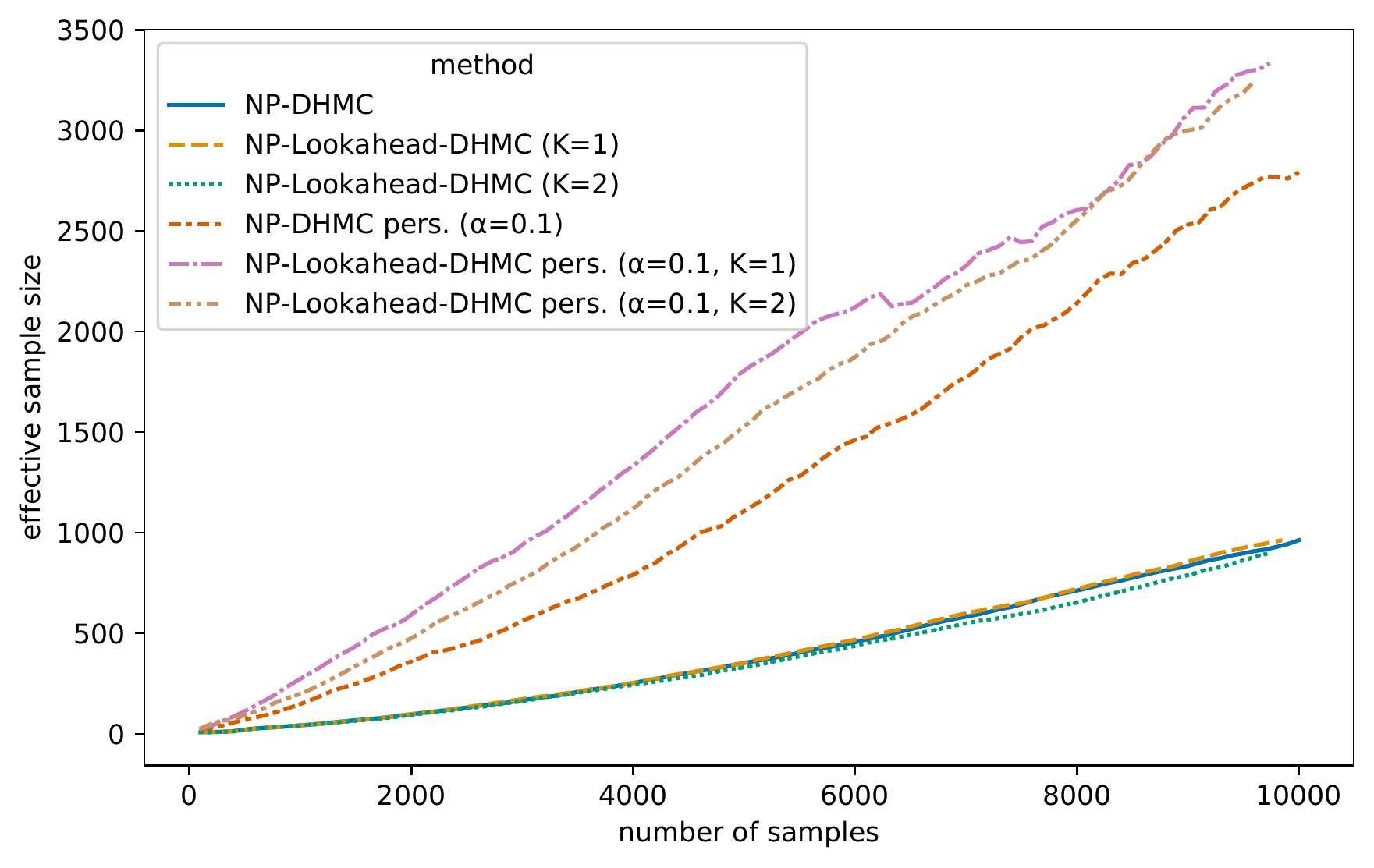}
\caption{ESS for the random walk example in terms of number of samples, computed from 10 runs.
Each run: $10^3$ samples with $L = 5$ leapfrog steps of size $\epsilon = 0.1$, persistence parameter $\alpha \in \{0.5, 0.1\}$, and look-ahead $K \in \{1, 2\}$.}
\label{fig:walk-ess}
\end{figure}

\mypara{Random walk}
The next benchmark from \cite{DBLP:conf/icml/MakZO21} models a random walk and observes the distance travelled.
\Cref{fig:walk-ess} shows the effective sample size (ESS) in terms of the number of samples drawn, comparing versions of NP-DHMC with persistence ($\alpha = 0.5$) and look-ahead ($K \in \{1, 2\}$).
We can see again that persistence is clearly advantageous.
Look-ahead ($K \in \{1,2\}$) seems to give an additional boost on top.
We ran all these versions with the same computation time budget, which is why the the lines for $K = 1,2$ are cut off before the others.

\begin{figure}[t]
\includegraphics[width=\columnwidth]{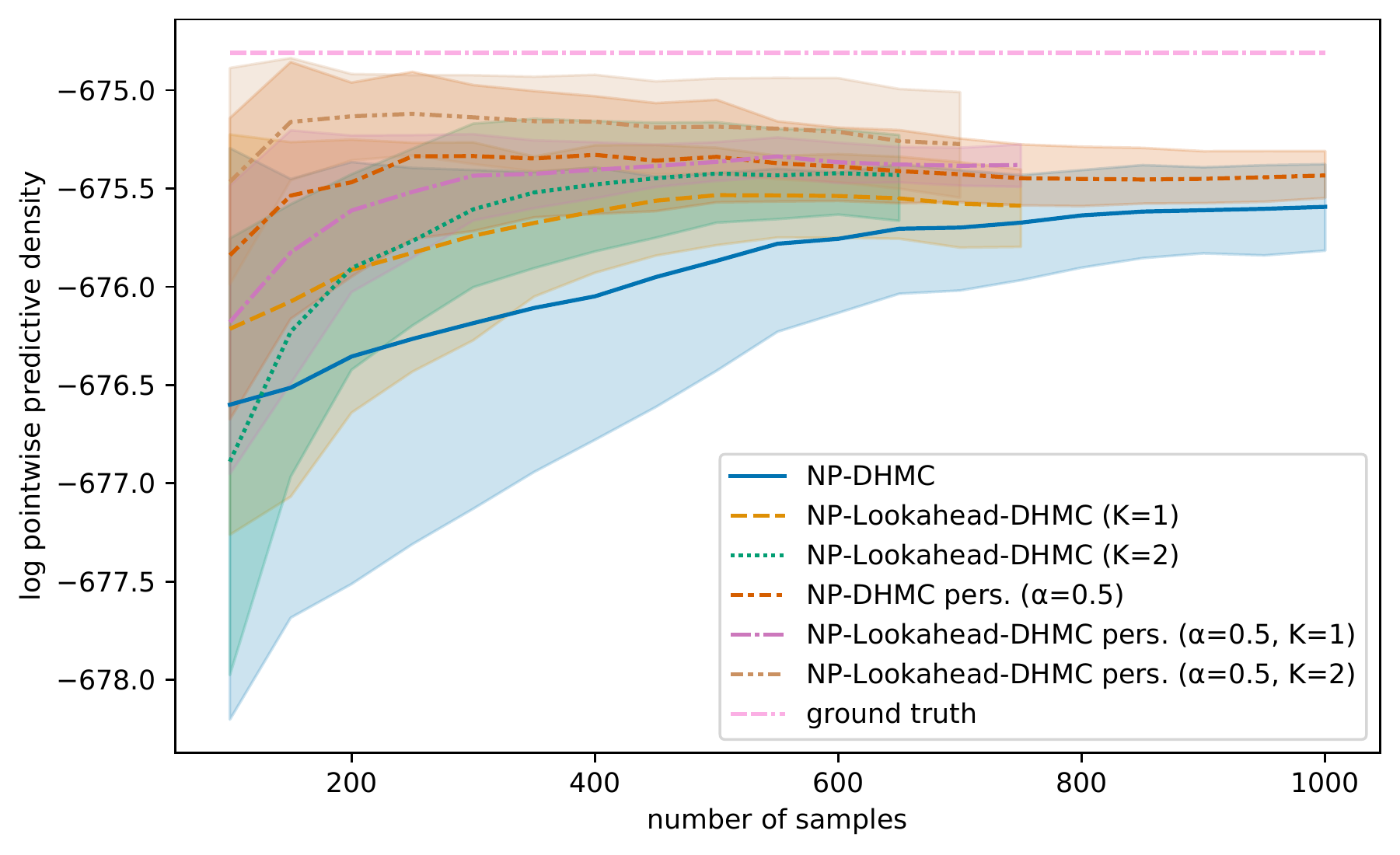}
\caption{Gaussian mixture with Poisson prior: LPPD in terms of number of samples, averaged over 10 runs.
The shaded area is one standard deviation.
Each run: $10^3$ samples with $L = 25$ leapfrog steps of size $\epsilon = 0.05$, persistence parameter $\alpha = 0.5$, and look-ahead $K \in \{1, 2\}$.}
\label{fig:gmm-lppd}
\end{figure}

\mypara{Unbounded Gaussian mixture model}
Next, we consider a Gaussian mixture model where the number of mixture components is drawn from a Poisson prior.
Inference is performed on a training data set generated from a mixture of 9 components (the ground truth).
We then compute the log pointwise predictive density (LPPD) on a test data set drawn from the same distribution as the training data.
The LPPD is shown in \cref{fig:gmm-lppd} in terms of the number of samples.
Note that again, all versions were run with the same computation budget, which is why some of the lines are cut off early.
Despite this, we can see that the versions with lookahead ($K \in \{1,2\}$) converge more quickly than the versions without lookahead.
Persistent direction ($\alpha = 0.5$) also seems to have a (smaller) benefit.

\begin{figure}
\includegraphics[width=\columnwidth]{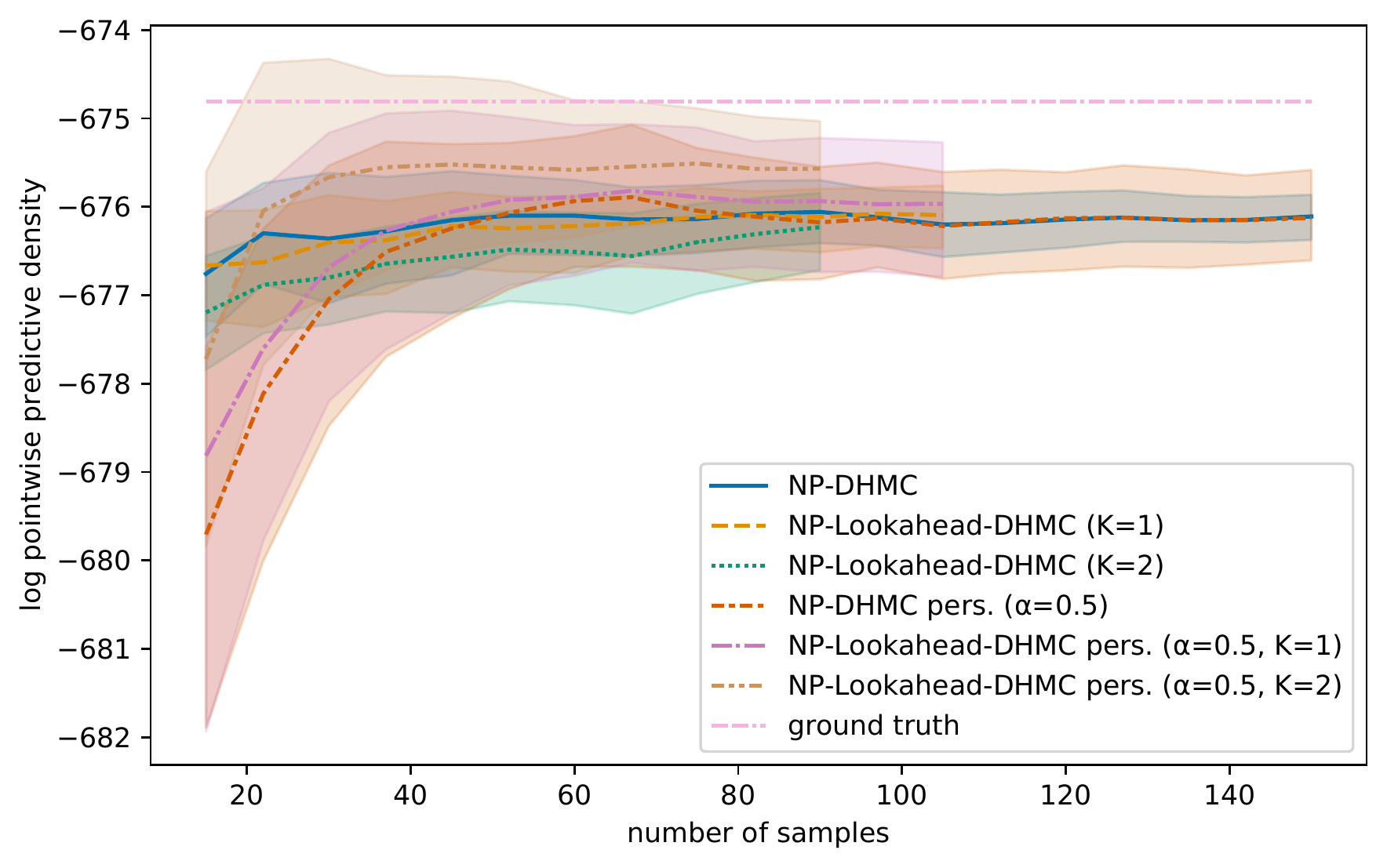}
\caption{Dirichlet process mixture: LPPD in terms of number of samples, averaged over 10 runs.
The shaded area is one standard deviation.
Each run: 150 samples with $L = 20$ leapfrog steps of size $\epsilon = 0.05$, persistence parameter $\alpha = 0.5$, and look-ahead $K \in \{1, 2\}$.}
\label{fig:dpmm-lppd}
\end{figure}

\mypara{Dirichlet process mixture model}
Finally, we consider a Gaussian mixture whose weights are drawn from a Dirichlet process.
The rest of the setup is the same as for the Poisson prior, and the results are shown in \cref{fig:dpmm-lppd}.
The version with persistence is worse at the start but obtains a better LPPD at the end.
Look-ahead ($K \in \{1, 2\}$) yields a small additional boost in the LPPD.
\changed[fz]{It should be noted that the variance over the 10 runs is larger in this example than in the previous benchmarks, so the conclusion of this benchmark is less clear-cut.}

\section{Related work and Conclusion}
\label{sec: conclusion}

\mypara{Involutive MCMC and its instances}

The involutive MCMC \changed[fz]{framework} 
\cite{DBLP:conf/icml/NeklyudovWEV20,DBLP:conf/pldi/Cusumano-Towner19,Matheos2020}
can in principle be used for nonparametric models
by setting $X := \bigcup_{n\in\Nat}\parspace^n$ and
$Y := \bigcup_{n\in\Nat}\auxspace^n$ in \cref{fig:imcmc algo} and
defining an auxiliary kernel on
$X \kernelto Y := \bigcup_{n\in\Nat}\parspace^n \kernelto \bigcup_{n\in\Nat}\auxspace^n$
an involution on
$X \times Y := \bigcup_{n\in\Nat}\parspace^n \times \auxspace^n$.
For instance, {Reversible Jump MCMC}
\cite{Green95}
is an instance of iMCMC that works for the infinite GMM model,
with the split-merge proposal \cite{https://doi.org/10.1111/1467-9868.00095} specifying when and how states can ``jump'' across dimensions.
However, designing appropriate auxiliary kernels and involutions
\changed[cm]{
that enable the extension of an iMCMC sampler to nonparametric models}
remains challenging and model specific.
By contrast, NP-iMCMC only requires the specification of involutions on the finite-dimensional space $\parspace^n \times \auxspace^n$;
moreover, it provides a \emph{general} procedure (via \cref{np-imcmc step: extend}) that drives state movement between dimensions.
For designers of nonparametric samplers who do not care to custom build trans-dimensional methods, we contend that NP-iMCMC is their method of choice.

\changed[cm]{
The performance of NP-iMCMC and iMCMC depends on the complexity of the respective auxiliary kernels, involutions and the model in question.
Take iGMM for example.
RJMCMC with
the split-merge proposal which computes the weight, mean, and variance of the new component(s)
would be slower than
NP-MH, an instance of NP-iMCMC with a computationally light involution (a swap),
but more efficient than
NP-HMC, an instance of (Multiple Step) NP-iMCMC with the computationally heavy leapfrog integrator as involution.}

\mypara{Trans-dimensional samplers}
A standard MCMC algorithm for universal PPLs is the Lightweight Metropolis-Hastings algorithm (LMH) \cite{DBLP:conf/aistats/YangHG14,DBLP:conf/pkdd/TolpinMPW15,DBLP:conf/aistats/RitchieSG16}.
Widely implemented in several universal PPLs (Anglican, Venture, Gen, and Web PPL), LMH performs single-site updates on the current sample and re-executes the program from the resampling point.


Divide, Conquer, and Combine (DCC) \cite{DBLP:conf/icml/ZhouYTR20} is an inference algorithm that is applicable to probabilistic programs that use branching and recursion.
A hybrid algorithm, DCC solves the problem of designing a proposal that can efficiently transition between configurations by performing local inferences on submodels, and returning an appropriately weighted combination of the respective samples.

\citet{DBLP:conf/icml/MakZO21} have recently introduced Nonparametric Hamiltonian Monte Carlo (NP-HMC), which generalises HMC to nonparametric models.
\fz{Since we use tree representable functions in the main text, I don't think we need to mention it here. I suggest the following instead:}
\fz{As we have seen, NP-HMC is an instantiation of NP-iMCMC.}
\changed[cm]{As we've seen, NP-HMC is an instantiation of (Multiple Step) NP-iMCMC.}
\lo{But this is not true.}
\paragraph{Conclusion}
We have introduced the \emph{nonparametric involutive MCMC algorithm} 
as a general framework for designing MCMC algorithms for models expressible in a universal PPL, and provided a correctness proof. 
To demonstrate the relative ease of make-to-order design of nonparametric extensions of existing MCMC algorithms, 
we have constructed several \emph{new} algorithms, and demonstrated empirically that the expected features and statistical properties are preserved.

\section*{Acknowledgements}

We thank the reviewers for their
insightful feedback and pointing out important related work.
We are grateful to Maria Craciun who gave detailed comments on an early draft, and
to Hugo Paquet and Dominik Wagner for their helpful comments and advice. We gratefully acknowledge support from
the EPSRC and the Croucher Foundation.

\clearpage

\bibliography{front-matter/database.bib}

\begin{thebibliography}{46}
\providecommand{\natexlab}[1]{#1}
\providecommand{\url}[1]{\texttt{#1}}
\expandafter\ifx\csname urlstyle\endcsname\relax
  \providecommand{\doi}[1]{doi: #1}\else
  \providecommand{\doi}{doi: \begingroup \urlstyle{rm}\Url}\fi

\bibitem[Borgstr{\"{o}}m et~al.(2016)Borgstr{\"{o}}m, Lago, Gordon, and
  Szymczak]{DBLP:conf/icfp/BorgstromLGS16}
Borgstr{\"{o}}m, J., Lago, U.~D., Gordon, A.~D., and Szymczak, M.
\newblock A lambda-calculus foundation for universal probabilistic programming.
\newblock In \emph{Proceedings of the 21st {ACM} {SIGPLAN} International
  Conference on Functional Programming ({ICFP} 2016)}, pp.\  33--46, 2016.

\bibitem[Campos \& Sanz{-}Serna(2015)Campos and
  Sanz{-}Serna]{DBLP:journals/jcphy/CamposS15}
Campos, C.~M. and Sanz{-}Serna, J.~M.
\newblock Extra chance generalized hybrid {Monte Carlo}.
\newblock \emph{Journal of Computational Physics}, 281:\penalty0 365--374,
  2015.

\bibitem[Culpepper \& Cobb(2017)Culpepper and
  Cobb]{DBLP:conf/esop/CulpepperC17}
Culpepper, R. and Cobb, A.
\newblock Contextual equivalence for probabilistic programs with continuous
  random variables and scoring.
\newblock In Yang, H. (ed.), \emph{Proceedings of the 26th European Symposium
  on Programming ({ESOP} 2017), Held as Part of the European Joint Conferences
  on Theory and Practice of Software ({ETAPS} 2017)}, volume 10201 of
  \emph{Lecture Notes in Computer Science}, pp.\  368--392. Springer, 2017.

\bibitem[Cusumano-Towner et~al.(2020)Cusumano-Towner, Lew, and
  Mansinghka]{cusumanotowner2020automating}
Cusumano-Towner, M., Lew, A.~K., and Mansinghka, V.~K.
\newblock Automating involutive {MCMC} using probabilistic and differentiable
  programming, 2020.

\bibitem[Cusumano{-}Towner et~al.(2019)Cusumano{-}Towner, Saad, Lew, and
  Mansinghka]{DBLP:conf/pldi/Cusumano-Towner19}
Cusumano{-}Towner, M.~F., Saad, F.~A., Lew, A.~K., and Mansinghka, V.~K.
\newblock Gen: a general-purpose probabilistic programming system with
  programmable inference.
\newblock In McKinley, K.~S. and Fisher, K. (eds.), \emph{Proceedings of the
  40th {ACM} {SIGPLAN} Conference on Programming Language Design and
  Implementation ({PLDI} 2019)}, pp.\  221--236. {ACM}, 2019.

\bibitem[Danos \& Ehrhard(2011)Danos and Ehrhard]{DBLP:journals/iandc/DanosE11}
Danos, V. and Ehrhard, T.
\newblock Probabilistic coherence spaces as a model of higher-order
  probabilistic computation.
\newblock \emph{Information and Computation}, 209\penalty0 (6):\penalty0
  966--991, 2011.

\bibitem[Devroye(1986)]{Devroye1986}
Devroye, L.
\newblock Discrete univariate distributions.
\newblock In \emph{Non-Uniform Random Variate Generation}, chapter~10, pp.\
  485--553. Springer-Verlag, New York, NJ, USA, 1986.

\bibitem[Ehrhard et~al.(2014)Ehrhard, Tasson, and
  Pagani]{DBLP:conf/popl/EhrhardTP14}
Ehrhard, T., Tasson, C., and Pagani, M.
\newblock Probabilistic coherence spaces are fully abstract for probabilistic
  {PCF}.
\newblock In Jagannathan, S. and Sewell, P. (eds.), \emph{Proceedings of the
  41st {ACM} {SIGPLAN-SIGACT} Symposium on Principles of Programming Languages
  ({POPL} 2014)}, pp.\  309--320. {ACM}, 2014.

\bibitem[Ehrhard et~al.(2018)Ehrhard, Pagani, and
  Tasson]{DBLP:journals/pacmpl/EhrhardPT18}
Ehrhard, T., Pagani, M., and Tasson, C.
\newblock Measurable cones and stable, measurable functions: a model for
  probabilistic higher-order programming.
\newblock \emph{Proceedings of the {ACM} on {Programming Languages}},
  2\penalty0 ({POPL}):\penalty0 59:1--59:28, 2018.

\bibitem[Ge et~al.(2018)Ge, Xu, and Ghahramani]{DBLP:conf/aistats/GeXG18}
Ge, H., Xu, K., and Ghahramani, Z.
\newblock Turing: Composable inference for probabilistic programming.
\newblock In Storkey, A.~J. and P{\'{e}}rez{-}Cruz, F. (eds.),
  \emph{Proceedings of the 21st International Conference on Artificial
  Intelligence and Statistics ({AISTATS} 2018)}, volume~84 of \emph{Proceedings
  of Machine Learning Research}, pp.\  1682--1690. {PMLR}, 2018.

\bibitem[Geman \& Geman(1984)Geman and Geman]{geman1984stochastic}
Geman, S. and Geman, D.
\newblock Stochastic relaxation, gibbs distributions, and the bayesian
  restoration of images.
\newblock \emph{IEEE Transactions on pattern analysis and machine
  intelligence}, PAMI-6\penalty0 (6):\penalty0 721--741, 1984.

\bibitem[Goodman et~al.(2008)Goodman, Mansinghka, Roy, Bonawitz, and
  Tenenbaum]{DBLP:conf/uai/GoodmanMRBT08}
Goodman, N.~D., Mansinghka, V.~K., Roy, D.~M., Bonawitz, K., and Tenenbaum,
  J.~B.
\newblock Church: a language for generative models.
\newblock In McAllester, D.~A. and Myllym{\"{a}}ki, P. (eds.),
  \emph{Proceedings of the 24th Conference in Uncertainty in Artificial
  Intelligence ({UAI} 2008)}, pp.\  220--229. {AUAI} Press, 2008.

\bibitem[Green(1995)]{Green95}
Green, P.~J.
\newblock {Reversible jump {Markov} chain {Monte Carlo} computation and
  Bayesian model determination}.
\newblock \emph{Biometrika}, 82\penalty0 (4):\penalty0 711--732, 12 1995.

\bibitem[Hastings(1970)]{hastings1970monte}
Hastings, W.~K.
\newblock {{Monte Carlo} sampling methods using {Markov} chains and their
  applications}.
\newblock \emph{Biometrika}, 57\penalty0 (1):\penalty0 97--109, 04 1970.

\bibitem[Horowitz(1991)]{HOROWITZ1991247}
Horowitz, A.~M.
\newblock A generalized guided {{Monte Carlo}} algorithm.
\newblock \emph{Physics Letters B}, 268\penalty0 (2):\penalty0 247--252, 1991.
\newblock ISSN 0370-2693.

\bibitem[Hur et~al.(2015)Hur, Nori, Rajamani, and Samuel]{HurNRS15}
Hur, C., Nori, A.~V., Rajamani, S.~K., and Samuel, S.
\newblock A provably correct sampler for probabilistic programs.
\newblock In \emph{Proceedings of the 35th {IARCS} Annual Conference on
  Foundation of Software Technology and Theoretical Computer Science ({FSTTCS}
  2015)}, pp.\  475--488, 2015.

\bibitem[Kiselyov(2016)]{Kiselyov16}
Kiselyov, O.
\newblock Problems of the lightweight implementation of probabilistic
  programming.
\newblock In \emph{Proceedings of Workshop on Probabilistic Programming
  Semantics}, 2016.

\bibitem[Kudlicka et~al.(2019)Kudlicka, Murray, Ronquist, and
  Sch{\"{o}}n]{DBLP:conf/uai/KudlickaMRS19}
Kudlicka, J., Murray, L.~M., Ronquist, F., and Sch{\"{o}}n, T.~B.
\newblock Probabilistic programming for birth-death models of evolution using
  an alive particle filter with delayed sampling.
\newblock In Globerson, A. and Silva, R. (eds.), \emph{Proceedings of the 35th
  Conference on Uncertainty in Artificial Intelligence ({UAI} 2019)}, volume
  115 of \emph{Proceedings of Machine Learning Research}, pp.\  679--689.
  {AUAI} Press, 2019.

\bibitem[Mak et~al.(2021{\natexlab{a}})Mak, Ong, Paquet, and Wagner]{MakOPW21}
Mak, C., Ong, C.~L., Paquet, H., and Wagner, D.
\newblock Densities of almost surely terminating probabilistic programs are
  differentiable almost everywhere.
\newblock In Yoshida, N. (ed.), \emph{Proceedings of the 30th European
  Symposium on Programming ({ESOP} 2021), Held as Part of the European Joint
  Conferences on Theory and Practice of Software ({ETAPS} 2021)}, volume 12648
  of \emph{Lecture Notes in Computer Science}, pp.\  432--461. Springer,
  2021{\natexlab{a}}.

\bibitem[Mak et~al.(2021{\natexlab{b}})Mak, Zaiser, and
  Ong]{DBLP:conf/icml/MakZO21}
Mak, C., Zaiser, F., and Ong, L.
\newblock Nonparametric {Hamiltonian} {Monte Carlo}.
\newblock In Meila, M. and Zhang, T. (eds.), \emph{Proceedings of the 38th
  International Conference on Machine Learning ({ICML} 2021)}, volume 139 of
  \emph{Proceedings of Machine Learning Research}, pp.\  7336--7347. {PMLR},
  2021{\natexlab{b}}.

\bibitem[Manning \& Schütze(1999)Manning and Schütze]{Manning99}
Manning, C. and Schütze, H.
\newblock \emph{Foundations of Statistical Natural Language Processing}.
\newblock MIT Press. Cambridge, MA, May 1999.

\bibitem[Matheos et~al.(2020)Matheos, Lew, Ghavamizadeh, Russell,
  Cusumano-Towner, and Mansinghka]{Matheos2020}
Matheos, G., Lew, A.~K., Ghavamizadeh, M., Russell, S., Cusumano-Towner, M.,
  and Mansinghka, V.~K.
\newblock Transforming worlds: Automated involutive {MCMC} for open-universe
  probabilistic models.
\newblock In \emph{the 3rd Symposium on Advances in Approximate Bayesian
  Inference}, pp.\  1--37, 2020.

\bibitem[Metropolis et~al.(1953)Metropolis, Rosenbluth, Rosenbluth, Teller, and
  Teller]{metropolis1953equation}
Metropolis, N., Rosenbluth, A.~W., Rosenbluth, M.~N., Teller, A.~H., and
  Teller, E.
\newblock Equation of state calculations by fast computing machines.
\newblock \emph{The journal of chemical physics}, 21\penalty0 (6):\penalty0
  1087--1092, 1953.

\bibitem[Murray et~al.(2018)Murray, Lund{\'{e}}n, Kudlicka, Broman, and
  Sch{\"{o}}n]{DBLP:conf/aistats/MurrayLKBS18}
Murray, L.~M., Lund{\'{e}}n, D., Kudlicka, J., Broman, D., and Sch{\"{o}}n,
  T.~B.
\newblock Delayed sampling and automatic rao-blackwellization of probabilistic
  programs.
\newblock In Storkey, A.~J. and P{\'{e}}rez{-}Cruz, F. (eds.),
  \emph{Proceedings of the 21st International Conference on Artificial
  Intelligence and Statistics ({AISTATS} 2018)}, volume~84 of \emph{Proceedings
  of Machine Learning Research}, pp.\  1037--1046. {PMLR}, 2018.

\bibitem[Neal(2011)]{Neal2011}
Neal, R.~M.
\newblock {MCMC} using {Hamiltonian} dynamics.
\newblock In Brooks, S., Gelman, A., Jones, G., and Meng, X.-L. (eds.),
  \emph{Handbook of {Markov} Chain {Monte Carlo}}, chapter~5. Chapman \& Hall
  \/ CRC Press, 2011.

\bibitem[Neklyudov et~al.(2020)Neklyudov, Welling, Egorov, and
  Vetrov]{DBLP:conf/icml/NeklyudovWEV20}
Neklyudov, K., Welling, M., Egorov, E., and Vetrov, D.~P.
\newblock Involutive {MCMC:} a unifying framework.
\newblock In \emph{Proceedings of the 37th International Conference on Machine
  Learning ({ICML} 2020)}, volume 119 of \emph{Proceedings of Machine Learning
  Research}, pp.\  7273--7282. {PMLR}, 2020.

\bibitem[Nishimura et~al.(2020)Nishimura, Dunson, and Lu]{NishimuraDL20}
Nishimura, A., Dunson, D.~B., and Lu, J.
\newblock Discontinuous {Hamiltonian} {Monte Carlo} for discrete parameters and
  discontinuous likelihoods.
\newblock \emph{Biometrika}, 107\penalty0 (2):\penalty0 365–380, 2020.

\bibitem[Ratner(2010)]{article}
Ratner, B.
\newblock Variable selection methods in regression: Ignorable problem, outing
  notable solution.
\newblock \emph{Journal of Targeting, Measurement and Analysis for Marketing},
  18:\penalty0 65--75, 2010.

\bibitem[Richardson \& Green(1997)Richardson and
  Green]{https://doi.org/10.1111/1467-9868.00095}
Richardson, S. and Green, P.~J.
\newblock On {Bayesian} analysis of mixtures with an unknown number of
  components (with discussion).
\newblock \emph{Journal of the Royal Statistical Society: Series B (Statistical
  Methodology)}, 59\penalty0 (4):\penalty0 731--792, 1997.

\bibitem[Ritchie et~al.(2016)Ritchie, Stuhlm{\"{u}}ller, and
  Goodman]{DBLP:conf/aistats/RitchieSG16}
Ritchie, D., Stuhlm{\"{u}}ller, A., and Goodman, N.~D.
\newblock {C3:} lightweight incrementalized {MCMC} for probabilistic programs
  using continuations and callsite caching.
\newblock In Gretton, A. and Robert, C.~C. (eds.), \emph{Proceedings of the
  19th International Conference on Artificial Intelligence and Statistics
  ({AISTATS} 2016)}, volume~51 of \emph{{JMLR} Workshop and Conference
  Proceedings}, pp.\  28--37. JMLR.org, 2016.

\bibitem[Ronquist et~al.(2021)Ronquist, Kudlicka, Senderov, Borgstr{\"o}m,
  Lartillot, Lund{\'e}n, Murray, Sch{\"o}n, and
  Broman]{Ronquist2020.06.16.154443}
Ronquist, F., Kudlicka, J., Senderov, V., Borgstr{\"o}m, J., Lartillot, N.,
  Lund{\'e}n, D., Murray, L., Sch{\"o}n, T.~B., and Broman, D.
\newblock Universal probabilistic programming offers a powerful approach to
  statistical phylogenetics.
\newblock \emph{Communications Biology}, 4\penalty0 (244):\penalty0 2399--3642,
  2021.

\bibitem[{\'S}cibior et~al.(2018){\'S}cibior, Kammar, V{\'a}k{\'a}r, Staton,
  Yang, Cai, Ostermann, Moss, Heunen, and Ghahramani]{scibior2017denotational}
{\'S}cibior, A., Kammar, O., V{\'a}k{\'a}r, M., Staton, S., Yang, H., Cai, Y.,
  Ostermann, K., Moss, S.~K., Heunen, C., and Ghahramani, Z.
\newblock Denotational validation of higher-order {Bayesian} inference.
\newblock \emph{Proceedings of the {ACM} on Programming Languages}, 2\penalty0
  (POPL):\penalty0 60:1--60:29, 2018.

\bibitem[Scott(1993)]{DBLP:journals/tcs/Scott93}
Scott, D.~S.
\newblock A type-theoretical alternative to {ISWIM}, {CUCH}, {OWHY}.
\newblock \emph{Theoretical Computer Science}, 121\penalty0 (1{\&}2):\penalty0
  411--440, 1993.

\bibitem[Sieber(1990)]{DBLP:conf/fsttcs/Sieber90}
Sieber, K.
\newblock Relating full abstraction results for different programming
  languages.
\newblock pp.\  373--387, 1990.

\bibitem[Sohl{-}Dickstein et~al.(2014)Sohl{-}Dickstein, Mudigonda, and
  DeWeese]{DBLP:conf/icml/Sohl-DicksteinMD14}
Sohl{-}Dickstein, J., Mudigonda, M., and DeWeese, M.~R.
\newblock {Hamiltonian} {Monte Carlo} without detailed balance.
\newblock In \emph{Proceedings of the 31th International Conference on Machine
  Learning ({ICML} 2014)}, volume~32 of \emph{{JMLR} Workshop and Conference
  Proceedings}, pp.\  719--726. JMLR.org, 2014.

\bibitem[Staton(2017)]{DBLP:conf/esop/Staton17}
Staton, S.
\newblock Commutative semantics for probabilistic programming.
\newblock In Yang, H. (ed.), \emph{Proceedings of the 26th European Symposium
  on Programming, ({ESOP} 2017), Held as Part of the European Joint Conferences
  on Theory and Practice of Software ({ETAPS} 2017)}, volume 10201 of
  \emph{Lecture Notes in Computer Science}, pp.\  855--879. Springer, 2017.

\bibitem[Staton et~al.(2016)Staton, Yang, Wood, Heunen, and
  Kammar]{DBLP:conf/lics/StatonYWHK16}
Staton, S., Yang, H., Wood, F.~D., Heunen, C., and Kammar, O.
\newblock Semantics for probabilistic programming: higher-order functions,
  continuous distributions, and soft constraints.
\newblock In \emph{Proceedings of the 31st Annual {ACM/IEEE} Symposium on Logic
  in Computer Science ({LICS} 2016)}, pp.\  525--534, 2016.

\bibitem[Tolpin et~al.(2015)Tolpin, van~de Meent, Paige, and
  Wood]{DBLP:conf/pkdd/TolpinMPW15}
Tolpin, D., van~de Meent, J.-W., Paige, B., and Wood, F.
\newblock Output-sensitive adaptive metropolis-hastings for probabilistic
  programs.
\newblock In Appice, A., Rodrigues, P.~P., Santos~Costa, V., Gama, J., Jorge,
  A., and Soares, C. (eds.), \emph{Machine Learning and Knowledge Discovery in
  Databases}, pp.\  311--326, Cham, 2015. Springer International Publishing.
\newblock ISBN 978-3-319-23525-7.

\bibitem[Turitsyn et~al.(2011)Turitsyn, Chertkov, and
  Vucelja]{DBLP:journals/corr/abs-0809-0916}
Turitsyn, K.~S., Chertkov, M., and Vucelja, M.
\newblock Irreversible {Monte Carlo} algorithms for efficient sampling.
\newblock \emph{Physica D: Nonlinear Phenomena}, 240\penalty0 (4):\penalty0
  410--414, 2011.
\newblock ISSN 0167-2789.

\bibitem[V{\'{a}}k{\'{a}}r et~al.(2019)V{\'{a}}k{\'{a}}r, Kammar, and
  Staton]{VakarKS19}
V{\'{a}}k{\'{a}}r, M., Kammar, O., and Staton, S.
\newblock A domain theory for statistical probabilistic programming.
\newblock \emph{Proceedings of the ACM on Programming Languages}, 3\penalty0
  ({POPL}):\penalty0 36:1--36:29, 2019.

\bibitem[Wand et~al.(2018)Wand, Culpepper, Giannakopoulos, and
  Cobb]{DBLP:journals/pacmpl/WandCGC18}
Wand, M., Culpepper, R., Giannakopoulos, T., and Cobb, A.
\newblock Contextual equivalence for a probabilistic language with continuous
  random variables and recursion.
\newblock \emph{Proceedings of the ACM on Programming Languages}, 2\penalty0
  ({ICFP}):\penalty0 87:1--87:30, 2018.

\bibitem[Wingate et~al.(2011)Wingate, Stuhlm{\"{u}}ller, and
  Goodman]{DBLP:journals/jmlr/WingateSG11}
Wingate, D., Stuhlm{\"{u}}ller, A., and Goodman, N.~D.
\newblock Lightweight implementations of probabilistic programming languages
  via transformational compilation.
\newblock In Gordon, G.~J., Dunson, D.~B., and Dud{\'{\i}}k, M. (eds.),
  \emph{Proceedings of the 14th International Conference on Artificial
  Intelligence and Statistics ({AISTATS} 2011)}, volume~15 of \emph{{JMLR}
  Proceedings}, pp.\  770--778. JMLR.org, 2011.

\bibitem[Wood et~al.(2014)Wood, van~de Meent, and
  Mansinghka]{DBLP:conf/aistats/WoodMM14}
Wood, F.~D., van~de Meent, J., and Mansinghka, V.
\newblock A new approach to probabilistic programming inference.
\newblock In \emph{Proceedings of the Seventeenth International Conference on
  Artificial Intelligence and Statistics ({AISTATS} 2014)}, volume~33 of
  \emph{{JMLR} Workshop and Conference Proceedings}, pp.\  1024--1032.
  JMLR.org, 2014.

\bibitem[Yang et~al.(2014)Yang, Hanrahan, and
  Goodman]{DBLP:conf/aistats/YangHG14}
Yang, L., Hanrahan, P., and Goodman, N.~D.
\newblock Generating efficient {MCMC} kernels from probabilistic programs.
\newblock In \emph{Proceedings of the 17th International Conference on
  Artificial Intelligence and Statistics ({AISTATS} 2014)}, volume~33 of
  \emph{{JMLR} Workshop and Conference Proceedings}, pp.\  1068--1076.
  JMLR.org, 2014.

\bibitem[Zhou et~al.(2019)Zhou, Gram{-}Hansen, Kohn, Rainforth, Yang, and
  Wood]{DBLP:conf/aistats/ZhouGKRYW19}
Zhou, Y., Gram{-}Hansen, B.~J., Kohn, T., Rainforth, T., Yang, H., and Wood, F.
\newblock {LF-PPL:} {A} low-level first order probabilistic programming
  language for non-differentiable models.
\newblock In Chaudhuri, K. and Sugiyama, M. (eds.), \emph{Proceedings of the
  22nd International Conference on Artificial Intelligence and Statistics
  ({AISTATS} 2019)}, volume~89 of \emph{Proceedings of Machine Learning
  Research}, pp.\  148--157. {PMLR}, 2019.

\bibitem[Zhou et~al.(2020)Zhou, Yang, Teh, and
  Rainforth]{DBLP:conf/icml/ZhouYTR20}
Zhou, Y., Yang, H., Teh, Y.~W., and Rainforth, T.
\newblock Divide, conquer, and combine: a new inference strategy for
  probabilistic programs with stochastic support.
\newblock In \emph{Proceedings of the 37th International Conference on Machine
  Learning ({ICML} 2020)}, volume 119 of \emph{Proceedings of Machine Learning
  Research}, pp.\  11534--11545. {PMLR}, 2020.

\end{thebibliography}
\bibliographystyle{icml2022}

\newpage
\appendix
\onecolumn
\icmltitle{Nonparametric Hamiltonian Monte Carlo (Appendix)}

\vskip 0.3in

\startcontents[sections]
\printcontents[sections]{l}{1}{\setcounter{tocdepth}{2}}

\lo{TODO: Give an outline of the appendix.}


\section{Statistical PCF}
\label{app: spcf}

In this section, we present a functional probabilistic programming language (PPL) with (stochastic) branching and recursion, and its operational semantics.
We also define what it means for a program to be almost surely terminating and integrable.
We conclude the section by showing that a broad class of programs satisfies the assumptions for the NP-iMCMC inference algorithm described in \cref{sec: np-imcmc}.


\subsection{Syntax}
\label{subsec:stat PCF}

Statistical PCF (SPCF) is a statistical probabilistic extension of the call-by-value PCF \cite{DBLP:journals/tcs/Scott93,DBLP:conf/fsttcs/Sieber90} with the reals and Booleans as the ground types.
The terms and part of the typing system of SPCF are presented in \cref{fig:SPCF syntax}.

SPCF has three probabilistic constructs:
\begin{compactenum}[(1)]

  \item The continuous sampler $\normal$ draws from the standard Gaussian distribution $\Gau$ with mean $0$ and variance $1$.

  \item The discrete sampler $\coin$ is a fair coin (formally $\coin$ draws from the Bernoulli distribution $\Bern(0.5)$ with probability $0.5$).

  \item The scoring construct $\score{\terma}$ enables conditioning on observed data by multiplying the weight of the current execution with the real number denoted by $\terma$.

\end{compactenum}

\begin{remark}[Continuous Sampler]
  The continuous sampler in most PPLs \cite{DBLP:conf/esop/CulpepperC17,DBLP:journals/pacmpl/WandCGC18,DBLP:journals/pacmpl/EhrhardPT18,VakarKS19,MakOPW21}
  draw from the standard uniform distribution $\Uni$ with endpoints $0$ and $1$.
  However, we decided against $\Uni$ since its support is not the whole of $\Real$, which is a common target space for inference algorithms (e.g.~Hamiltonian Monte Carlo (HMC) inference algorithm).
  Instead our continuous sampler draws from the \emph{standard normal distribution} $\Gau$ which has the whole of $\Real$ as its support.
  This design choice does not restrict nor extend our language as we will see in \cref{example: not restrictive}.
  \lo{This can be shortened, if space becomes an issue.}
\end{remark}

\begin{remark}[Discrete Sampler]
  Like \cite{DBLP:journals/iandc/DanosE11,scibior2017denotational},
  we choose the fair coin as our discrete sampler for its simplicity.
  However, as shown in \cref{example: discrete from coin}, this is not limiting.
  \cite{DBLP:conf/popl/EhrhardTP14}, for example, samples from the discrete uniform distribution.
\end{remark}

\begin{figure}[t]
  \defn{Types} (typically denoted $\typea,\typeb$) and
  \defn{terms} (typically $\terma,\termb,\termc $):
  \begin{align*}
    \typea,\typeb & ::=
    \tyreal \mid \tybool \mid \typea \tyarrow \typeb \\
    \terma,\termb,\termc & ::=
    \pcf{\reala} \mid
    \pcf{\boola} \mid
    \pcf{\funca}(\terma_1,\dots,\terma_\ell) \tag{Constants and functions} \\
    & \mid
    y \mid
    \lambda y.\terma \mid
    \terma\,\termb \tag{Higher-order} \\
    & \mid
    \pcfif{\termc}{\terma}{\termb}\mid
    \Y{\terma} \tag{Branching and recursion} \\
    & \mid
    \normal \mid
    \coin \mid
    \score{\terma} \tag{Probabilistic}
  \end{align*}
  \defn{Typing system}:
  $$
    \AxiomC{$\boola \in \Bool$}
    \UnaryInfC{$\Gamma \vdash \pcf{\boola} :\tybool$}
    \DisplayProof
    \qquad
    \AxiomC{$
      \set{\Gamma \vdash \terma_i : \tyreal}_{i=1}^{n}
    $}
    \AxiomC{$
      \set{\Gamma \vdash \termb_j : \tybool}_{j=1}^{m}
    $}
    \AxiomC{$\funca: \Real^n \times \Bool^{m} \partialto \groundspace$}
    \TrinaryInfC{$
      \Gamma \vdash \pcf{\funca}{(\terma_1,\dots, \terma_n, \termb_{1}, \dots, \termb_{m})} :
      \begin{cases}
        \tyreal & \text{if }G = \Real \\
        \tybool & \text{if }G = \Bool
      \end{cases}
    $}
    \DisplayProof
  $$\\
  $$
    \AxiomC{$\Gamma \vdash \termc : \tybool$}
    \AxiomC{$\Gamma \vdash \terma : \typea$}
    \AxiomC{$\Gamma \vdash \termb : \typea$}
    \TrinaryInfC{$\Gamma \vdash \pcfif{\termc}{\terma}{\termb} : \typea$}
    \DisplayProof
    \qquad
    \AxiomC{$
      \Gamma \vdash \terma : (\typea \tyarrow \typeb) \tyarrow (\typea \tyarrow \typeb)
    $}
    \UnaryInfC{$
      \Gamma \vdash \Y{\terma} : \typea \tyarrow \typeb
    $}
    \DisplayProof
  $$ \\
  $$
    \AxiomC{\vphantom{$4$}}
    \UnaryInfC{$\Gamma \vdash \normal:\tyreal$}
    \DisplayProof
    \qquad
    \AxiomC{\vphantom{$4$}}
    \UnaryInfC{$\Gamma \vdash \coin:\tybool$}
    \DisplayProof
    \qquad
    \AxiomC{$\Gamma \vdash \terma:\tyreal$}
    \UnaryInfC{$\Gamma \vdash \score{\terma}:\tyreal$}
    \DisplayProof
    \qquad
  $$
  \caption{
    Syntax of SPCF, where
    $\reala, \realb, \realc \in \Real$,
    $\boola, \boolb \in \Bool$,
    $x,y,z$ are variables, and
    $\funca, \funcb, \funcc$ ranges over a set $\primitives$ of primitive functions.
  }
  \label{fig:SPCF syntax}
\end{figure}

Following the convention,
the set of all terms is denoted as $\terms$ with meta-variables $\terma,\termb,\termc$,
the set of free variables of a term $\terma$ is denoted as $\freevar{\terma}$ and
the set of all closed terms is denoted as $\closedterms$.
In the interest of readability, we sometimes use pseudocode in the style of ML (e.g.~\cref{example: spcf terms}) to express SPCF terms.

\begin{example}
  \label{example: spcf terms}
  \codeinline{let rec f x = if coin then f(x+normal) else x in f 0} is a simple program which keeps tossing a coin and sampling from the normal distribution until the first coin failure, upon which it returns the sum of samples from the normal distribution.
\end{example}

\subsection{Primitive Functions}
\label{app: primitive functions}

Primitive functions play an important role in the expressiveness of SPCF. To be concise, we only consider partial, measurable functions of types $\Real^n \times \Bool^{m} \partialto \Real$ or $\Real^n \times \Bool^{m} \partialto \Bool$ for some $n, m \in \Nat$.
Examples of these primitives include addition $+$, division $/$, comparison $<$ and equality $=$.
As we will see in \cref{example: not restrictive,example: discrete from coin}, it is important that the cumulative distribution functions (cdf) and probability density functions (pdf) of distributions are amongst the primitives in $\primitives$.
However, we do not require all measurable functions to be primitives, unlike \cite{DBLP:conf/lics/StatonYWHK16,DBLP:conf/esop/Staton17}.

\begin{example}
  \label{example: not restrictive}
  \begin{compactenum}[(1)]

    \item Let \codeinline{cdfnormal} be the cdf of the standard normal distribution.
    Then, the standard uniform distribution with endpoints $0$ and $1$ can be described as \codeinline{uniform = cdfnormal(normal)}.

    \item Any distribution with an inverse cdf \codeinline{f} in the set of primitives can be described as \codeinline{f(uniform)}.
    For instance, the inverse cdf of the exponential distribution (with rate $1$) is $f(p) := -\ln(1-p)$ and hence \codeinline{-ln(1-uniform)} describes the distribution.

    \item The Poisson distribution can be specified using the uniform distribution (\cite{Devroye1986}) as follows.
\begin{code}
Poi(rate) = let p = exp(-rate) in
            let rec f x p s = if s < uniform then f (x+1) (p*rate/x) (s+p) else x
            in f 0 p p
\end{code}
  \end{compactenum}
\end{example}

\begin{example}
  \label{example: discrete from coin}
  It might be beneficial for some inference algorithm if discrete distributions are specified using discrete random variables. Hence, we show how different discrete distributions can be specified by our discrete sampler \codeinline{coin}.
  \begin{compactenum}[(1)]
    \item The Bernoulli distribution with probability $p \in [0,1] \cap \Dyadic$, where $\Dyadic := \set{\frac{n}{2^{m}}\mid n,m \in \Nat} $ is the set of all Dyadic numbers,
    can be specified by
    \begin{code}
let rec bern(p) = if p = 0 then False else
                  if p = 1 then True else
                  if p < 0.5 then
                    if coin then bern(2*p) else False
                  else
                    if coin then True else bern(2*(p-0.5))
    \end{code}

    \item The geometric distribution with rate $p \in [0,1] \cap \Dyadic$ can be specified by 
    \begin{code}
geo(p) = count = 0; while bern(1-p): count += 1; return count
    \end{code}

    \item The binomial distribution with $n \in \Nat$ trails and probability $p \in [0,1] \cap \Dyadic$ can be specified by \codeinline{bin(n,p) = sum([1 for i in range(n) if bern(p)])}.

    \item Let \codeinline{pdfPio} and \codeinline{pdfgeo} be the pdfs of the Poisson and geometric distributions respectively. Then, the Poisson distribution can be described by
    \begin{code}
Poi(rate) = n = geo(0.5);
            score(pdfPoi(rate,n)/pdfgeo(0.5,n));
            return n
    \end{code}
  \end{compactenum}
\end{example}

\subsection{Church Encodings}
\label{sec: church encodings}

We can represent pairs and lists in SPCF using Church encoding as follows:
\begin{align*}
  \typair{\typea}{\typeb} & := \typea \to \typeb \to (\typea \to \typeb \to \tyreal) \to \tyreal &
  \tylist{\typea} & := (\typea \to \tyreal \to \tyreal) \to (\tyreal \to \tyreal) \\
  \anbr{\terma,\termb} & \equiv \lambda z .z\,\terma\,\termb &
  [\terma_1,\dots,\terma_\ell] & \equiv \lambda f x .f\,\terma_1(f\,\terma_2 \dots (f\,\terma_\ell\,\pcf{0}))
\end{align*}
Moreover standard primitives on pairs and lists, such as \codeinline{projection}, \codeinline{len}, \codeinline{append} and \codeinline{sum}, can be defined easily.

\subsection{Operational Semantics}
\label{subsec:operational semantics}

\subsubsection{Trace Space}
\label{sec:trace space}

Since $\normal$ samples from the standard normal distribution $\Gau$ and $\coin$ from the Bernoulli distribution $\Bern(0.5)$, the \defn{sample space} of SPCF is the union of the measurable spaces of $\Real$ and $\Bool$. Formally it is the measurable space with set $\samspace := \Real \cup \Bool$, $\sigma$-algebra $\algebra{\samspace} := \set{V \cup W \mid V \in \Borel, W \in \algebra{\Bool}}$ and measure $\measure{\samspace}(V \cup W) := \Gau(V) + \Bern(0.5)(W)$.
We denote the product of $n$ copies of the sample space as $(\samspace^n, \algebra{\samspace^n}, \measure{\samspace^n})$ and call it the $n$-dimensional sample space.

A \defn{trace} is a record of the values sampled in the course of an execution of a SPCF term.
Hence, the \defn{trace space} is the union of sample spaces of varying dimension.
Formally it is the measurable space with set $\traces := \bigcup_{n\in\Nat} \samspace^n$,
$\sigma$-algebra $\algebra{\traces} := \set{\bigcup_{n\in\Nat} U_n \mid U_n \in \algebra{\samspace^n}}$ and
measure $\measure{\traces}(\bigcup_{n\in\Nat} U_n) = \sum_{n\in\Nat} \measure{\samspace^n}(U_n)$.
We present traces as lists, e.g.~$[-0.2,\true, \true, 3.1, \false]$ and $\emptytrace$.

\begin{remark}
  Another way of recording the sampled value in a run of a SPCF term is to have separate records for the values of the continuous and discrete samples.
  In this case, the trace space will be the set $\bigcup_{n\in\Nat} \Real^n \times \bigcup_{m\in\Nat} \Bool^m$.
  We find separating the continuous and discrete samples unnecessarily complex for our purposes and hence follow the more conventional definition of trace space.
\end{remark}

\subsubsection{Small-step Reduction}

\begin{figure}[t!]
  \defn{Values} (typically denoted $\valuea$),
  \defn{redexes} (typically $\redexa$) and
  \defn{evaluation contexts} (typically $\evalcon$):
  \begin{align*}
    \valuea & ::=
    \pcf{\reala} \mid \pcf{\boola} \mid \lambda y.\terma \\
    \redexa & ::=
              \pcf{f}(\pcf{\constant_1},\dots,\pcf{\constant_\ell}) \mid
              (\lambda y.\terma)\,\valuea \mid
              \pcfif{\pcf{\boola}}{\terma}{\termb} \mid
              \Y{(\lambda y.\terma)}\\
            & \mid
              \normal \mid \coin \mid
              \score{\pcf{\reala}}\\
    \evalcon & ::=
               [] \mid
               \evalcon\,\terma \mid
               (\lambda y.\terma)\,\evalcon \mid
               \pcfif{\evalcon}{\terma}{\termb}\mid
               \pcf{f}(\pcf{\constant_1}, \dots, \pcf{\constant_{i-1}},\evalcon,\terma_{i+1},\dots,\terma_\ell) \mid
               \Y{\evalcon} \\
            & \mid
              \score{\evalcon}
  \end{align*}
  \noindent\defn{Redex contractions}:
  \begin{align*}
    \config{\pcf{f}(\pcf{\constant_1},\dots,\pcf{\constant_\ell})}{w}{\traceb} & \red
      \begin{cases}
        \config{\pcf{f(\constant_1,\dots,\constant_{\ell})}}{w}{\traceb}
        & \text{if } (\constant_1,\dots,\constant_{\ell}) \in \domain{f},\\
        \Fail & \text{otherwise}
      \end{cases}
      \\
    \config{(\lambda y.\terma)\,\valuea}{w}{\traceb} & \red
      \config{\terma[\valuea/y]}{w}{\traceb} \\
    \config{\pcfif{\pcf{\boola}}{\terma}{\termb}}{w}{\traceb} & \red
      \begin{cases}
        \config{\terma}{w}{\traceb}
        & \text{if } \boola,\\
        \config{\termb}{w}{\traceb} & \text{otherwise}
      \end{cases}\\
    \config{\Y{(\lambda y.\terma)}}{w}{\traceb} & \red
      \config{\lambda z.\terma[\Y{(\lambda y.\terma)}/y]\,z}{w}{\traceb}
      \tag{for fresh variable $z$} \\
    \config{\normal}{w}{\traceb} & \red
      \config{\pcf{\reala}}{w}{\trace\concat [\reala]} \tag{for some $\reala \in \Real$} \\
    \config{\coin}{w}{\traceb} & \red
      \config{\pcf{\boola}}{w}{\trace\concat [\boola]} \tag{for some $\boola \in \Bool$} \\
    \config{\score{\pcf{\reala}}}{w}{\traceb} & \red
      \begin{cases}
        \config{\pcf{\reala}}{\reala\cdot w}{\traceb}
        & \text{if } \reala > 0,\\
        \Fail & \text{otherwise.}
      \end{cases}
  \end{align*}
  \noindent\defn{Evaluation contexts}:
  $$
    \AxiomC{$\config{\redexa}{w}{\traceb} \red \config{\contra}{w'}{\trace'}$}
    \UnaryInfC{$\config{E[\redexa]}{w}{\traceb} \red \config{E[\contra]}{w'}{\trace'}$}
    \DisplayProof
    \qquad
    \AxiomC{$\config{\redexa}{w}{\traceb} \red\Fail$}
    \UnaryInfC{$\config{E[\redexa]}{w}{\traceb} \red\Fail$}
    \DisplayProof
  $$
  \caption{
    Small-step reduction of SPCF, where
    $\reala, \realb, \realc \in \Real$,
    $\boola, \boolb \in \Bool$,
    $\constant \in \Real \cup \Bool$,
    $x,y,z$ are variables, and
    $\funca, \funcb, \funcc$ ranges over the set $\primitives$ of primitive functions.
  }
  \label{fig:operational small-step}
\end{figure}

The small-step reduction of SPCF terms can be seen as a rewrite system of \defn{configurations}, which are triples of the form $\config{\terma}{w}{\traceb}$ where $\terma$ is a closed SPCF term, $w > 0$ is a \defn{weight}, and $\traceb \in \traces$ a trace, as defined in \cref{fig:operational small-step}.

In the rule for $\normal$, a random value $\reala \in \Real$ is generated and recorded in the trace, while the weight remains unchanged:
even though the program samples from a normal distribution, the weight does not factor in Gaussian densities as they are already accounted for by $\measure{\traces}$.
Similarly, in the rule for $\coin$, a random boolean $\boola \in \Bool$ is sampled and recorded in the trace with an unchanged weight.
In the rule for $\score{\pcf{\reala}}$, the current weight is multiplied by $\reala\in\Real$: typically this reflects the likelihood of the current execution given some observed data.
Similar to \cite{DBLP:conf/icfp/BorgstromLGS16} we reduce terms which cannot be reduced in a reasonable way (i.e.~scoring with nonpositive constants or evaluating functions outside their domain) to $\Fail$.

We write $\redplus$ for the transitive closure and
$\redstar$ for the reflexive and transitive closure of the small-step reduction.

\subsubsection{Value and Weight Functions}
\label{appendix: value and weight functions}

Following \cite{DBLP:conf/icfp/BorgstromLGS16},
we view the set $\terms$ of all SPCF terms as $\bigcup_{n,m\in\Nat} (\sk_{n,m} \times \Real^n \times \Bool^m)$
where $\sk_{n,m}$ is the set of SPCF terms with exactly $n$ real-valued and $m$ boolean-valued place-holders.
The measurable space of terms
is equipped with the $\sigma$-algebra $\algebra{\terms}$ that is the Borel algebra of the
countable disjoint union topology of the product topology of
the discrete topology on $\sk_{n,m}$, the standard topology on $\Real^n$ and the discrete topology on $\Bool^m$.
Similarly the subspace $\closedvalues$ of closed values inherits the Borel algebra on $\terms$.

Let $\terma$ be a closed SPCF term.
Its \defn{value function} $\valuefn{\terma} : \traces \to \closedvalues \cup \{\bot\}$ returns, given a trace, the output value of the program, if the program terminates in a value. Its \defn{weight function} $\weightfn{\terma} : \traces \to \pReal$ returns the final weight of the corresponding execution. Formally:
\[
\begin{array}{c}
  \valuefn{\terma} (\trace) :=
  \begin{cases}
  V & \hbox{if $\config{\terma}{1}{\emptytrace} \redstar
      \config{\valuea}{w}{\traceb}$}\\
  \bot & \text{otherwise.}
  \end{cases}
\end{array}
\quad
\begin{array}{c}
  \weightfn{\terma} (\trace) :=
  \begin{cases}
  w & \hbox{if
    $\config{\terma}{1}{\emptytrace} \redstar \config{\valuea}{w}{\traceb}$}\\
  0 & \text{otherwise.}
  \end{cases}
\end{array}
\]
It follows readily from \cite{DBLP:conf/icfp/BorgstromLGS16} that the functions $\valuefn{\terma}$ and $\weightfn{\terma}$ are measurable.

Finally, every closed SPCF term $\terma$ has an associated \defn{value measure} $\oper{\terma}$ on $\closedvalues$ given by
\begin{align*}
  {\oper{\terma}}: {\Sigma_{\closedvalues}} &\longrightarrow{\pReal}\\
   U & \longmapsto
    \shortint{\inv{\valuefn{\terma}}(U)}
    {\weightfn{\terma}}
    {\measure{\traces}}
\end{align*}

\begin{remark}
  \label{remark: support of weight and value}
  A trace is in the support of the weight function if and only if the value function returns a (closed) value when given this trace.
  i.e.~$\support{\weightfn{\terma}} = \inv{\valuefn{\terma}}(\closedterms)$ for all closed SPCF term $\terma$.
\end{remark}

\begin{remark}
  The weight function defined here is the density of the target distribution from which an inference algorithm typically samples.
  In this work, we call it the \emph{weight function} when considering semantics following \cite{DBLP:conf/esop/CulpepperC17,VakarKS19,MakOPW21},
  and call it \defn{density function} when discussing inference algorithms following \cite{DBLP:conf/aistats/ZhouGKRYW19,DBLP:conf/icml/ZhouYTR20,cusumanotowner2020automating}.
\end{remark}

\subsection{Tree Representable Functions}

We consider a necessary condition for the weight function of closed SPCF terms which would help us in designing inference algorithms for them.
Note that not every function of type $\traces \to \pReal$ makes sense as a weight function.
Consider the program \codeinline{let rec f x = if coin then f(x+normal) else x in f 0} in \cref{example: spcf terms}.
This program executes successfully with the trace $[\true, 0.5, \false]$.
This immediately tells us that upon sampling $\true$ and $0.5$, there must be a sample following them, and this third sample must be a boolean.
In other words, the program does \emph{not} terminate with any proper prefix of $[\true, 0.5, \false]$ such as $[\true, 0.5]$, nor any traces of the form $[\true, 0.5, \reala]$ for $\reala\in\Real$.

Hence, we consider measurable functions $\tree: \traces \to \pReal$ satisfying
\begin{itemize}
  \item \defn{prefix property}:
  whenever $\traceb \in \nsupport{\tree}{n}$\footnote{$\nsupport{\tree}{n} := \support{\tree} \cap \samspace^n$ for all $n\in\Nat$.} then for all $k < n$,
  we have $\trace^{1\dots k} \not\in \nsupport{\tree}{k}$; 
  and

  \item \defn{type property}:
  whenever $\traceb \in \nsupport{\tree}{n}$ then for all $k < n$ and for all $\trace \in \samspace\setminus \type{\seqindex{\traceb}{k+1}}$\footnote{The \defn{type} $\type{\trace}$ of a sample $\trace \in \samspace$ is $\Real$ if $\trace \in \Real$ and is $\Bool$ if $\trace \in \Bool$. } we have $\traceb^{1\dots k} \concat [\trace] \not\in\nsupport{\tree}{k+1}$.
\end{itemize}
They are called \defn{tree representable (TR) functions} \cite{DBLP:conf/icml/MakZO21} because any such function $\tree$ can be represented as a (possibly) infinite but finitely branching tree, which we call \emph{program tree}.

This is exemplified in \cref{fig:binary-tree} (left), where
a hexagon node denotes an element of the input of type $\samspace$;
a triangular node gives the condition for $\traceb \in \nsupport{\tree}{n}$
(with the left, but not the right, child satisfying the condition); and
a leaf node gives the result of the function on that branch.
Any \defn{branch} (i.e.~path from root to leaf) in a program tree of $\tree$ represents a set of finite sequences $[\trace_1,\dots,\trace_n]$ in $\support{\tree}$.
In fact, every program tree of a TR function $\tree$ specifies a countable partition of $\support{\tree}$ via its branches.
The prefix property guarantees that for each TR function $\tree$, there are program trees of the form in \cref{fig:binary-tree} representing $\tree$.

The program tree of $\terma$ is depicted in \cref{fig:binary-tree} (right), where
a circular node denotes a real-valued input and
a squared node denotes a boolean-valued input.
\cm{Draw a better tree}

\tikzset{position/.style={blue,font=\footnotesize}}
\tikzset{path/.style={draw=blue, line width=3pt}}
\tikzset{not path/.style={draw=black, line width=0.5pt}}
\tikzset{input/.style={regular polygon,regular polygon sides=6,draw=black, line width=0.5pt,minimum size=12pt,font=\footnotesize},inner sep=2pt}
\tikzset{normal/.style={circle,draw=black, line width=0.5pt,minimum size=12pt,font=\footnotesize},inner sep=2pt}
\tikzset{coin/.style={regular polygon,regular polygon sides=4,draw=black, line width=0.5pt,minimum size=12pt,font=\footnotesize},inner sep=2pt}
\tikzset{condition/.style={isosceles triangle,isosceles triangle apex angle=60,rotate=90, draw=black, line width=0.5pt,minimum size=12pt,font=\footnotesize}}
\tikzset{leaf/.style={draw=none,font=\footnotesize}}
\tikzset{yes-no/.style={black!50!green}}

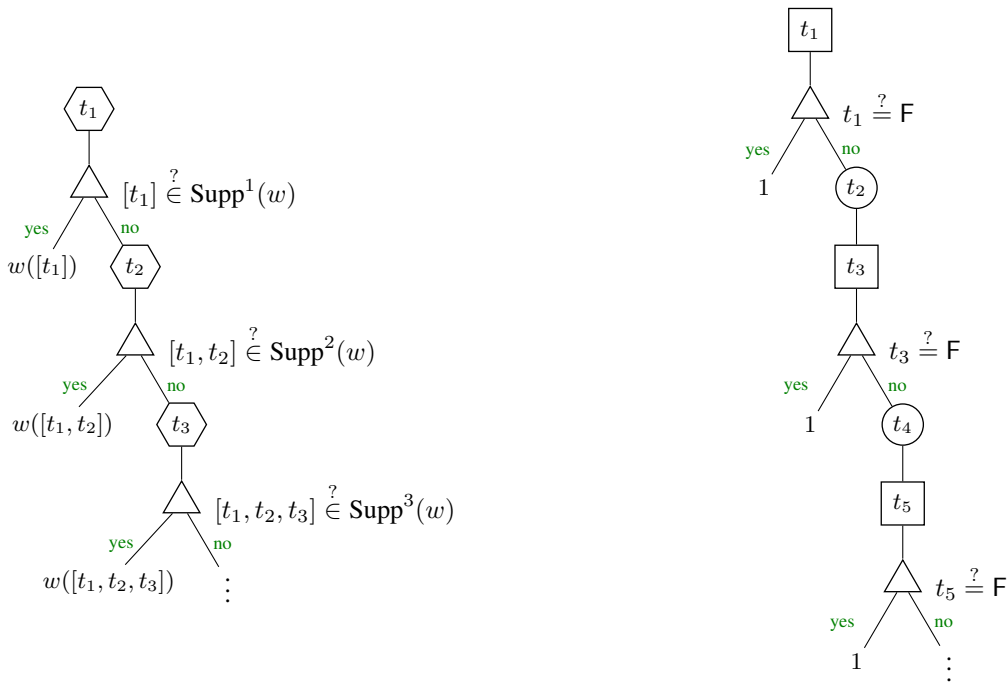
\begin{figure}[t]
  \parbox{.5\textwidth}{
    \centering
    \begin{tikzpicture}[scale=0.7,sibling distance = 5em]
      \node [input] (q1) {$\trace_1$}
        child {
          node [condition,label={[label distance=0.5em]270:$[\trace_1] \overset{?}{\in} \nsupport{\tree}{1}$}] (c1) {}
          child {node [leaf] (r1) {$\tree([\trace_1])$}}
          child {
            node [input] (q2) {$\trace_2$}
            child {
              node [condition,label={[label distance=0.5em]270:$[\trace_1,\trace_2] \overset{?}{\in} \nsupport{\tree}{2}$}] (c2) {}
              child {node [leaf,xshift=-1em] (r2) {$\tree([\trace_1,\trace_2])$}}
              child {
                node [input] (q3) {$\trace_3$}
                child {
                  node [condition,label={[label distance=0.5em]270:$[\trace_1,\trace_2,\trace_3] \overset{?}{\in} \nsupport{\tree}{3}$}] (c3) {}
                  child {node [leaf,xshift=-1em] (r3) {$\tree([\trace_1,\trace_2,\trace_3])$}}
                  child {node {$\vdots$}}
                }
              }
            }
          }
        };
        \node at ($(c1) + (220:1.3cm) $) [yes-no] {\scriptsize yes};
        \node at ($(c1) + (315:1.1cm) $) [yes-no] {\scriptsize no};
        \node at ($(c2) + (215:1.4cm) $) [yes-no] {\scriptsize yes};
        \node at ($(c2) + (315:1.1cm) $) [yes-no] {\scriptsize no};
        \node at ($(c3) + (215:1.4cm) $) [yes-no] {\scriptsize yes};
        \node at ($(c3) + (315:1.1cm) $) [yes-no] {\scriptsize no};
    \end{tikzpicture}
  }%
  \parbox{.5\textwidth}{
    \centering
    \begin{tikzpicture}[scale=0.7,sibling distance = 5em, node distance=0.5em]
      \node [coin] (q1) {$\trace_1$}
        child {
          node [condition,label={[label distance=0.5em]270:{$\trace_1 \overset{?}{=} \false$}}] (c1) {}
          child {node [leaf] (r1) {$1$}}
          child {
            node [normal] (q2) {$\trace_2$}
            child {node [coin] (q3) {$\trace_3$}
            child {
              node [condition,label={[label distance=0.5em]270:{$\trace_3 \overset{?}{=} \false$}}] (c2) {}
              child {node [leaf] (r2) {$1$}}
              child {
                node [normal] (q4) {$\trace_4$}
                child {node [coin] (q5) {$\trace_5$}
                child {
                  node [condition,label={[label distance=0.5em]270:{$\trace_5 \overset{?}{=} \false$}}] (c3) {}
                  child {node [leaf] (r3) {$1$}}
                  child {node {$\vdots$}}
                }}
              }
            }}
          }
        };
        \node at ($(c1) + (220:1.3cm) $) [yes-no] {\scriptsize yes};
        \node at ($(c1) + (315:1.1cm) $) [yes-no] {\scriptsize no};
        \node at ($(c2) + (215:1.4cm) $) [yes-no] {\scriptsize yes};
        \node at ($(c2) + (315:1.1cm) $) [yes-no] {\scriptsize no};
        \node at ($(c3) + (215:1.4cm) $) [yes-no] {\scriptsize yes};
        \node at ($(c3) + (315:1.1cm) $) [yes-no] {\scriptsize no};
    \end{tikzpicture}
  }%
  \caption{Program tree of a tree representable function $\tree$}
  \label{fig:binary-tree}
\end{figure}

The following proposition ties SPCF terms and TR functions together.

\begin{restatable}{proposition}{spcfTR}
  \label{prop: all spcf terms have TR weight function}
  Every closed SPCF term has a tree representable weight function.
\end{restatable}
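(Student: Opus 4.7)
The plan is to exploit the fact that, although the sampling rules in \cref{fig:operational small-step} are non-deterministic at the level of terms, reduction becomes deterministic once the trace is fixed. I would first establish two lemmas about the small-step reduction. \emph{Trace monotonicity}: every reduction step either leaves the trace unchanged or appends exactly one sample to it, by a case analysis on the redex contraction rules, since only the $\normal$ and $\coin$ rules alter the trace and each appends exactly one entry. \emph{Trace determinism}: for any closed term $\terma$ and any trace $\traceb$, there is at most one reduction sequence $\config{\terma}{1}{\emptytrace} \redstar \config{\valuea}{w}{\traceb}$ ending in a value. The unique decomposition of a non-value, non-$\Fail$ closed term as $\evalcon[\redexa]$ makes every reduction step uniquely determined except at a sample redex, where the next entry of the target trace pins the choice down.

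With these in hand, the prefix property for $\weightfn{\terma}$ is immediate. Suppose $\traceb \in \nsupport{\weightfn{\terma}}{n}$ with witnessing reduction $\config{\terma}{1}{\emptytrace} \redstar \config{\valuea}{w}{\traceb}$, and assume for contradiction that $\range{\traceb}{1}{k} \in \nsupport{\weightfn{\terma}}{k}$ for some $k < n$, with its own witnessing reduction ending at a value $\valuea'$. By trace monotonicity, the given reduction passes through some configuration $\config{\terma'}{w'}{\range{\traceb}{1}{k}}$ at the moment the trace first reaches length $k$. Trace determinism applied to $\range{\traceb}{1}{k}$ then forces $\terma' = \valuea'$, so the original reduction has already reached a value by step $k$ and cannot continue, contradicting $n > k$.

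The type property follows by essentially the same argument. The entry at position $k+1$ of $\traceb$ is appended by either the $\normal$ rule (forcing a real) or the $\coin$ rule (forcing a boolean), depending on the unique redex decomposition at that step. For any alternative sample $\trace \in \samspace$ of the opposite type, a reduction witnessing $\range{\traceb}{1}{k} \concat [\trace] \in \support{\weightfn{\terma}}$ would have to agree with the original up to trace length $k$ by trace determinism, and would then be obliged to fire exactly the same sample redex; but that redex cannot emit a value of the wrong type. Hence no such reduction exists and $\weightfn{\terma}(\range{\traceb}{1}{k} \concat [\trace]) = 0$. Measurability of $\weightfn{\terma}$ is already on record from \cite{DBLP:conf/icfp/BorgstromLGS16}, so the two structural properties together deliver tree representability.

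The main obstacle will be formalising trace determinism rigorously across the full language, particularly in the presence of the recursion operator $\mathsf{Y}$, where reduction sequences can be arbitrarily long and repeatedly unfold recursive definitions. The essential structural fact to establish is that every closed term is either a value, reduces to $\Fail$, or admits a unique decomposition $\evalcon[\redexa]$; once this is in place, a routine induction on the length of the reduction shows that any branching in the reduction tree is resolved by the sample values recorded in the trace, and the remaining case analyses for trace monotonicity and for the prefix and type arguments are completely routine.
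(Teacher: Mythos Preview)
The paper does not actually provide a proof of this proposition; it is stated in the appendix without argument, presumably because the authors regard it as immediate from the deterministic, trace-driven character of the operational semantics in \cref{fig:operational small-step}. Your proposal is the natural way to make this precise, and it is correct.

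Your two lemmas---trace monotonicity and trace determinism---are exactly the operational facts one needs, and your derivations of the prefix and type properties from them are sound. The one comment I would make is that you overstate the obstacle posed by $\mathsf{Y}$. The unique-decomposition lemma for call-by-value evaluation contexts is a purely syntactic statement about closed non-value terms; it holds irrespective of how the term was produced, so unfolding recursion introduces no new subtlety. The redex $\Y{(\lambda y.\terma)}$ has a single deterministic contraction rule just like application or $\mathsf{if}$, and the induction on reduction length that you sketch goes through uniformly. In short, once unique decomposition is in place (which is standard for this style of language), the remaining work is exactly as routine as you indicate.
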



We will see in \cref{sec: np-imcmc} how the TR functions, in particular the prefix property, is instrumental in the design of the inference algorithm.

\subsection{Almost Sure Termination and Integrability}
\label{sec: spcf ast and integrability}

\begin{definition}\rm
  \label{def:ast}
  We say a SPCF term $\terma$ \defn{terminates almost surely} if
  $\terma$ is closed and
  $\measure{\traces}(\set{\traceb \in \traces \mid \exists \valuea, w \,.\,
  \config{\terma}{1}{\emptytrace} \redstar \config{\valuea}{w}{\traceb}}) =1$.

  We denote the set of terminating traces as $\tertraces := \set{\traceb \in \traces \mid \exists \valuea, w \,.\, \config{\terma}{1}{\emptytrace} \redstar \config{\valuea}{w}{\traceb}}$.
\end{definition}

\begin{remark}
  \label{remark: AST}
  The set of traces on which a closed SPCF term $\terma$ terminates,
  i.e.~$\set{\traceb \in \traces \mid \exists V, w \,.\,
  \config{\terma}{1}{\emptytrace} \redstar \config{\valuea}{w}{\traceb}}$, can be understood as the support of its weight function $\support{\weightfn{\terma}}$, or as discussed in \cref{remark: support of weight and value}, the traces on which the value function returns a value, i.e.~$\inv{\valuefn{\terma}}(\closedvalues)$.
  Hence, $\terma$ almost surely terminates if and only if $\measure{\traces}(\support{\weightfn{\terma}}) =\measure{\traces}(\inv{\valuefn{\terma}}(\closedvalues)) = 1$.
\end{remark}

\begin{definition}
  \label{def: term and max}
  Following \cite{MakOPW21}, we say a trace $\traceb\in\traces$ is \defn{maximal} w.r.t.~a closed term $\terma$ if
  there exists a term $\termb$, weight $w$ where
  $\config{\terma}{1}{\emptytrace} \redstar \config{\termb}{w}{\traceb}$ and
  for all $\traceb' \in \traces \setminus\set{\emptytrace} $ and all terms $\termb'$,
  $\config{\termb}{w}{\traceb} \not\redstar \config{\termb'}{w'}{\traceb\concat\traceb'}$.

  We denote the set of maximal traces as $\maxtraces$.
\end{definition}

\begin{proposition}[\cite{MakOPW21}, Lemma 9]
  \label{prop: AST}
  A closed term $\terma$ is almost surely terminating if $\measure{\traces}(\maxtraces \setminus \tertraces) = 0 $.
\end{proposition}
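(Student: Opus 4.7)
The plan is to decompose $\maxtraces = \tertraces \sqcup (\maxtraces \setminus \tertraces)$ and reduce the claim to two sub-facts: (i) the inclusion $\tertraces \subseteq \maxtraces$, and (ii) the equality $\measure{\traces}(\maxtraces) = 1$.

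For (i), I would observe that if $\traceb \in \tertraces$, then $\config{\terma}{1}{\emptytrace} \redstar \config{\valuea}{w}{\traceb}$ for some value $\valuea$. By inspection of the redex grammar in \cref{fig:operational small-step}, values are irreducible, so from $\config{\valuea}{w}{\traceb}$ no further reduction---trace-extending or otherwise---is possible, which is exactly the maximality condition. Hence every terminating trace is automatically maximal.

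For (ii), the idea is to couple the reduction with an i.i.d.\ stream $s_1, s_2, \dots \sim \measure{\samspace}$: run $\terma$ deterministically, consuming the next $s_i$ whenever a $\normal$ or $\coin$ redex is encountered. Almost every stream drives the execution either to halt (in a value or via $\Fail$) after finitely many samples, or to reduce indefinitely without ever requesting another sample; both cases contribute a finite trace in $\maxtraces$. The complementary event---consuming infinitely many samples---is exactly what $\measure{\traces}(\maxtraces) = 1$ asserts to have probability zero. This step, which is the content of the cited lemma from \cite{MakOPW21}, is where I expect the hard work: it requires a structural analysis of the reduction tree and the measurability machinery of \cite{DBLP:conf/icfp/BorgstromLGS16}, and it is the main obstacle of the proof.

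Combining (i), (ii), and the hypothesis $\measure{\traces}(\maxtraces \setminus \tertraces) = 0$, I would conclude
\[
  \measure{\traces}(\tertraces) = \measure{\traces}(\maxtraces) - \measure{\traces}(\maxtraces \setminus \tertraces) = 1 - 0 = 1,
\]
which is exactly the definition of $\terma$ being almost surely terminating (cf.\ \cref{def:ast} and \cref{remark: AST}).
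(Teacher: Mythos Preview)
The paper does not supply its own proof; the proposition is simply quoted as Lemma~9 of \cite{MakOPW21}, so there is nothing in-paper to compare against. Your decomposition is natural and step~(i) is correct. The gap is entirely in step~(ii).

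The claim $\measure{\traces}(\maxtraces) = 1$ is \emph{not} a general fact about closed SPCF terms. Consider any program that draws samples forever, for instance $\terma \equiv (\Y(\lambda f.\lambda x.\, f\,\normal))\,\pcf{0}$. Every reachable configuration has an all-real trace and eventually reduces a further $\normal$, so no finite trace is maximal: $\maxtraces = \varnothing$ and hence $\measure{\traces}(\maxtraces)=0$. Note that for this $\terma$ the hypothesis $\measure{\traces}(\maxtraces\setminus\tertraces)=0$ holds trivially while $\measure{\traces}(\tertraces)=0$; so with the definitions exactly as written in this appendix the implication itself already fails on this example, and whatever the original Lemma~9 asserts must involve an additional hypothesis or a different setup that rules out unbounded sampling.

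Your justification of (ii) is moreover circular. You say (ii) ``is the content of the cited lemma from \cite{MakOPW21}'', but the citation in the proposition header refers to the whole statement you are proving, not to a separate auxiliary fact about $\maxtraces$. Worse, under your own decomposition the hypothesis gives $\measure{\traces}(\maxtraces) = \measure{\traces}(\tertraces)$, so asserting $\measure{\traces}(\maxtraces)=1$ is already equivalent to the conclusion $\measure{\traces}(\tertraces)=1$. To make genuine progress you would need to go back to \cite{MakOPW21}, identify the precise formulation there, and import whatever extra assumption or definition makes the equality $\measure{\traces}(\maxtraces)=1$ available.
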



\begin{restatable}{proposition}{asttermprob}
  \label{prop: AST SPCF term gives probability measure}
  The value measure $\oper{\terma}$ of
  a closed almost surely terminating SPCF term $\terma$ which does not contain $\score{\placeholder}$ as a subterm
  is probabilistic.
\end{restatable}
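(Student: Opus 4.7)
The plan is to show that $\oper{\terma}(\closedvalues) = 1$, which by the definition of $\oper{\terma}$ amounts to proving
\[
\shortint{\inv{\valuefn{\terma}}(\closedvalues)}{\weightfn{\terma}}{\measure{\traces}} = 1.
\]
The strategy is first to establish that because $\terma$ contains no $\score{\placeholder}$ subterm, the weight $\weightfn{\terma}$ is the $\{0,1\}$-valued indicator of the set $\tertraces$ of terminating traces, and then to invoke the AST hypothesis to conclude $\measure{\traces}(\tertraces) = 1$.

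First, I would prove by induction on the length of the small-step reduction that if $\terma$ is score-free and $\config{\terma}{1}{\emptytrace} \redstar \config{\termb}{w}{\traceb}$, then $w = 1$ and $\termb$ is score-free. The base case is immediate. For the inductive step, one inspects each redex contraction in \cref{fig:operational small-step}: primitive-function reduction, $\beta$-reduction, branching, $\normal$, $\coin$ and the $\Y$ rule all leave the weight unchanged and do not introduce new $\score{\placeholder}$ subterms (for $\Y$ this uses that substitution of a score-free term preserves score-freeness). The rule for $\score{\pcf{\reala}}$ is the only one that modifies the weight, and by the inductive hypothesis it is never triggered. Consequently, for every $\traceb \in \tertraces$ we have $\weightfn{\terma}(\traceb) = 1$, while for $\traceb \notin \tertraces$ we have $\weightfn{\terma}(\traceb) = 0$ by definition; hence $\weightfn{\terma} = \charfn{\tertraces}$.

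By \cref{remark: support of weight and value}, $\inv{\valuefn{\terma}}(\closedvalues) = \support{\weightfn{\terma}} = \tertraces$. Therefore
\[
\oper{\terma}(\closedvalues) = \shortint{\tertraces}{\charfn{\tertraces}}{\measure{\traces}} = \measure{\traces}(\tertraces),
\]
and the AST assumption on $\terma$ (\cref{def:ast}, cf.~\cref{remark: AST}) gives $\measure{\traces}(\tertraces) = 1$. Since $\oper{\terma}$ is a measure and assigns total mass $1$ to the whole space $\closedvalues$, it is a probability measure.

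The only mildly delicate point is the induction establishing score-freeness is preserved under reduction: one must check carefully that the $\Y$-rule does not smuggle in a $\score{\placeholder}$ via capture-avoiding substitution, and that primitive-function reduction only produces a constant rather than a new term with score. Both are routine syntactic checks, so no serious obstacle is expected.
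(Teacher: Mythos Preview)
Your proof is correct and is precisely the natural argument one would expect. The paper itself does not include an explicit proof of this proposition in the provided source (the ``Additional Proofs'' appendix referenced by the \texttt{restatable} environment is wrapped in \verb|\iffalse...\fi| and hence omitted), treating the result as routine. Your decomposition---first showing by induction on reduction length that score-freeness forces $\weightfn{\terma} = \charfn{\tertraces}$, then invoking \cref{remark: support of weight and value} and the AST hypothesis to get $\oper{\terma}(\closedvalues) = \measure{\traces}(\tertraces) = 1$---is exactly the argument the paper's surrounding development points to, and there is no plausible alternative route here.
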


\begin{definition}\rm
  \label{def:integrable}
  We say a SPCF term $\terma$ is \defn{integrable} if
  $\terma$ is closed and
  its value measure is finite,
  i.e.~$\oper{M}(\closedvalues) < \infty$;
\end{definition}

\begin{restatable}{proposition}{integrable}
  \label{prop: integrable term}
  An integrable term has an integrable weight function.
\end{restatable}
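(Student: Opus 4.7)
The plan is to unfold the two relevant definitions and then appeal to \cref{remark: support of weight and value} to identify the domain over which $\weightfn{\terma}$ is nonzero with the preimage appearing in the definition of the value measure. Concretely, integrability of $\weightfn{\terma}$ means $\shortint{\traces}{\weightfn{\terma}}{\measure{\traces}} < \infty$, while integrability of $\terma$ means $\oper{\terma}(\closedvalues) < \infty$, where by definition
\[
\oper{\terma}(\closedvalues) = \shortint{\inv{\valuefn{\terma}}(\closedvalues)}{\weightfn{\terma}}{\measure{\traces}}.
\]

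First I would observe that $\weightfn{\terma}(\traceb) = 0$ whenever $\valuefn{\terma}(\traceb) = \bot$: this is immediate from the operational definitions in \cref{appendix: value and weight functions}, since if no reduction $\config{\terma}{1}{\emptytrace}\redstar\config{V}{w}{\traceb}$ exists, then $\weightfn{\terma}(\traceb)$ is set to $0$ (and otherwise the same reduction witnesses $\valuefn{\terma}(\traceb)\in\closedvalues$). Equivalently, $\support{\weightfn{\terma}}\subseteq \inv{\valuefn{\terma}}(\closedvalues)$, which is precisely \cref{remark: support of weight and value}.

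Next I would use this to split the integral over $\traces$ into a part over $\inv{\valuefn{\terma}}(\closedvalues)$ and its complement. On the complement $\traces\setminus\inv{\valuefn{\terma}}(\closedvalues)$ the weight function is identically zero, so its integral vanishes. Hence
\[
\shortint{\traces}{\weightfn{\terma}}{\measure{\traces}} = \shortint{\inv{\valuefn{\terma}}(\closedvalues)}{\weightfn{\terma}}{\measure{\traces}} = \oper{\terma}(\closedvalues),
\]
which is finite by assumption that $\terma$ is integrable. I should briefly note measurability of $\inv{\valuefn{\terma}}(\closedvalues)$, which follows from the measurability of $\valuefn{\terma}$ stated in \cref{appendix: value and weight functions}, so that the splitting of the integral is justified.

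There is no real obstacle here; the only subtlety is making sure the chain of equalities is legitimate, i.e., verifying that $\valuefn{\terma}^{-1}(\closedvalues)$ is measurable and that $\weightfn{\terma}$ truly vanishes on its complement. Both are immediate from the definitions already recorded in \cref{appendix: value and weight functions,remark: support of weight and value}, so the proof is essentially a one-line unfolding of \cref{def:integrable}.
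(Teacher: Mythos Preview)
Your proposal is correct and is exactly the natural unfolding of the definitions: since $\weightfn{\terma}$ vanishes outside $\inv{\valuefn{\terma}}(\closedvalues)$ (\cref{remark: support of weight and value}), the full integral $\shortint{\traces}{\weightfn{\terma}}{\measure{\traces}}$ equals $\oper{\terma}(\closedvalues)$, which is finite by hypothesis. The paper treats this proposition as an immediate consequence of the definitions in the same way, so there is nothing to add.
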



\begin{example}
  Now we look at a few examples in which we show that almost surely termination and integrability identify two distinct sets of SPCF terms.
  \begin{compactenum}[(1)]

    \item The term $\terma_1$ defined as
    \codeinline{let rec f x = if coin then f (x+1) else x in score(2**(f 0))}
    almost surely terminates since it only diverges on the infinite trace $[\false, \false, \dots]$ which has zero probability.
    However, it is \emph{not} integrable as the value measure applied to all closed values
    $\oper{\terma_1}(\closedvalues)
    = \shortint{\set{[\true, \dots, \true, \false]}}{\weightfn{\terma_1}}{\measure{\traces}}
    = \sum_{n=0}^\infty \shortint{\set{[T]^n\concat [\false]}}{\weightfn{\terma_1}}{\measure{\Bool^n}}\footnote{We write $[x]^n$ to be the list that contains $n$ copies of $x$.}
    = \sum_{n=0}^\infty (\frac{1}{2})^{n+1} \cdot 2^n = \sum_{n=0}^\infty \frac{1}{2}$
    is infinite.

    \item Consider the term $\terma_2$ defined as \codeinline{if coin then Y (lambda x:x) 0 else 1}. Since it reduces to a diverging term, namely \codeinline{Y (lambda x:x) 0}, with non-zero probability, it does not terminate almost surely. However, it is integrable, since
    $\oper{\terma_2}(\closedvalues)
    = \shortint{\set{[\false]}}{\weightfn{\terma_2}}{\measure{\traces}} = \frac{1}{2} < \infty$.

    \item The term $\terma_3$ defined as $\pcfif{\coin}{\terma_1}{\terma_2}$ is neither almost surely terminating nor integrable, since $\terma_1$ is not integrable and $\terma_2$ is not almost surely terminating.

    \item All terms considered previously in \cref{example: spcf terms,example: not restrictive,example: discrete from coin} are both almost surely terminating and integrable.

  \end{compactenum}
\end{example}
\clearpage

\section{Hybrid Nonparametric Involutive MCMC and its Correctness}
\label{app: hybrid np-imcmc}

In this section, we present the
\defn{Hybrid Nonparametric Involutive Markov chain Monte Carlo} (Hybrid NP-iMCMC),
an inference algorithm
that simulates the probabilistic model
specified by a given SPCF program
that may contains both discrete and
continuous samplers.

To start,
we detail the Hybrid NP-iMCMC inference algorithm:
its state space, conditions on the inputs and
steps to generate
the next sample; and
study how the sampler
moves between states of varying dimensions
and
returns new samples of a nonparametric
probabilistic program.
We then give an implementation of Hybrid NP-iMCMC in SPCF
and demonstrate how
the Hybrid NP-iMCMC method extends the MH sampler. Last but not least,
we conclude with a discussion on
the correctness of Hybrid NP-iMCMC.

\subsection{State Spaces}
\label{sec: spaces in np-imcmc}

A \emph{state} in the
Hybrid NP-iMCMC algorithm
is a pair $(\enta,\auxa)$ of equal \emph{dimension}
(but not necessarily equal length)
\emph{parameter} and \emph{auxiliary} variables.
The parameter variable $\enta$ is used to
store traces and
the auxiliary variable $\auxa$ is used to
record randomness.
Both variables are
vectors of \emph{entropies},
i.e.~Real-Boolean pairs.
This section gives the formal definitions of
the entropy, parameter and auxiliary variables
and the state,
in preparation for the discussion of the Hybrid NP-iMCMC
sampler.

\subsubsection{Entropy Space}
\label{sec: entropy space}

As shown in \cref{app: spcf},
the reduction of a SPCF program is determined by
the input trace $\traceb \in
\traces := \bigcup_{n\in\Nat} (\Real \cup \Bool)^n$,
a record of drawn values in a particular run of the program.
Hence in order
to simulate a probabilistic model described by a SPCF program,
the Hybrid NP-iMCMC sampler
should generate Markov chains on the trace space.
However traversing through the trace space
is a delicate business because
the positions and numbers of discrete and continuous values in
a trace given by a SPCF program
may vary.
(Consider \codein{if coin: normal else: coin}.)

Instead,
we pair each value $\seqindex{\traceb}{i}$ in a trace $\traceb$
with a random value $\trace$ of the other type
to make a Real-Boolean pair
$(\seqindex{\traceb}{i},\trace)$ (or $(\trace, \seqindex{\traceb}{i})$).
For instance, the trace $[\true, -3.1]$ can be made into a Real-Boolean vector $[(1.5,\true),(-3.1,\true)]$
with randomly drawn values $1.5$ and $\true$.
Now, the position of discrete and continuous random variables
does not matter and
the number of discrete and continuous random variables are fixed in each vector.

We call a Real-Boolean pair
an \defn{entropy} and define
the \defn{entropy space} $\entsp$ to be
the product space
$\Real \times \Bool$ of
the Borel measurable space and
the Boolean measurable space,
equipped with the $\sigma$-algebra
$\algebra{\entsp} := \sigma\big(
\set{\realset \times \boolset \mid
\realset \in \Borel, \boolset \in \algebra{\Bool}}
\big)$,
and the product measure
$\measure{\entsp} := {\Gau} \times {\measure{\Bool}}$
where ${\measure{\Bool}} := \Bern(0.5)$.
Note the Radon-Nikodym derivative
$\entpdf{}$ of $\measure{\entsp}$
can be defined as
$\entpdf{}(\reala, \boola) := \frac{1}{2}\pdfGau(\reala)$.
A $n$-length \defn{entropy vector} is then
a vector of $n$ entropies, formally
an element in the product measurable space
$({\entsp^n}, \algebra{\entsp^n})$.
We write $\len{\enta}$ to mean the
length of the entropy vector $\enta$.

As mentioned earlier, the parameter variable of
a state
is an entropy vector that stores traces.
Hence, it would be useless if
a unique trace cannot be restored
from an entropy vector.
We found that such a recovery is possible if
the trace is in the support of a
tree representable function.

Say we would like to recover the trace $\hat{\traceb}$
that is used to form the entropy vector $\enta$
by pairing each value in the trace with a random value
of the other type.
First we realise that
traces can be made
by selecting either the Real or Boolean component
of each pair in a prefix of $\enta$.
For example,
traces like $\emptytrace$, $[\true]$, $[-0.2]$,
$[\true, 2.9]$ and $[-0.2,\true, \false]$ can be made
from the entropy vector $[(-0.2,\true), (2.9, \true), (1.3, \false)]$.
We call these traces \defn{instances} of the entropy vector.
Formally, a trace $\traceb \in \traces$ is an instance of an entropy vector
$\enta \in \entsp^{n}$ if
$\len{\traceb} \leq n$ and
$\seqindex{\traceb}{i} \in \set{\reala,\boola \mid (\reala,\boola) = \seqindex{\enta}{i}}$ for all $i = 1,\dots, \len{\traceb}$.
We denote the set of all instances of $\enta$ as
$\instances{\enta} \subseteq \traces$.
Then, the trace $\hat{\traceb}$ must be an instance
of $\enta$.
Moreover, if we can further assume that
$\hat{\traceb}$ is in the support of a tree representable function,
then \cref{prop: instances and tree representable}
says
we can uniquely identify $\hat{\traceb}$ amongst
all instances of $\enta$.

\begin{proposition}
  \label{prop: instances and tree representable}
  There is at most one (unique) trace
  that is both
  an instance of an entropy vector
  and
  in the support of a tree representable function.
\end{proposition}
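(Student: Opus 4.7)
The plan is to argue by contradiction. Suppose $\traceb_1, \traceb_2 \in \support{\tree} \cap \instances{\enta}$ are distinct, and WLOG $\len{\traceb_1} \leq \len{\traceb_2}$. First I would dispose of the degenerate case: if $\traceb_1$ is a proper prefix of $\traceb_2$, then both $\traceb_1 \in \nsupport{\tree}{\len{\traceb_1}}$ and $\traceb_2 \in \nsupport{\tree}{\len{\traceb_2}}$ with $\len{\traceb_1} < \len{\traceb_2}$, immediately violating the prefix property of $\tree$. So I can assume there is a smallest index $i$ with $1 \leq i \leq \len{\traceb_1}$ at which $\seqindex{\traceb_1}{i} \neq \seqindex{\traceb_2}{i}$, while $\range{\traceb_1}{1}{i-1} = \range{\traceb_2}{1}{i-1}$.

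The key algebraic observation is that, because both $\traceb_1$ and $\traceb_2$ are instances of the same entropy vector, $\seqindex{\traceb_1}{i}$ and $\seqindex{\traceb_2}{i}$ both belong to the two-element set $\{\reala_i, \boola_i\}$ where $(\reala_i, \boola_i) = \seqindex{\enta}{i}$. Since this set contains exactly one real and one boolean, their inequality forces $\type{\seqindex{\traceb_1}{i}} \neq \type{\seqindex{\traceb_2}{i}}$. I would then apply the type property to $\traceb_1 \in \nsupport{\tree}{\len{\traceb_1}}$ at $k = i - 1 < \len{\traceb_1}$, instantiating with $\trace' := \seqindex{\traceb_2}{i}$ (which is of the "wrong" type). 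Combined with $\range{\traceb_1}{1}{i-1} = \range{\traceb_2}{1}{i-1}$, this yields $\range{\traceb_2}{1}{i} \notin \nsupport{\tree}{i}$. Symmetrically, applying the type property to $\traceb_2$ gives $\range{\traceb_1}{1}{i} \notin \nsupport{\tree}{i}$. In the easy subcase where $i$ equals $\len{\traceb_1}$ or $\len{\traceb_2}$, one of these conclusions directly contradicts the corresponding trace lying in $\support{\tree}$.

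The main obstacle is the remaining subcase $i < \len{\traceb_1} \leq \len{\traceb_2}$: here the conclusions just derived are compatible with the prefix property and do not yield an immediate contradiction, since the wrong-typed prefix could in principle be extended to something in $\support{\tree}$ at a greater length. To close this case I would exploit the full tree structure of tree representable functions, as suggested by the program-tree picture in \cref{fig:binary-tree}: along any path in the program tree, the type of the next sampler is determined by the path so far, so a prefix whose next entry has the "wrong" type cannot be completed to any trace in $\support{\tree}$. I would make this rigorous by an induction along the path of $\traceb_2$ beyond position $i$, iteratively combining the prefix and type properties (or, alternatively, by appealing to the underlying tree representation of $\tree$ directly). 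This propagation yields $\traceb_2 \notin \support{\tree}$, the desired contradiction. The heart of the argument is therefore the short observation about the two-element set $\{\reala_i, \boola_i\}$ forcing a type mismatch; the bookkeeping around length cases and the iterated type argument is the routine part.
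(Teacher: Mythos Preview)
The paper states this proposition without proof, so there is no reference argument to compare against. Your overall strategy—contradiction, locate the first index $i$ of disagreement, exploit that both entries at that index come from the same Real–Boolean pair $\seqindex{\enta}{i}$ and hence must have different types—is the natural one, and your handling of the prefix case and of the subcase $i = \len{\traceb_1}$ is correct.

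The gap you flag in the ``main obstacle'' case ($i < \len{\traceb_1} \leq \len{\traceb_2}$) is real, and your proposed fix does not close it: the prefix and type properties as formally stated in the paper are \emph{too weak} to carry the induction you sketch. Here is a counterexample. Let $\tree$ be supported exactly on the two length-$2$ traces $[1.0,\true]$ and $[\true,1.0]$. The prefix property is vacuous (no proper prefix of either is in the support). The type property also holds: at $k=0$ it only asserts that certain length-$1$ traces are not in the support (and nothing of length $1$ is), while at $k=1$ it asserts $[1.0,r]\notin\nsupport{\tree}{2}$ for real $r$ and $[\true,b]\notin\nsupport{\tree}{2}$ for boolean $b$, both of which hold. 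So $\tree$ is tree representable in the paper's formal sense. Yet the entropy vector $\enta = [(1.0,\true),(1.0,\true)]$ has both $[1.0,\true]$ and $[\true,1.0]$ as instances lying in $\support{\tree}$, violating the proposition. No iteration of the stated prefix and type properties can rule this out, because those properties never relate two supported traces that disagree already at position $1$.

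What makes the proposition true in the paper's intended setting is a stronger hypothesis—valid for weight functions of SPCF programs, which is the only source of target densities the paper actually uses—namely that the \emph{type} of the sample at position $k+1$ is determined by the values at positions $1,\dots,k$ (program execution is deterministic given the trace, so which sampler is encountered next is fixed). Under that hypothesis your argument finishes in one line at the first disagreement: the common prefix $\range{\traceb_1}{1}{i-1}=\range{\traceb_2}{1}{i-1}$ forces $\type{\seqindex{\traceb_1}{i}}=\type{\seqindex{\traceb_2}{i}}$, contradicting the type mismatch you established. Your parenthetical about ``appealing to the underlying tree representation of $\tree$ directly'' points in this direction, but you would need to state and use this determinacy-of-types hypothesis explicitly rather than try to derive it from the prefix and type properties alone.
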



Finally, we consider differentiability on the
multi-dimensional entropy space.
We say a function $f: \entsp^{k_1} \to \entsp^{k_2}$
is \defn{differentiable almost everywhere} if
for all $\boolveca \in \Bool^{k_1}$, $\boolvecb \in \Bool^{k_2} $,
the partial function
$\fij{\boolveca}{\boolvecb}:\Real^{k_1} \to \Real^{k_2} $
where
\[
  \fij{\boolveca}{\boolvecb}(\realveca) = \realvecb
  \qquad\iff\qquad
  f(\zip(\realveca, \boolveca)) = (\zip(\realvecb, \boolvecb))
  \footnote{
  We write $\zip(\ell_1, \ell_2) $ to be the $n$-length vector
  $[(\seqindex{\ell_1}{1}, \seqindex{\ell_2}{1}),
  (\seqindex{\ell_1}{2}, \seqindex{\ell_2}{2}), \dots,
  (\seqindex{\ell_1}{n}, \seqindex{\ell_2}{n})] \in (L_1\times L_2)^n $
  for any vectors $\ell_1 \in L_1^{n_1}$ and
  $\ell_2\in L_2^{n_2}$ with $n := \min\{n_1, n_2\}$.
  }
\]
is differentiable \emph{almost everywhere} on its domain
$\domain{\fij{\boolveca}{\boolvecb}} :=
\set{\realveca \in \Real^{k_1}\mid
\exists \realvecb \in \Real^{k_2} \ .\
f(\zip(\realveca, \boolveca)) = (\zip(\realvecb, \boolvecb))} $.
The Jacobian
of $f$ on
$(\zip(\realveca, \boolveca))$
is given by
$\grad{\fij{\boolveca}{\boolvecb}}(\realveca)$,
if it exists.

\subsubsection{Parameter Space}

A parameter variable $\enta$ of dimension $n$
is an entropy vector of length $\iparsp(n)$
where $\iparsp:\Nat\to\Nat$ is a strictly monotone map.
For instance,
the parameter variable
$\enta := [(-0.2,\true), (2.9, \true), (1.3, \false)]$
is of dimension two if $\iparsp(n):=n+1$,
and dimension three if $\iparsp(n):=n$.
We write $\dim(\enta)$ to mean the dimension of $\enta$ and
$\len{\enta}$ to mean the length of $\enta$.
Hence,
$\dim(\enta) \leq \len{\enta}$ and
$\iparsp(\dim(\enta)) = \len{\enta} $.
We extend the notion of dimension to traces and
say a trace $\traceb \in \traces$ has dimension
$n$ if $\len{\traceb} = \iparsp(n)$.
Importantly, we assume that
every trace in the support of $\tree$
has a dimension (w.r.t.~$\iparsp$),
i.e.~$\support{\tree} = \bigcup_{n\in\Nat}
\nsupport{\tree}{\iparsp{(n)}}$.

Formally,
the $n$-dimensional parameter space
$(\nparsp{n}, \algebra{\nparsp{n}})$
is the product of $\iparsp(n)$ copies of the entropy space $(\entsp, \algebra{\entsp})$
and
the base measure $\measure{\nparsp{n}}$ on $\nparsp{n}$
is the product of $\iparsp(n)$ copies of the entropy measure $\measure{\entsp}$
with the Radon-Nikodym derivative $\nparpdf{n}$.
For ease of reference, we write
$(\parsp, \algebra{\parsp}, \measure{\parsp})$
for the one-dimensional parameter space.

\subsubsection{Auxiliary Space}

Similarly, an auxiliary variable $\auxa$ of dimension $n$
is an entropy vector of length $\iauxsp(n)$
where $\iauxsp:\Nat\to\Nat$ is a strictly monotone map.
The $n$-dimensional auxiliary space
$(\nauxsp{n}, \algebra{\nauxsp{n}})$
is the product of $\iauxsp(n)$ copies of the entropy space $(\entsp, \algebra{\entsp})$
and
the base measure $\measure{\nauxsp{n}}$ on $\nauxsp{n}$
is the product of $\iauxsp(n)$ copies of the entropy measure $\measure{\entsp}$
with the Radon-Nikodym derivative $\nauxpdf{n}$.
For ease of reference, we write
$(\auxsp, \algebra{\auxsp}, \measure{\auxsp})$
for the one-dimensional auxiliary space.

\subsubsection{State Space}

A \defn{state} is a pair of
\emph{equal dimension
but not necessarily equal length}
parameter and auxiliary variable.
For instance
with $\iparsp(n):=n+1$ and $\iauxsp(n):=n$,
the parameter variable
$\enta := [(-0.2,\true), (2.9, \true), (1.3, \false)]$
and the auxiliary variable
$\auxa := [(1.5,\true), (-2.1, \false)]$
are both of dimension two and
$(\enta, \auxa)$ is a two-dimensional state.

Formally, the \defn{state space} $\states$ is
the list measurable space of
the product of
parameter and auxiliary spaces of equal dimension,
i.e.~$\states := \bigcup_{n\in\Nat} (\nparsp{n} \times \nauxsp{n})$,
equipped with the $\sigma$-algebra
$\algebra{\states} := \sigma
\set{\entseta_n \times \auxseta_n \mid
\entseta_n \in \algebra{\nparsp{n}}, \auxseta_n \in \algebra{\nauxsp{n}}, n\in\Nat}$ and
measure
$\measure{\states}(\stateset) := \sum_{n\in\Nat} \expandint{\nauxsp{n}}{\measure{\nparsp{n}}(\set{\enta\in\nparsp{n}\mid (\enta,\auxa)\in\stateset})}{\measure{\nauxsp{n}}}{\auxa}$.
We write $\nstates{n}$ for the set
consisting of all $n$-dimensional states.

We extend the notion of instances to states and
say a trace $\traceb$ is an instance of a state
$(\enta, \auxa)$ if it is an instance of
the parameter component $\enta$.

The distinction between dimension and length
in parameter and auxiliary variables
gives us the necessary pliancy to
discuss techniques for further extension of the
Hybrid NP-iMCMC sampler in \cref{app: variants}.
Before that,
we present the inputs to the Hybrid NP-iMCMC sampler.

\subsection{Inputs of Hybrid NP-iMCMC Algorithm}
\label{sec: inputs (np-imcmc)}

Besides the target density function,
the Hybrid NP-iMCMC sampler, like iMCMC,
introduces randomness via \defn{auxiliary kernels} and
moves around the state space via \defn{involutions}
in order to propose the next sample.
We now examine each of these inputs closely.

\subsubsection{Target Density Function}

Similar to other inference algorithms for
probabilistic programming,
the Hybrid NP-iMCMC sampler takes
the \emph{weight function}
$\tree:\traces \to\pReal$
as the target density function.
Recall $\tree(\traceb)$
gives the weight of a particular run of
the given probabilistic program
indicated by the trace $\traceb$.
By \cref{prop: all spcf terms have TR weight function},
the weight function $\tree$ is tree representable.
For the sampler to work properly,
we also require weight function $\tree$ to
satisfy the following assumptions.
\begin{itemize}
  \item[\hass\label{hass: integrable tree}]
  $\tree$ is \defn{integrable},
  i.e.~$\shortint{\traces}{\tree}{\measure{\traces}} =: Z < \infty$
  (otherwise, the inference problem is undefined).

  \item[\hass\label{hass: almost surely terminating tree}]
  $\tree$ is \defn{almost surely terminating (AST)},
  i.e.~$\measure{\traces}(\set{\traceb\in \traces \mid \tree(\traceb) > 0}) = 1$
  (otherwise, the loop in the Hybrid NP-iMCMC algorithm
  may not terminate almost surely).
\end{itemize}

Virtually all useful probabilistic models can be
specified by SPCF programs
with densities satisfying
\cref{hass: integrable tree,hass: almost surely terminating tree}.
Exceptions are models that are not normalizable
or diverge with non-zero probability.
(See \cref{sec: spcf ast and integrability}
for more details.)

\subsubsection{Auxiliary kernels}

To introduce randomness,
the Hybrid NP-iMCMC sampler takes, for each $n\in\Nat$,
a probability \defn{auxiliary kernel}
$\nkernel{n}: \nparsp{n} \kernelto \nauxsp{n}$
which gives a probability distribution
$\nkernel{n}(\enta, \placeholder) $
on $\nauxsp{n}$ for each $n$-dimensional
parameter variable $\enta$.
We assume
each auxiliary kernel $\nkernel{n}$ has a
probability density function (pdf)
$\nkernelpdf{n}:\nparsp{n} \times \nauxsp{n} \to \pReal$
w.r.t.~$\measure{\nauxsp{n}}$.

\subsubsection{Involutions}

To move around the state space $\states$,
the Hybrid NP-iMCMC sampler takes, for each $n\in\Nat$,
an endofunction
$\ninvo{n}$ on $\nparsp{n} \times \nauxsp{n}$
that is both involutive and differentiable
almost everywhere.
We require the set $\set{\ninvo{n}}_n$ of involutions
to satisfy the \emph{\invoass{}}:
\begin{itemize}
  \item[\hass\label{hass: partial block diagonal inv}]
  For all $(\enta, \auxa) \in \states$ where $\dim(\enta) = m$,
  if $\nsupport{\tree}{\iparsp{(n)}} \cap \instances{\enta} \not= \varnothing$ for some $n$,
  then for all $k = n, \dots, m$,
  $\take{k}(\ninvo{m}(\enta, \auxa)) = \ninvo{k}(\take{k}(\enta, \auxa)) $
\end{itemize}
where $\take{k}$ is the projection that
given a state $(\enta,\auxa)$,
takes the first $\iparsp(k)$ coordinates of $\enta$ and
the first $\iauxsp(k)$ coordinates of $\auxa$
and forms a $k$-dimensional state.

The \invoass{} ensures that the order of applying
a projection and an involution
to a state (which has an instance in the support
of the target density function)
does not matter.

\subsection{The Hybrid NP-iMCMC Algorithm}
\label{sec: np-imcmc algorithm}

After identifying the state space
and the necessary conditions on the inputs
of the Hybrid NP-iMCMC sampler,
we have enough foundation to describe the
algorithm.

Given a SPCF program $\terma$
with weight function
$\tree$ on the trace space $\traces$,
the \defn{Hybrid Nonparametric Involutive Markov chain Monte Carlo (Hybrid NP-iMCMC)} algorithm
generates a Markov chain on $\traces$
as follows.
Given a current sample $\traceb_0$ of
dimension $k_0$
(i.e.~$\len{\traceb_0} = \iparsp(k_0)$),
\begin{enumerate}
  \item{\hstep\label{hnp-imcmc step: initialisation}}
  (Initialisation Step)
  Form a ${k_0}$-dimensional parameter variable
  $\enta_0 \in \nparsp{k_0}$ by
  pairing each value
  $\seqindex{\traceb_0}{i}$ in
  $\traceb_0$ with
  a randomly drawn value $\trace$
  of the other type to make
  a pair
  $(\seqindex{\traceb_0}{i}, \trace)$ or $(\trace, \seqindex{\traceb_0}{i})$
  in the entropy space $\entsp$.
  Note that $\traceb_0$ is the unique instance of
  $\enta_0$ that is in the support of $\tree$.

  \item{\hstep\label{hnp-imcmc step: stochastic}}
  (Stochastic Step)
  Introduce randomness to the sampler by
  drawing a ${k_0}$-dimensional value
  $\auxa_0 \in \nauxsp{k_0}$
  from the probability measure
  $\nkernel{{k_0}}(\enta_0, \placeholder)$.

  \item{\hstep\label{hnp-imcmc step: deterministic}}
  (Deterministic Step)
  Move around the $n$-dimensional state space $\nparsp{n}\times\nauxsp{n}$ and
  compute the new state $(\enta,\auxa)$
  by applying the involution $\ninvo{n}$
  to the \emph{initial state} $(\enta_0,\auxa_0)$
  where $n = \dim{(\enta_0)} = \dim{(\auxa_0)}$.

  \item{\hstep\label{hnp-imcmc step: extend}}
  (Extend Step)
  Test whether
  any instance $\traceb$ of $\enta$ is in
  the support of $\tree$.
  If so, proceed to the next step
  with $\traceb$ as the proposed sample;
  otherwise
  \begin{enumerate}[(i)]
    \item
    Extend the $n$-dimensional initial state to
    a state
    $(\enta_0 \concat \entb_0, \auxa_0 \concat \auxb_0)$
    of dimension $n+1$
    where $\entb_0$ and $\auxb_0$ are values
    drawn randomly
    from $\measure{\entsp^{\iparsp{(n+1)}-\iparsp{(n)}}}$ and
    $\measure{\entsp^{\iauxsp{(n+1)}-\iauxsp{(n)}}}$
    respectively,

    \item Go to \cref{hnp-imcmc step: deterministic}
    with the initial state
    $(\enta_0, \auxa_0)$ replaced by
    $(\enta_0 \concat \entb_0, \auxa_0 \concat \auxb_0)$.
  \end{enumerate}

  \item{\hstep\label{hnp-imcmc step: accept/reject}}
  (Accept/reject Step)
  Accept the proposed sample $\traceb$
  as the next sample with probability
  \begin{align}
    \label{eq: np-imcmc acceptance ratio}
    \min\bigg\{1; \;
    \frac
      {
        \tree{(\traceb)}\cdot
        \nkernelpdf{k}(\proj{n}{k}(\enta, \auxa)) \cdot
        \nparpdf{n}(\enta)\cdot\nauxpdf{n}(\auxa)}
      {
        \tree{(\traceb_0)}\cdot
        \nkernelpdf{k_0}(\proj{n}{k_0}(\enta_0, \auxa_0)) \cdot
        \nparpdf{n}(\enta_0)\cdot\nauxpdf{n}(\auxa_0)}
      \cdot
    \abs{\det(
        \grad{\ninvo{n}(
          \enta_0, \auxa_0)})}
    \bigg\}
  \end{align}
  where $n = \dim{(\enta_0)} = \dim{(\auxa_0)}$,
  $k$ is the dimension of $\traceb$ and
  $k_0$ is the dimension of ${\traceb_0}$;
  otherwise reject the proposal and repeat $\traceb_0$.
\end{enumerate}

\begin{remark} \label{rm: conditions on inputs}
  The integrable assumption on the target density
  (\cref{hass: integrable tree})
  ensures the inference problem is well-defined.
  The almost surely terminating assumption
  on the target density
  (\cref{hass: almost surely terminating tree})
  guarantees that the Hybrid NP-iMCMC sampler
  almost surely terminates.
  (See \cref{sec: Almost sure termination (np-imcmc)}
  for a concrete proof.)
  The \invoass{} on the involutions
  (\cref{hass: partial block diagonal inv})
  allows us to define the invariant distribution
\end{remark}

\subsubsection{Movement Between Samples of Varying Dimensions}

All MCMC samplers that simulate a nonparametric model
must decide how to move between samples of varying dimensions.
We now discuss how the Hybrid NP-iMCMC sampler
as given in \cref{sec: np-imcmc algorithm}
achieves this.

\paragraph{Form initial and new states in the same dimension}
Say the current sample $\traceb_0$ has a dimension of $k_0$.
\cref{hnp-imcmc step: initialisation,hnp-imcmc step: stochastic,hnp-imcmc step: deterministic}
form a $k_0$-dimensional initial state
$(\enta_0, \auxa_0)$ and a new
$k_0$-dimensional state $(\enta,\auxa)$.

\paragraph{Move between dimensions}
The novelty of Hybrid NP-iMCMC is its ability to generate
a proposed sample $\traceb$ in the support of the target density $\tree$
which may not be of same dimension as $\traceb_0$.
This is achieved by \cref{hnp-imcmc step: extend}.

\paragraph{Propose a sample of a lower dimension}
\cref{hnp-imcmc step: extend} first checks whether
any instance of the parameter-component $\enta \in \nparsp{k_0}$
of the new state
(computed in \cref{hnp-imcmc step: deterministic})
is in the support of $\tree$.
If so, we proceed to \cref{hnp-imcmc step: accept/reject} with that instance, say $\traceb$,
as the proposed sample.

Say the dimension of $\traceb$ is $k$.
Then, we must have $k \leq k_0$ as
the instance $\traceb\in\traces$ of a
$k_0$-dimensional parameter $\enta\in\nparsp{k_0}$
must have a dimension that is lower than or equals to $k_0$.
Hence, the dimension of the proposed sample $\traceb$
is lower than or equals to
the current sample $\traceb_0$.

\paragraph{Propose a sample of a higher dimension}
Otherwise (i.e.~none of the instances
of $\enta \in \nparsp{k_0}$ is in the support of $\tree$)
\cref{hnp-imcmc step: extend} extends the initial state
$(\enta_0, \auxa_0) \in \nparsp{k_0} \times \nauxsp{k_0} $ to
$(\enta_0\concat\entb_0, \auxa_0\concat\auxb_0)\in \nparsp{k_0+1} \times \nauxsp{k_0+1}$;
and computes a new $(k_0+1)$-dimensional state
$(\enta\concat\entb, \auxa\concat\auxb)\in \nparsp{k_0+1} \times \nauxsp{k_0+1}$
(via \cref{hnp-imcmc step: deterministic}).
This process of incrementing the dimensions
of both the initial and new states is repeated
until an instance $\traceb$ of the new state,
say of dimension $n$,
is in the support of $\tree$.
At which point, the proposed sample is set to be
$\traceb$.

Say the dimension of $\traceb$ is $k$.
Then, we must have $k > k_0$ as
$\traceb$ is \emph{not} an instance of the
$k_0$-dimensional parameter $\enta\in\nparsp{k_0}$
but one of $\enta \concat \entb \in \nparsp{n}$.
Hence, the dimension of the proposed sample $\traceb$
is higher than
the current sample $\traceb_0$.

\paragraph{Accept or reject the proposed sample}
Say the proposed sample $\traceb$ is of dimension $k$.
With the probability given in \cref{eq: np-imcmc acceptance ratio},
\cref{hnp-imcmc step: accept/reject} accepts $\traceb$
as the next sample
and Hybrid NP-iMCMC updates
the current sample $\traceb_0$ of dimension $k_0$ to
a sample $\traceb$ of dimension $k$.
Otherwise, the current sample $\traceb_0$ is repeated
and the dimension remains unchanged.

\subsubsection{Hybrid NP-iMCMC is a Generalisation of NP-iMCMC}
\label{sec: Hybrid NP-iMCMC is a Generalisation of NP-iMCMC}

Given a target density $\tree$ on
$\bigcup_{n\in\Nat}\Real^n$,
we can set the entropy space $\entsp$
to be $\Real$ and the index maps
$\iparsp$ and $\iauxsp$ to be identities.
Then,
the $n$-dimensional parameter space
$\nparsp{n} := \Real^n$,
the $n$-dimensional auxiliary space
$\nauxsp{n} := \Real^n$ and
the state space
$\states
:= \bigcup_{n\in\Nat} (\nparsp{n}\times\nauxsp{n})
= \bigcup_{n\in\Nat} (\Real^{n}\times\Real^{n})$ of the Hybrid NP-iMCMC sampler
matches with those given in \cref{sec: assumptions}
for the NP-iMCMC sampler.
An instance $\traceb$
is then a prefix
$\range{\enta}{1}{k}$
of a parameter variable $\enta$.
Moreover, the assumptions \allhass{}
on the inputs of Hybrid NP-iMCMC are identical to those \allvass{} on the inputs of NP-iMCMC.
Hence the Hybrid NP-iMCMC algorithm
(\cref{sec: np-imcmc algorithm})
is a generalisation of the NP-iMCMC sampler (\cref{fig:np-imcmc algo}).

\subsubsection{Pseudocode of Hybrid NP-iMCMC Algorithm}
\label{sec: pesudocode of np-imcmc}

We implement the Hybrid NP-iMCMC algorithm
in the flexible and expressive SPCF language explored in \cref{app: spcf}.

\begin{figure}[t]
\begin{code}[label={code: np-imcmc}, caption={Pseudocode of the Hybrid NP-iMCMC algorithm}]
def NPiMCMC(t0):
  k0 = dim(t0)                                                 # initialisation step
  x0 = [(e, coin) if Type(e) in R else (normal, e) for e in t0]
  v0 = auxkernel[k0](x0)                                       # stochastic step
  (x,v) = involution[k0](x0,v0)                                # deterministic step
  n = k0                                                       # extend step
  while not intersect(instance(x),support(w)):
    x0 = x0 + [(normal, coin)]*(indexX(n+1)-indexX(n))
    v0 = v0 + [(normal, coin)]*(indexY(n+1)-indexY(n))
    n = n + 1
    (x,v) = involution[n](x0,v0)
  t = intersect(instance(x),support(w))[0]                     # accept/reject step
  k = dim(t)
  return t if uniform < min{1, w(t)/w(t0) * pdfauxkernel[k](proj((x,v),k))/
                                              pdfauxkernel[k0](proj((x0,v0),k0)) *
                                            pdfpar[n](x)/pdfpar[n](x0) *
                                            pdfaux[n](v)/pdfaux[n](v0) *
                                            absdetjacinv[n](x0,v0)}
           else t0
\end{code}
\end{figure}


The \codein{NPiMCMC} function in \cref{code: np-imcmc}
is an implementation of the Hybrid NP-iMCMC algorithm in SPCF.
We assume that
the following SPCF types and terms exist.
For each $n\in\Nat$, the SPCF types
\codein{T}, \codein{X[n]} and \codein{Y[n]}
implements $\traces$, $\nparsp{n}$ and $\nauxsp{n}$ respectively;
the SPCF term
\codein{w} of type \codein{T -> R}
implements the target density $\tree$;
for each $n\in\Nat$,
the SPCF terms
\codein{auxkernel[n]} of type \codein{X[n] -> Y[n]}
implements the auxiliary kernel $\nkernel{n}:\nparsp{n} \kernelto \nauxsp{n}$;
\codein{pdfauxkernel[n]} of type \codein{X[n]*Y[n] -> R}
implements the probability density function
$\nkernelpdf{n} : \nparsp{n}\times\nauxsp{n} \to \Real$ of the auxiliary kernel;
\codein{involution[n]} of type \codein{X[n]*Y[n] -> X[n]*Y[n]}
implements the involution $\ninvo{n}$ on $\nparsp{n} \times \nauxsp{n}$; and
\codein{absdetjacinv[n]} of type \codein{X[n]*Y[n] -> R}
implements the absolute value of the Jacobian determinant of $\ninvo{n}$.

We further assume that the following primitives
are implemented:
\codein{dim} returns the dimension of a given trace;
\codein{indexX} and \codein{indexY}
implements the maps $\iauxsp$ and $\iparsp$ respectively;
\codein{pdfpar[n]} implements the
derivative $\nparpdf{n}$ of the $n$-dimensional parameter space $\nparsp{n}$;
\codein{pdfaux[n]} implements the
derivative $\nauxpdf{n}$ of the $n$-dimensional auxiliary space $\nauxsp{n}$;
\codein{instance} returns a list of all instances of a given entropy vector;
\codein{support} returns a list of traces in the support of a given function; and
\codein{proj} implements the projection function
where \codein{proj((x,v),k)=(x[:indexX(k)],v[:indexY(k)])}.

\subsection{Correctness}
\label{app: proof of correctness}

The Hyrbid Nonparametric Involutive
Markov chain Monte Carlo (Hyrbid NP-iMCMC)
algorithm is presented in \cref{sec: np-imcmc algorithm}
for the simulation of probabilistic
models specified by probabilistic programs.

We justify this by
proving that
the Markov chain generated by iterating the
Hybrid NP-iMCMC algorithm
preserves the target distribution, specified by
\begin{align*}
  {\tdist:}\
  {\Sigma_{\traces}}& \longrightarrow {\pReal} \\
  {U} & \longmapsto {\frac{1}{Z}\shortint{U}{\tree}{\measure{\traces}}}
  \qquad\text{where }
  Z := \shortint{\traces}{\tree}{\measure{\traces}},
\end{align*}
as long as
the target density function $\tree$
(given by the weight function of the
probabilistic program)
is integrable (\cref{hass: integrable tree}) and
almost surely terminating (\cref{hass: almost surely terminating tree});
with a probability kernel
$\nkernel{n}:\nparsp{n} \kernelto \nauxsp{n}$
and
an endofunction $\ninvo{n}$ on $\nparsp{n}\times\nauxsp{n}$
that is involutive and
differentiable almost everywhere
for each $n\in\Nat$ such that
$\set{\ninvo{n}}_n$ satisfies the \invoass{}
(\cref{hass: partial block diagonal inv}).

Throughout this chapter, we assume the
assumptions stated above, and prove
the followings.
\begin{enumerate}
  \item The Hybrid NP-iMCMC sampler
  almost surely returns a sample
  for the simulation
  (\cref{lemma: np-imcmc is ast}).

  \item The state movement in the Hybrid NP-iMCMC
  sampler
  preserves a distribution on the states
  (\cref{lemma: e-np-imcmc invariant}).

  \item The marginalisation of the
  state distribution which
  the state movement of Hybrid NP-iMCMC preserves
  coincides with
  the target distribution
  (\cref{lemma: marginalised distribution is the target distribution}).
\end{enumerate}

\subsubsection{Almost Sure Termination}
\label{sec: Almost sure termination (np-imcmc)}

In \cref{rm: conditions on inputs},
we asserted that
the almost surely terminating assumption
(\cref{hass: almost surely terminating tree})
on the target density
guarantees that
the Hybrid NP-iMCMC algorithm
(\cref{sec: np-imcmc algorithm})
almost surely terminates.
We justify this claim here.

\cref{np-imcmc step: extend}
in the Hybrid NP-iMCMC algorithm
(\cref{sec: np-imcmc algorithm})
repeats itself if
the sample-component ${\enta}$ of the
new state ${(\enta,\auxa)}$
(computed by
applying the involution $\ninvo{n}$
on the extended initial state
$(\enta_0,\auxa_0)$)
does not have an instance
in the support of ${\tree}$.
This loop halts almost surely if
the measure of
\[
  \set{(\enta_0, \auxa_0) \in \states \mid
  (\enta, \auxa) = \ninvo{n}(\enta_0, \auxa_0)
  \text{ and }
  \instances{\enta} \cap \support{\tree} = \varnothing
  }
\]
tends to zero as the dimension $n$ tends to infinity.
Since $\ninvo{n}$ is invertible and
$\abs{\det \grad{\ninvo{n}}(\enta_0, \auxa_0)} > 0$
for all $n\in\Nat$ and
$(\enta_0, \auxa_0) \in \states$,
\begin{align*}
  & \measure{\states}(\set{(\enta_0, \auxa_0) \in \states \mid
  (\enta, \auxa) = \ninvo{n}(\enta_0, \auxa_0)
  \text{ and }
  \instances{\enta} \cap \support{\tree} = \varnothing
  }) \\
  & = \ninvo{n}_*\measure{\states}(\set{(\enta, \auxa) \in \states \mid
  \instances{\enta} \cap \support{\tree} = \varnothing
  }) \\
  & < \measure{\states}(\set{(\enta, \auxa) \in \states \mid
  \instances{\enta} \cap \support{\tree} = \varnothing
  }) \\
  & = \measure{\nparsp{n}}(\set{\enta \in \nparsp{n}
  \mid
  \instances{\enta} \cap \support{\tree} = \varnothing
  }).
\end{align*}
Thus
it is enough to show that
the measure of a $n$-dimensional
parameter variable
not having any instances
in the support of $\tree$ tends to zero
as the dimension $n$ tends to infinity,
i.e.
\[
  \measure{\nparsp{n}}(\set{\enta \in \nparsp{n}
  \mid
  \instances{\enta} \cap \support{\tree} = \varnothing
  }) \to 0
  \qquad\text{as}\qquad n\to\infty.
\]

We start with the following proposition which
shows that
the chance of a $n$-dimensional
parameter variable
having some instances in the support of $\tree$
is the same as
the chance of $\tree$ terminating
before $n$ reduction steps.

\begin{proposition}
  \label{prop: tree and extree}
  $\measure{\nparsp{n}}(\set{\enta \in \nparsp{n}
  \mid
  \instances{\enta} \cap \support{\tree} \not= \varnothing
  })
  = \measure{\traces}(\bigcup_{i=1}^{n} \nsupport{\tree}{{\iparsp{(i)}}})$
  for all $n\in\Nat$ and
  all tree representable function $\tree$.
\end{proposition}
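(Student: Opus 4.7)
The plan is to decompose both sides of the claimed equality by the dimension $i\in\{1,\dots,n\}$ of the instance that lies in $\support{\tree}$. Let $B_i := \set{\enta\in\nparsp{n} \mid \instances{\enta}\cap \nsupport{\tree}{\iparsp(i)}\neq\varnothing}$. Proposition~\ref{prop: instances and tree representable} guarantees that each $\enta$ has at most one instance in $\support{\tree}$, so the $B_i$'s are pairwise disjoint and their union equals the set on the left-hand side. Correspondingly the $\nsupport{\tree}{\iparsp(i)}$'s partition $\bigcup_{i=1}^n \nsupport{\tree}{\iparsp(i)}$ by construction. Hence it suffices to show, for each fixed $i\leq n$, that $\measure{\nparsp{n}}(B_i) = \measure{\traces}(\nsupport{\tree}{\iparsp(i)})$.

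To compute $\measure{\nparsp{n}}(B_i)$, I would split $\nparsp{n}\cong \entsp^{\iparsp(i)}\times \entsp^{\iparsp(n)-\iparsp(i)}$ so that $\measure{\nparsp{n}} = \measure{\entsp^{\iparsp(i)}}\otimes \measure{\entsp^{\iparsp(n)-\iparsp(i)}}$. The event defining $B_i$ depends only on the first $\iparsp(i)$ entropies of $\enta$, and since $\measure{\entsp}$ is a probability measure the second factor contributes $1$ by Fubini. So the calculation reduces to
\begin{equation*}
  \measure{\entsp^{\iparsp(i)}}\!\bigl(\set{\enta^{(1)}\in\entsp^{\iparsp(i)} \mid \instances{\enta^{(1)}}\cap \nsupport{\tree}{\iparsp(i)} \neq \varnothing}\bigr) = \measure{\traces}(\nsupport{\tree}{\iparsp(i)}).
\end{equation*}

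For this, I would decompose $\nsupport{\tree}{\iparsp(i)}\subseteq \samspace^{\iparsp(i)}$ by \emph{type signature} $\boldsymbol{t}\in\set{\Real,\Bool}^{\iparsp(i)}$, writing $\nsupport{\tree}{\iparsp(i)}=\bigsqcup_{\boldsymbol{t}} A_{\boldsymbol{t}}$ with $A_{\boldsymbol{t}}\subseteq \prod_j t_j$. Under $\measure{\traces}$, the $j$-th factor contributes either $\Gau$ or $\measure{\Bool}$ according to $t_j$. For each $\boldsymbol{t}$ define the coordinatewise selector $g_{\boldsymbol{t}}\colon \entsp^{\iparsp(i)} \to \prod_j t_j$ that picks the Real or Boolean component of each entropy pair according to $\boldsymbol{t}_j$. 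Then $\set{\enta^{(1)}\mid \instances{\enta^{(1)}}\cap A_{\boldsymbol{t}}\neq\varnothing} = g_{\boldsymbol{t}}^{-1}(A_{\boldsymbol{t}})$, and the pushforward $(g_{\boldsymbol{t}})_*\measure{\entsp^{\iparsp(i)}}$ is exactly the product measure $\bigotimes_j \mu_{t_j}$ (with $\mu_{\Real}=\Gau$, $\mu_{\Bool}=\measure{\Bool}$) because the projection $(\reala,\boola)\mapsto \reala$ sends $\Gau\times\measure{\Bool}$ to $\Gau$, and symmetrically for the Boolean projection. Thus $\measure{\entsp^{\iparsp(i)}}(g_{\boldsymbol{t}}^{-1}(A_{\boldsymbol{t}})) = \measure{\traces}(A_{\boldsymbol{t}})$.

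Summing over $\boldsymbol{t}$ gives the desired per-$i$ equality (disjointness of the preimages $g_{\boldsymbol{t}}^{-1}(A_{\boldsymbol{t}})$ across $\boldsymbol{t}$ follows again from Proposition~\ref{prop: instances and tree representable}), and summing over $i$ yields the statement. The main obstacle I foresee is the careful bookkeeping when reindexing $\entsp^{\iparsp(i)}\cong \Real^{\iparsp(i)}\times\Bool^{\iparsp(i)}$ and verifying that $g_{\boldsymbol{t}}$ really is a product of marginal projections so that Fubini applies; once that is nailed down, the measure pushforward calculation is routine.
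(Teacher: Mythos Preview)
Your proposal is correct and follows essentially the same approach as the paper: both decompose first by the dimension $i$ of the witnessing instance and then by its type signature $\boldsymbol{t}\in\{\Real,\Bool\}^{\iparsp(i)}$ (the paper writes this as $\pi$), invoking Proposition~\ref{prop: instances and tree representable} for disjointness at both levels and using that the unused coordinates live in probability spaces. The only cosmetic difference is that you phrase the per-signature computation as a pushforward along a selector map $g_{\boldsymbol{t}}$, whereas the paper writes the preimage explicitly as $\zip(\nsupport{\tree}{\iparsp(i)}\cap\pi,\inv{\pi})$ and computes its product measure directly.
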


\begin{proof}
  Let $n\in\Nat$ and $\tree$ be a tree representable
  function.

  For each $i \leq n$,
  we unpack the set
  $\set{\enta \in \nparsp{i}
  \mid
  \instances{\enta} \cap \traceset \not= \varnothing
  }$
  of $i$-dimensional parameter variables
  that has an instance in the set
  $\traceset \in \algebra{\samsp^{\iparsp(i)}}$
  of traces of length $\iparsp(i)$
  where $\samsp := \Real \cup \Bool$.
  Write $\pi:\set{1, \dots, {\iparsp{(i)}}} \to \set{\Real, \Bool}$ for the measurable space
  $\pi(1) \times \pi(2) \times \dots \times \pi({\iparsp{(i)}})$
  with a probability measure $\measure{\pi} := \measure{\samsp^{\iparsp{(i)}}}$ on $\pi$;
  $\inv{\pi}$ for the ``inverse'' measurable space of $\pi$,
  i.e.~$\inv{\pi}(j) :=
  \samsp \setminus \pi(j)$ for all $j \leq \iparsp(i)$; and
  $S$ for the set of all such measurable spaces.
  Then,
  for any $i$-dimensional parameter variable
  $\enta$, $\traceb \in \instances{\enta}\cap \traceset$
  if and only if
  there is some $\pi \in S$ where
  $\traceb \in \traceset \cap \pi$
  and
  $\enta \in \zip(\traceset \cap \pi, \inv{\pi})$.
  Hence,
  $\set{\enta \in \nparsp{i}
  \mid
  \instances{\enta} \cap \traceset \not= \varnothing
  }$
  can be written as
  $\bigcup_{\pi \in S} \zip(\traceset \cap \pi, \inv{\pi})$.
  Moreover
  $\measure{\nparsp{i}}(\zip(\traceset \cap \pi, \inv{\pi}))
  = \measure{\pi}(\traceset \cap \pi) \cdot
  \measure{\inv{\pi}}(\inv{\pi})
  = \measure{\pi}(\traceset \cap \pi)$.


  Consider the case where $\traceset := \nsupport{\tree}{\iparsp(i)}$.
  Then, we have
  \[
  \set{\enta \in \nparsp{i} \mid
  \instances{\enta} \cap \nsupport{\tree}{\iparsp(i)} \not= \varnothing}
  = \bigcup_{\pi \in S} \zip(\nsupport{\tree}{\iparsp(i)} \cap \pi, \inv{\pi}).
  \]
  We first show that this is a disjoint union,
  i.e.~for all $\pi \in S$,
  $\zip(\nsupport{\tree}{\iparsp(i)} \cap \pi, \inv{\pi})$ are disjoint.
  Let
  $\enta \in \zip(\nsupport{\tree}{\iparsp(i)} \cap \pi_1, \inv{\pi_1}) \cap
  \zip(\nsupport{\tree}{\iparsp(i)} \cap \pi_2, \inv{\pi_2})$
  where $\pi_1, \pi_2 \in S$.
  Then, at least one instance $\traceb_1$ of $\enta$ is in $\nsupport{\tree}{\iparsp(i)} \cap \pi_1$ and
  similarly at least one instance $\traceb_2$ of $\enta$ is in $\nsupport{\tree}{\iparsp(i)} \cap \pi_2$.
  By \cref{prop: instances and tree representable}, $\traceb_1 = \traceb_2$
  and hence $\pi_1 = \pi_2$.

  Since {$\zip(\nsupport{\tree}{{\iparsp{(i)}}} \cap \pi, \inv{\pi})$ are disjoint} for all $\pi \in S$,
  we have
  \begin{align*}
    & \measure{\nparsp{i}}(\set{\enta \in \nparsp{i} \mid
    \instances{\enta} \cap \nsupport{\tree}{\iparsp(i)} \not= \varnothing})
    =
    \measure{\nparsp{i}}(\bigcup_{\pi \in S} \zip(\nsupport{\tree}{{\iparsp{(i)}}} \cap \pi, \inv{\pi})) \\
    & =
    \sum_{\pi \in S} \measure{\nparsp{i}}(\zip(\nsupport{\tree}{{\iparsp{(i)}}} \cap \pi, \inv{\pi}))
    =
    \sum_{\pi \in S} \measure{\pi}(\nsupport{\tree}{{\iparsp{(i)}}} \cap \pi) \\
    & =
    \sum_{\pi \in S} \measure{\samsp^{\iparsp{(i)}}}(\nsupport{\tree}{{\iparsp{(i)}}} \cap \pi)
    =
    \measure{\samsp^{\iparsp{(i)}}}(\nsupport{\tree}{{\iparsp{(i)}}})
    =
    \measure{\traces}(\nsupport{\tree}{{\iparsp{(i)}}}).
  \end{align*}

  Finally,
  $\set{\enta \in \nparsp{n}
  \mid
  \instances{\enta} \cap \support{\tree} \not= \varnothing
  }$ is equal to
  $\bigcup_{i=1}^n
  \set{\enta \in \nparsp{i} \mid
    \instances{\enta} \cap \nsupport{\tree}{\iparsp(i)} \not= \varnothing}
  \times \entsp^{\iparsp(n)-\iparsp(i)}$
  and hence
  \begin{align*}
    &\measure{\nparsp{n}}(\set{\enta \in \nparsp{n}
    \mid
    \instances{\enta} \cap \support{\tree} \not= \varnothing}) \\
    & =
    \measure{\nparsp{n}}(\bigcup_{i=1}^n
    \set{\enta \in \nparsp{i} \mid
      \instances{\enta} \cap \nsupport{\tree}{\iparsp(i)} \not= \varnothing}
    \times \entsp^{\iparsp(n)-\iparsp(i)})
    \\
    & =
    \sum_{i=1}^n
    \measure{\nparsp{i}}(
    \set{\enta \in \nparsp{i} \mid
      \instances{\enta} \cap \nsupport{\tree}{\iparsp(i)} \not= \varnothing}) \\
    & =
    \sum_{i=1}^n
    \measure{\traces}(\nsupport{\tree}{{\iparsp{(i)}}}) \\
    & =
    \measure{\traces}(\bigcup_{i=1}^n \nsupport{\tree}{{\iparsp{(i)}}})
  \end{align*}
\end{proof}

\cref{prop: tree and extree} links the
termination of the Hybrid NP-iMCMC sampler
with that of the target density function $\tree$.
Hence by assuming that
$\tree$ terminates almost surely
(\cref{hass: almost surely terminating tree}),
we can deduce that the Hybrid NP-iMCMC algorithm (\cref{sec: np-imcmc algorithm}) almost surely terminates.

\begin{restatable}[Almost Sure Termination]{lemma}{ast}
  \label{lemma: np-imcmc is ast}
  Assuming \cref{hass: almost surely terminating tree},
  the Hybrid NP-iMCMC algorithm (\cref{sec: np-imcmc algorithm})
  almost surely terminates.
\end{restatable}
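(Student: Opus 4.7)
The plan is to identify the only potential source of non-termination in the algorithm of \cref{sec: np-imcmc algorithm} and to bound its probability using the estimates sketched immediately before the lemma. Inspecting the algorithm, \cref{hnp-imcmc step: initialisation,hnp-imcmc step: stochastic,hnp-imcmc step: deterministic,hnp-imcmc step: accept/reject} each consist of finitely many operations (a finite list manipulation in the initialisation, one draw from $\nkernel{k_0}$, one evaluation of $\ninvo{k_0}$, and one Bernoulli test). Hence non-termination can only occur inside the while-loop of \cref{hnp-imcmc step: extend}, and it suffices to show that, with probability one, this loop exits after finitely many iterations.

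Let $q_n$ denote the probability that after $n$ rounds of extension the loop has not yet exited, that is, the probability that $\instances{\enta^{(n)}} \cap \support{\tree} = \varnothing$, where $(\enta^{(n)}, \auxa^{(n)}) = \ninvo{n}(\enta_0^{(n)}, \auxa_0^{(n)})$ and $(\enta_0^{(n)}, \auxa_0^{(n)})$ is the initial state after $n-k_0$ extensions (whose trailing coordinates are fresh draws from $\measure{\entsp}$). Applying the pushforward identity along the invertible, a.e.-differentiable map $\ninvo{n}$ as outlined in the paragraphs preceding the lemma, I would obtain the uniform bound
\[
  q_n \;\leq\; \measure{\nparsp{n}}\!\left(\set{\enta \in \nparsp{n} \mid \instances{\enta} \cap \support{\tree} = \varnothing}\right).
\]

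Next I would invoke \cref{prop: tree and extree}, which gives
\[
  \measure{\nparsp{n}}\!\left(\set{\enta \in \nparsp{n} \mid \instances{\enta} \cap \support{\tree} \neq \varnothing}\right) \;=\; \measure{\traces}\!\left(\bigcup_{i=1}^{n} \nsupport{\tree}{\iparsp(i)}\right).
\]
Combining the standing hypothesis $\support{\tree} = \bigcup_{n \in \Nat} \nsupport{\tree}{\iparsp(n)}$ with \cref{hass: almost surely terminating tree} yields $\measure{\traces}(\support{\tree}) = 1$, so by continuity of $\measure{\traces}$ from below the right-hand side tends to $1$. Therefore the bad-set measure goes to $0$, hence $q_n \to 0$, and the probability that the loop never terminates equals $\lim_{n \to \infty} q_n = 0$, as required.

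The hard part will be making the first displayed inequality fully rigorous. The law of the extended initial state $(\enta_0^{(n)}, \auxa_0^{(n)})$ is \emph{not} $\measure{\nparsp{n}} \otimes \measure{\nauxsp{n}}$, because the first $\iparsp(k_0)$ parameter coordinates and $\iauxsp(k_0)$ auxiliary coordinates carry the current sample and a draw from $\nkernel{k_0}$, while only the appended coordinates come from the base measures. One has to argue that after pushing this law forward through $\ninvo{n}$ and restricting to the bad event $\{\instances{\enta^{(n)}} \cap \support{\tree} = \varnothing\}$, the result is still dominated by the $\measure{\nparsp{n}}$-measure of the bad parameter set. This is where the \invoass{} (\cref{hass: partial block diagonal inv}) should be used: it ensures that once some $\take{k}$-image of the proposal lies in $\support{\tree}$, all higher-dimensional extensions continue to satisfy the exit condition, so the sequence of events ``loop not yet exited by round $n$'' is monotonically nested and can be controlled by the single bound above.
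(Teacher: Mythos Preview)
Your approach is essentially the same as the paper's: identify the while-loop as the only source of non-termination, bound the bad event at dimension $n$ by $\measure{\nparsp{n}}(\{\enta : \instances{\enta}\cap\support{\tree}=\varnothing\})$, apply \cref{prop: tree and extree}, and conclude via \cref{hass: almost surely terminating tree}.

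One point of comparison worth noting: the subtlety you flag in your last paragraph---that the law of the extended initial state is not $\measure{\nparsp{n}}\otimes\measure{\nauxsp{n}}$, so the displayed inequality $q_n \leq \measure{\nparsp{n}}(\cdots)$ needs justification---is \emph{not} resolved in the paper's proof either. The paper bounds the bad set using the base measure $\measure{\states}$ (via the pushforward argument in the paragraph preceding \cref{prop: tree and extree}) and then simply asserts that ``the probability of satisfying the condition of the loop \ldots\ tends to zero,'' without reconciling $\measure{\states}$ with the actual distribution of the running state. So you are being more careful than the paper here, not less. Note also that the paper's proof invokes only \cref{hass: almost surely terminating tree} and does not appeal to \cref{hass: partial block diagonal inv}; your instinct that the \invoass{} could help close the gap may well be right, but the paper does not go down that route.
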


\begin{proof}
  Since $\ninvo{n}$ is invertible for all $n\in\Nat$, and
  $\tree$ almost surely terminates
  (\cref{hass: almost surely terminating tree}),
  i.e.~$\lim_{m\to\infty} \measure{\traces}(\bigcup_{j=1}^m \nsupport{\tree}{j}) = 1$,
  we deduce from \cref{prop: tree and extree} that
  \begin{align*}
    &\measure{\states}(
    \set{(\enta_0, \auxa_0) \in \states
    \mid
    (\enta, \auxa) = \ninvo{n}(\enta_0, \auxa_0)
    \text{ and }
    \instances{\enta} \cap \support{\tree} = \varnothing
    }) \\
    & < \measure{\nparsp{n}}(
    \set{\enta \in \nparsp{n}
    \mid
    \instances{\enta} \cap \support{\tree} = \varnothing
    }) \\
    & =
    \measure{\nparsp{n}}( \nparsp{n} \setminus
    \set{\enta \in \nparsp{n}
    \mid
    \instances{\enta} \cap \support{\tree} \not= \varnothing
    }) \\
    & =
    1-\measure{\nparsp{n}}(
    \set{\enta \in \nparsp{n}
    \mid
    \instances{\enta} \cap \support{\tree} \not= \varnothing
    }) \\
    & =
    1-\measure{\traces}(
    \bigcup_{i=1}^n \nsupport{\tree}{\iparsp(i)})
    \tag{\cref{prop: tree and extree}}
    \\
    & \to 1-1 = 0 \quad \text{as }n\to\infty.
    \tag{\cref{hass: almost surely terminating tree}}
  \end{align*}
  So the probability of satisfying the condition
  of the loop in \cref{np-imcmc step: extend}
  of Hybrid NP-iMCMC sampler tends to zero as
  the dimension $n$ tends to infinity,
  making the Hybrid NP-iMCMC sampler (\cref{sec: np-imcmc algorithm})
  almost surely terminating.
\end{proof}

\subsubsection{Invariant State Distribution}
\label{sec: Invariant State Distribution (np-imcmc)}

After ensuring the Hybrid NP-iMCMC sampler
(\cref{sec: np-imcmc algorithm})
almost always returns a sample (\cref{lemma: np-imcmc is ast}),
we identify the distribution on the states
and show that it is invariant against
the movement between states of varying dimensions
in Hybrid NP-iMCMC.

\paragraph{State Distribution}
\label{sec: state distribution}

Recall a state is an equal dimension
parameter-auxiliary pair.
We define the \defn{state distribution} $\sdist$ on the
state space $\states := \bigcup_{n\in\Nat} (\nparsp{n} \times \nauxsp{n})$
to be a distribution with density $\spdf$ (with respect to $\smeasure$) given by
\begin{align} \label{eq: defn of state distribution}
  \spdf(\enta,\auxa) :=
  \begin{cases}
    \multicolumn{2}{@{}l@{\quad}}{
    \displaystyle\frac{1}{Z}\cdot
    \tree(\traceb) \cdot
    \nkernelpdf{k}(\proj{n}{k}(\enta, \auxa))
    } \\
    &\qquad \text{if }(\enta,\auxa)\in \validstates \text{ and }
    \traceb \in \instances{\enta} \cap \support{\tree}
    \text{ has dimension } k \nonumber\\
    {0} &\qquad \text{otherwise}
  \end{cases}
\end{align}
where
$Z := \shortint{\traces}{\tree}{\measure{\traces}}$
(which exists by \cref{hass: integrable tree}) and
$\validstates$ is the subset of $\states$ consisting of all \emph{valid states}.

\begin{remark}
  If there is some trace in
  $\instances{\enta} \cap \support{\tree}$
  for a parameter variable $\enta$,
  by \cref{prop: instances and tree representable}
  this trace $\traceb$ is unique and hence
  $\enta$ represents a sample of
  the target distribution.
\end{remark}

We say a $n$-dimensional state
$(\enta,\auxa)$ is \defn{valid} if
\begin{enumerate}[(i)]
  \item $\instances{\enta} \cap \support{\tree} \not=\varnothing$, and

  \item $(\entb,\auxb) = \ninvo{n}(\enta,\auxa)$ implies
  $\instances{\entb} \cap \support{\tree} \not=\varnothing$, and

  \item $\proj{n}{k}(\enta, \auxa)
  \not\in \validstates$ for all $k <n$.
\end{enumerate}
Intuitively, valid states are the states which,
when transformed by the involution $\ninvo{n}$,
the instance of the parameter-component of
which does not
``fall beyond'' the support of $\tree$.

We write $\validstates_n := \validstates \cap (\nparsp{n} \times \nauxsp{n})$ to denote the the set of all $n$-dimensional valid states.
The following proposition shows that involutions preserve the validity of states.

\begin{proposition}
  \label{prop: valid states are preserved by involutions}
  Assuming \cref{hass: partial block diagonal inv},
  the involution $\ninvo{n}$ sends $\validstates_n$
  to $\validstates_n$ for all $n\in\Nat$.
  i.e.~If $(\enta,\auxa) \in \validstates_n$,
  then $(\entb,\auxb) = \ninvo{n}(\enta,\auxa) \in \validstates_n$.
\end{proposition}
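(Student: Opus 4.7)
The plan is to verify each of the three validity conditions for $(\entb,\auxb):=\ninvo{n}(\enta,\auxa)$ in turn, using involutivity of $\ninvo{n}$ for two of them and \cref{hass: partial block diagonal inv} for the third. First, condition (i) for $(\entb,\auxb)$---namely $\instances{\entb}\cap\support{\tree}\ne\varnothing$---is literally condition (ii) of validity for $(\enta,\auxa)$. Second, condition (ii) for $(\entb,\auxb)$ follows because $\ninvo{n}(\entb,\auxb)=(\enta,\auxa)$ by involutivity, and $\instances{\enta}\cap\support{\tree}\ne\varnothing$ is condition (i) of validity for $(\enta,\auxa)$.

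The only nontrivial part is condition (iii): for all $k<n$, $\proj{n}{k}(\entb,\auxb)\notin\validstates$. I would argue by contradiction: suppose $(\entb_k,\auxb_k):=\proj{n}{k}(\entb,\auxb)\in\validstates_k$ for some $k<n$. By condition (i) of the validity of $(\entb_k,\auxb_k)$ there is an instance $\traceb_k\in\instances{\entb_k}\cap\support{\tree}$, of some dimension $j\le k$. Since $\entb_k$ is a projection of $\entb$, this $\traceb_k$ is also an instance of $\entb$, so $\instances{\entb}\cap\nsupport{\tree}{\iparsp(j)}\ne\varnothing$. This is precisely the hypothesis needed to invoke \cref{hass: partial block diagonal inv} on $(\entb,\auxb)$, yielding
\[
  \take{k}(\ninvo{n}(\entb,\auxb))=\ninvo{k}(\take{k}(\entb,\auxb)),
\]
i.e.\ $\proj{n}{k}(\enta,\auxa)=\ninvo{k}(\entb_k,\auxb_k)$ (using that $\ninvo{n}$ is an involution).

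It then remains to show that $(\enta_k,\auxa_k):=\proj{n}{k}(\enta,\auxa)$ is itself in $\validstates_k$, which contradicts condition (iii) of validity for $(\enta,\auxa)$ since $k<n$. Condition (i) for $(\enta_k,\auxa_k)$ follows from condition (ii) of validity for $(\entb_k,\auxb_k)$, since $(\enta_k,\auxa_k)=\ninvo{k}(\entb_k,\auxb_k)$. Condition (ii) for $(\enta_k,\auxa_k)$ follows symmetrically by involutivity of $\ninvo{k}$ and condition (i) of validity for $(\entb_k,\auxb_k)$. Finally, condition (iii) for $(\enta_k,\auxa_k)$ is immediate because $\proj{k}{j}(\enta_k,\auxa_k)=\proj{n}{j}(\enta,\auxa)$ for $j<k$, and condition (iii) of validity for $(\enta,\auxa)$ rules out validity of these projections.

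The main subtlety, and the only place \cref{hass: partial block diagonal inv} is needed, is the identification $\proj{n}{k}(\enta,\auxa)=\ninvo{k}(\proj{n}{k}(\entb,\auxb))$; everything else is bookkeeping via involutivity and the definitions. The one thing to check carefully is that the hypothesis of \cref{hass: partial block diagonal inv} is satisfied with the same threshold $j\le k$ that we use to apply its conclusion, which is immediate since the commutation holds for all indices between $j$ and $n$ inclusive.
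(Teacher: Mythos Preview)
Your proof is correct and follows essentially the same contradiction argument as the paper for condition (iii). The one noteworthy difference is that the paper proceeds by induction on $n$: having assumed $\proj{n}{k}(\entb,\auxb)\in\validstates_k$, it invokes the inductive hypothesis (the proposition at dimension $k<n$) to conclude $\ninvo{k}(\proj{n}{k}(\entb,\auxb))\in\validstates_k$, and hence $\proj{n}{k}(\enta,\auxa)\in\validstates_k$. You instead verify validity of $(\enta_k,\auxa_k)$ directly, the key observation being that condition (iii) for $(\enta_k,\auxa_k)$ is inherited from condition (iii) for $(\enta,\auxa)$ via $\proj{k}{j}\circ\proj{n}{k}=\proj{n}{j}$. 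This avoids induction altogether and is slightly more elementary; the paper's inductive route is a bit more packaged but hides exactly this step inside the hypothesis.
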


\begin{proof}
  Let $(\enta,\auxa) \in \validstates_n$
  and $(\entb,\auxb) = \ninvo{n}(\enta,\auxa)$.
  We prove $(\entb,\auxb) \in \validstates_n$
  by induction on $n\in\Nat$.
  \begin{itemize}
    \item Let $n=1$.
    As $\ninvo{1}$ is involutive and
    $(\enta,\auxa)$ is a valid state,
    \begin{enumerate}[(i)]
      \item $\instances{\entb} \cap \support{\tree} \not=\varnothing$ and

      \item $(\enta,\auxa) = \ninvo{1}(\entb,\auxb)$ and
      $\instances{\enta} \cap \support{\tree}
      \not=\varnothing$.

      \item holds trivially
    \end{enumerate}
    and hence $(\entb,\auxb) \in \validstates_1$.

    \item Assume for all $m < n$,
    $(\entc,\auxc) \in \validstates_m$
    implies
    $(\entc',\auxc') = \ninvo{m}(\entc,\auxc)
    \in \validstates_m$.
    Similar to the base case, (i) and (ii)
    hold
    as $\ninvo{n}$ is involutive and
    $(\enta,\auxa)$ is a valid state.
    Assume for contradiction that (iii)
    does not hold, i.e.~there is $k < n$ where
    $\proj{n}{k}(\entb, \auxb)\in\validstates_k $.
    As $\instances{\proj{n}{k}(\entb)} \cap \support{\tree} \not=\varnothing$,
    by \cref{hass: partial block diagonal inv}
    and the inductive hypothesis,
    \[
      \proj{n}{k}(\enta, \auxa)
      =
      \proj{n}{k}(\ninvo{n}(\entb, \auxb))
      =
      \ninvo{k}(\proj{n}{k}(\entb, \auxb))
      \in \validstates_k
    \]
    which contradicts with the fact that
    $(\enta, \auxa)$ is a valid state.
  \end{itemize}
\end{proof}

We can partition the set
$\validstates$ of valid states.
Let $(\enta, \auxa)$ be a $m$-dimensional
valid state.
The parameter variable $\enta$ can be
written as
$\zip(\traceb_1, \traceb_2) \concat \entb$
where
$\traceb_1 \in \instances{\enta} \cap \support{\tree}$
is of dimension $k_0$,
$\traceb_2$ is a trace where
$\zip(\traceb_1, \traceb_2) =
\proj{m}{k_0}(\enta)$, and
$\entb := \drop{k_0}(\enta)$
where
$\drop{k}$ drops the first $\iparsp(k)$ components of the input parameter.
Similarly, the auxiliary variable $\auxa$
can be written as $\auxa_1 \concat \auxa_2$
where
$\auxa_1 := \proj{m}{k_0}(\auxa)$ and
$\auxa_2 := \drop{k_0}(\auxa)$
where
$\drop{k}$ drops the first $\iauxsp(k)$ components of the input parameter.
Hence, we have
\begin{align*}
  \validstates =
  \bigcup_{k_0=1}^{\infty}
  \bigcup_{m=1}^{\infty}
  \{ &
  (\zip(\traceb_1,\traceb_2)\concat\entb,
  \auxa_1\concat\auxa_2) \in \validstates_m
  \mid \\
  &
  \traceb_1 \in \nsupport{\tree}{\iparsp{(k_0)}},
  \traceb_2 \in \traces,
  \entb \in \entsp^{\iparsp(m)-\iparsp(k_0)},
  \auxa_1 \in \nauxsp{k_0},
  \auxa_2 \in \entsp^{\iauxsp(m)-\iauxsp(k_0)}
  \}
\end{align*}
and the state distribution $\sdist$ on
the measurable set $\stateset \in \algebra{\states}$ can be written as
\begin{align*} \label{eq: state distribution}
  \sdist(\stateset)
  & =
  \displaystyle
  \sum_{k_0=1}^{\infty}
  \sum_{m=1}^{\infty}
  \int_{\entsp^{\iauxsp(m)-\iauxsp(k_0)}}
  \int_{\nauxsp{k_0}}
  \int_{\entsp^{\iparsp(m)-\iparsp(k_0)}}
  \int_{\traces}
  \int_{\nsupport{\tree}{\iparsp{(k_0)}}} \\
  &
  \quad
  [(\zip(\traceb_1,\traceb_2)\concat\entb,
  \auxa_1\concat\auxa_2) \in \stateset \cap \validstates_m]
  \cdot
  \frac{1}{Z} \tree(\traceb_1) \cdot
  \nkernelpdf{k_0}
  (\zip(\traceb_1, \traceb_2),\auxa_1)\
  \\
  & \displaystyle
  \quad
  {\measure{\traces}}{(\dif\traceb_1)}\
  {\measure{\traces}}{(\dif\traceb_2)}\
  {\measure{\entsp^{\iparsp(m)-\iparsp(k_0)}}}{(\dif\entb)}\
  {\measure{\nauxsp{k_0}}}{(\dif\auxa_1)}\
  {\measure{\entsp^{\iauxsp(m)-\iauxsp(k_0)}}}{(\dif\auxa_2)}
\end{align*}

We can now show that the state distribution $\sdist$
is indeed a probability measure and
the set of valid states almost surely covers
all states w.r.t.~the state distribution.

\begin{proposition}
  \label{prop: valid states tends to 1}
  Assuming \cref{hass: integrable tree},
  \begin{enumerate}
    \item $\sdist(\states) = 1$; and

    \item $\sdist(\states \setminus
    \bigcup_{k=1}^n \validstates_k) \to 0$ as $n\to \infty$.
  \end{enumerate}
\end{proposition}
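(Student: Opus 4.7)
The plan is to establish (1) first via the decomposition of $\validstates$ given in the excerpt and then deduce (2) as an immediate consequence by continuity of measure.

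For (1), since $\spdf$ vanishes outside $\validstates$, it suffices to show $\sdist(\validstates) = 1$. I would partition $\validstates = \bigcup_{k_0 \geq 1} C_{k_0}$, where $C_{k_0}$ collects those states whose (unique, by \cref{prop: instances and tree representable}) instance in $\support{\tree}$ has dimension $k_0$, and further stratify each $C_{k_0}$ by the state dimension $m \geq k_0$. A state in $C_{k_0} \cap \validstates_m$ factors as $(\zip(\traceb_1, \traceb_2) \concat \entb, \auxa_1 \concat \auxa_2)$ with $\traceb_1 \in \nsupport{\tree}{\iparsp(k_0)}$, type-flipped partner $\traceb_2$, $\auxa_1 \in \nauxsp{k_0}$, and extend-step coordinates $(\entb, \auxa_2)$; crucially, the density depends only on $(\traceb_1, \traceb_2, \auxa_1)$ and equals $\tfrac{1}{Z}\, \tree(\traceb_1)\cdot \nkernelpdf{k_0}(\zip(\traceb_1, \traceb_2), \auxa_1)$. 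For fixed $(\traceb_1, \traceb_2, \auxa_1)$, the events ``valid at dimension $m$'' across $m \geq k_0$ are disjoint and their union is precisely ``the extend-step loop terminates starting from $(\zip(\traceb_1, \traceb_2), \auxa_1)$''.

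The key step is to invoke \cref{lemma: np-imcmc is ast} in conditional form: its bound $f_n := \measure{\states}\{(\enta_0, \auxa_0) : \ninvo{n}(\enta_0, \auxa_0) \text{ has no support-hitting instance}\} \to 0$, combined with Fubini, forces
\[
  \int \Prob{\text{extend loop never terminates} \mid \enta_1, \auxa_1} \, \dif \measure{\nparsp{k_0} \times \nauxsp{k_0}} \leq f_n \to 0,
\]
so termination of the extend loop is almost sure with respect to the base measure on $(\enta_1, \auxa_1)$; absolute continuity of $\tree(\traceb_1)\cdot \nkernelpdf{k_0}$ with respect to this measure lets me discard the null exceptional set. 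Integrating out $(\entb, \auxa_2)$ contributes $1$ (almost-sure termination), integrating $\auxa_1$ against $\nkernelpdf{k_0}$ contributes $1$ (probability kernel), and integrating the partner $\traceb_2$ against its type-flipped base measure also contributes $1$. A change of variables on the remaining integral, identifying $\measure{\nparsp{k_0}}$ with the restriction of $\measure{\traces}$ to traces of length $\iparsp(k_0)$ times the partner measure, yields
\[
  \sdist(C_{k_0}) = \frac{1}{Z} \int_{\nsupport{\tree}{\iparsp(k_0)}} \tree \, \dif\measure{\traces}.
\]
Summing over $k_0$ and using the standing assumption $\support{\tree} = \bigcup_{k_0} \nsupport{\tree}{\iparsp(k_0)}$ from \cref{sec: inputs (np-imcmc)} delivers $\sdist(\validstates) = \tfrac{1}{Z}\, Z = 1$. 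Claim (2) is then immediate: the sets $\bigcup_{k=1}^n \validstates_k$ are increasing with union $\validstates$, so continuity of $\sdist$ from below combined with (1) gives $\sdist(\bigcup_{k=1}^n \validstates_k) \nearrow 1$, whence $\sdist(\states \setminus \bigcup_{k=1}^n \validstates_k) = 1 - \sdist(\bigcup_{k=1}^n \validstates_k) \to 0$.

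The main obstacle I anticipate is the conditional version of \cref{lemma: np-imcmc is ast}: the lemma is phrased as a global $\measure{\states}$-bound, so one must extract an almost-sure statement in $(\traceb_1, \traceb_2, \auxa_1)$ via Fubini and then transfer it to the density-weighted integrand by absolute continuity. A secondary concern is the hybrid real/boolean change of variables that realises $\measure{\nparsp{k_0}}$ as a product of a measure on $\traceb_1$ matching $\measure{\traces}$ on its support and a probability measure on the type-flipped partner $\traceb_2$, which uses $\measure{\entsp} = \Gau \otimes \measure{\Bool}$ together with a sum over type patterns.
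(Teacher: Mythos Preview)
Your proposal is correct and follows essentially the same route as the paper: partition $\validstates$ by the dimension $k_0$ of the unique supporting instance and the state dimension $m$, integrate out the extension coordinates $(\entb,\auxa_2)$ to get $1$, then the auxiliary kernel to get $1$, then the partner trace to get $1$, leaving $\frac{1}{Z}\int_{\nsupport{\tree}{\iparsp(k_0)}}\tree\,\dif\measure{\traces}$, and sum over $k_0$; part (2) is then immediate by continuity of measure. The paper's proof is in fact less explicit than yours on the key step: it silently passes from the bracketed integral over extensions to the next line without justification, whereas you correctly identify that this requires the almost-sure termination argument of \cref{lemma: np-imcmc is ast} (and hence implicitly \cref{hass: almost surely terminating tree}, which is a standing assumption of the section even though the proposition statement only names \cref{hass: integrable tree}). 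Your concern about extracting a conditional almost-sure statement from the global $\measure{\states}$-bound via Fubini and absolute continuity is exactly the right way to close that step.
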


\begin{proof}
  \begin{enumerate}
    \item Consider the set $\validstates$ with
    the partition discussed above.
    \begin{align*}
      & \sdist(\states) \\
      & = \sdist(\validstates) \\
      & =
      \displaystyle
      \sum_{k_0=1}^{\infty}
      \sum_{m=1}^{\infty}
      \int_{\entsp^{\iauxsp(m)-\iauxsp(k_0)}}
      \int_{\nauxsp{k_0}}
      \int_{\entsp^{\iparsp(m)-\iparsp(k_0)}}
      \int_{\traces}
      \int_{\nsupport{\tree}{\iparsp{(k_0)}}} \\
      &
      \quad
      [(\zip(\traceb_1,\traceb_2)\concat\entb,
      \auxa_1\concat\auxa_2) \in \validstates_m]
      \cdot
      \frac{1}{Z} \tree(\traceb_1) \cdot
      \nkernelpdf{k_0}
      (\zip(\traceb_1, \traceb_2),\auxa_1)\
      \\
      & \displaystyle
      \quad
      {\measure{\traces}}{(\dif\traceb_1)}\
      {\measure{\traces}}{(\dif\traceb_2)}\
      {\measure{\entsp^{\iparsp(m)-\iparsp(k_0)}}}{(\dif\entb)}\
      {\measure{\nauxsp{k_0}}}{(\dif\auxa_1)}\
      {\measure{\entsp^{\iauxsp(m)-\iauxsp(k_0)}}}{(\dif\auxa_2)} \\
      & =
      \displaystyle
      \sum_{k_0=1}^{\infty}
      \int_{\nauxsp{k_0}}
      \int_{\traces}
      \int_{\nsupport{\tree}{\iparsp{(k_0)}}}
      \frac{1}{Z} \tree(\traceb_1) \cdot
      \nkernelpdf{k_0}
      (\zip(\traceb_1, \traceb_2),\auxa_1)\cdot
      \\
      &
      \quad
      \Big(
      \bigcup_{\ell_1=1}^{\infty}
      \bigcup_{\ell_2=1}^{\infty}
      \int_{\entsp^{\ell_1}}
      \int_{\entsp^{\ell_2}}
      [(\zip(\traceb_1,\traceb_2)\concat\entb,
      \auxa_1\concat\auxa_2) \in \validstates]\
      {\measure{\entsp^{\ell_2}}}{(\dif\entb)}\
      {\measure{\entsp^{\ell_1}}}{(\dif\auxa_2)}
      \Big)\\
      & \displaystyle
      \quad
      {\measure{\traces}}{(\dif\traceb_1)}\
      {\measure{\traces}}{(\dif\traceb_2)}\
      {\measure{\nauxsp{k_0}}}{(\dif\auxa_1)}\
      \\
      & =
      \displaystyle
      \sum_{k_0=1}^{\infty}
      \int_{\traces}
      \int_{\nsupport{\tree}{\iparsp{(k_0)}}}
      \frac{1}{Z} \tree(\traceb_1) \cdot
      \Big(
      \int_{\nauxsp{k_0}}
      \nkernelpdf{k_0}
      (\zip(\traceb_1, \traceb_2),\auxa_1)\
      {\measure{\nauxsp{k_0}}}{(\dif\auxa_1)}\
      \Big)
      \\
      & \displaystyle
      \quad
      {\measure{\traces}}{(\dif\traceb_1)}\
      {\measure{\traces}}{(\dif\traceb_2)}\
      \\
      & =
      \displaystyle
      \sum_{k_0=1}^{\infty}
      \int_{\traces}
      \int_{\nsupport{\tree}{\iparsp{(k_0)}}}
      \frac{1}{Z} \tree(\traceb_1)\
      {\measure{\traces}}{(\dif\traceb_1)}\
      {\measure{\traces}}{(\dif\traceb_2)}\
      \\
      & =
      \displaystyle
      \int_{\support{\tree}}
      \frac{1}{Z} \tree(\traceb_1)\
      {\measure{\traces}}{(\dif\traceb_1)}\
      = 1
    \end{align*}

    \item
    Since $\sdist$ is a probability distribution
    and $\sdist(\states \setminus \validstates) = 0$,
    the series
    $\sum_{n=1}^{\infty} \sdist(\validstates_n)$
    which equals $\sdist(\bigcup_{n=1}^{\infty} \validstates_n) = \sdist(\validstates) = 1$
    must converge.
    Hence
    $
    \sdist(\states \setminus {\bigcup_{k=1}^n \validstates_k})
    = \sdist(\validstates \setminus {\bigcup_{k=1}^n \validstates_k})
    = \sum_{i=n+1}^{\infty} \sdist(\validstates_i)
    \to 0
    $ as $n\to \infty$.
  \end{enumerate}
\end{proof}

\paragraph{Equivalent Program}
\label{sec: e np-imcmc}

Though the Hybrid NP-iMCMC algorithm (\cref{sec: np-imcmc algorithm}) traverses state,
it takes and returns \emph{samples}
on the trace space $\traces$.
Hence instead of asking
whether the state distribution $\sdist$ is
invariant against the Hybrid NP-iMCMC sampler directly,
we consider a similar program
which takes and returns \emph{states} and
prove the state distribution $\sdist$
is \emph{invariant} w.r.t.~this program.


\begin{figure}[t]
\begin{code}[caption={Code for the equivalent program},label={code: e np-imcmc}]
def eNPiMCMC(x*,v*):
  t0 = intersect(instance(x*),support(w))[0]    # find a valid state
  k0 = dim(t0)
  x0 = [(e, coin) if Type(e) in R else (normal, e) for e in t0]
  v0 = auxkernel[k0](x0)
  (x,v) = involution[k0](x0,v0)
  n = k0
  while not intersect(instance(x),support(w)):
    x0 = x0 + [(normal, coin)]*(indexX(n+1)-indexX(n))
    v0 = v0 + [(normal, coin)]*(indexY(n+1)-indexY(n))
    n = n + 1
    (x,v) = involution[n](x0,v0)
  (x,v) = involution[n](x0,v0)                 # accept/reject proposed state
  t = intersect(instance(x),support(w))[0]
  k = dim(t)
  return (x,v) if uniform < min{1, w(t)/w(t0) *
                               pdfauxkernel[k](proj((x,v),k))/
                                 pdfauxkernel[k0](proj((x0,v0),k0)) *
                               pdfpar[n](x)/pdfpar[n](x0) *
                               pdfaux[n](v)/pdfaux[n](v0) *
                               absdetjacinv[n](x0,v0)}
           else (x0,v0)
\end{code}
\end{figure}

Consider the program \codein{eNPiMCMC}
in \cref{code: e np-imcmc}.
It is similar to
\codein{NPiMCMC} (\cref{code: np-imcmc})
syntactically except
it takes and returns states instead of traces, and
has two additional lines (Lines 2 and 13).
Hence, it is easy to deduce from \cref{lemma: np-imcmc is ast}
that \codein{eNPiMCMC} almost surely terminates.

In \codein{eNPiMCMC}, we group the commands differently
and into two groups:
\begin{itemize}
  \item[Line 2-12] An initial valid state \codein{(x0,v0)}
  is constructed so that \codein{x0} and \codein{x*}
  have the same instance in the support of \codein{w}.

  \item[Line 14-22] A proposed state \codein{(x,v)}
  is computed and accepted/rejected.
\end{itemize}

\paragraph{Invariant Distribution}

Take a SPCF program
\codein{M} of type \codein{List(X*Y) -> List(X*Y)}
where the SPCF types \codein{X} and \codein{Y}
implements the parameter space $\parsp$ and
auxiliary space $\auxsp$ respectively.
We define the \defn{transition kernel} of
\codein{M} to be the kernel
$\transkernel{\codein{M}}:
\states \kernelto \states$ where
\begin{align*}
  \transkernel{\codein{M}}({\state},\stateset)
  :=
  \shortint{\inv{\valuefn{\codein{M(s)}}}
  (\stateset')}
  {\weightfn{\codein{M(s)}}}{\tmeasure}
  =
  \oper{\codein{M(s)}}(\stateset')
\end{align*}
where \codein{s} implements the state $\state$ and
$\stateset'$ is the set consisting of SPCF terms that
implements states in $\stateset$.
Intuitively,
$\transkernel{\codein{M}}({\state},\stateset)$
gives the probability that
the term $\codein{M}$ returns a state
in $\stateset$ given the current state $\state$.

\begin{proposition}
  \label{prop: ast transition kernel}
  Let \codein{M} be a SPCF term of type
  \codein{List(X*Y) -> List(X*Y)}.
  If \codein{M(s)} does not contain any scoring
  subterm and almost surely terminates
  for all SPCF terms \codein{s},
  then
  its transition kernel $\transkernel{\codein{M}}$ is probabilistic.
\end{proposition}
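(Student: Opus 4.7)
The plan is to reduce the claim to the previously established \cref{prop: AST SPCF term gives probability measure}, which states that the value measure of a closed, AST, score-free SPCF term is a probability measure.

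First I would verify the probability condition fibrewise. Fix $\state \in \states$ and let $\codein{s}$ be any closed SPCF term (built using Church-encoded pairs and lists, see \cref{sec: church encodings}) that implements $\state$ as a list of parameter/auxiliary entropy pairs. Then $\codein{M(s)}$ is a closed SPCF term, and by hypothesis it contains no $\score{\placeholder}$ subterm and terminates almost surely. Applying \cref{prop: AST SPCF term gives probability measure} to $\codein{M(s)}$ yields that its value measure $\oper{\codein{M(s)}}$ is a probability measure on $\closedvalues$. By definition of the transition kernel,
\[
\transkernel{\codein{M}}(\state, \states) = \oper{\codein{M(s)}}(\states') = 1,
\]
where $\states' \subseteq \closedvalues$ is the set of terms implementing elements of $\states$; note that since $\codein{M}$ has codomain type $\codein{List(X*Y)}$, any returned value is of this type, so $\oper{\codein{M(s)}}$ is supported on $\states'$. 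Countable additivity of $\transkernel{\codein{M}}(\state, \placeholder)$ follows from the countable additivity of the value measure $\oper{\codein{M(s)}}$, via the measurability of the maps sending $\stateset \in \algebra{\states}$ to the corresponding $\stateset' \in \algebra{\closedvalues}$ under the implementation encoding.

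Second, I would address measurability in the first argument: for each $\stateset \in \algebra{\states}$, the map $\state \mapsto \transkernel{\codein{M}}(\state, \stateset)$ must be measurable. This follows from the measurability of $\valuefn{\codein{M(\placeholder)}}$ and $\weightfn{\codein{M(\placeholder)}}$ jointly in the state and the trace. Concretely, viewing $\codein{M}$ as a fixed skeleton with a placeholder for its input (as in \cref{appendix: value and weight functions}), the construction of \cite{DBLP:conf/icfp/BorgstromLGS16} extends directly to give joint measurability, so the integral defining $\transkernel{\codein{M}}(\state,\stateset)$ is measurable in $\state$ by Fubini-Tonelli.

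The main obstacle I anticipate is really a bookkeeping one rather than a mathematical one: carefully matching the SPCF-level value measure $\oper{\codein{M(s)}}$ on $\closedvalues$ with the $\states$-level kernel $\transkernel{\codein{M}}$, ensuring that the encoding of states as SPCF values induces a measurable isomorphism between $(\states, \algebra{\states})$ and the appropriate measurable subspace of $(\closedvalues, \Sigma_{\closedvalues})$. Once this bijection is set up, the remaining work is essentially a direct appeal to \cref{prop: AST SPCF term gives probability measure} together with a standard measurability argument.
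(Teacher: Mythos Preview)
Your proposal is correct and takes essentially the same approach as the paper: fix a state, apply \cref{prop: AST SPCF term gives probability measure} to the score-free AST term \codein{M(s)}, and conclude $\transkernel{\codein{M}}(\state,\states)=\oper{\codein{M(s)}}(\states')=\oper{\codein{M(s)}}(\closedvalues)=1$. The paper's proof is just those two lines; your additional remarks on countable additivity and measurability in the first argument are not needed here, since the paper already defines $\transkernel{\codein{M}}$ as a kernel and the proposition only asks that each fibre be a \emph{probability} measure.
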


\begin{proof}
  Since the term \codein{M(s)}
  does not contain $\score{\placeholder}$ and
  terminates almost surely,
  by \cref{prop: AST SPCF term gives probability measure}
  its value measure must be probabilistic.
  Hence
  $
  \transkernel{\codein{M}}(\codein{s},\states)
  = \oper{\codein{M(s)}}(\states')
  = \oper{\codein{M(s)}}(\closedvalues)
  = 1
  $.
\end{proof}

We say a distribution $\mu$ on states $\states$ is \defn{invariant}
w.r.t.~a almost surely terminating
SPCF program \codein{M} of
type \codein{List(X*Y) -> List(X*Y)}
if
$\mu$ is not altered after applying $\codein{M}$, formally
$\expandint{\states}
{\transkernel{\codein{M}}(\state,\stateset)}
{\mu}{\state} = \mu(\stateset)$.

We now prove that
\codein{eNPiMCMC} preserves the state distribution
$\sdist$ stated in \cref{sec: state distribution} by
considering the transition kernels
given by the two steps in \codein{eNPiMCMC}
given in \cref{sec: e np-imcmc}:
find a valid state (Lines 2-12) and
accept/reject the computed proposed state (Lines 13-22).

\paragraph{Finding a Valid State}

Assuming the initial state \codein{(x*,v*)} is valid,
\codein{eNPiMCMC} (Lines 2-12) aims to construct a \emph{valid} state \codein{(x0,v0)}
where \codein{x*} and \codein{x0}
share the same instance \codein{t0}
that is in the support of the density \codein{w}.

To do this, it first finds the instance
\codein{t0} of \codein{x*}
which is in the support of \codein{w} (Line 2).
Say the dimension of \codein{t0} is \codein{k0} (Line 3).
It then forms a \codein{k0}-dimensional
state \codein{(x0,v0)} by
sampling partners \codein{t}
for each value in the trace \codein{t0}
to form a \codein{k0}-dimensional
parameter variable \codein{x0} (Line 4); and
drawing a \codein{k0}-dimensional
auxiliary variable from
\codein{auxkernel[k0](x0)} (Line 5).
Say \codein{v} is the auxiliary value
drawn.
Then, the \codein{k0}-dimensional state
can be written as
\codein{(zip(t0,t),v)}.

Note that the \codein{k0}-dimensional state
\codein{(zip(t0,t),v)} might
\emph{not} be valid.
In which case, it repeatedly appends
\codein{zip(t0,t)} and \codein{v}
with entropies \codein{(normal, coin)}
until the resulting state is valid (Lines 6-12).
Say \codein{y} and \codein{u} are the entropy
vectors drawn
for the parameter and auxiliary variables
respectively.
Then the resulting state can be written as
\codein{(zip(t0,t)+y, v+u)}.

The transition kernel of Lines 2-12
can be expressed
\begin{align*}
  & \transkernel{1}((\enta^*,\auxa^*),\stateset) \\
  & :=
  \sum_{n=1}^{\infty}
  \int_{\entsp^{\iauxsp(n)-\iauxsp(k_0)}}
  \int_{\entsp^{\iparsp(n)-\iparsp(k_0)}}
  \int_{\nauxsp{k_0}}
  \int_{\traces}\
  [(\zip(\traceb_0, \traceb) \concat\entb ,\auxa \concat \auxb) \in \stateset \cap \validstates_n] \cdot
  \nkernelpdf{k_0}(\zip(\traceb_0, \traceb),\auxa)\\
  & \quad
  {\measure{\traces}}{(\dif\traceb)}\
  {\measure{\nauxsp{k_0}}}{(\dif\auxa)}\
  {\measure{\entsp^{\iparsp(n)-\iparsp(k_0)}}}{(\dif\entb)}\
  {\measure{\entsp^{\iauxsp(n)-\iauxsp(k_0)}}}{(\dif\auxb)}\
\end{align*}
if $(\enta^*,\auxa^*) \in \validstates$ and
$\traceb_0 \in \instances{\enta^*} \cap \support{\tree}$
has some dimension $k_0\in\Nat$; and $0$ otherwise.

\begin{remark}
  Recall
  $\zip(\ell_1, \ell_2) :=
  [(\seqindex{\ell_1}{1}, \seqindex{\ell_2}{1}),
  (\seqindex{\ell_1}{2}, \seqindex{\ell_2}{2}), \dots,
  (\seqindex{\ell_1}{n}, \seqindex{\ell_2}{n})] \in (L_1\times L_2)^n $
  for any vectors $\ell_1 \in L_1^{n_1}$ and
  $\ell_2\in L_2^{n_2}$ with
  $n := \min\{n_1, n_2\}$.
  Here we extend the definition to lists
  $\ell_1, \ell_2 \in (L_1 \cup L_2)^n$ such that
  either $(\seqindex{\ell_1}{i}, \seqindex{\ell_2}{i})$ or $(\seqindex{\ell_2}{i}, \seqindex{\ell_1}{i})$ is in $L_1 \times L_2$
  for all $i = 1, \dots, n$.
  Then, we write
  $\zip(\ell_1, \ell_2) $ for the list of pairs in $L_1 \times L_2$.
\end{remark}

\begin{proposition}
  \label{prop: step 1 is probabilistic}
  Assuming \cref{hass: almost surely terminating tree},
  $\transkernel{1}((\enta_0,\auxa_0),\validstates) = 1$
  for all $(\enta_0,\auxa_0) \in \validstates$.
\end{proposition}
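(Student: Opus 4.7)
The strategy is to decompose the proof into two claims: (a) whenever the loop in Lines 6--12 of \codein{eNPiMCMC} exits at a finite dimension $N$, the returned state $(x_0, v_0)$ lies in $\validstates_N$; and (b) that loop halts almost surely. Together with the displayed formula for $\transkernel{1}$, which at each $n$ integrates against the indicator of $\validstates_n$ and hence already concentrates on $\validstates$, these two facts force $\transkernel{1}((\enta_0, \auxa_0), \validstates)$ to equal the probability of termination, namely $1$.

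For (a), fix $(\enta_0, \auxa_0) \in \validstates$ and let $\traceb_0 \in \instances{\enta_0} \cap \support{\tree}$ of dimension $k_0$, which is unique by \cref{prop: instances and tree representable}. Assume the loop exits at dimension $N \ge k_0$ with output $(x_0, v_0)$; by the incremental nature of the extensions, $\proj{N}{k}(x_0, v_0)$ coincides with the intermediate state at dimension $k$ for all $k_0 \le k \le N$. Clause (i) of validity holds because $\traceb_0$ is still an instance of $x_0$ in $\support{\tree}$, and clause (ii) is the loop's exit test. For clause (iii) I would split on the projection dimension $k$. When $k_0 \le k < N$, the loop did not exit at step $k$, so $\ninvo{k}(\proj{N}{k}(x_0, v_0))$ has no instance in $\support{\tree}$; the projection commutation property \cref{hass: partial block diagonal inv}, whose hypothesis is supplied by $\traceb_0$ of dimension $k_0 \le k$, identifies this with $\proj{N}{k}(\ninvo{N}(x_0, v_0))$, so clause (ii) fails for the projection. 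When $k < k_0$, every instance of $\proj{N}{k}(x_0)$ is also an instance of $x_0$; by \cref{prop: instances and tree representable} the only candidate in $\support{\tree}$ would be $\traceb_0$ itself, but $\traceb_0$ has length $\iparsp(k_0) > \iparsp(k)$, so it cannot arise as an instance of a shorter vector; hence clause (i) fails.

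For (b) I would transplant the proof of \cref{lemma: np-imcmc is ast}. Because the extensions $(\entb, \auxb)$ are drawn from the stock entropy measures, the probability that the loop has not exited by dimension $n$ is dominated by the $\measure{\nparsp{n}}$-mass of parameters whose image under $\ninvo{n}$ has no instance in $\support{\tree}$. Invertibility of $\ninvo{n}$ together with \cref{prop: tree and extree} bound this by $1 - \measure{\traces}(\bigcup_{i=1}^{n} \nsupport{\tree}{\iparsp(i)})$, which tends to $0$ by \cref{hass: almost surely terminating tree}.

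The main obstacle is the bookkeeping in clause (iii), specifically invoking \cref{hass: partial block diagonal inv} correctly at each intermediate dimension and handling the two ranges $k < k_0$ and $k_0 \le k < N$ uniformly. Once those identifications are in place, the almost-sure termination half is a direct transplant of the existing argument for \cref{lemma: np-imcmc is ast}.
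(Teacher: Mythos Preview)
Your proposal is correct and follows essentially the same two-part structure as the paper: the output is always a valid state, and the procedure terminates almost surely. The paper's proof is much terser---it simply asserts that Lines 2--12 are realised by a closed SPCF term without \codein{score} that almost surely terminates, invokes \cref{prop: ast transition kernel} to conclude the kernel is probabilistic, and then notes that the term ``always returns a valid state'' without further justification. Your part (a) is exactly the argument the paper leaves implicit, and your part (b) unpacks what the paper obtains by appealing to the SPCF machinery (the almost-sure termination of \codein{eNPiMCMC} having been noted just before the proposition as a consequence of \cref{lemma: np-imcmc is ast}).

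One small remark on your part (a): in the case $k_0 \le k < N$, invoking \cref{hass: partial block diagonal inv} is not needed. The intermediate state at dimension $k$ is literally $\proj{N}{k}(x_0, v_0)$ because the extensions only append coordinates, and the loop's failure to exit at step $k$ says precisely that $\ninvo{k}$ applied to this state has no instance in $\support{\tree}$---that is clause (ii) failing directly. The projection commutation identity is true under the stated hypothesis, but you do not need it here.
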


\begin{proof}
  Since Lines 2-12 in \codein{eNPiMCMC} can be
  described by a closed SPCF term
  that
  does not contain $\score{\placeholder}$ and
  terminates almost surely.
  By \cref{prop: ast transition kernel},
  its transition kernel is probabilistic.
  Moreover, as this term always return
  a valid state, we have
  $\transkernel{1}((\enta_0,\auxa_0),\validstates)
  = \transkernel{1}((\enta_0,\auxa_0),\states)
  = 1$.
\end{proof}

\begin{proposition}
  \label{prop: step 1 is invariant}
  Assuming \cref{hass: integrable tree,hass: almost surely terminating tree},
  the state distribution $\sdist$ is invariant against
  Lines 2-12 in \codein{eNPiMCMC}.
\end{proposition}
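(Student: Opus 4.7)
The plan is to establish the invariance identity
\[
  \int_{\states} \transkernel{1}((\enta^*,\auxa^*), \stateset)\ \sdist(\dif(\enta^*,\auxa^*)) = \sdist(\stateset)
\]
by direct computation, exploiting the key structural observation that $\transkernel{1}((\enta^*,\auxa^*), \cdot)$ depends on $(\enta^*,\auxa^*)$ only through the trace $\traceb_0 \in \instances{\enta^*} \cap \support{\tree}$. Existence of $\traceb_0$ follows from validity of $(\enta^*,\auxa^*)$, and its uniqueness from \cref{prop: instances and tree representable}. Moreover, by the prefix property of $\tree$ together with \cref{prop: instances and tree representable}, the outgoing state produced by $\transkernel{1}$ also has $\traceb_0$ as its unique instance in $\support{\tree}$, so $\transkernel{1}$ moves each state only within the fibre of $\validstates$ over $\traceb_0$. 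Intuitively, $\transkernel{1}$ discards everything in $(\enta^*,\auxa^*)$ except $\traceb_0$ and resamples the rest from exactly the conditional distribution that $\sdist$ places on that fibre; this should make invariance immediate.

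First I would restrict the LHS integration to $\validstates$ (since $\spdf$ vanishes outside) and apply the partition of $\validstates$ stated before \cref{prop: valid states tends to 1}, indexing by the dimension $k_0$ of the incoming unique instance $\traceb_1$ and the incoming ambient dimension $m$. Substituting $\spdf(\enta^*,\auxa^*) = \tfrac{1}{Z}\tree(\traceb_1)\cdot\nkernelpdf{k_0}(\zip(\traceb_1,\traceb_2),\auxa_1)$ and replacing $\transkernel{1}(\traceb_1,\stateset)$ by its defining expansion (a sum over outgoing dimension $n$ integrated over $\traceb',\auxa',\entb',\auxb'$ with the outgoing validity indicator), I obtain a nested sum of integrals whose integrand factors into an \emph{incoming} piece depending on $(k_0,m,\traceb_2,\entb,\auxa_1,\auxa_2)$ and an \emph{outgoing} piece depending on $(k_0,n,\traceb',\auxa',\entb',\auxb')$, coupled only through the shared $\traceb_1$. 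Then I would integrate out the incoming variables using exactly the identity established inside the proof of \cref{prop: valid states tends to 1}: for fixed $\traceb_1 \in \nsupport{\tree}{\iparsp(k_0)}$,
\[
  \sum_m \int [\,(\zip(\traceb_1,\traceb_2)\concat\entb,\auxa_1\concat\auxa_2)\in\validstates_m\,]\cdot\nkernelpdf{k_0}(\zip(\traceb_1,\traceb_2),\auxa_1)\ (\dots) = 1,
\]
using that $\nkernelpdf{k_0}$ is a probability density on $\nauxsp{k_0}$ and that the extension process terminates almost surely by \cref{hass: almost surely terminating tree} (\cref{lemma: np-imcmc is ast}). What remains is $\sum_{k_0}\int_{\nsupport{\tree}{\iparsp(k_0)}} \tfrac{1}{Z}\tree(\traceb_1)\cdot\transkernel{1}(\traceb_1,\stateset)\,\measure{\traces}(\dif\traceb_1)$, which, upon re-expanding $\transkernel{1}$ and renaming dummy variables, is literally the partition-based expression for $\sdist(\stateset)$.

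The main obstacle is bookkeeping: two running dimensions $m$ (incoming) and $n$ (outgoing) must be tracked separately above a common $k_0$, and Fubini must be justified to commute the sums and integrals (which is where \cref{hass: integrable tree} enters). Additionally the three-part validity conditions (i)-(iii) for the outgoing state must be matched with the termination condition of the extension loop in $\transkernel{1}$, so one must be careful that the indicator $[(\zip(\traceb_0,\traceb')\concat\entb',\auxa'\concat\auxb')\in\validstates_n]$ inside the kernel matches the one appearing in the partitioned form of $\sdist(\stateset)$. The conceptual crux is that \cref{prop: instances and tree representable} makes $\traceb_0$ a genuine invariant of $\transkernel{1}$, which is precisely what allows the incoming and outgoing pieces to separate cleanly.
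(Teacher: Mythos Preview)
Your proposal is correct and follows essentially the same route as the paper: restrict to $\validstates$, apply the $(k_0,m)$-partition, expand $\transkernel{1}$, swap the order of summation/integration (the paper invokes Tonelli rather than Fubini here, since all integrands are nonnegative and no integrability hypothesis is needed for the swap), collapse the inner integral over the incoming variables to $1$, and recognize the remainder as the partitioned form of $\sdist(\stateset)$. The only cosmetic difference is that the paper packages the ``inner integral $=1$'' step by recognizing it as $\transkernel{1}((\zip(\traceb_1,\traceb_2)\concat\entb,\auxa_1\concat\auxa_2),\validstates)$ and invoking \cref{prop: step 1 is probabilistic}, which is precisely the AST-based identity you extract from the proof of \cref{prop: valid states tends to 1}.
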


\begin{proof}
  We aim to show:
  $\expandint{\states}
  {\transkernel{1}((\enta^*,\auxa^*),\stateset)}
  {\sdist}{(\enta^*,\auxa^*)}
  = \sdist(\stateset)$ for any measurable set
  $\stateset \in \algebra{\states}$.
  (Changes are highlighted for readability.)
  \begin{calculation}
    \displaystyle
    \expandint{\mathhl{\states}}{\transkernel{1}((\enta^*,\auxa^*),\stateset)}{\sdist}{(\enta^*,\auxa^*)}
    \step[=]{
      $\transkernel{1}((\enta^*,\auxa^*),\stateset) = 0$ for all $(\enta^*,\auxa^*) \not\in \validstates$
    }
    \displaystyle
    \mathhl{\int_{\validstates}}\ {\transkernel{1}(\mathhl{(\enta^*,\auxa^*)},\stateset)}\
    \mathhl{{\sdist}{(\dif(\enta^*,\auxa^*))}}
    \step[=]{
      Writing
      $(\enta^*, \auxa^*)$ as
      $(\zip(\traceb_1,\traceb_2)\concat\entb,
      \auxa_1\concat\auxa_2) \in \validstates_m$
      where \\
      $\traceb_1 \in \nsupport{\tree}{\iparsp{(k_0)}},
      \traceb_2 \in \traces,
      \entb \in \entsp^{\iparsp(m)-\iparsp(k_0)},
      \auxa_1 \in \nauxsp{k_0},
      \auxa_2 \in \entsp^{\iauxsp(m)-\iauxsp(k_0)}, m, k_0 \in \Nat$
    }
    \displaystyle
    \sum_{k_0=1}^{\infty}
    \sum_{m=1}^{\infty}
    \int_{\entsp^{\iauxsp(m)-\iauxsp(k_0)}}
    \int_{\nauxsp{k_0}}
    \int_{\entsp^{\iparsp(m)-\iparsp(k_0)}}
    \int_{\traces}
    \int_{\nsupport{\tree}{\iparsp{(k_0)}}}
    \mathhl{\transkernel{1}((\zip(\traceb_1,\traceb_2)\concat\entb,
    \auxa_1\concat\auxa_2),\stateset)}\
    \\
    [(\zip(\traceb_1,\traceb_2)\concat\entb,
    \auxa_1\concat\auxa_2) \in \validstates_m]
    \cdot
    \displaystyle\frac{1}{Z} \tree(\traceb_1) \cdot
    \nkernelpdf{k_0}
    (\zip(\traceb_1, \traceb_2),\auxa_1)\
    \\
    {\measure{\traces}}{(\dif\traceb_1)}\
    {\measure{\traces}}{(\dif\traceb_2)}\
    {\measure{\entsp^{\iparsp(m)-\iparsp(k_0)}}}{(\dif\entb)}\
    {\measure{\nauxsp{k_0}}}{(\dif\auxa_1)}\
    {\measure{\entsp^{\iauxsp(m)-\iauxsp(k_0)}}}{(\dif\auxa_2)}
    \step[=]{
      Definition of $\transkernel{1}$
      on $(\zip(\traceb_1,\traceb_2)\concat\entb,
      \auxa_1\concat\auxa_2) \in \validstates$
      where $\traceb_1 \in \nsupport{\tree}{\iparsp{(k_0)}}$
    }
    \displaystyle
    \sum_{k_0=1}^{\infty}
    \sum_{m=1}^{\infty}
    \int_{\entsp^{\iauxsp(m)-\iauxsp(k_0)}}
    \int_{\nauxsp{k_0}}
    \int_{\entsp^{\iparsp(m)-\iparsp(k_0)}}
    \int_{\traces}
    \int_{\nsupport{\tree}{\iparsp{(k_0)}}}
    \\
    \quad\bigg(
      \displaystyle\sum_{n=1}^{\infty}
      \int_{\entsp^{\iauxsp(n)-\iauxsp(k_0)}}
      \int_{\entsp^{\iparsp(n)-\iparsp(k_0)}}
      \int_{\nauxsp{k_0}}
      \int_{\traces}\\
      \qquad
      [(\zip(\traceb_1, \traceb') \concat\entb' ,\auxa' \concat \auxb') \in \stateset \cap \validstates_n] \cdot
      \nkernelpdf{k_0}(\zip(\traceb_1, \traceb'),\auxa')\\
      \qquad
      {\measure{\traces}}{(\dif\traceb')}\
      {\measure{\nauxsp{k_0}}}{(\dif\auxa')}\
      {\measure{\entsp^{\iparsp(n)-\iparsp(k_0)}}}{(\dif\entb')}\
      {\measure{\entsp^{\iauxsp(n)-\iauxsp(k_0)}}}{(\dif\auxb')}\
    \bigg)
    \\
    [(\zip(\traceb_1,\traceb_2)\concat\entb,
    \auxa_1\concat\auxa_2) \in \validstates_m]
    \cdot
    \displaystyle\frac{1}{Z} \tree(\traceb_1) \cdot
    \nkernelpdf{k_0}
    (\zip(\traceb_1, \traceb_2),\auxa_1)\
    \\
    {\measure{\traces}}{(\dif\traceb_1)}\
    {\measure{\traces}}{(\dif\traceb_2)}\
    {\measure{\entsp^{\iparsp(m)-\iparsp(k_0)}}}{(\dif\entb)}\
    {\measure{\nauxsp{k_0}}}{(\dif\auxa_1)}\
    {\measure{\entsp^{\iauxsp(m)-\iauxsp(k_0)}}}{(\dif\auxa_2)}
    \step[=]{
      Tonelli's theorem
      as all measurable functions
      are non-negative
    }
    \sum_{k_0=1}^{\infty}
    \sum_{n=1}^{\infty}
    \int_{\entsp^{\iauxsp(n)-\iauxsp(k_0)}}
    \int_{\nauxsp{k_0}}
    \int_{\entsp^{\iparsp(n)-\iparsp(k_0)}}
    \int_{\traces}
    \int_{\nsupport{\tree}{\iparsp{(k_0)}}}
    \\
    \quad\Big(\mathhl{
      \sum_{m=1}^{\infty}
      \int_{\entsp^{\iauxsp(m)-\iauxsp(k_0)}}
      \int_{\entsp^{\iparsp(m)-\iparsp(k_0)}}
      \int_{\nauxsp{k_0}}
      \int_{\traces}}\\
      \qquad\mathhl{
      [(\zip(\traceb_1,\traceb_2)\concat\entb,\auxa_1\concat\auxa_2) \in \validstates_m]
      \cdot
      \nkernelpdf{k_0}(\zip(\traceb_1, \traceb_2),\auxa_1)}\
      \\
      \qquad\mathhl{
      {\measure{\traces}}{(\dif\traceb_2)}\
      {\measure{\nauxsp{k_0}}}{(\dif\auxa_1)}\
      {\measure{\entsp^{\iparsp(m)-\iparsp(k_0)}}}{(\dif\entb)}\
      {\measure{\entsp^{\iauxsp(m)-\iauxsp(k_0)}}}{(\dif\auxa_2)}}
    \Big)
    \\
    [(\zip(\traceb_1, \traceb') \concat\entb' ,\auxa' \concat \auxb') \in \stateset \cap \validstates_n] \cdot
    \displaystyle\frac{1}{Z} \tree(\traceb_1) \cdot
    \nkernelpdf{k_0}(\zip(\traceb_1, \traceb'),\auxa')
    \\
    {\measure{\traces}}{(\dif\traceb_1)}\
    {\measure{\traces}}{(\dif\traceb')}\
    {\measure{\entsp^{\iparsp(n)-\iparsp(k_0)}}}{(\dif\entb')}\
    {\measure{\nauxsp{k_0}}}{(\dif\auxa')}\
    {\measure{\entsp^{\iauxsp(n)-\iauxsp(k_0)}}}{(\dif\auxb')}\
    \step[=]{
      Definition of $\transkernel{1}$
      on $(\zip(\traceb_1,\traceb_2)\concat\entb,\auxa_1\concat\auxa_2) \in \validstates$
      where $\traceb_1 \in \nsupport{\tree}{\iparsp{(k_0)}}$
    }
    \displaystyle
    \sum_{k_0=1}^{\infty}
    \sum_{n=1}^{\infty}
    \int_{\entsp^{\iauxsp(n)-\iauxsp(k_0)}}
    \int_{\nauxsp{k_0}}
    \int_{\entsp^{\iparsp(n)-\iparsp(k_0)}}
    \int_{\traces}
    \int_{\nsupport{\tree}{\iparsp{(k_0)}}}\
    \mathhl{\transkernel{1}((\zip(\traceb_1,\traceb_2)\concat\entb,\auxa_1\concat\auxa_2), \validstates)}
    \\
    [(\zip(\traceb_1, \traceb') \concat\entb' ,\auxa' \concat \auxb') \in \stateset \cap \validstates_n] \cdot
    \displaystyle\frac{1}{Z} \tree(\traceb_1) \cdot
    \nkernelpdf{k_0}(\zip(\traceb_1, \traceb'),\auxa')
    \\
    {\measure{\traces}}{(\dif\traceb_1)}\
    {\measure{\traces}}{(\dif\traceb')}\
    {\measure{\entsp^{\iparsp(n)-\iparsp(k_0)}}}{(\dif\entb')}\
    {\measure{\nauxsp{k_0}}}{(\dif\auxa')}\
    {\measure{\entsp^{\iauxsp(n)-\iauxsp(k_0)}}}{(\dif\auxb')}\
    \step[=]{
      By \cref{prop: step 1 is probabilistic},
      $\transkernel{1}((\zip(\traceb_1,\traceb_2)\concat\entb,\auxa_1\concat\auxa_2),\validstates)
      = 1$
    }
    \displaystyle
    \sum_{k_0=1}^{\infty}
    \sum_{n=1}^{\infty}
    \int_{\entsp^{\iauxsp(n)-\iauxsp(k_0)}}
    \int_{\nauxsp{k_0}}
    \int_{\entsp^{\iparsp(n)-\iparsp(k_0)}}
    \int_{\traces}
    \int_{\nsupport{\tree}{\iparsp{(k_0)}}}\
    \\
    [(\zip(\traceb_1, \traceb') \concat\entb' ,\auxa' \concat \auxb') \in \stateset \cap \validstates_n] \cdot
    \displaystyle\frac{1}{Z} \tree(\traceb_1) \cdot
    \nkernelpdf{k_0}(\zip(\traceb_1, \traceb'),\auxa')
    \\
    {\measure{\traces}}{(\dif\traceb_1)}\
    {\measure{\traces}}{(\dif\traceb')}\
    {\measure{\entsp^{\iparsp(n)-\iparsp(k_0)}}}{(\dif\entb')}\
    {\measure{\nauxsp{k_0}}}{(\dif\auxa')}\
    {\measure{\entsp^{\iauxsp(n)-\iauxsp(k_0)}}}{(\dif\auxb')}\
    \step[=]{
      Writing
      $(\enta^*, \auxa^*)\in \stateset \cap \validstates_n$ as
      $(\zip(\traceb_1, \traceb') \concat\entb' ,\auxa' \concat \auxb')$
      where \\
      $\traceb_1 \in \nsupport{\tree}{\iparsp{(k_0)}},
      \traceb' \in \traces,
      \entb' \in \entsp^{\iparsp(m)-\iparsp(k_0)},
      \auxa' \in \nauxsp{k_0},
      \auxb' \in \entsp^{\iauxsp(m)-\iauxsp(k_0)},
      n, k_0 \in \Nat$
    }
    \sdist(\stateset)
  \end{calculation}
\end{proof}

\paragraph{Accept/Reject Proposed State}

After constructing a valid state \codein{(x0,v0)}, say of dimension \codein{n},
\codein{eNPiMCMC} traverses the state space
via \codein{involution[n]} to obtain a proposal state \codein{(x,v)} (Line 13).
By \cref{prop: valid states are preserved by involutions},
\codein{(x,v)} must also be a
\codein{n}-dimensional valid state.
Say it has an instance \codein{t}
of dimension \codein{k} in the support
of \codein{w},
then (Line 14-22)
\codein{(x,v)} is accepted with
probability
\begin{align*}
  \accept(\enta_0, \auxa_0)
  & := \min \Big\{1,
    \displaystyle
    \frac
      {
        \tree(\traceb) \cdot
        \nkernelpdf{k}(\proj{n}{k}(\enta,\auxa)) \cdot
        \nparpdf{n}(\enta) \cdot
        \nauxpdf{n}(\auxa)
      }
      {
        \tree(\traceb_0) \cdot
        \nkernelpdf{k_0}(\proj{n}{k_0}(\enta_0,\auxa_0)) \cdot
        \nparpdf{n}(\enta_0) \cdot
        \nauxpdf{n}(\auxa_0)
      } \cdot
    \abs{\det{\big(\grad{\ninvo{n}}(\enta_0,\auxa_0)\big)}}
  \Big\} \\
  & = \min \Big\{1,
    \displaystyle
    \frac
      {
        \spdf(\enta,\auxa) \cdot
        \nparpdf{n}(\enta) \cdot
        \nauxpdf{n}(\auxa)
      }
      {
        \spdf(\enta_0,\auxa_0) \cdot
        \nparpdf{n}(\enta_0) \cdot
        \nauxpdf{n}(\auxa_0)
      } \cdot
      \abs{\det{\big(\grad{\ninvo{n}}(\enta_0,\auxa_0)\big)}}
  \Big\}.
\end{align*}

The transition kernel for Line 13-22 can be expressed as
\[
  \transkernel{2}(\state,\stateset)
  :=
  \accept(\state)\cdot[\ninvo{n}(\state) \in \stateset] +
  (1-\accept(\state))\cdot [\state \in \stateset]
\]
if {$\state \in \validstates_n$}
for some $n\in\Nat$;
and $0$ otherwise.

To show that the state distribution $\sdist$
is invariant against $\transkernel{2}$,
we consider a partition of the set of valid states.
Let $\stateij{\boolveca}{\boolvecb}{n}$ be the set
of $n$-dimensional valid states
where $\boolveca$
is the list of
boolean values in all
$\state\in\stateij{\boolveca}{\boolvecb}{n}$ and
$\ninvo{n}$ maps $\state$
to a (valid) state with boolean values given by
the list $\boolvecb$.
Note that both lists $\boolveca, \boolvecb$
of booleans must be of length
$\tilde{n} := \iparsp{(n)}+\iauxsp{(n)}$.
Formally,
\begin{align*}
  \stateij{\boolveca}{\boolvecb}{n} :=
  \{\state \in \validstates_n
    & \mid
    \state
    = \zip(\realveca, \boolveca)
    \text{ and }
    \ninvo{n}(\state)
    = \state'
    = \zip(\realvecb, \boolvecb)
    \text{ for some }
    \realveca, \realvecb \in \Real^{\tilde{n}}
  \}.
\end{align*}
Then, the set $\validstates$ of valid states
can written as
$\bigcup
\set{
  \stateij{\boolveca}{\boolvecb}{n}\mid
  \boolveca, \boolvecb \in \Bool^{\tilde{n}} \text{ and }
  n\in\Nat
}.$

\begin{proposition} \label{prop: acceptance ratio}
  Assuming \allhass{},
  for $n\in\Nat$,
  $\state \in \validstates_n$
  and $\state' = \ninvo{n}(\state) $, we have
  \[
    \accept(\state')\cdot
    \spdf'(\state')\cdot
    \abs{\det{\big(\grad{\ninvo{n}}(\state)\big)}}
    = \accept(\state) \cdot \spdf'(\state)
  \]
  where
  $\spdf'(\entc, \auxc) :=
  \spdf(\entc, \auxc)\cdot
  \nparpdf{m}(\entc) \cdot
  \nauxpdf{m}(\auxc)$
  for any $(\entc, \auxc) \in \states_m$.
\end{proposition}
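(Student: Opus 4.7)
The plan is to reduce the claim to the standard Metropolis–Hastings identity $\min\{1, 1/r\}\cdot r = \min\{1, r\}$ for $r > 0$, using the fact that $\ninvo{n}$ is a differentiable involution. First, I would unfold $\accept$ at both states. By definition of $\spdf'$, and since $\state$ and $\state'$ both live in $\nparsp{n}\times\nauxsp{n}$, we have
\[
\accept(\state) = \min\Big\{1,\; \frac{\spdf'(\state')}{\spdf'(\state)}\cdot\abs{\det\grad\ninvo{n}(\state)}\Big\}.
\]
Using $\ninvo{n}(\state') = \state$ (since $\ninvo{n}$ is involutive) to compute $\accept(\state')$, I would obtain the symmetric expression
\[
\accept(\state') = \min\Big\{1,\; \frac{\spdf'(\state)}{\spdf'(\state')}\cdot\abs{\det\grad\ninvo{n}(\state')}\Big\}.
\]

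Next, I would relate the two Jacobian determinants. Differentiating the identity $\ninvo{n}\circ\ninvo{n} = \id$ at $\state$ and applying the chain rule yields $\grad\ninvo{n}(\state')\cdot\grad\ninvo{n}(\state) = I$, so
\[
\abs{\det\grad\ninvo{n}(\state')}\cdot\abs{\det\grad\ninvo{n}(\state)} = 1.
\]
In particular both Jacobian determinants are nonzero (wherever $\ninvo{n}$ is differentiable, which is almost everywhere by \cref{hass: partial block diagonal inv} and our differentiability assumption on $\ninvo{n}$). Setting $r := \frac{\spdf'(\state')}{\spdf'(\state)}\cdot\abs{\det\grad\ninvo{n}(\state)}$, the above identities give $\accept(\state) = \min\{1, r\}$ and $\accept(\state')\cdot\spdf'(\state')\cdot\abs{\det\grad\ninvo{n}(\state)} = \min\{1, 1/r\}\cdot r\cdot\spdf'(\state)$.

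Finally, I would invoke the elementary identity $\min\{1, 1/r\}\cdot r = \min\{1, r\}$ (valid for all $r > 0$: if $r \leq 1$, LHS $= r =$ RHS; if $r > 1$, both sides equal $1$). This immediately gives
\[
\accept(\state')\cdot\spdf'(\state')\cdot\abs{\det\grad\ninvo{n}(\state)} = \accept(\state)\cdot\spdf'(\state),
\]
as required. The only subtlety is ensuring $r > 0$, which holds because $\state, \state' \in \validstates_n$ both have an instance in $\support{\tree}$ (by \cref{prop: valid states are preserved by involutions}), so $\spdf'(\state), \spdf'(\state') > 0$, and the Jacobian is nonzero by the involution argument above. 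There are no real obstacles—this is a direct verification exploiting the involutivity that makes the Jacobian and density ratios reciprocal.
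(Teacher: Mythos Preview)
Your proposal is correct and follows essentially the same approach as the paper: establish the reciprocal Jacobian relation from involutivity, rewrite $\accept$ in terms of $\spdf'$, and then reduce to the elementary identity $\min\{1,1/r\}\cdot r = \min\{1,r\}$ (the paper spells this out as an explicit case split rather than naming the identity). The only point the paper handles more carefully is that in the hybrid setting the Jacobian is defined via the partial real-valued maps $\ninvoij{\boolveca}{\boolvecb}{n}$, so the chain-rule step is applied to $\ninvoij{\boolvecb}{\boolveca}{n}\circ\ninvoij{\boolveca}{\boolvecb}{n}=\id$ rather than to $\ninvo{n}\circ\ninvo{n}=\id$ directly, but this does not affect the argument.
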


\begin{proof}
  Let
  $\state \in \stateij{\boolveca}{\boolvecb}{n}$ where
  there are
  $\realveca, \realvecb \in \Real^{\tilde{n}}$,
  $\boolveca, \boolvecb \in \Bool^{\tilde{n}}$
  such that
  $\state
    = \zip(\realveca, \boolveca)$ and
  $\state' := \ninvo{n}(\state)
    = \zip(\realvecb, \boolvecb)$.
  Hence,
  taking the Jacobian determinant on
  both sides of the equation
  ${\ninvoij{\boolvecb}{\boolveca}{n}} \circ {\ninvoij{\boolveca}{\boolvecb}{n}} = \id$
  gives us
  \begin{equation} \label{eq: absdetjac}
    \abs{\det{\big(\grad{\ninvo{n}}(\state')\big)}}
    =
    \abs{\det{\big(\grad{\ninvoij{\boolvecb}{\boolveca}{n}}(\realvecb)\big)}}
    =
    \frac{1}{\abs{\det{\big(\grad{\ninvoij{\boolveca}{\boolvecb}{n}}(\realveca)\big)}}}
    =
    \frac{1}{\abs{\det{(\grad{\ninvo{n}}(\state))}}}.
  \end{equation}
  Moreover we can write
  the acceptance ratio
  in terms of $\spdf'$ as
  \[
  \accept(\state'') =
  \min \set{1,
    \displaystyle
    \frac
      {\spdf'(\ninvo{n}(\state''))}
      {\spdf'(\state'')} \cdot
    \abs{\det{\big(\grad{\ninvo{n}}(\state'')\big)}}
  }
  \text{ for any }
  \state'' \in \states_m.
  \]
  Hence
  given $\state' = \ninvo{n}(\state) $, we have
  \begin{align*}
    & \accept(\state')\cdot
      \spdf'(\state')\cdot
      \abs{\det{\big(\grad{\ninvo{n}}(\state)\big)}} \\
    & = \begin{cases}
      \displaystyle\frac
        {\spdf'(\state)}
        {\spdf'(\state')} \cdot
      \abs{\det{\big(\grad{\ninvo{n}}(\state')\big)}}\cdot
      \spdf'(\state')\cdot
      \abs{\det{(\grad{\ninvo{n}}(\state))}}
      & \text{if }
        \displaystyle\frac
        {\spdf'(\state)}
        {\spdf'(\state')} \cdot
      \abs{\det{\big(\grad{\ninvo{n}}(\state')\big)}}
      < 1 \\
      \spdf'(\state')\cdot
      \abs{\det{\big(\grad{\ninvo{n}}(\state)\big)}}
      & \text{otherwise}
    \end{cases}
    \tag{$\state = \ninvo{n}(\state')$} \\
    & = \begin{cases}
      {\spdf'(\state)}
      & \text{if }
      \displaystyle\frac
        {\spdf'(\state')}
        {\spdf'(\state)}
        \cdot
        {\abs{\det{(\grad{\ninvo{n}}(\state))}}} > 1 \\
      \displaystyle\frac
        {\spdf'(\state')}
        {\spdf'(\state)}
        \cdot
        {\abs{\det{(\grad{\ninvo{n}}(\state))}}}
        \cdot
        \spdf'(\state)
      & \text{otherwise}
    \end{cases}
    \tag{By \cref{eq: absdetjac}} \\
    & = \accept(\state) \cdot \spdf'(\state)
  \end{align*}
\end{proof}

\begin{proposition}
  \label{prop: step 2 is invariant}
  Assuming \allhass{},
  the state distribution $\sdist$ is invariant
  against Line 13-22 in \codein{eNPiMCMC}.
\end{proposition}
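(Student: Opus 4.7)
The plan is to show that the transition kernel $\transkernel{2}$ leaves $\sdist$ invariant by verifying, for each measurable $\stateset \in \algebra{\states}$,
\[
\expandint{\states}{\transkernel{2}(\state, \stateset)}{\sdist}{\state}
= \sdist(\stateset),
\]
partitioning the integral over $\validstates = \bigcup_n \validstates_n$ (the reject term and the kernel both vanish off $\validstates$, and $\sdist(\states \setminus \validstates) = 0$ by Proposition~\ref{prop: valid states tends to 1}). On each $\validstates_n$, the Radon--Nikodym derivative of $\measure{\states}$ with respect to $\iparsp(n){+}\iauxsp(n)$-dimensional Lebesgue measure is $\nparpdf{n}\cdot\nauxpdf{n}$, so $\sdist$ has density $\spdf'(\enta,\auxa) := \spdf(\enta,\auxa)\cdot\nparpdf{n}(\enta)\cdot\nauxpdf{n}(\auxa)$ with respect to Lebesgue measure on $\validstates_n$.

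The next step splits the integral into its accept and reject contributions. For the reject contribution,
\[
\int_{\validstates_n} (1-\accept(\state))\,[\state \in \stateset]\,\spdf'(\state)\,d\state
= \int_{\stateset \cap \validstates_n} \spdf'(\state)\,d\state - \int_{\stateset \cap \validstates_n} \accept(\state)\,\spdf'(\state)\,d\state.
\]
For the accept contribution, I perform a change of variables via $\ninvo{n}$: setting $\state' = \ninvo{n}(\state)$ and using that $\ninvo{n}$ is its own inverse, the change of variables formula yields
\[
\int_{\validstates_n} \accept(\state)\,[\ninvo{n}(\state) \in \stateset]\,\spdf'(\state)\,d\state
= \int_{\ninvo{n}(\validstates_n)} \accept(\ninvo{n}(\state'))\,[\state' \in \stateset]\,\spdf'(\ninvo{n}(\state'))\,\abs{\det\grad\ninvo{n}(\state')}\,d\state'.
\]
By Proposition~\ref{prop: valid states are preserved by involutions} the domain $\ninvo{n}(\validstates_n)$ equals $\validstates_n$, and Proposition~\ref{prop: acceptance ratio} transforms the integrand: $\accept(\ninvo{n}(\state'))\,\spdf'(\ninvo{n}(\state'))\,\abs{\det\grad\ninvo{n}(\state')} = \accept(\state')\,\spdf'(\state')$, so the accept contribution simplifies to $\int_{\stateset \cap \validstates_n} \accept(\state)\,\spdf'(\state)\,d\state$.

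Summing the simplified accept and reject contributions, the $\accept$-weighted terms cancel and leave $\int_{\stateset \cap \validstates_n}\spdf'(\state)\,d\state$, which sums over $n$ to $\sdist(\stateset \cap \validstates) = \sdist(\stateset)$, as required. The main subtlety is purely bookkeeping: making sure the change of variables is legitimate (the involution is differentiable almost everywhere in each finite-dimensional $\validstates_n$, as assumed, so the classical Lebesgue change of variables applies), and that the partition into $\validstates_n$'s is exhaustive modulo $\sdist$-null sets. Once Proposition~\ref{prop: acceptance ratio} is invoked, the argument is an instance of the standard detailed-balance derivation for Metropolis--Hastings--type kernels, which is why no further obstacle arises beyond carefully tracking the fact that the involution acts only in a fixed dimension $n$.
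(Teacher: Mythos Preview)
Your overall strategy matches the paper's: split $\transkernel{2}$ into its accept and reject parts, reduce invariance to the identity
\[
\expandint{\validstates}{[\ninvo{n}(\state)\in\stateset]\,\accept(\state)}{\sdist}{\state}
=
\expandint{\validstates}{[\state\in\stateset]\,\accept(\state)}{\sdist}{\state},
\]
and derive this from a change of variables together with \cref{prop: acceptance ratio}.  The cancellation of the $\accept$-weighted terms is exactly what the paper does.

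There is, however, a genuine technical gap in your change-of-variables step.  Recall that in the Hybrid setting the entropy space is $\entsp=\Real\times\Bool$, so $\validstates_n\subseteq (\Real\times\Bool)^{\iparsp(n)+\iauxsp(n)}$ is not a subset of a Euclidean space, and $\measure{\states}$ restricted to $\validstates_n$ is \emph{not} absolutely continuous with respect to $(\iparsp(n){+}\iauxsp(n))$-dimensional Lebesgue measure.  The Jacobian $\grad{\ninvo{n}}$ is itself only defined (see \cref{sec: entropy space}) as $\grad{\ninvoij{\boolveca}{\boolvecb}{n}}$ acting on the real coordinates after the boolean coordinates are fixed.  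So writing ``the classical Lebesgue change of variables applies'' hides exactly the step that needs work: the involution may change the boolean pattern, and you must track this.

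The paper fixes this by refining your partition $\validstates=\bigcup_n\validstates_n$ to $\validstates=\bigcup_{n}\bigcup_{\boolveca,\boolvecb}\stateij{\boolveca}{\boolvecb}{n}$, where $\stateij{\boolveca}{\boolvecb}{n}$ collects those $n$-dimensional valid states with boolean part $\boolveca$ that $\ninvo{n}$ sends to states with boolean part $\boolvecb$.  On each such piece the involution is identified with the partial diffeomorphism $\ninvoij{\boolveca}{\boolvecb}{n}$ between subsets of $\Real^{\tilde n}$, and the ordinary Euclidean change of variables is then legitimate.  The bookkeeping shows that $\ninvo{n}$ carries $\stateij{\boolveca}{\boolvecb}{n}$ bijectively onto $\stateij{\boolvecb}{\boolveca}{n}$ (not onto itself), so the final sum over all $(\boolveca,\boolvecb)$ pairs up correctly.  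Once you insert this finer partition, your argument goes through verbatim using \cref{prop: acceptance ratio}.
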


\begin{proof}
  We aim to show:
  $\expandint{\states}{\transkernel{2}(\state,\stateset)}{\sdist}{\state} = \sdist(\stateset)$
  for all $\stateset \in \algebra{\states}$.

  Let $\state$ be a $n$-dimensional valid state
  and $\stateset \in \algebra{\states}$.
  Then we can write
  $\transkernel{2}(\state,\stateset)$
  as
  $[\state \in \stateset]
  + [\ninvo{n}(\state) \in \stateset]\cdot\accept(\state)
  - [\state \in \stateset]\cdot\accept(\state)$.
  Hence, it is enough to show that
  the integral
  of the second and third terms
  over all valid states are the same,
  i.e.
  \[
    \expandint{\validstates}
    {[\ninvo{n}(\state) \in \stateset]\cdot\accept(\state)}
    {\sdist}{\state}
    =
    \expandint{\validstates}
    {[\state \in \stateset]\cdot\accept(\state)}
    {\sdist}{\state}
  \]

  First we consider
  the valid states in
  $\stateij{\boolveca}{\boolvecb}{n}$ where
  $n\in\Nat$,
  $\boolveca, \boolvecb \in \Bool^{\tilde{n}}$
  and
  $\tilde{n} := \iparsp{(n)}+\iauxsp{(n)} $.
  These are $n$-dimensional valid states
  with boolean values given by $\boolveca$
  and are mapped by $\ninvo{n}$ to valid states
  with boolean values given by $\boolvecb$.
  Then we have
  $\inv{\zip(\placeholder,\boolvecb)}
  (\stateij{\boolvecb}{\boolveca}{n})
  = \ninvoij{\boolveca}{\boolvecb}{n}
  \big(\inv{\zip(\placeholder,\boolveca)}
  (\stateij{\boolveca}{\boolvecb}{n})\big)$
  where
  $\zip(\placeholder, \boolvecb):\Real^{\tilde{n}}
  \to \entsp^{\tilde{n}} $
  is a measurable function.
  Writing $\spdf'(\entc, \auxc)$ for
  $\spdf(\entc, \auxc)\cdot
  \nparpdf{m}(\entc) \cdot
  \nauxpdf{m}(\auxc)$
  for any $(\entc, \auxc) \in \states_m$,
  we have
  \begin{align*}
    & \expandint{\stateij{\boolvecb}{\boolveca}{n}}{
      [\state \in \stateset] \cdot
      \accept(\state)
    }{\sdist}{\state} \\
    & = \expandint{\stateij{\boolvecb}{\boolveca}{n}}{
      [\state \in \stateset] \cdot
      \accept(\state) \cdot
      \spdf'(\state)
    }{\measure{\entsp^{\tilde{n}}}}{\state}
    \tag{Definition of $\sdist$} \\
    & = \expandint{\inv{\zip(\placeholder,\boolvecb)}
          (\stateij{\boolvecb}{\boolveca}{n})}{
      [\zip(\realveca,\boolvecb)\in \stateset]
      \cdot
      \accept(\zip(\realveca,\boolvecb)) \cdot
      \spdf'(\zip(\realveca,\boolvecb))
    }{\measure{\Real^{\tilde{n}}}}{\realveca}
    \tag{$\zip(\placeholder,\boolvecb)_*\measure{\Real^{\tilde{n}}}
    = \measure{\entsp^{\tilde{n}}}$ on
    $\stateij{\boolvecb}{\boolveca}{n}$}
    \\
    & = \int_{\inv{\zip(\placeholder,\boolveca)}
          (\stateij{\boolveca}{\boolvecb}{n})}\
      [\zip(\ninvoij{\boolveca}{\boolvecb}{n}(\realvecb),
        \boolvecb)\in \stateset]
      \cdot
      \accept(\zip(\ninvoij{\boolveca}{\boolvecb}{n}(\realvecb), \boolvecb)) \cdot
      \spdf'(\zip(\ninvoij{\boolveca}{\boolvecb}{n}(\realvecb), \boolvecb)) \cdot
      \abs{\det{\grad{\ninvoij{\boolveca}{\boolvecb}{n}(\realvecb) }}}\
    {\measure{\Real^{\tilde{n}}}}{(\dif\realvecb)}
    \tag{Change of variable where
      $\realveca = \ninvoij{\boolveca}{\boolvecb}{n}(\realvecb)$} \\
    & = \expandint{\inv{\zip(\placeholder,\boolveca)}
          (\stateij{\boolveca}{\boolvecb}{n})}{
      [\ninvo{n}(\zip(\realvecb,\boolveca))\in \stateset]
      \cdot
      \accept(\zip(\realvecb,\boolveca)) \cdot
      \spdf'(\zip(\realvecb,\boolveca))}
    {\measure{\Real^{\tilde{n}}}}{\realvecb}
    \tag{
      \cref{prop: acceptance ratio} as
      $\ninvo{n}(\zip(\realvecb,\boolveca))
      =
      \zip(\ninvoij{\boolveca}{\boolvecb}{n}(\realvecb), \boolvecb)$
      for
      $(\zip(\realvecb,\boolveca)) \in
      \stateij{\boolveca}{\boolvecb}{n}$
    } \\
    & = \expandint{\stateij{\boolveca}{\boolvecb}{n}}{
      [\ninvo{n}(\state) \in \stateset]
      \cdot
      \accept(\state) \cdot
      \spdf'(\state)}
    {\measure{\entsp^{\tilde{n}}}}{\state}
    \tag{$\zip(\placeholder,\boolveca)_*\measure{\Real^{\tilde{n}}}
    = \measure{\entsp^{\tilde{n}}}$ on
    $\stateij{\boolveca}{\boolvecb}{n}$} \\
    & = \expandint{\stateij{\boolveca}{\boolvecb}{n}}{
      [\ninvo{n}(\state) \in \stateset]
      \cdot
      \accept(\state)
    }{\sdist}{\state}
  \end{align*}

  Recall the set $\validstates$
  of all valid states
  can be written as
  $\bigcup\{\stateij{\boolveca}{\boolvecb}{n}
  \mid \boolveca, \boolvecb \in \Bool^{\tilde{n}}\text{ and }n\in\Nat
  \} $.
  Hence, we conclude our proof with
  \begin{align*}
    & \expandint{\validstates}{[\ninvo{n}(\state) \in \stateset] \cdot \accept(\state)}
    {\sdist}{\state}
    =
    \sum_{n=1}^{\infty}
      \sum_{\boolveca, \boolvecb \in \Bool^{\tilde{n}}}
      \expandint{\stateij{\boolveca}{\boolvecb}{n}}
      {[\ninvo{n}(\state) \in \stateset] \cdot \accept(\state) }{\sdist}{\state} \\
    & =
      \sum_{n=1}^{\infty}
      \sum_{\boolveca, \boolvecb \in \Bool^{\tilde{n}}}
      \expandint{\stateij{\boolvecb}{\boolveca}{n}}
      {[\state \in \stateset]\cdot \accept(\state) }{\sdist}{\state}
    = \expandint{\validstates}{[\state \in \stateset] \cdot \accept(\state)}
    {\sdist}{\state}.
  \end{align*}
\end{proof}

Since the transition kernel of
\codein{eNPiMCMC} is the composition of
$\transkernel{1}$ and $\transkernel{2}$
and both
$\transkernel{1}$ and $\transkernel{2}$ are
invariant against $\sdist$
(\cref{prop: step 1 is invariant,prop: step 2 is invariant}),
we deduce that
\codein{eNPiMCMC} preserves the state distribution $\sdist$.

\begin{restatable}[State Invariant]{lemma}{sinvariant}
  \label{lemma: e-np-imcmc invariant}
  $\sdist$ is the invariant distribution of the Markov chain generated by iterating \codein{eNPiMCMC}.
\end{restatable}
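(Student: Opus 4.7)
The plan is to obtain the lemma essentially for free from the two invariance propositions already proved, namely \cref{prop: step 1 is invariant} (which shows $\sdist$ is invariant under the ``find a valid state'' stage with kernel $\transkernel{1}$, Lines 2--12) and \cref{prop: step 2 is invariant} (which shows $\sdist$ is invariant under the ``accept/reject proposed state'' stage with kernel $\transkernel{2}$, Lines 13--22). The key observation is that the transition kernel $\transkernel{\codein{eNPiMCMC}}$ is the composition $\transkernel{1} \cdot \transkernel{2}$, i.e.~for any $\state \in \states$ and $\stateset \in \algebra{\states}$,
\begin{align*}
  \transkernel{\codein{eNPiMCMC}}(\state,\stateset)
  = \expandint{\states}{\transkernel{2}(\state',\stateset)}{\transkernel{1}(\state,\placeholder)}{\state'}.
\end{align*}

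First I would justify that this decomposition is correct by inspecting \codein{eNPiMCMC} (\cref{code: e np-imcmc}): Lines 2--12 deterministically define the intermediate (valid) state $(\enta_0,\auxa_0)$ as a function of the random draws made there, independently of what happens later, while Lines 13--22 depend on $(\enta_0,\auxa_0)$ only. Thus by the tower/Fubini property for probability kernels, integrating $\transkernel{\codein{eNPiMCMC}}$ against $\sdist$ yields
\begin{align*}
  \expandint{\states}{\transkernel{\codein{eNPiMCMC}}(\state,\stateset)}{\sdist}{\state}
  &= \expandint{\states}{\Big(\expandint{\states}{\transkernel{2}(\state',\stateset)}{\transkernel{1}(\state,\placeholder)}{\state'}\Big)}{\sdist}{\state} \\
  &= \expandint{\states}{\transkernel{2}(\state',\stateset)}{\mu_1}{\state'},
\end{align*}
where $\mu_1(\placeholder) := \expandint{\states}{\transkernel{1}(\state,\placeholder)}{\sdist}{\state}$. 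By \cref{prop: step 1 is invariant}, $\mu_1 = \sdist$, and then by \cref{prop: step 2 is invariant} applied to $\sdist$, the outer integral equals $\sdist(\stateset)$. Hence $\sdist$ is invariant for $\transkernel{\codein{eNPiMCMC}}$.

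The main obstacle I anticipate is a bookkeeping one rather than a conceptual one: I need to confirm that $\transkernel{1}$ is a probability kernel on $\states$ (so that Fubini applies) and that the composition identity above truly reflects the semantics of \codein{eNPiMCMC}. For the former, \cref{prop: step 1 is probabilistic} (together with the fact that $\transkernel{1}(\state,\placeholder)$ is defined to equal $0$ off $\validstates$) needs to be read carefully so that we work with initial states in $\validstates$; by \cref{prop: valid states tends to 1}, $\sdist$-almost all states are valid, so the integrand is defined $\sdist$-a.e., which suffices. For the latter, the decomposition is immediate from the algorithm's structure since the intermediate state computed on Line 12 is returned (conceptually) to the second stage without further randomness intervening; this is exactly how $\transkernel{1}$ and $\transkernel{2}$ were defined in \cref{sec: Invariant State Distribution (np-imcmc)}. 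Almost sure termination of both stages (ensured by \cref{lemma: np-imcmc is ast}) guarantees that the composed kernel is itself probabilistic, so no probability mass is lost.
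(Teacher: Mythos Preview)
Your proposal is correct and matches the paper's own argument essentially verbatim: the paper simply observes that the transition kernel of \codein{eNPiMCMC} is the composition of $\transkernel{1}$ and $\transkernel{2}$, invokes \cref{prop: step 1 is invariant} and \cref{prop: step 2 is invariant}, and concludes. Your additional bookkeeping remarks (Fubini, $\sdist$-a.e.\ validity via \cref{prop: valid states tends to 1}, a.s.\ termination) are sound and make explicit what the paper leaves implicit.
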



\subsubsection{Marginalised Markov Chains}
\label{sec: Marginalised Markov Chains (np-imcmc)}

As discussed above, the Markov chain
$\set{\state_i}_{i\in\Nat}$
generated by iterating \codein{eNPiMCMC}
(which has invariant distribution $\sdist$
(\cref{lemma: e-np-imcmc invariant}))
has elements on the state space $\states$
and \emph{not} the trace space $\traces$.
The chain we are in fact interested in is
the \emph{marginalised} chain
$\set{\marg(\state_i)}_{i\in\Nat}$ where
the measurable function
$\marg:\validstates \to \traces$
takes a valid state $\state = (\enta,\auxa)$
and returns the instance of the
parameter variable $\enta$ that is in
the support of the target density function
$\tree$.

In this section we show that
this marginalised chain simulates the
target distribution $\tdist$.
Let $\transkernel{\codein{NPiMCMC}}:
\traces \kernelto {\traces}$ be a kernel
such that
\[
  \transkernel{\codein{NPiMCMC}}(\traceb, A)
  :=
  \begin{cases}
    \transkernel{\codein{eNPiMCMC}}(\state, \inv{\marg}(A))
    & \text{if }\traceb \in \support{\tree}
    \text{ and }\state \in \inv{\marg}(\set{\traceb})\\
    0 & \text{otherwise.}
  \end{cases}
\]
Comparing the commands of
\codein{NPiMCMC} and \codein{eNPiMCMC}
in \cref{code: np-imcmc,code: e np-imcmc},
we claim that $\transkernel{\codein{NPiMCMC}}$
is \emph{the} transition kernel of
\codein{NPiMCMC}.

\begin{proposition}
  \label{prop: properties of transk}
  We consider some basic properties of
  $\transkernel{\codein{NPiMCMC}}$.
  \begin{enumerate}
    \item $\transkernel{\codein{NPiMCMC}}$ is well-defined.

    \item $\transkernel{\codein{eNPiMCMC}}
    (\state,\inv{\marg}(\traceset))
    = \transkernel{\codein{NPiMCMC}}
    (\marg(\state),\traceset)$
    for all $\state \in \validstates$ and
    $\traceset \in \algebra{\traces}$.
  \end{enumerate}
\end{proposition}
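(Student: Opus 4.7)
The plan is to establish both parts by analysing how $\codein{eNPiMCMC}$ (\cref{code: e np-imcmc}) uses its input state.

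For part (1), the definition of $\transkernel{\codein{NPiMCMC}}(\traceb, A)$ quantifies over a choice of witness $\state \in \inv{\marg}(\{\traceb\})$, so well-definedness reduces to showing that $\transkernel{\codein{eNPiMCMC}}(\state, \inv{\marg}(A))$ does not depend on which $\state \in \inv{\marg}(\{\traceb\})$ is picked. I would argue this by inspecting \cref{code: e np-imcmc}: the input state $(\enta^*, \auxa^*)$ is touched only on Line 2, where $\traceb_0 := \marg(\state)$ is extracted; by \cref{prop: instances and tree representable} this instance is unique, hence equals $\traceb$ for every admissible witness. The remainder of the code (partner draws in Line 4, auxiliary kernel draw in Line 5, the extension loop in Lines 6--12, the involution application on Line 13, and the uniform acceptance test in Lines 14--22) consumes only $\traceb_0$ together with freshly sampled randomness independent of $\state$, so the distribution over output states depends on $\state$ only through $\marg(\state)$.

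For part (2), observe that $\state \in \inv{\marg}(\{\marg(\state)\})$ tautologically whenever $\state \in \validstates$, so $\state$ itself is an admissible witness in the definition of $\transkernel{\codein{NPiMCMC}}(\marg(\state), \traceset)$. Unfolding that definition (with $\traceset$ in place of $A$) immediately yields $\transkernel{\codein{NPiMCMC}}(\marg(\state), \traceset) = \transkernel{\codein{eNPiMCMC}}(\state, \inv{\marg}(\traceset))$, which is precisely the required identity.

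The only non-routine step is the code-inspection argument in (1); part (2) is then pure definition chasing. The main obstacle is to make the informal claim ``$\codein{eNPiMCMC}$ depends on its input only through $\marg$'' formally precise: one should exhibit the program as the composition of $\marg$ with a fixed measurable map that takes a trace $\traceb_0 \in \support{\tree}$ together with an independent stream of fresh draws (distributed according to $\nkernel{k_0}$ and the stock measures on $\entsp$ and $\auxsp$) to an output state, so that $\transkernel{\codein{eNPiMCMC}}(\state, \cdot)$ is manifestly the pushforward of the randomness under that map evaluated at $\marg(\state)$. Once this factorisation is recorded, both statements of the proposition fall out.
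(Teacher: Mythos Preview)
Your proposal is correct and follows essentially the same approach as the paper. For (1) the paper gives the one-line justification ``only the instance of the input state matters in \codein{eNPiMCMC}''; your code-inspection argument is a more fleshed-out version of exactly this. For (2) the paper picks an arbitrary witness $\state' \in \inv{\marg}(\{\marg(\state)\})$ and then invokes (1) to identify it with $\state$, whereas you observe directly that $\state$ itself is an admissible witness---these are equivalent formulations of the same definition-chase.
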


\begin{proof}
  \begin{enumerate}
    \item Let $\traceb \in \support{\tree}$ and
    $\traceset \in \algebra{\traces}$.
    Say $\state,\state' \in \inv{\marg}(\set{\traceb})$.
    Since only the instance of the input
    state matters in \codein{eNPiMCMC}
    (\cref{code: e np-imcmc}),
    the value of $\transkernel{\codein{NPiMCMC}}(\traceb, \traceset)$
    given by $\state$ and $\state'$
    are the same, i.e.~$\transkernel{\codein{eNPiMCMC}}(\state, \inv{\marg}(\traceset))
    = \transkernel{\codein{eNPiMCMC}}(\state', \inv{\marg}(\traceset)).$

    \item Let $\state \in \validstates$ and
    $A \in \algebra{\traces}$. Then,
    $\transkernel{\codein{NPiMCMC}}
    (\marg(\state),A)
    = \transkernel{\codein{eNPiMCMC}}
    (\state',\inv{\marg}(A))$ for some
    $\state' \in \inv{\marg}(\set{\marg(\state)})$.
    Since $\state \in \inv{\marg}(\set{\marg(\state)})$,
    we have $\transkernel{\codein{eNPiMCMC}}
    (\state',\inv{\marg}(A))
    = \transkernel{\codein{eNPiMCMC}}
    (\state,\inv{\marg}(A))$.
  \end{enumerate}
\end{proof}

To show $\transkernel{\codein{NPiMCMC}}$
preserves the target distribution,
we consider a distribution $\sdist_n$
on each of the $n$-dimensional state space
$\nstates{n} := \nparsp{n}\times\nauxsp{n}$ with density
$\spdf_n$ (w.r.t.~$\measure{\nstates{n}}$)
given by
\[
  \spdf_n(\enta,\auxa) :=
  \begin{cases}
    \displaystyle\frac{1}{Z_n}\cdot
    \tree(\traceb) \cdot
    \nkernelpdf{k}(\proj{n}{k}(\enta, \auxa))
    & \text{if }
    \traceb \in \instances{\enta} \cap \support{\tree}
    \text{ has dimension } k \leq n \nonumber\\
    {0} &\qquad \text{otherwise}
  \end{cases}
\]
where $Z_n :=
\expandint{\traces}
{[\len{\traceb} \leq {\iparsp(n)}]
\cdot \tree(\traceb)}
{\measure{\traces}}{\traceb}$.
Notice that $Z_n\cdot \spdf_n$ and $Z \cdot \spdf$ are the same,
except on non-valid states.
The following proposition shows how the
state distribution $\sdist$ can be
represented using $\sdist_n$.

\begin{proposition}
  \label{prop: n-dim state dist}
  Let $n\in\Nat$.
  \begin{enumerate}
    \item $\sdist_n$ is a probability measure.

    \item For $k \leq n$,
    $Z_k\cdot\sdist_k
    = Z_n\cdot {\proj{n}{k}}_*\sdist_n$
    on $\validstates_k$.

    \item
    Let $g^{(n)}:\nstates{n}
    \partialto {\bigcup_{k=1}^n \validstates_k}$
    be the partial measurable function that
    returns the projection of the input state
    that is valid, if it exists. Formally,
    $g^{(n)}(\state) = \proj{n}{k}(\state)$
    if
    $\proj{n}{k}(\state) \in \validstates_k$.
    Then
    $Z\cdot\sdist = Z_n\cdot g^{(n)}_*\sdist_n$ on ${\bigcup_{k=1}^n \validstates_k}$.
  \end{enumerate}
\end{proposition}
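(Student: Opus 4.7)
The three parts are proved in order, using the prefix property, \cref{prop: instances and tree representable} for uniqueness of trace instances, and Fubini-type computations mirroring those in \cref{prop: valid states tends to 1}.

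For part~(1), I would compute $\sdist_n(\nstates{n}) = \int_{\nstates{n}}\spdf_n\,\dif\measure{\nstates{n}}$ directly. The nonzero region of $\spdf_n$ decomposes disjointly by the dimension $j \leq n$ of the unique trace instance $\traceb \in \instances{\enta}\cap\support{\tree}$, uniqueness furnished by \cref{prop: instances and tree representable}. For each $j$, the corresponding piece is parametrised by $\traceb \in \nsupport{\tree}{\iparsp(j)}$, the ``partner'' entropies of the opposite type at positions $1,\dots,\iparsp(j)$ of $\enta$, the first $\iauxsp(j)$ auxiliary coordinates of $\auxa$, and free extensions up to dimension $n$. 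The partner measures, the auxiliary-kernel density $\nkernelpdf{j}$, and the extension measures on $\entsp$ each integrate to $1$, collapsing the sum to $\frac{1}{Z_n}\int_{\{\traceb\,:\,\len{\traceb}\leq\iparsp(n)\}}\tree\,\dif\measure{\traces} = 1$ by the definition of $Z_n$.

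For part~(2), the key observation is that if $(\enta',\auxa') \in \validstates_k$ has trace instance $\traceb \in \support{\tree}$ of dimension $j \leq k$, then for any extension $(\enta'\concat\entb,\auxa'\concat\auxb) \in \nstates{n}$, the same $\traceb$ is an instance of the extended parameter (by the prefix property), and by \cref{prop: instances and tree representable} it remains the unique such instance. Hence
\[
  \spdf_n(\enta'\concat\entb,\auxa'\concat\auxb) \;=\; \tfrac{1}{Z_n}\tree(\traceb)\nkernelpdf{j}(\proj{k}{j}(\enta',\auxa')),
\]
which is independent of $(\entb,\auxb)$. For measurable $A \subseteq \validstates_k$, Fubini applied to the product structure $\nstates{n} \cong \nstates{k} \times \entsp^{\iparsp(n)-\iparsp(k)} \times \entsp^{\iauxsp(n)-\iauxsp(k)}$ gives
\[
  {\proj{n}{k}}_*\sdist_n(A) \;=\; \sdist_n(\proj{n}{k}^{-1}(A)) \;=\; \int_A \tfrac{Z_k}{Z_n}\spdf_k\,\dif\measure{\nstates{k}} \;=\; \tfrac{Z_k}{Z_n}\sdist_k(A),
\]
since the measures on the extension coordinates are probability measures; multiplying by $Z_n$ yields the desired equality on $\validstates_k$.

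For part~(3), I first verify that $g^{(n)}$ is well-defined: if $\proj{n}{k_1}(\state')$ and $\proj{n}{k_2}(\state')$ were both valid with $k_1 < k_2$, then $\proj{k_2}{k_1}(\proj{n}{k_2}(\state')) = \proj{n}{k_1}(\state') \in \validstates_{k_1}$ would violate clause~(iii) of validity for $\proj{n}{k_2}(\state') \in \validstates_{k_2}$. Given measurable $S \subseteq \bigcup_{k=1}^n \validstates_k$, decompose $S = \bigsqcup_k S_k$ with $S_k := S \cap \validstates_k$ (disjoint since the $\validstates_k$ sit in distinct dimensions). The same minimality argument gives $(g^{(n)})^{-1}(S_k) = \proj{n}{k}^{-1}(S_k)$, so part~(2) yields $Z_n\,g^{(n)}_*\sdist_n(S_k) = Z_k\,\sdist_k(S_k)$. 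Comparing the defining formulas, $Z\,\spdf = Z_k\,\spdf_k$ pointwise on $\validstates_k$, and since $\measure{\states}$ restricts to $\measure{\nstates{k}}$ on $\nstates{k}$, we also have $Z\,\sdist(S_k) = Z_k\,\sdist_k(S_k)$. Summing over $k \leq n$ gives $Z\,\sdist(S) = Z_n\,g^{(n)}_*\sdist_n(S)$. The main subtlety throughout is the careful bookkeeping via the unique trace-instance dimension and the minimality clause in the definition of validity, which together guarantee both uniqueness of densities under extension and well-definedness of $g^{(n)}$.
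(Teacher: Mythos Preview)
Your proposal is correct and follows essentially the same route as the paper's proof: decompose by the dimension of the unique trace instance, integrate out partner entropies, auxiliary-kernel coordinates, and extension coordinates (each contributing a factor of $1$), and use the identity $Z\,\spdf = Z_k\,\spdf_k$ on $\validstates_k$ together with part~(2) to assemble part~(3). If anything, you are slightly more careful than the paper in explicitly verifying well-definedness of $g^{(n)}$ and the equality $(g^{(n)})^{-1}(S_k) = \proj{n}{k}^{-1}(S_k)$, which the paper leaves implicit.
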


\begin{proof}
  \begin{enumerate}
    \item Consider $\sdist_n(\nstates{n})$,
    \begin{align*}
      \sdist_n(\nstates{n})
      & = \sum_{k=1}^n \
      \int_{\nstates{n}}\
        [\traceb\in\instances{\enta}]\cdot
        [\len{\traceb}=\iparsp(k)]\cdot
        \frac{1}{Z_n}\cdot\tree(\traceb)\cdot
        \nkernelpdf{k}(\proj{n}{k}(\enta, \auxa))\
      {\measure{\nstates{n}}}{(\dif (\enta, \auxa))} \\
      & = \sum_{k=1}^n \
      \int_{\nsupport{\tree}{\iparsp(k)}}\
      \int_{\traces}\
      \int_{\nauxsp{k}}\
        \frac{1}{Z_n}\cdot\tree(\traceb)\cdot
        \nkernelpdf{k}(\zip(\traceb,\traceb'), \auxa'))\
      {\measure{\nauxsp{k}}}{(\dif \auxa')}\
      {\measure{\traces}}{(\dif \traceb')}\
      {\measure{\traces}}{(\dif \traceb)} \\
      & = \sum_{k=1}^n \
      \int_{\nsupport{\tree}{\iparsp(k)}}\
      \int_{\traces}\
        \frac{1}{Z_n}\cdot\tree(\traceb)\
      {\measure{\traces}}{(\dif \traceb')}\
      {\measure{\traces}}{(\dif \traceb)}
      \tag{$\nkernel{k}$ is a probability kernel} \\
      & = \int_{\traces}\
        [\len{\traceb}\leq \iparsp(n)]\cdot
        \frac{1}{Z_n}\cdot\tree(\traceb)\
      {\measure{\traces}}{(\dif \traceb)}
      = 1
    \end{align*}

    \item Let $\stateset \in \algebra{\states}$
    where $ \stateset \subseteq {\validstates_k}$.
    Hence
    $Z_k \cdot \tpdf_k(\state)
    = Z_k \cdot \tpdf_k(\state')$
    if $\state \in \stateset$
    and $\state = \proj{n}{k}(\state')$.
    Then,
    \begin{align*}
      & Z_n \cdot ({\proj{n}{k}}_*\sdist_n)(\stateset) \\
      & = Z_n \cdot \sdist_n(\inv{{\proj{n}{k}}}{(\stateset)}) \\
      & = \expandint{\nstates{n}}
      {[\proj{n}{k}(\state') \in \stateset] \cdot
      Z_n\cdot \spdf_n(\state')}
      {\measure{\nstates{n}}}{(\state')} \\
      & =
      \int_{\nstates{k}}\
      [(\state) \in \stateset] \cdot
      Z_k\cdot
      \spdf_k(\state) \cdot
      \ {\measure{\nstates{k}}}{(\dif(\state))} \\
      & =
      Z_k\cdot\sdist_k(\stateset)
    \end{align*}
    \label{item: partial pointwise auxiliary kernel is used}

    \item Let $\stateset \in \algebra{\states}$
    where $\stateset \subseteq
    {\bigcup_{k=1}^n \validstates_k}$.
    Then, $Z\cdot\tpdf(\state) = Z_k\cdot\tpdf_k(\state)$
    for all $\state \in \stateset \cap \validstates_k$.
    \begin{align*}
      Z\cdot \sdist(\stateset)
      & = \expandint{\stateset}
      {[\state \in \validstates]\cdot
      Z \cdot
      \spdf(\state)}
      {\measure{\states}}{\state} \\
      & = \sum_{k=1}^n
      \expandint{\stateset}
      {[\state \in \validstates_k]\cdot
      Z \cdot
      \spdf(\state)}
      {\measure{\nstates{k}}}{\state} \\
      & = \sum_{k=1}^n
      \expandint{\stateset}
      {[\state \in \validstates_k]\cdot
      Z_k \cdot
      \spdf_k(\state)}
      {\measure{\nstates{k}}}{\state} \\
      & = \sum_{k=1}^n Z_k \cdot \sdist_k(\stateset\cap \validstates_k) \\
      & = Z_n \sum_{k=1}^n {\proj{n}{k}}_* \sdist_n (\stateset\cap \validstates_k) \tag{i} \\
      & = Z_n \cdot
      \sdist_n (\bigcup_{k=1}^n
      \set{ \state \in \nstates{n}
      \mid \proj{n}{k}(\state) \in \stateset\cap \validstates_k}) \\
      & = Z_n \cdot g^{(n)}_*\sdist_n (\stateset).
    \end{align*}
  \end{enumerate}
\end{proof}

\begin{restatable}[Invariant]{lemma}{invariant}
  \label{lemma: marginalised distribution is the target distribution}
  Assuming \allhass{},
  $\tdist$ is the invariant distribution of
  the Markov chain generated by
  iterating the Hyrbid NP-iMCMC algorithm (\cref{sec: np-imcmc algorithm}).
\end{restatable}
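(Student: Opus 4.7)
The plan is to reduce the statement to \cref{lemma: e-np-imcmc invariant} (invariance of $\sdist$ for \codein{eNPiMCMC}) by pushing the invariance forward along the marginalisation map $\marg$. The key intermediate claim is that $\marg_*\sdist = \tdist$. To prove this, I would fix $A \in \algebra{\traces}$, partition $\validstates$ by dimension as in \cref{prop: valid states tends to 1}, and write each $\state \in \validstates_m$ uniquely as $(\zip(\traceb_1,\traceb_2)\concat\entb, \auxa_1\concat\auxa_2)$, where $\traceb_1 \in \nsupport{\tree}{\iparsp(k_0)}$ is the unique instance of $\enta$ lying in $\support{\tree}$ (unique by \cref{prop: instances and tree representable} and the prefix property), $\traceb_2$ records the random partners, $\entb$ the entropy-suffix extending $\enta$, and analogously for $\auxa$. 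Since $\spdf$ depends only on $(\traceb_1,\traceb_2,\auxa_1)$, since $\marg(\state) = \traceb_1$, and since $\nkernel{k_0}$ is a probability kernel (making the integral of $\nkernelpdf{k_0}$ over $\auxa_1$ equal to $1$), the integrations over $\traceb_2, \entb, \auxa_1, \auxa_2$ and the summation over $m \geq k_0$ collapse, exactly as in the computation $\sdist(\states) = 1$ in \cref{prop: valid states tends to 1}, to yield
\begin{align*}
\marg_*\sdist(A)
\;=\; \sum_{k_0=1}^{\infty} \int_{A \cap \nsupport{\tree}{\iparsp(k_0)}} \frac{\tree(\traceb_1)}{Z}\, \measure{\traces}(\dif \traceb_1)
\;=\; \int_{A} \frac{\tree(\traceb)}{Z}\, \measure{\traces}(\dif \traceb)
\;=\; \tdist(A).
\end{align*}

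With this identity established, the rest is short. By \cref{prop: properties of transk}(2), $\transkernel{\codein{eNPiMCMC}}(\state, \inv{\marg}(A)) = \transkernel{\codein{NPiMCMC}}(\marg(\state), A)$ for every $\state \in \validstates$. Changing variables $\traceb = \marg(\state)$ along $\marg_*\sdist = \tdist$ and then applying \cref{lemma: e-np-imcmc invariant}:
\begin{align*}
\int_{\traces} \transkernel{\codein{NPiMCMC}}(\traceb, A)\, \tdist(\dif \traceb)
&= \int_{\validstates} \transkernel{\codein{NPiMCMC}}(\marg(\state), A)\, \sdist(\dif \state) \\
&= \int_{\validstates} \transkernel{\codein{eNPiMCMC}}(\state, \inv{\marg}(A))\, \sdist(\dif \state) \\
&= \sdist(\inv{\marg}(A)) \;=\; \marg_*\sdist(A) \;=\; \tdist(A),
\end{align*}
which is the desired invariance. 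Since the Hybrid NP-iMCMC algorithm generalises NP-iMCMC (\cref{sec: Hybrid NP-iMCMC is a Generalisation of NP-iMCMC}), \cref{corollary: invariant (np-imcmc)} then follows as a special case.

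The main obstacle is the telescoping computation of $\marg_*\sdist$: it requires juggling the unique decomposition of each valid state afforded by the prefix property, the probability-kernel property of $\nkernel{k_0}$, and Tonelli's theorem to interchange the outer sum over $m$ with the inner integrals over extensions, all while carrying the validity indicator $[\state \in \validstates_m]$ so as to count each realisation exactly once. The integrability assumption (\cref{hass: integrable tree}) ensures $Z < \infty$ so that $\tdist$ and $\sdist$ are well defined, while almost-sure termination (\cref{hass: almost surely terminating tree}) ensures both that $\sdist$ is concentrated on $\validstates$ (where $\marg$ is defined) and that the \codein{NPiMCMC} kernel itself is well defined on $\support{\tree}$ (via \cref{lemma: np-imcmc is ast}).
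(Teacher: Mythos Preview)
Your proposal is correct and follows the same overall architecture as the paper: reduce to \cref{lemma: e-np-imcmc invariant} via the marginalisation identity $\marg_*\sdist = \tdist$, then transport the invariance along $\marg$ using \cref{prop: properties of transk}.

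The one noteworthy difference is in how you establish $\marg_*\sdist = \tdist$. You do it by a direct computation, inserting the indicator $[\traceb_1 \in A]$ into the integral from the proof of $\sdist(\states)=1$ in \cref{prop: valid states tends to 1} and observing that the same telescoping collapse goes through unchanged. The paper instead introduces auxiliary finite-dimensional distributions $\sdist_n$ (\cref{prop: n-dim state dist}), shows $Z\cdot\sdist = Z_n\cdot g^{(n)}_*\sdist_n$ on $\bigcup_{k\le n}\validstates_k$, and then proves $\marg_*\sdist(A) \le \tdist(A)+\delta$ via a truncation-and-limit argument, closing the gap by appealing to both sides being probability measures. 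The paper also records a second claim $\marg_*\smeasure = \tmeasure$ on $\support{\tree}$ about the base measures. Your route is shorter and sidesteps both the $\sdist_n$ machinery and that second claim; the paper's route is more modular in that it isolates the finite-dimensional structure explicitly, which could be reused elsewhere (e.g.\ for the Multiple Step variant). Either way the content is the same.
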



\lo{Check: $\nu$ is the unique(?) invariant distribution of the induced Markov chain.}

\begin{proof}
  Assuming
  (1) $\tdist = \marg_*\sdist$ on $\traces$ and
  (2) $\tmeasure = \marg_*\smeasure$ on $\support{w}$, we have
  for any $A \in \algebra{\traces}$,
  \begin{align*}
    \tdist(A) & = \marg_*\sdist(A) \tag{1} \\
    & = \expandint{\states}{\transkernel{\codein{eNPiMCMC}}(\state, \inv{\marg}(A))}{\smeasure}{\state} \tag{\cref{lemma: e-np-imcmc invariant}} \\
    & = \expandint{\validstates}{\transkernel{\codein{eNPiMCMC}}(\state, \inv{\marg}(A))}{\smeasure}{\state} \\
    & = \expandint{\validstates}{\transkernel{\codein{NPiMCMC}}(\marg(\state), A)}{\smeasure}{\state} \tag{\cref{prop: properties of transk}.ii} \\
    & = \expandint{\support{\tree}}{\transkernel{\codein{NPiMCMC}} (\traceb, A)}{\marg_*\smeasure}{\traceb} \\
    & = \expandint{\support{\tree}}{\transkernel{\codein{NPiMCMC}} (\traceb, A)}{\tmeasure}{\traceb} \tag{2} \\
    & = \expandint{\traces}{\transkernel{\codein{NPiMCMC}} (\traceb, A)}{\tmeasure}{\traceb}.
  \end{align*}
  It is enough to show (1) and (2).

  \begin{enumerate}
    \item
    Let $A \in \algebra{\traces}$ where
    $A \subseteq \nsupport{\tree}{\iparsp(n)} $ and
    $\delta > 0$.
    Then partitioning $\inv{\marg}(\traceset)$ using $\validstates_k$, we have {for sufficiently large $m$},
    \begin{align*}
      & \marg_*\sdist(\traceset) \\
      & = \sdist\left(\bigcup_{k=1}^m \inv{\marg}(\traceset)\cap \validstates_k\right)
        + \sdist\left(\bigcup_{k=m+1}^\infty \inv{\marg}(\traceset)\cap \validstates_k\right) \\
      & < \frac{Z_m}{Z} \cdot
        g^{(m)}_*\sdist_m\left(\bigcup_{k=1}^m \inv{\marg}(\traceset)\cap \validstates_k\right) + \delta
      \tag{\cref{prop: properties of transk}.iii,
      \cref{prop: n-dim state dist}.ii} \\
      & \leq \frac{Z_m}{Z} \cdot \sdist_m(
      \set{(\zip(\traceb, \traceb')\concat \entb, \auxa)
      \mid
        \traceb \in \traceset,
        \traceb'\in \traces,
        \entb \in \entsp^{\iparsp(m)-\iparsp(n)},
        \auxa \in \nauxsp{m}}
      ) + \delta \\
      & = \displaystyle\frac{1}{Z}
      \int_{\traceset}\
      \int_{\traces}\
      \int_{\entsp^{\iparsp(m)-\iparsp(n)}}\
      \tree(\traceb)\cdot \\
      & \qquad
        \big(\expandint{\nauxsp{m}}
        {\nkernelpdf{n}(\proj{m}{n}
          (\zip(\traceb,\traceb')\concat\entb ,\auxa))}
        {\measure{\nauxsp{m}}}{\auxa}\big)\\
      & \quad
      {\measure{\entsp^{\iparsp(m)-\iparsp(n)}}}{(\dif\entb)}
      {\measure{\traces}}{(\dif\traceb')}
      {\measure{\traces}}{(\dif\traceb)}
      + \delta \\
      & = \frac{1}{Z} \expandint{\traceset}{\tree(\traceb)}{\measure{\traces}}{\traceb}
      + \delta
      \tag{$\nkernel{n}$ is a probability kernel}
      \\
      & = \tdist(\traceset) + \delta.
    \end{align*}
    For any measurable set $\traceset \in \Sigma_{\traces}$,
    we have
    $
    \marg_*\sdist(\traceset)
    = \marg_*\sdist(\traceset \cap \support{\tree})
    = \sum_{n=1}^\infty \marg_*\sdist
      (\traceset \cap \nsupport{\tree}{\iparsp(n)})
    \leq \sum_{n=1}^\infty
      \tdist(\traceset \cap \nsupport{\tree}{\iparsp(n)})
    = \tdist(\traceset \cap \support{\tree})
    = \tdist(\traceset)$.
    Since both $\tdist$ and $\sdist$ are
    {probability} distributions, we also have
    $\tdist(\traceset) = 1 - \tdist(\traces \setminus \traceset)
    \leq 1 - \marg_*\sdist(\traces \setminus \traceset)
    = 1 - (1- \marg_*\sdist(\traceset))
    = \marg_*\sdist(\traceset)$.
    Hence
    $\marg_*\sdist = \tdist$ on $\traces$.

    \item
    Similarly, let $\traceset \in \algebra{\traces}$
    where $\traceset \subseteq \nsupport{\tree}{\iparsp(n)} $
    and $\delta > 0$.
    Then by \cref{prop: properties of transk}.iii,
    {for sufficiently large $m$}, we must have
    $\smeasure(\bigcup_{k=m+1}^\infty \validstates_k)
    = \smeasure(\validstates \setminus \validstates_{\leq m})
    < \delta$.
    Hence,
    \begin{align*}
      & \marg_*\smeasure(\traceset) \\
      & = \smeasure\left(\bigcup_{k=1}^m
            \inv{\marg}(\traceset)\cap \validstates_k\right)
        + \smeasure\left(\bigcup_{k=m+1}^\infty
            \inv{\marg}(\traceset)\cap \validstates_k\right) \\
      & < \sum_{k=1}^m \measure{\nstates{k}} (\inv{\marg}(\traceset)\cap \validstates_k) + \delta \\
      & = \sum_{k=1}^m \measure{\nstates{m}} (\set{(\enta,\auxa)\in \nstates{m} \mid \proj{m}{k}(\enta, \auxa) \in \inv{\marg}(\traceset)\cap \validstates_k}) + \delta \\
      & = \measure{\nstates{m}} (\bigcup_{k=1}^m \set{(\enta,\auxa)\in \nstates{m} \mid \proj{m}{k}(\enta, \auxa) \in \inv{\marg}(\traceset)\cap \validstates_k}) + \delta \\
      & \leq \measure{\nstates{m}}(\set{(\zip(\traceb, \traceb')\concat \entb, \auxa) \mid \traceb \in \traceset, \traceb'\in \traces, \entb \in \entsp^{\iparsp(m)-\iparsp(n)}, \auxa \in \nauxsp{m}}) + \delta \\
      & = \tmeasure(\traceset) + \delta.
    \end{align*}
    Then the proof proceeds as in (1). Note that since $\tree$ almost surely terminating (\cref{hass: almost surely terminating tree}),
    $\marg_*\measure{\states}(\support{\tree})
    = \measure{\traces}(\support{\tree}) = 1$
  \end{enumerate}
\end{proof}

\subsubsection{Correctness of NP-iMCMC}

The correctness of the NP-iMCMC sampler (\cref{fig:np-imcmc algo})
can be deduce from \cref{lemma: marginalised distribution is the target distribution}
and the fact that Hyrbid NP-iMCMC is a generalisation of NP-iMCMC

\NPiMCMCinvariant*

\clearpage

\section{Transforming Nonparametric Involutive MCMC}
\label{app: variants}

In this section,
we discuss how the techniques
discussed in \cite{DBLP:conf/icml/NeklyudovWEV20}
can be applied to the Hybrid NP-iMCMC
sampler presented in \cref{app: hybrid np-imcmc}.
Hence
instances of the Hybrid NP-iMCMC sampler,
such as NP-MH and NP-HMC,
can be extended
using these techniques
to become more flexible and efficient.

We assume the input
target density function $\tree:\traces\to\pReal$
is tree representable,
integrable (\cref{hass: integrable tree}) and
almost surely terminating (\cref{hass: almost surely terminating tree}).

\subsection{State-dependent Hybrid NP-iMCMC Mixture}
\label{sec: state-dependent mixture of np-imcmc}

Say we want to use multiple Hybrid NP-iMCMC samplers
to simulate the posterior given by
the target density function $\tree$.
The following technique allows us to `mix'
Hybrid NP-iMCMC samplers
in such a way that the resulting sampler still
preserves the posterior.

Given a collection of Hybrid NP-iMCMC samplers,
indexed by $\mixa \in \mixsp$,
for some $\alpha \in \Nat$,
each with
auxiliary kernels
$\set{\nkernel{n}_\mixa:\nparsp{n} \kernelto \nauxsp{n}}_{n\in\Nat}$ and
involutions
$\set{\ninvo{n}_\mixa:\nparsp{n} \times\nauxsp{n}
\to \nparsp{n} \times\nauxsp{n} }_{n\in\Nat}$
satisfying the projection commutation property
(\cref{hass: partial block diagonal inv}),
the \defn{State-dependent Hybrid NP-iMCMC Mixture} sampler
determines which Hybrid NP-iMCMC sampler to use by
drawing an indicator $\mixa \in \mixsp$
from a probability measure
$\mixkernel(\enta_0, \placeholder)$
where $\mixkernel:\bigcup_{n\in\Nat}\nparsp{n} \kernelto \entsp^m$
is a probability kernel and
$\enta_0$ is the entropy vector constructed from
the current sample $\traceb_0$ at the initialisation step
(\cref{hnp-imcmc step: initialisation} of Hybrid NP-iMCMC).
Then, using the $\mixa$-indexed Hybrid NP-iMCMC sampler,
a proposal $\traceb$ is generated and
accepted with a modified probability that
includes the probability of picking $\mixa$,
namely
\begin{align*}
  \min\bigg\{1; \;
  & \frac
    {
      \tree{(\traceb)}\cdot
      \nkernelpdf{k}_\mixa(\proj{n}{k}(\enta, \auxa)) \cdot
      \nparpdf{n}(\enta)\cdot\nauxpdf{n}(\auxa)}
    {
      \tree{(\traceb_0)}\cdot
      \nkernelpdf{k_0}_\mixa(\proj{n}{k_0}(\enta_0, \auxa_0)) \cdot
      \nparpdf{n}(\enta_0)\cdot\nauxpdf{n}(\auxa_0)}
    \cdot
  \frac
    {\mixpdf(\range{\enta_0}{1}{k_0}, \mixa)}
    {\mixpdf(\range{\enta}{1}{k}, \mixa)}
  \cdot
  \abs{\det(
      \grad{\ninvo{n}_\mixa(
        \enta_0, \auxa_0)})}
  \bigg\}
\end{align*}
where
$(\enta_0, \auxa_0)$ is the (possibly extended) initial state,
$(\enta, \auxa)$ is the new state,
$n = \dim{(\enta_0)} = \dim{(\auxa_0)}$,
$k_0$ is the dimension of ${\traceb_0}$
(i.e.~$\len{\traceb_0} = \iparsp{(k_0)}$) and
$k$ is the dimension of $\traceb$
(i.e.~$\len{\traceb} = \iparsp{(k)}$).

\paragraph{Pseudocode}

This sampler can be implemented in SPCF as
the \codein{MixtureNPiMCMC} function in \cref{code: mixture np-imcmc}.
(Terms specific to this technique
are highlighted.)
We assume the following SPCF terms exists:
\codein{mixkernel} of type \codein{List(X) -> (R*B)^l}
implements the mixture kernel $\mixkernel:\bigcup_{n\in\Nat}\nparsp{n} \kernelto \mixsp$;
\codein{pdfmixkernel} of type
\codein{List(X)*(R*B)^l -> R}
implements
the probability density function
$\mixpdf:\bigcup_{n\in\Nat}\nparsp{n}\times\mixsp\to\Real$
; and
for each $\mixa \in \mixsp$ and $n\in\Nat$,
\codein{auxkernel[n][m]}
implements the auxiliary kernel $\nkernel{n}_\mixa$;
\codein{pdfauxkernel[n][m]} and
implements the pdf $\nkernelpdf{n}_\mixa$;
\codein{involution[n][m]}
implements the involution $\ninvo{n}_\mixa$; and
\codein{absdetjacinv[n][m]} implements
the absolute value of the Jacobian determinant of
$\ninvo{n}_\mixa$.

\begin{figure}
\begin{code}[label={code: mixture np-imcmc}, caption={Pseudocode of the State-dependent Hybrid NP-iMCMC Mixture algorithm}]
def MixtureNPiMCMC(t0):
  k0 = dim(t0)                                  # initialisation step
  x0 = [(e, coin) if Type(e) in R else (normal, e) for e in t0]
  (@@)`m = mixkernel(x0)`                             # mixture step
  v0 = auxkernel[k0][m](x0)                     # stochastic step
  (x,v) = involution[k0][m](x0,v0)              # deterministic step
  n = k0                                        # extend step
  while not intersect(instance(x),support(w)):
    x0 = x0 + [(normal, coin)]*(indexX(n+1)-indexX(n))
    v0 = v0 + [(normal, coin)]*(indexY(n+1)-indexY(n))
    n = n + 1
    (x,v) = involution[n][m](x0,v0)
  t = intersect(instance(x),support(w))[0]      # accept/reject step
  k = dim(t)
  return t if uniform < min{1, w(t)/w(t0) *
                               pdfauxkernel[k][m](proj((x,v),k))/
                                 pdfauxkernel[k0][m](proj((x0,v0),k0)) *
                               pdfpar[n](x)/pdfpar[n](x0) *
                               pdfaux[n](v)/pdfaux[n](v0) *
                               (@@)`pdfmixkernel(proj(x,k),m)/`
                                 (@@)`pdfmixkernel(proj(x0,k0),m)` *
                               absdetjacinv[n][m](x0,v0)}
           else t0
\end{code}
\begin{code}[label={code: mixture np-imcmc (correctness)},
  caption={Pseudocode for the correctness of the State-dependent Hybrid NP-iMCMC Mixture algorithm}]
def mixauxkernel[n](x0)
  m = mixkernel(x0)
  v0 = auxkernel[n][m](x0)
  return m + v0

def mixinvolution[n](x0,mixv0)
  m = mixv0[:l]
  v0 = mixv0[l:]
  (x,v) = involution[n][m](x0,v0)
  return (x,m + v)

def mixindexX(n): return indexX(n)
def mixindexY(n): return l + indexY(n)
def mixproj((x,v),k): return (x[:mixindexX(k)],v[:mixindexY(k)])
\end{code}
\end{figure}

\paragraph{Correctness}

Similar to the correctness arguments in
\cite{DBLP:conf/icml/NeklyudovWEV20},
we show that
the State-dependent Hybrid NP-iMCMC Mixture sampler
is correct
by formulating \codein{MixtureNPiMCMC}
as an instance of \codein{NPiMCMC}
(\cref{code: np-imcmc}).
This means specifying
\codein{auxkernel[n]}
and \codein{involution[n]}
in \codein{NPiMCMC}
and arguing that the resulting
\codein{NPiMCMC} function is equivalent to \codein{MixtureNPiMCMC}.

The SPCF terms \codein{mixauxkernel[n]}
and \codein{mixinvolution[n]}
given in \cref{code: mixture np-imcmc (correctness)}
should suffice.
The auxiliary space is expanded to embed
the indicator \codein{m} in such a way that
the auxiliary variable \codein{mixv}
is in the space $\mixsp \times \nauxsp{n}$
where its first $\ell$-th components \codein{mixv[:l]} gives \codein{m}
and the rest \codein{mixv[l:]} gives \codein{v}.
Since the auxiliary space is expanded
to include the indicator,
the maps
\codein{mixindexX} and \codein{mixindexY}
and the projection \codein{mixproj}
are modified accordingly.

To see how the \codein{NPiMCMC} function with
\codein{auxkernel[n]} replaced by \codein{mixauxkernel[n]} and
\codein{involution[n]} replaced by \codein{mixinvolution[n]}
is equivalent to \codein{MixtureNPiMCMC},
we onyl need to consider the probability density of
\codein{mixauxkernel[k]} at
\codein{mixproj((x,mixv),k)}.

\begin{code}[numbers=none]
    pdfmixauxkernel[k](x[:mixindexX(k)], mixv[:mixindexY(k)])
  = pdfmixauxkernel[k](x[:indexX(k)], mixv[:l+indexY(k)])
  = pdfmixkernel(x[:indexX(k)], mixv[:l]) * pdfauxkernel[k][mixv[:l]](x[:indexX(k)], mixv[l:l+indexY(k)])
  = pdfmixkernel(x[:indexX(k)], m) * pdfauxkernel[k][m](x[:indexX(k)], v[:indexY(k)])
  = pdfmixkernel(proj(x,k),m) * pdfauxkernel[k][m](proj((x,v),k))
\end{code}
where \codein{m = mixv[:l]} and
\codein{v = mixv[l:]}.
This shows that
the acceptance probability in \codein{NPiMCMC}
is identical to that in \codein{MixtureNPiMCMC}
and hence the two algorithms are equivalent.

\subsection{Direction Hybrid NP-iMCMC Algorithm}
\label{sec: auxiliary direction np-imcmc}

Sometimes it is difficult to specify involutions
that explores the model fully.
The following technique tells us that
bijections are good enough.

Given endofunctions
$\nbij{n}$ on $\nparsp{n} \times\nauxsp{n}$
that are differentiable almost everywhere and bijective
for each $n\in\Nat$
such that
the sets $\set{\nbij{n}}_n$ and
$\set{\inv{\nbij{n}}}_n$
satisfy the projection commutative property
(\cref{hass: partial block diagonal inv}),
the \defn{Direction Hybrid NP-iMCMC} algorithm
randomly use either $\nbij{n}$ or
$\inv{\nbij{n}}$ to move around the state space
and proposes a new sample.

\paragraph{Pseudocode}

This sampler can be expressed in SPCF as the
\codein{DirectionNPiMCMC} function in \cref{code: direction np-imcmc}.
(Terms specific to this technique are highlighted.)
We assume for each $n\in\Nat$ and $\dira \in \Bool$,
there is a SPCF term
\codein{bijection[n][d]} where
\codein{bijection[n][True]} implements
the bijection $\nbij{n}$ and
\codein{bijection[n][False]} implements
the inverse $\inv{\nbij{n}}$ and
SPCF term
\codein{absdetjacbij[n][d]} that implements
the absolute value of the Jacobian determinant of
$\nbij{n}$ if \codein{d = True} and
the inverse $\inv{\nbij{n}}$ otherwise.

\begin{figure}
\begin{code}[label={code: direction np-imcmc}, caption={Pseudocode of the Direction Hybrid NP-iMCMC algorithm}]
def DirectionNPiMCMC(t0):
  (@@)`d0 = coin`                                     # direction step
  k0 = dim(t0)                                  # initialisation step
  x0 = [(e, coin) if Type(e) in R else (normal, e) for e in t0]
  v0 = auxkernel[k0](x0)                        # stochastic step
  (x,v) = `bijection[k0][d0]`(x0,v0)              # deterministic step
  n = k0                                        # extend step
  while not intersect(instance(x),support(w)):
    x0 = x0 + [(normal, coin)]*(indexX(n+1)-indexX(n))
    v0 = v0 + [(normal, coin)]*(indexY(n+1)-indexY(n))
    n = n + 1
    (x,v) = `bijection[n][d0]`(x0,v0)
  t = intersect(instance(x),support(w))[0]      # accept/reject step
  k = dim(t)
  return t if uniform < min{1, w(t)/w(t0) *
                               pdfauxkernel[k](proj((x,v),k))/
                                 pdfauxkernel[k0](proj((x0,v0),k0)) *
                               pdfpar[n](x)/pdfpar[n](x0) *
                               pdfaux[n](v)/pdfaux[n](v0) *
                               (@@)`absdetjacbij[n][d0]`(x0,v0)}
           else t0
\end{code}
\begin{code}[label={code: direction np-imcmc (correctness)},
  caption={Pseudocode for \texttt{dirauxkernel} and \texttt{dirinvolution}}]
def dirauxkernel[n](x0)
  d0 = coin
  v0 = auxkernel[n](x0)
  return [(normal, d0)] + v0

def dirinvolution[n](x0,dirv0)
  d0 = dirv0[0][1]
  v0 = dirv0[1:]
  (x,v) = bijection[n][d0](x0,v0)
  d = not d0
  return (x, [(dirv0[0][0],d)] + v)

def dirindexX(n): return indexX(n)
def dirindexY(n): return 1+indexY(n)
def dirproj((x,v),k): return (x[:dirindexX(k)], v[:dirindexY(k)])
\end{code}
\end{figure}

\paragraph{Correctness}

We show that
\codein{DirectionNPiMCMC}
can be formulated as an instance of
\codein{NPiMCMC} (\cref{code: np-imcmc})
with a specification of
\codein{auxkernel[n]} and \codein{involution[n]}.

The SPCF terms
\codein{dirauxkernel[n]} and
\codein{dirinvolution[n]}
in \cref{code: direction np-imcmc (correctness)}
would work.
The auxiliary space is expanded to include
the direction variable \codein{d0}
so that
the auxiliary variable \codein{dirv0}
is in the space $\entsp \times \nauxsp{n}$
where
the Boolean-component
\codein{dirv0[0][1]}
of its first coordinate gives \codein{d0}
and
the second to last coordinates \codein{dirv0[1:]} gives \codein{v0}.
(Note the value of \codein{dirv0[0][0]} is redundant
and is only used to make \codein{dirv0[0]} an entropy.)
Since the auxiliary space is expanded,
the maps
\codein{dirindexX} and \codein{dirindexY}
and the projection
\codein{dirproj}
are modified accordingly.

To see how the \codein{NPiMCMC} function with
\codein{auxkernel[n]} replaced by \codein{dirauxkernel[n]} and
\codein{involution[n]} replaced by \codein{dirinvolution[n]}
is equivalent to \codein{DirectionNPiMCMC},
we first consider
the density of
\codein{dirauxkernel[k0]}
at \codein{dirproj((x0,dirv0),k0)}.

\begin{code}[numbers=none]
    pdfdirauxkernel[k0](x0[:dirindexX(k0)], dirv0[:dirindexY(k0)])
  = pdfdirauxkernel[k0](x0[:indexX(k0)], dirv0[:1+indexY(k0)])
  = pdfcoin(dirv0[0][1]) * pdfnormal(dirv0[0][0]) * pdfauxkernel[k0](x0[:indexX(k0)], dirv0[1:1+indexY(k0)])
  = 0.5 * pdfnormal(dirv0[0][0]) * pdfauxkernel[k0](proj((x0,v0),k0))
\end{code}
where
\codein{v0 = dirv0[1:]}.
A similar argument can be made for
\codein{pdfdirauxkernel[k](dirproj((x,dirv),k))},
which makes the acceptance probability in \codein{NPiMCMC}
identical to that in \codein{DirectionNPiMCMC}.
Moreover,
writing \codein{d0} for \codein{dirv0[0][1]},
the absolute value of the Jacobian determinant
of \codein{dirinvolution[n]} at
\codein{(x0,dirv0)} is
\codein{absdetjacbij[n][d0](x0,v0)}.
Most importantly,
\codein{dirinvolution[n]} is now
involutive.
Hence, \codein{NPiMCMC} is the same as
\codein{DirectionNPiMCMC}.

\subsection{Persistent Hybrid NP-iMCMC Algorithm}
\label{sec: persistent np-imcmc}

It is known that irreversible transition kernels
(those that does not satisfy detailed balance)
have better mixing times,
i.e.~converges more quickly to the target distribution,
compared to reversible ones.
The following technique gives us a method to
transform Hybrid NP-iMCMC algorithms to
irreversible ones that still preserves the target distribution.
The key is to compose
the Hybrid NP-iMCMC sampler with a transition kernel
so that
the resulting algorithm does not satisfy detailed balance.

The \defn{Persistent Hybrid NP-iMCMC} algorithm
is a MCMC algorithm similar to
the Direction Hybrid NP-iMCMC sampler
in which the direction variable is used
to determine auxiliary kernels
($\set{\nkernel{n}_1:\nparsp{n}\kernelto\nauxsp{n}}_n $
or
$\set{\nkernel{n}_2:\nparsp{n}\kernelto\nauxsp{n}}_n $) and
bijections
($\set{\nbij{n}:\nparsp{n}\times\nauxsp{n}\to\nparsp{n}\times\nauxsp{n}}_n $
or $\set{\inv{\nbij{n}}:\nparsp{n}\times\nauxsp{n}\to\nparsp{n}\times\nauxsp{n}}_n $)
being used.
The difference is that
Persistent Hybrid NP-iMCMC keeps track of the direction
(instead of sampling a fresh one in each iteration)
and flips it strategically to
make the resulting algorithm irreversible.

\paragraph{Pseudocode}

This sampler can be expressed in SPCF as
\codein{PersistentNPiMCMC}
in \cref{code: persistent np-imcmc}.
(Terms specific to this technique are highlighted.)
In addition to the SPCF terms in \codein{DirectionNPiMCMC},
we assume there is a SPCF term
\codein{auxkernel[n][d]} such that
\codein{auxkernel[n][True]} implements
the auxiliary kernel
$\nkernel{n}_1: \nparsp{n} \kernelto \nauxsp{n}$ and
\codein{pdfauxkernel[n][True]}
its pdf $\nkernelpdf{n}_1$
and similarly for
\codein{auxkernel[n][False]}
and
\codein{pdfauxkernel[n][False]}.
Note that
\codein{PersistentNPiMCMC}
updates samples
on the space $\nparsp{n} \times \Bool$,
which can easily be marginalised to $\nparsp{n}$
by taking the first $\iparsp(n)$ components.


\begin{figure}
\begin{code}[
  label={code: persistent np-imcmc},
  caption={Pseudocode of the Persistent Hybrid NP-iMCMC algorithm}]
def PersistentNPiMCMC(t0,`d0`):
  k0 = dim(t0)                                  # initialisation step
  x0 = [(e, coin) if Type(e) in R else (normal, e) for e in t0]
  v0 = `auxkernel[k0][d0]`(x0)                     # stochastic step
  (x,v) = `bijection[k0][d0]`(x0,v0)              # deterministic step
  n = k0                                        # extend step
  while not intersect(instance(x),support(w)):
    x0 = x0 + [(normal, coin)]*(indexX(n+1)-indexX(n))
    v0 = v0 + [(normal, coin)]*(indexY(n+1)-indexY(n))
    n = n + 1
    (x,v) = `bijection[n][d0]`(x0,v0)
  (@@)`d = not d0`
  t = intersect(instance(x),support(w))[0]      # accept/reject step
  k = dim(t)
  return (t, `not d`) if uniform < min{1, w(t)/w(t0) *
                               pdfauxkernel[k][d](proj((x,v),k))/
                                 pdfauxkernel[k0][d0](proj((x0,v0),k0)) *
                               pdfpar[n](x)/pdfpar[n](x0) *
                               pdfaux[n](v)/pdfaux[n](v0) *
                               absdetjacbij[n][d0](x0,v0)}
           else (t0, `d`)
\end{code}
\begin{code}[label={code: persistent np-imcmc (correctness)},
  caption={Pseudocode for \texttt{perauxkernel} and \texttt{perinvolution}}]
def perauxkernel[n](perx0)
  d0 = perx0[0][1]
  x0 = perx0[1:]
  v0 = auxkernel[n][d0](x0)
  return v0

def perinvolution[n](perx0,v0)
  d0 = perx0[0][1]
  x0 = perx0[1:]
  (x,v) = bijection[n][d0](x0,v0)
  d = not d0
  return ([(perx0[0][0],d)] + x, v)

def perindexX(n): return 1+indexX(n)
def perindexY(n): return indexY(n)
def perproj((x,v),k): return (x[:perindexX(k)],v[:perindexY(k)])

def flipdir(perx0):
  d0 = perx0[0][1]
  perx0[0][1] = not d0
  return perx0
\end{code}
\end{figure}

\paragraph{Correctness}

We show that
\codein{PersistentNPiMCMC}
can be formulated as a composition
of two instances of
\codein{NPiMCMC} (\cref{code: np-imcmc}).

Consider the \codein{NPiMCMC} with
auxiliary kernel
\codein{perauxkernel[n]} and
involution
\codein{perinvolution[n]}
in \cref{code: persistent np-imcmc (correctness)}.
In this case,
the parameter space is expanded to include
the direction variable
so that a parameter variable \codein{perx}
is on the space $\entsp \times \nparsp{n}$
where \codein{perx[0][1]} gives \codein{d}
and \codein{perx[1:]} gives \codein{x}.
Since the parameter space is expanded,
the maps
\codein{perindexX} and \codein{perindexY}
and projection \codein{perproj}
are modified accordingly.

Again, we first consider
the density of
\codein{perauxkernel[k0]}
at \codein{perproj((perx0,v0),k0)}.

\begin{code}[numbers=none]
    pdfperauxkernel[k0](perx0[:perindexX(k0)], v0[:perindexY(k0)])
  = pdfauxkernel[k0][perx[0][1]](perx0[1:1+indexX(k0)], v0[:indexY(k0)])
  = pdfauxkernel[k0][d0](x0[:indexX(k0)], v0[:indexY(k0)])
  = pdfauxkernel[k0][d0](proj((x0,v0),k0))
\end{code}
where
\codein{d0 = perx0[0][1]} and
\codein{x0 = perx0[1:]}.
A similar argument can be made for
\codein{pdfperauxkernel[k](perproj((perx,v),k))}.
Moreover, the absolute value of the Jacobian determinant
of \codein{perinvolution[n]} at
\codein{(perx0,v0)}
is
\codein{absdetjacbij[n][d0](x0,v0)}.
Hence,
the acceptance probability in \codein{NPiMCMC}
is identical to that in \codein{PersistentNPiMCMC}.

The \codein{NPiMCMC} function with
\codein{auxkernel[n]} replaced by \codein{perauxkernel[n]} and
\codein{involution[n]} replaced by \codein{perinvolution[n]}
is \emph{almost} equivalent to
\codein{PersistentNPiMCMC},
except
\codein{NPiMCMC}
induces a transition kernel on $\entsp \times \nparsp{n}$
whereas \codein{PersistentNPiMCMC}
induces a transition kernel on $\Bool \times \nparsp{n}$; and
when the proposal \codein{t} is accepted,
\codein{NPiMCMC}
returns \codein{d} whereas
\codein{PersistentNPiMCMC} returns
\codein{not d}.

These differences can be reconciled by
composing \codein{NPiMCMC}
with \codein{flipdir},
which is an instance of \codein{NPiMCMC}
which skips the stochastic step
and has an involution
that flips the direction variable stored
in \codein{perx0[0][1]}.
The composition generates a Markov chain
on $\entsp \times \nparsp{n}$ and
marginalising it to a Markov chain on
$\Bool \times \nparsp{n}$
gives us the same result as
\codein{PersistentNPiMCMC}.

\clearpage

\section{Multiple Step Nonparametric Involutive MCMC}
\label{app: multiple step np-imcmc}

In this section, we study
the \defn{Multiple Step NP-iMCMC} sampler,
a generalisation of the Hybrid NP-iMCMC sampler (\cref{sec: np-imcmc algorithm})
(and also of NP-iMCMC (\cref{fig:np-imcmc algo})),
where the involution is applied
multiple times to generate a proposed state.

\subsection{Motivation}

\cref{hnp-imcmc step: extend} in the
Hyrbid NP-iMCMC sampler may seem inefficient. While it terminates almost surely (thanks to \cref{hass: almost surely terminating tree}),
This is because
whenever the dimension of the state is changed,
the algorithm has to ``re-run'' the involution again
(\cref{hnp-imcmc step: extend}.ii).
This means
the expected number of iterations may be infinite.

To remedy this problem, we introduce two
new concepts:
\begin{itemize}
  \item The \emph{slice function} which
  might make ``re-runs''
  (\cref{hnp-imcmc step: extend}.ii) quicker.

  \item The \emph{Multiple Step NP-iMCMC}
  sampler, a generaliation
  of Hyrbid NP-iMCMC,
  which uses a list of bijections
  to move around the state space.
\end{itemize}

\subsection{Slice function}
\label{sec: slice function}

For each dimension $n\in\Nat$,
we call the measurable function
$\slicefn{n}: \nstates{n}\to
  \entsp^{\iparsp(n)-\iparsp(n-1)} \times
  \entsp^{\iauxsp(n)-\iauxsp(n-1)}$
a \defn{slice} of
the endofunction $\ninvo{n}$ on $\nstates{n}$ if
it captures the movement of the $n$-th
dimensional states with
an instance of dimension \emph{lower than} $n$.
Formally, this means
\[
  \slicefn{n}(\enta, \auxa)
  = (\drop{n-1} \circ \ninvo{n})(\enta, \auxa)
  \quad\text{if }
  \traceb\in\instances{\enta}\cap\support{\tree}
  \text{ and }
  \len{\traceb} < \iparsp(n).
\]
Note we can always define
a slice of $\ninvo{n}$
by setting
$\slicefn{n} := \drop{n-1} \circ \ninvo{n}$.

With the slice function $\slicefn{n}$
defined for each involution $\ninvo{n}$,
\cref{hnp-imcmc step: extend}.ii in the
Hyrbid NP-iMCMC algorithm
(\cref{sec: np-imcmc algorithm}):
\begin{enumerate}[i.]
  \item[(Step 4.ii)]
  Move around the $n+1$-dimensional state
  space $\nparsp{n+1}\times\nauxsp{n+1}$ and
  compute the new state
  by applying the involution $\ninvo{n+1}$
  to the initial state
  $(\enta_0 \concat \entb_0, \auxa_0 \concat \auxb_0)$;
\end{enumerate}
can be replaced by the following Step 4.ii':
\begin{itemize}
  \item[(Step 4.ii')]
  Replace and extend the
  $n$-dimensional new state
  from $(\enta, \auxa)$
  to
  a state
  $(\enta \concat \entb, \auxa \concat \auxb)$
  of dimension $n+1$
  where $(\entb,\auxb)$
  is the result of
  $\slicefn{n+1}(\enta_0 \concat \entb_0,
  \auxa_0 \concat \auxb_0) $.
\end{itemize}

By \cref{hass: partial block diagonal inv},
the first $n$ components of the new
$n+1$-dimensional state
$\ninvo{n+1}(\enta_0 \concat \entb_0,
  \auxa_0 \concat \auxb_0)$
is
\[
  \take{n}(\ninvo{n+1}(\enta_0 \concat \entb_0,
  \auxa_0 \concat \auxb_0))
  =
  \ninvo{n}(\take{n}(\enta_0 \concat \entb_0,
    \auxa_0 \concat \auxb_0))
  =
  \ninvo{n}(\enta_0, \auxa_0)
  = (\enta, \auxa)
\]
and by the definition of slice
the $(n+1)$-th component of the new state is
\[
  \drop{n}(\ninvo{n+1}(\enta_0 \concat \entb_0,
  \auxa_0 \concat \auxb_0))
  =
  \slicefn{n+1}(\enta_0 \concat \entb_0,
  \auxa_0 \concat \auxb_0).
\]
Hence the new state computed by
Step 4.ii and Step 4.ii' are the same.

The slice function $\slicefn{n}$ is useful when
the involution is computationally expensive
but has a light slice function.
After Step 4.ii is replaced by Step 4.ii',
the Hyrbid NP-iMCMC sampler
need only to run the involution once
(\cref{hnp-imcmc step: deterministic})
and any subsequent ``re-runs'' (\cref{hnp-imcmc step: extend})
can be performed by the slice function.

If the slice function $\slicefn{n}$ is
implemented as \codein{slice[n]} in SPCF,
Line 11 in \codein{NPiMCMC}
can be changed from
\codein{(x,v) = involution[n](x0,v0)}
to
\begin{code}[numbers=none]
(x',v') = slice[n](x0,v0); (x,v) = (x + x', v + v')
\end{code}

\subsubsection{Example (HMC)}
\label{sec: hmc slice}

Momentum update is the most computationally
heavy component in the HMC sampler.
Hence it would be useful if it has a
lightweight slice function.

In the setting of Hyrbid NP-iMCMC,
we assume the trace space
$\traces$ is a list measurable space of
the Real measurable space $\Real$.
Then, the $n$-dimensional momentum update
$\momstep_{k}$ is an endofunction on
$\Real^n \times \Real^n$ defined as
\[
  \momstep_{k}(\posa,\moma) := (\posa,\moma-k\grad U (\posa))
\]
where
$U (\posa) := -\log\
\max\{\tree(\traceb) \mid \traceb \in \instances{\posa} \}.$
is the $n$-dimensional potential energy.

Given a $n$-dimensional state
$(\posa,\moma)$ where
$\traceb \in \instances{\posa}\cap \support{\tree}$
has dimension lower than $n$,
the gradient of the potential energy $U$ at $\posa$
w.r.t.~the $n$-th coordinate is zero.
Hence,
\[
  (\drop{n-1}\circ\momstep_{k})(\posa,\moma)
  = \drop{n-1}(\posa,\moma-k\grad U (\posa))
  = (\seqindex{\posa}{n},\seqindex{\moma}{n}),
\]
and the slice of the momentum update
$\momstep_{k}$
is simply the projection
$\drop{n-1}(\posa,\moma) :=
(\seqindex{\posa}{n},\seqindex{\moma}{n}).$

However, not \emph{all} $2L$
momentum updates in the re-runs of
the leapfrog function $\leapfrog$ can be replaced
by its slice $\drop{n-1}$.
This is because
when the dimension increments to say $n+1$,
only the extended initial state
$(\enta_0\concat\entb_0, \auxa_0\concat\auxb_0)$
has the property that it has an instance
with dimension lower than $n+1$
and not the intermediate states.

\subsection{Multiple Step NP-iMCMC}
\label{sec: multiple step npimcmc}

Say the involution of a Hybrid NP-iMCMC sampler
is comprised of a list of
bijective endofunctions on $\nstates{n}$,
namely $\ninvo{n} :=
\nelem{n}{L} \circ
\dots \circ \nelem{n}{2} \circ \nelem{n}{1}$.
To compute the new state,
we can either
\begin{itemize}
  \item
  apply the involution $\ninvo{n}$
  to the initial state
  $(\enta_0,\auxa_0)$
  in one go and
  check whether the result $(\enta, \auxa)$
  has an instance in the
  support of $\tree$, or

  \item
  for each $\ell =1,\dots, L$,
  apply the endofunction $\nelem{n}{\ell}$
  to
  $(\enta_{\ell-1},\auxa_{\ell-1})$
  and (immediately)
  check whether the intermediate state
  $(\enta_\ell,\auxa_\ell)$ has
  an instance in the support of $\tree$.
\end{itemize}

The Hybrid NP-iMCMC sampler presented in \cref{sec: np-imcmc algorithm}
takes the first option as
it is conceptually simpler.
However,
the second option is just as valid
and more importantly
give us the requirements needed to
replace each endofunction
by its slice in any subsequent ``re-runs''.

\subsubsection{The Multiple Step NP-iMCMC Algorithm}

Assume the target density $\tree$
satisfies \cref{vass: integrable tree,vass: almost surely terminating tree}; and
for each $n\in\Nat$,
there is a probability kernel $\nkernel{n}$ and
a list of $L$ bijective endofunctions
$\set{\nelem{n}{\ell}:\nstates{n}\to\nstates{n}
\mid \ell=1,\dots,L}_n$
such that for each $\ell$,
$\set{\nelem{n}{\ell}}_n$ satisfies the
\invoass{}
(\cref{vass: partial block diagonal inv})
and for each $n\in\Nat$,
their composition
$\nelem{n}{L} \circ \dots \circ \nelem{n}{1}$
is involutive.

Let $\slicefn{n}_{\ell}$ be a slice of
the endofunction $\nelem{n}{\ell}$.
Given a SPCF program $\terma$ with
weight function $\tree$ on the trace space,
the \defn{Multiple Step NP-iMCMC} sampler
generates a Markov chain as follows.
Given a current sample $\traceb_0$ of dimension $k_0$,

\begin{enumerate}[1.]
  \item (Initialisation Step)
  Form a ${k_0}$-dimensional parameter variable
  $\enta_0 \in \nparsp{k_0}$ by
  pairing each value
  $\seqindex{\traceb_0}{i}$ in
  $\traceb_0$ with
  a randomly drawn value $\trace$
  of the other type to make
  a pair
  $(\seqindex{\traceb_0}{i}, \trace)$ or $(\trace, \seqindex{\traceb_0}{i})$
  in the entropy space $\entsp$.

  \item (Stochastic Step)
  Introduce randomness to the sampler by
  drawing a ${k_0}$-dimensional value
  $\auxa_0 \in \nauxsp{k_0}$
  from the probability measure
  $\nkernel{{k_0}}(\enta_0, \placeholder)$.

  \item (Multiple Step)
  Initialise $\ell = 1$.
  If $\ell = L$,
  proceed to Step 4
  with $\traceb$ as the proposed sample;
  otherwise

  \begin{enumerate}[3.1.]
    \item (Deterministic Step)
    Compute the $\ell$-th state
    $(\enta_{\ell}, \auxa_{\ell})$ by applying
    the endofunction $\nelem{n}{\ell}$ to
    $(\enta_{\ell-1}, \auxa_{\ell-1})$ where
    $n = \dim{(\enta_{\ell-1})}$.

    \item (Extend Step)
    Test whether any instance $\traceb$ of
    $\enta_{\ell}$ is
    in the support of $\tree$.
    If so,
    go to Step 3 with an incremented $\ell$;
    otherwise
    (none of the instances of $\enta_{\ell}$ is
    in the support of $\tree$),
    \begin{enumerate}[3.2.i]
      \item
      Extend and replace
      the $n$-dimensional initial state
      from $(\enta_0,\auxa_0)$
      to a state $(\enta_0 \concat \entb_0, \auxa_0 \concat \auxb_0)$
      of dimension $n+1$
      where $\entb_0$ and $\auxb_0$ are values
      drawn randomly
      from $\measure{\entsp^{\iparsp{(n+1)}-\iparsp{(n)}}}$ and
      $\measure{\entsp^{\iauxsp{(n+1)}-\iauxsp{(n)}}}$
      respectively.

      \item
      For each $i=1,\dots,\ell$,
      extend and replace
      the $n$-dimensional $i$-th
      intermediate state from
      $(\enta_i, \auxa_i)$
      to a state $(\enta_i \concat \entb_i,
      \auxa_i \concat \auxb_i)$
      of dimension $n+1$
      where $(\entb_i, \auxb_i)$
      is the result of
      $\slicefn{n+1}_{i}
      (\enta_{i-1}, \auxa_{i-1})$.

      \item
      Go to Step 3.2 with the
      extended $n+1$-dimensional states $(\enta_i, \auxa_i)$
      for $i=0,\dots,\ell$.
    \end{enumerate}
  \end{enumerate}

  \item (Accept/reject Step)
  Accept the proposed sample $\traceb$
  as the next sample with probability
  \begin{align*}
    \label{eq: ms np-imcmc acceptance ratio}
    \min\bigg\{1; \;
    & \frac
      {
        \tree{(\traceb)}\cdot
        \nkernelpdf{k}(\proj{n}{k}(\enta_L, \auxa_L)) \cdot
        \nparpdf{n}(\enta_L)\cdot\nauxpdf{n}(\auxa_L)}
      {
        \tree{(\traceb_0)}\cdot
        \nkernelpdf{k_0}(\proj{n}{k_0}(\enta_0, \auxa_0)) \cdot
        \nparpdf{n}(\enta_0)\cdot\nauxpdf{n}(\auxa_0)}
      \cdot
    \prod_{\ell=1}^{L}
    \abs{\det(
        \grad{\nelem{n}{\ell}(
          \enta_{\ell-1} , \auxa_{\ell-1})})}
    \bigg\}
  \end{align*}
  where $n = \dim{(\enta_0)} = \dim{(\auxa_0)}$,
  $k$ is the dimension of $\traceb$ and
  $k_0$ is the dimension of ${\traceb_0}$;
  otherwise reject the proposal and repeat $\traceb_0$.
\end{enumerate}

Unlike in Hybrid NP-iMCMC,
the Multiple Step NP-iMCMC sampler computes
the intermediate states
$\set{(\enta_{\ell}, \auxa_{\ell})}_{\ell=1,\dots,L}$
one-by-one, making sure in Step 3.2 that
each of these state $(\enta_{\ell}, \auxa_{\ell})$
has an instance in the support of $\tree$.
Hence when the dimension is incremented
from $n$ to $n+1$
we can use the slice functions
to extend intermediate states to states of
dimension $n+1$.

\begin{remark}
  The Multiple Step NP-iMCMC sampler can be
  seen as
  a generalisation of Hybrid NP-iMCMC
  (and hence a generalisation of NP-iMCMC (\cref{sec: Hybrid NP-iMCMC is a Generalisation of NP-iMCMC}))
  as we can recover
  Hybrid NP-iMCMC by setting $L$ to one and taking the
  involution $\ninvo{n}$ as the only endofunction in Multiple Step NP-iMCMC.
\end{remark}

\subsubsection{Pseudocode of Multiple Step NP-iMCMC Algorithm}

\cref{code: multistep np-imcmc} gives
a SPCF implementation of
Multiple Step NP-iMCMC as
the function \codein{MultistepNPiMCMC}
with
target density \codein{w};
auxiliary kernel \codein{auxkernel[n]}
and its density \codein{pdfauxkernel[n]} and
\codein{L} number of endofunctions
\codein{f[n][l]}
(\codein{l} ranges from \codein{1} to \codein{L})
for each dimension \codein{n} with slice
\codein{slice[n][l]} and the absolute value of
its Jacobian determinant
\codein{absdetjacf[n][l]};
parameter and auxiliary index maps
\codein{indexX} and \codein{indexY} and
projection \codein{proj}.

\begin{figure}[t]
\begin{code}[caption={Code for Multiple Step NP-iMCMC},label={code: multistep np-imcmc}]
def MultistepNPiMCMC(t0):
  k0 = dim(t0)                                          # initialisation step
  x0 = [(e, coin) if Type(e) in R else (normal, e) for e in t0]
  v0 = auxkernel[k0](x0)                                # stochastic step

  # start of multiple step
  n = k0
  (x[0],v[0]) = (x0,v0)
  for l in range(1,L+1):
    (@@)`(x[l],v[l]) = f[n][l](x[l-1],v[l-1])`                # deterministic step
    while not intersect(instance(x[l]),support(w)):     # extend step
      x[0] = x[0] + [(normal, coin)]*(indexX(n+1)-indexX(n))
      v[0] = v[0] + [(normal, coin)]*(indexY(n+1)-indexY(n))
      for i in range(1,l+1):
        (@@)`(y,u) = slice[n+1][i](x[i-1],v[i-1])`
        (@@)`(x[i],v[i]) = (x[i]+y, v[i]+u)`
      n = n + 1
  (x0,v0) = (x[0],v[0])
  (x,v) = (x[L],v[L])
  # end of multiple step

  t = intersect(instance(x),support(w))[0]              # accept/reject step
  k = dim(t)
  return t if uniform < min{1,w(t)/w(t0) *
                        pdfauxkernel[k](proj((x,v),k))/
                          pdfauxkernel[k0](proj((x0,v0),k0)) *
                        pdfpar[n](x)/pdfpar[n](x0) *
                        pdfaux[n](v)/pdfaux[n](v0) *
                        product([absdetjacf[n][l](x[l-1],v[l-1]) for l in range(1,L+1)])}
        else t0
\end{code}
\end{figure}

\subsubsection{Correctness of Multiple Step NP-iMCMC Algorithm}

The Multiple Step NP-iMCMC sampler cannot be formulated
as an instance of Hybrid NP-iMCMC
and requires a separate proof.
Nonetheless, the arguments are similar.

\begin{itemize}
  \item
  \cref{prop: tree and extree} tells us that
  as long as $\tree$ almost surely
  terminates (\cref{vass: almost surely terminating tree}),
  the measure of a $n$-dimensional
  parameter variable not having any instances
  in the support of $\tree$ tends to zero
  as the dimension $n$ tends to infinity.
  As $\nelem{n}{\ell}$ is bijective
  (and hence invertible),
  the Multiple Step NP-iMCMC sampler
  almost surely satisfies
  the condition in the loop
  in Step 3.2
  and hence almost surely terminates.

  \item Next, we identify the state distribution
  of Multiple Step NP-iMCMC.
  We say a $n$-dimensional state
  $(\enta,\auxa)$ is \defn{valid} if
  \begin{enumerate}[(i)]
    \item For all $\ell=1,\dots, L$,
    $\instances{\enta_{\ell}} \cap \support{\tree} \not=\varnothing$
    where
    $(\enta_0, \auxa_0) := (\enta, \auxa)$ and
    $(\enta_{\ell},\auxa_{\ell})
    := \nelem{n}{\ell}
      (\enta_{\ell-1},\auxa_{\ell-1})$; and

    \item For all $\ell=1,\dots, L$,
    $\instances{\entb_{\ell}} \cap \support{\tree} \not=\varnothing$
    where
    $(\entb_0, \auxb_0) := (\enta, \auxa)$ and
    $(\entb_{L-\ell+1},\auxb_{L-\ell+1})
    := \inv{\nelem{n}{\ell}}
      (\entb_{L-\ell},\auxb_{L-\ell})$; and

    \item For all $k < n$,
    $\take{k}(\enta,\auxa)$ is not a valid state.
  \end{enumerate}
  Then, we can define the state distribution
  and show that the state movement in Multiple
  Step NP-iMCMC is invariant against this distribution.

  \item Finally,
  we conclude by
  \cref{lemma: marginalised distribution is the target distribution} that
  the Multiple Step NP-iMCMC sampler is correct.
\end{itemize}

\subsubsection{Transforming Multiple Step NP-iMCMC Sampler}
\label{sec: tecniques on ms np-imcmc}

Recall we discussed three techniques in
\cref{app: variants},
each when applied to the Hybrid NP-iMCMC sampler,
improve its flexibility and/or efficiency.
We now see how these techniques can be applied
to Multiple Step NP-iMCMC.

\subsubsection{State-dependent Multiple Step NP-iMCMC Mixture}
\label{sec: state-dependent mixture of ms np-imcmc}

\begin{figure}
\begin{code}[label={code: mixture ms np-imcmc}, caption={Pseudocode of the State-dependent Multiple Step NP-iMCMC Mixture algorithm}]
def MixtureMSNPiMCMC(t0):
  k0 = dim(t0)                                  # initialisation step
  x0 = [(e, coin) if Type(e) in R else (normal, e) for e in t0]
  (@@)`m = mixkernel(x0)`                             # mixture step
  v0 = auxkernel[k0][m](x0)                     # stochastic step

  # start of multiple step
  n = k0
  (x[0],v[0]) = (x0,v0)
  for l in range(1,L+1):
    (x[l],v[l]) = `f[n][l][m]`(x[l-1],v[l-1])                # deterministic step
    while not intersect(instance(x[l]),support(w)):     # extend step
      x[0] = x[0] + [(normal, coin)]*(indexX(n+1)-indexX(n))
      v[0] = v[0] + [(normal, coin)]*(indexY(n+1)-indexY(n))
      for i in range(1,l+1):
        (y,u) = `slice[n+1][i][m]`(x[i-1],v[i-1])
        (x[i],v[i]) = (x[i]+y, v[i]+u)
      n = n + 1
  (x0,v0) = (x[0],v[0])
  (x,v) = (x[L],v[L])
  # end of multiple step

  t = intersect(instance(x),support(w))[0]      # accept/reject step
  k = dim(t)
  return t if uniform < min{1, w(t)/w(t0) *
                               pdfauxkernel[k][m](proj((x,v),k))/
                                 pdfauxkernel[k0][m](proj((x0,v0),k0)) *
                               pdfpar[n](x)/pdfpar[n](x0) *
                               pdfaux[n](v)/pdfaux[n](v0) *
                               (@@)`pdfmixkernel(proj(x,k),m)/`
                                 (@@)`pdfmixkernel(proj(x0,k0),m)` *
                               product([absdetjacf[n][m](x[l-1],v[l-1]) for l in range(1,L+1)])}
           else t0
\end{code}
\begin{code}[label={code: mixture ms np-imcmc (correctness)},
  caption={Pseudocode for the correctness of State-dependent Multiple Step NP-iMCMC Mixture}]
def mixauxkernel[n](x0):
  m = mixkernel(x0)
  v0 = auxkernel[n][m](x0)
  return m + v0

def mixf[n][l](x0,mixv0):
  m = mixv0[:a]
  v0 = mixv0[a:]
  (x,v) = f[n][l][m](x0,v0)
  return (x,m + v)

def mixslice[n][l](x0,mixv0):
  m = mixv0[:a]
  v0 = mixv0[a:]
  (y,u) = slice[n][l][m](x0,v0)
  return (y,u)

def mixindexX(n): return indexX(n)
def mixindexY(n): return a + indexY(n)
def mixproj((x,v),k): return (x[:mixindexX(k)],v[:mixindexY(k)])
\end{code}
\end{figure}

This technique allows us to `mix'
Multiple Step NP-iMCMC samplers
in such a way that the resulting sampler still
preserves the posterior.
Given a collection of
Multiple Step NP-iMCMC samplers,
indexed by $\mixa \in \mixsp$ for some $\alpha \in \Nat$,
the \emph{State-dependent Multiple Step NP-iMCMC Mixture} sampler
draws an indicator $\mixa \in \mixsp$
from a probability measure
$\mixkernel(\enta_0, \placeholder)$ on $\mixsp$
where $\mixkernel:\bigcup_{n\in\Nat}\nparsp{n} \kernelto \mixsp$
is a probability kernel and
$\enta_0$ is the parameter variable
constructed from
the current sample $\traceb_0$ in
\cref{hnp-imcmc step: initialisation}.
A proposal $\traceb$ is then generated by running
Steps 2 and 3 of
the $\mixa$-indexed
Multiple Step NP-iMCMC sampler,
and is accepted with
a modified probability that
includes the probability of picking $\mixa$.

\paragraph{Pseudocode}

\cref{code: mixture ms np-imcmc} gives the
SPCF implementation of this sampler as
the \codein{MixtureMSNPiMCMC} function.
(Terms specific to this technique
are highlighted.)
We assume the SPCF term
\codein{mixkernel}
implements the mixture kernel $\mixkernel$;
\codein{pdfmixkernel} implements
the probability density function
$\mixpdf$; and
for each $\mixa \in \mixsp$ and $n\in\Nat$,
\codein{auxkernel[n][m]}
implements the auxiliary kernel and
\codein{pdfauxkernel[n][m]}
implements its density;
\codein{f[n][l][m]}
implements the endofunction
\codein{slice[n][l][m]}
implements its slice
and
\codein{absdetjacf[n][l][m]} implements
the absolute value of the
Jacobian determinant of
the endofunction
of the \codein{m}-indexed
Multiple Step NP-iMCMC sampler.

\paragraph{Correctness}

\codein{MixtureMSNPiMCMC} can be formulated as
an instance of \codein{MultistepNPiMCMC}
with
auxiliary kernel \codein{mixauxkernel[n]}
and its density \codein{mixpdfauxkernel[n]} and
\codein{L} number of endofunctions
\codein{mixf[n][l]}
(\codein{l} ranges from \codein{1} to \codein{L})
for each dimension \codein{n} with slice
\codein{mixslice[n][l]} and the absolute value of
its Jacobian determinant
\codein{absdetjacmixf[n][l]};
parameter and auxiliary index maps
\codein{mixindexX} and \codein{mixindexY} and
projection \codein{mixproj}
given in
\cref{code: mixture ms np-imcmc (correctness)}.

\subsubsection{Direction Multiple Step NP-iMCMC}
\label{sec: auxiliary direction ms np-imcmc}

\begin{figure}
\begin{code}[label={code: direction ms np-imcmc}, caption={Pseudocode of the Direction Multiple Step NP-iMCMC algorithm}]
def DirectionMSNPiMCMC(t0):
  (@@)`d0 = coin`                                     # direction step
  k0 = dim(t0)                                  # initialisation step
  x0 = [(e, coin) if Type(e) in R else (normal, e) for e in t0]
  v0 = auxkernel[k0](x0)                        # stochastic step
  n = k0                                        # multiple step
  (x[0],v[0]) = (x0,v0)
  for l in range(1,L+1):
    (x[l],v[l]) = `f[n][l][d0]`(x[l-1],v[l-1])            # deterministic step
    while not intersect(instance(x[l]),support(w)):     # extend step
      x[0] = x[0] + [(normal, coin)]*(indexX(n+1)-indexX(n))
      v[0] = v[0] + [(normal, coin)]*(indexY(n+1)-indexY(n))
      for i in range(1,l+1):
        (y,u) = `slice[n+1][i][d0]`(x[i-1],v[i-1])
        (x[i],v[i]) = (x[i]+y, v[i]+u)
      n = n + 1
  (x0,v0) = (x[0],v[0])
  (x,v) = (x[L],v[L])
  (@@)`d = not d0`                                            # flip direction (not used)
  t = intersect(instance(x),support(w))[0]      # accept/reject step
  k = dim(t)
  return t if uniform < min{1, w(t)/w(t0) *
                               pdfauxkernel[k](proj((x,v),k))/
                                 pdfauxkernel[k0](proj((x0,v0),k0)) *
                               pdfpar[n](x)/pdfpar[n](x0) *
                               pdfaux[n](v)/pdfaux[n](v0) *
                               product([`absdetjacf[n][l][d0]`(x[l-1],v[l-1]) for l in range(1,L+1)])}
           else t0
\end{code}
\begin{code}[label={code: direction ms np-imcmc (correctness)},
  caption={Pseudocode for the correctness of Direction Multiple Step NP-iMCMC}]
def dirauxkernel[n](x0):
  d0 = coin
  v0 = auxkernel[n](x0)
  return [(normal, d0)] + v0

def dirf[n][l](x,dirv):
  d = dirv[0][1]
  v = dirv[1:]
  if l == dirL: return (x,[(dirv[0][0],not d)] + v)
  else:
    (x,v) = f[n][l][d](x,v)
    return (x, [(dirv[0][0],d)] + v)

def dirslice[n][l](x,dirv):
  d = dirv[0][1]
  v = dirv[1:]
  return slice[n][l][d](x,v)

dirL = L+1
def dirindexX(n): return indexX(n)
def dirindexY(n): return 1+indexY(n)
def dirproj((x,v),k): return (x[:dirindexX(k)], v[:dirindexY(k)])
\end{code}
\end{figure}

This technique allows us to relax
the assumption that the composition
$\nelem{n}{L} \circ
\dots \circ \nelem{n}{2} \circ \nelem{n}{1}$
is involutive.
Assume for $\ell=1,\dots,L$, both sets
$\set{\nelem{n}{\ell}}_n$ and
$\set{\inv{\nelem{n}{\ell}}}_n$
satisfy the \invoass{}
(\cref{vass: partial block diagonal inv}),
the \defn{Direction Multiple Step NP-iMCMC}
sampler randomly employ either
$\nelem{n}{L} \circ
\dots \circ \nelem{n}{2} \circ \nelem{n}{1}$
or
$\inv{\nelem{n}{1}} \circ
\inv{\nelem{n}{2}} \circ
\dots \circ
\inv{\nelem{n}{L}}$
to move around the $n$-dimensional state space
and proposes a new sample.

\paragraph{Pseudocode}

\cref{code: direction ms np-imcmc} gives the
SPCF implementation of this sampler as
\codein{DirectionMSNPiMCMC} function.
(Terms specific to this technique are highlighted.)
We assume for each $n\in\Nat$ and
$\dira \in \Bool$,
the SPCF term
\codein{f[n][l][True]} implements
the endofunction $\nelem{n}{\ell}$ and
\codein{f[n][l][False]} implements
the inverse $\inv{\nelem{n}{L-\ell+1}}$;
\codein{slice[n][l][True]} implements
the slice of $\nelem{n}{\ell}$ and
\codein{slice[n][l][False]} implements
the slice of $\inv{\nelem{n}{L-\ell+1}}$; and
\codein{absdetjacf[n][l][True]} implements
the absolute value of the Jacobian determinant of $\nelem{n}{\ell}$ and
\codein{absdetjacf[n][l][False]} implements
that of $\nelem{n}{L-\ell+1}$.

\paragraph{Correctness}

\codein{DirectionMSNPiMCMC}
can be formulated as an instance of
\codein{MultistepNPiMCMC}
with
auxiliary kernel \codein{dirauxkernel[n]}
and its density \codein{pdfdirauxkernel[n]} and
\codein{dirL} number of endofunctions
\codein{dirf[n][l]}
(\codein{l} ranges from \codein{1} to \codein{dirL})
for each dimension \codein{n} with slice
\codein{dirslice[n][l]} and the absolute value of
its Jacobian determinant
\codein{absdetjacf[n][l]};
parameter and auxiliary index maps
\codein{dirindexX} and \codein{dirindexY} and
projection \codein{dirproj}
given in
\cref{code: direction ms np-imcmc (correctness)}.
Note the \codein{dirf[n]} function denotes the
composition that
flips the direction after applying
the endofunctions $\nelem{n}{\ell}$ for
$\ell=1,\dots,L$
with an inverse
the flips the direction and then
apply the endofunctions $\nelem{n}{L-\ell+1}$ for
$\ell=1,\dots,L$.

\subsubsection{Persistent Multiple Step NP-iMCMC Algorithm}
\label{sec: persistent ms np-imcmc}

\begin{figure}
\begin{code}[
  label={code: persistent ms np-imcmc},
  caption={Pseudocode of the Persistent Multiple Step NP-iMCMC algorithm}]
def PersistentMSNPiMCMC(t0,`d0`):
  k0 = dim(t0)                                  # initialisation step
  x0 = [(e, coin) if Type(e) in R else (normal, e) for e in t0]
  v0 = `auxkernel[k0][d0]`(x0)                    # stochastic step
  n = k0                                        # multiple step
  (x[0],v[0]) = (x0,v0)
  for l in range(1,L+1):
    (x[l],v[l]) = `f[n][l][d0]`(x[l-1],v[l-1])            # deterministic step
    while not intersect(instance(x[l]),support(w)):     # extend step
      x[0] = x[0] + [(normal, coin)]*(indexX(n+1)-indexX(n))
      v[0] = v[0] + [(normal, coin)]*(indexY(n+1)-indexY(n))
      for i in range(1,l+1):
        (y,u) = `slice[n+1][i][d0]`(x[i-1],v[i-1])
        (x[i],v[i]) = (x[i]+y, v[i]+u)
      n = n + 1
  (x0,v0) = (x[0],v[0])
  (x,v) = (x[L],v[L])
  (@@)`d = not d0`                                            # flip direction
  t = intersect(instance(x),support(w))[0]      # accept/reject step
  k = dim(t)
  return (t, `not d`) if uniform < min{1, w(t)/w(t0) *
                       pdfauxkernel[k][d](proj((x,v),k))/
                         pdfauxkernel[k0][d0](proj((x0,v0),k0)) *
                       pdfpar[n](x)/pdfpar[n](x0) *
                       pdfaux[n](v)/pdfaux[n](v0) *
                       product([absdetjacf[n][l][d0](x[l-1],v[l-1]) for l in range(1,L+1)])}
           else (t0, `d`)
\end{code}
\begin{code}[label={code: persistent ms np-imcmc (correctness)},
  caption={Pseudocode for the correctness of Persistent Multiple Step NP-iMCMC}]
def perauxkernel[n](perx0)
  d0 = perx0[0][1]; x0 = perx0[1:]
  v0 = auxkernel[n][d0](x0)
  return v0

def perf[n][l](perx,v)
  d = perx[0][1]; x = perx[1:]
  if l == perL: return ([(perx[0][0],not d)] + x, v)
  else:
    (x,v) = f[n][l][d](x,v)
    return ([(perx[0][0],d)] + x, v)

def perslice[n][l](perx,v)
  d = perx[0][1]; x = perx[1:]
  return slice[n][l][d](x,v)

perL = L+1
def perindexX(n): return 1+indexX(n)
def perindexY(n): return indexY(n)
def perproj((x,v),k): return (x[:perindexX(k)],v[:perindexY(k)])

def flipdir(perx0,v0):
  perx0[0][1] = not perx0[0][1]
  return (perx0,v0)
\end{code}
\end{figure}

This technique gives us a method to
construct irreversible Multiple Step NP-iMCMC samplers.
The key is to persist the direction
from a previous iteration.

The \defn{Persistent Multiple Step NP-iMCMC}
sampler keeps trace of a direction variable
$\dira_0\in\Bool$
(instead of sampling a fresh one at the start)
and use it to determine
the auxiliary kernel
($\nkernel{n}_\true:\nparsp{n}\kernelto\nauxsp{n}$
or
$\nkernel{n}_\false:\nparsp{n}\kernelto\nauxsp{n}$)
and list of endofunctions
($\nelem{n}{L} \circ
\dots \circ
\nelem{n}{1}$ or
$\inv{\nelem{n}{1}} \circ
\dots \circ
\inv{\nelem{n}{L}}$)
employed.
This direction variable is flipped
strategically to
make the resulting algorithm irreversible.

\paragraph{Pseudocode}

\cref{code: persistent ms np-imcmc} gives
the SPCF implementation of this sampler as
the function
\codein{PersistentMSNPiMCMC}.
(Terms specific to this technique are highlighted.)
In addition to the SPCF terms in \codein{DirectionMSNPiMCMC},
the SPCF term
\codein{auxkernel[n][True]} implements
the auxiliary kernel $\nkernel{n}_\true$ and
\codein{pdfauxkernel[n][True]}
implements its density $\nkernelpdf{n}_\true$
and
\codein{auxkernel[n][False]} implements
the auxiliary kernel $\nkernel{n}_\false$ and
\codein{pdfauxkernel[n][False]}
implements its density $\nkernelpdf{n}_\false$.
Note that
\codein{PersistentMSNPiMCMC}
updates samples
on the space $\nparsp{n} \times \Bool$,
which can easily be marginalised to $\nparsp{n}$
by taking the first $\iparsp(n)$ components.

\paragraph{Correctness}

Consider the \codein{MultistepNPiMCMC} function
with
auxiliary kernel \codein{perauxkernel[n]}
and its density \codein{pdfperauxkernel[n]} and
\codein{perL} number of endofunctions
\codein{perf[n][l]}
(\codein{l} ranges from \codein{1} to \codein{perL})
for each dimension \codein{n} with slice
\codein{perslice[n][l]} and the absolute value of
its Jacobian determinant
\codein{absdetjacperf[n][l]};
parameter and auxiliary index maps
\codein{perindexX} and \codein{perindexY} and
projection \codein{perproj}
given in
\cref{code: persistent ms np-imcmc (correctness)}.

The \codein{MultistepNPiMCMC} function with
the primitives indicated in \cref{code: persistent ms np-imcmc (correctness)}
is \emph{almost} equivalent to
\codein{PersistentMSNPiMCMC},
except
\codein{MultistepNPiMCMC}
induces a transition kernel on $\entsp \times \nparsp{n}$
whereas \codein{PersistentMSNPiMCMC}
induces a transition kernel on $\Bool \times \nparsp{n}$; and
when the proposal \codein{t} is accepted,
\codein{MultistepNPiMCMC}
returns \codein{d} whereas
\codein{PersistentMSNPiMCMC} returns
\codein{not d}.

By composing \codein{MultistepNPiMCMC}
with \codein{flipdir} which flips the direction and
marginalising the Markov chain generated
by the composition from
$\entsp \times \nparsp{n}$ to
$\Bool \times \nparsp{n}$,
we get
\codein{PersistentMSNPiMCMC}.

\clearpage

\section{Examples of Nonparametric Involutive MCMC}
\label{app: examples}

In this section, we design \emph{novel} nonparametric samplers using the
Hybrid NP-iMCMC method described in \cref{app: hybrid np-imcmc}
or the Multiple Step NP-iMCMC method
described in \cref{app: multiple step np-imcmc}.

We assume the target density function
$\tree$ on the
trace space $\traces$ is tree representable
and satisfies \cref{hass: integrable tree,hass: almost surely terminating tree}.
Specifications of the auxiliary kernels and involutions are given for each sampler.

\subsection{Nonparametric Metropolis-Hastings}
\label{app: np-mh}

As discussed in \cref{sec: imcmc},
the standard MH sampler can be seen as an instance of
the iMCMC sampler
with the proposal distribution $q$ as the auxiliary kernel and
a swap function as the involution.

Suppose
a proposal kernel $q^{(n)}:\entsp^{n} \kernelto\entsp^{n}$ exists
for each dimension $n\in\Nat$.
Setting
both $\iparsp$ and $\iauxsp$ to be identities
(which means
$\nparsp{n} = \nauxsp{n} = \entsp^n$
for all $n\in\Nat$),
the Hybrid NP-iMCMC method
(\cref{sec: np-imcmc algorithm})
gives an nonparametric extension of the MH sampler.

\begin{code}[label={code: np-mh}, caption={Pseudocode of the NP-MH algorithm}]
def NPMH(t0):
  k0 = dim(t0)                                  # initialisation step
  x0 = [(e, coin) if Type(e) in R else (normal, e) for e in t0]
  v0 = q[k0](x0)                                # stochastic step
  (x,v) = (v0,x0)                               # deterministic step
  while not intersect(instance(x),support(w)):  # extend step
    x0 = x0 + [(normal, coin)]
    v0 = v0 + [(normal, coin)]
    (x,v) = (v0,x0)
  t = intersect(instance(x),support(w))[0]      # accept/reject step
  k = dim(t)
  return t if uniform < min{1, w(t)/w(t0) * pdfq[k](proj((x,v),k))/pdfq[k0](proj((x0,v0),k0))}
           else t0
\end{code}

The \codein{NPMH} function in \cref{code: np-mh}
is a SPCF implementation of this sampler.
It can seen as an instance of
the \codein{NPiMCMC} function
with
\codein{auxkernel[n]} replaced by
the proposal distribution \codein{q[n]},
\codein{pdfauxkernel[n]} replaced by
the pdf of the proposal distribution \codein{pdfq[n]},
\codein{involution[n]} replaced by a swap function, and
\codein{indexX} and \codein{indexY} replaced by identities,
alongside a simplified acceptance ratio as
$(\enta,\auxa) = (\auxa_0,\enta_0) $,
$\nparpdf{n} = \nauxpdf{n}$, and
\[
  \frac{\nparpdf{n}(\enta)}{\nparpdf{n}(\enta_0)} \cdot
  \frac{\nauxpdf{n}(\auxa)}{\nauxpdf{n}(\auxa_0)} \cdot
  \abs{\det{\grad{\ninvo{n}}(\enta_0,\auxa_0)}}
  = 1.
\]

\subsection{Nonparametric Metropolis-Hastings with Persistence}
\label{app: lifted np-mh}

Following the persistent technique described in \cref{sec: persistent np-imcmc} for Hybrid NP-iMCMC,
we can form a nonreversible variant of
the NP-MH sampler described in \cref{app: np-mh}.
We call the resulting algorithm
the Nonparametric Metropolis-Hastings with Persistence (NP-MH-P) sampler.

Suppose
a proposal kernel $q^{(n)}:\entsp^{n} \kernelto\entsp^{n}$ exists
for each dimension $n\in\Nat$.
Similar to NP-MH,
both $\iparsp$ and $\iauxsp$ are
set to be identities
(which means
$\nparsp{n} = \nauxsp{n} = \entsp^n$
for all $n\in\Nat$).
Following \cite{DBLP:journals/corr/abs-0809-0916},
given a parameter variable $\enta \in \entsp^{n}$,
we can partition the auxiliary space $\entsp^{n}$ into two sets
$\auxsetb_{\enta, +} := \set{ \auxa \in \entsp^{n} \mid \eta(\auxa) \geq \eta(\enta)}$ and
$\auxsetb_{\enta, -} := \set{ \auxa \in \entsp^{n} \mid \eta(\auxa) < \enta(\enta)}$
where
$\eta:\entsp^{n} \to \Real$ is some measurable function;
and form two kernels
$\nauxkernel{n}_+$ and $\nauxkernel{n}_{-}$ from $q^{(n)}$ defined as
\begin{equation}
  \label{eq: np-mh persistent auxkernels}
  \nauxkernel{n}_+(\enta, \auxseta)
  :=
  \frac
    {q^{(n)}(\auxseta \cap \auxsetb_{\enta, +})}
    {q^{(n)}(\auxsetb_{\enta, +})}
  \qquad\text{ and }\qquad
  \nauxkernel{n}_{-}(\enta, \auxseta)
  :=
  \frac
    {q^{(n)}(\auxseta \cap \auxsetb_{\enta, -})}
    {q^{(n)}(\auxsetb_{\enta, -})}.
\end{equation}

Using the Persistent (Hybrid) NP-iMCMC sampler as described in \cref{sec: persistent np-imcmc},
a nonreversible variant of NP-MH can be
formed.

\begin{code}[label={code: lifted np-mh}, caption={Pseudocode of the NP-MH with Persistence algorithm}]
def NPMHwP(t0,`d0`):
  k0 = dim(t0)                                  # initialisation step
  x0 = [(e, coin) if Type(e) in R else (normal, e) for e in t0]
  v0 = `auxkernel[k0][d0]`(x0)                    # stochastic step
  (x,v) = (v0,x0)                               # deterministic step
  n = k0                                        # extend step
  while not intersect(instance(x),support(w)):
    x0 = x0 + [(normal, coin)]
    v0 = v0 + [(normal, coin)]
    n = n + 1
    (x,v) = (v0,x0)
  (@@)`d = not d0`
  t = intersect(instance(x),support(w))[0]      # accept/reject step
  k = dim(t)
  return (t, `not d`) if uniform < min{1, w(t)/w(t0) *
                               pdfauxkernel[k][d](proj((x,v),k))/
                                 pdfauxkernel[k0][d0](proj((x0,v0),k0))}
           else (t0, `d`)
\end{code}

The \codein{LiftedNPMH} function in \cref{code: lifted np-mh}
is a SPCF implementation of this sampler.
It can seen as an instance of
the \codein{PersistentNPiMCMC} function
(\cref{code: persistent np-imcmc})
with
\codein{auxkernel[n][True]} implementing
$\nauxkernel{n}_+$ and
\codein{auxkernel[n][False]} implementing
$\nauxkernel{n}_-$.

See how the direction \codein{d0}
(and hence the family of auxiliary kernels) is persisted
if the proposal \codein{t} is accepted.

\subsection{Nonparametric Hamiltonian Monte Carlo}
\label{app: np-hmc}

\begin{figure}
\begin{code}[caption={Pseudocode of the leapfrog step and its slice in NP-HMC}, label={code: steps in np-hmc}]
def leapfrog[n][m][d0](x,v):
  if d0:
    if m 
      return (x, v-ep/2*grad(U)(x))       # half momentum update
    else: return (x+ep*v, v)              # full position update
  else:
    if m 
      return (x, v+ep/2*grad(U)(x))       # inverse of half momentum update
    else: return (x-ep*v, v)              # inverse of full position update

def leapfrogslice[n][m][d0](x,v):
  if d0:
    if m 
      return (x[-1], v[-1])               # slice of half momentum update
    else: return (x[-1]+ep*v[-1], v[-1])  # slice of full position update
  else:
    if m 
      return (x[-1], v[-1])               # slice of inverse of half momentum
    else: return (x[-1]-ep*v[-1], v[-1])  # slice of inverse of full position
\end{code}
\begin{code}[caption={Pseudocode for NP-HMC}, label={code: np-hmc}]
def NPHMC(t0):
  d0 = coin                                     # direction step
  k0 = dim(t0)                                  # initialisation step
  x0 = t0
  v0 = [normal]*k0                              # stochastic step
  # start of multiple step
  n = k0
  (x[0],v[0]) = (x0,v0)
  for m in range(1,3L+1):
    (x[m],v[m]) = leapfrog[n][m][d0](x[m-1],v[m-1]) # deterministic step
    while not intersect(instance(x[m]),support(w)): # extend step
      x[0] = x[0] + [normal]
      v[0] = v[0] + [normal]
      for i in range(1,m+1):
        (y,u) = leapfrogslice[n][m][d0](x[m-1],v[m-1])
        (x[i],v[i]) = (x[i]+y, v[i]+u)
      n = n + 1
  (x0,v0) = (x[0],v[0])
  (x,v) = (x[3L],v[3L])
  d = d0
  # end of multiple step
  t = intersect(instance(x),support(w))[0]      # accept/reject step
  k = dim(t)
  return t if uniform < min{1,w(t)/w(t0) * pdfnormal[n](x)/pdfnormal[n](x0) *
                                           pdfnormal[n](v)/pdfnormal[n](v0)}
        else t0
\end{code}
\end{figure}

The Nonparametric Hamiltonian Monte Carlo (NP-HMC)
is a MCMC sampler
introduced by \cite{DBLP:conf/icml/MakZO21}
for probabilistic programming.
Here we show that it is
an instance of the
Direction Multiple Step NP-iMCMC sampler
(\cref{sec: auxiliary direction ms np-imcmc}).

Typically, the Hamiltonian Monte Carlo (HMC) sampler takes
a target density on $\Real^n$
and proposes
a new state by
simulating $L$ leapfrog steps:
\[
  \leapfrog :=
  (\momstep_{\epsilon/2} \circ \posstep_{\epsilon} \circ \momstep_{\epsilon/2})^L
\]
where
$\momstep_{\epsilon}(\enta,\auxa) :=
(\enta,\auxa-\epsilon \grad{U}(\enta))$ and
$\posstep_{\epsilon}(\enta,\auxa) :=
(\enta + \epsilon \auxa,\auxa)$
are the momentum and position updates
with step size $\epsilon$ respectively.
Notice that that the momentum and position updates
satisfy \invoass{} (\cref{hass: partial block diagonal inv}),
have inverses
$\inv{(\momstep_{\epsilon})} = \momflip \circ \momstep_{\epsilon} \circ \momflip$ and
$\inv{(\posstep_{\epsilon})} = \momflip \circ \posstep_{\epsilon} \circ \momflip$
where $\momflip(\enta, \auxa) := (\enta,-\auxa)$
and slices
$\drop{n-1}$ (for $\momstep_{\epsilon/2}$,
see \cref{sec: hmc slice} for more details) and
$(\enta, \auxa)\mapsto (\seqindex{\enta}{n} +
\epsilon\seqindex{\auxa}{n},\seqindex{\auxa}{n})$
(for $\posstep_{\epsilon}$)
respectively.
Moreover,
the absolute value of the Jacobian determinant
of both updates are
$\abs{\det{\grad{\momstep_{\epsilon}(\enta, \auxa)}}}
= \abs{\det{\grad{\posstep_{\epsilon/2}(\enta, \auxa)}}} = 1$.

Given a target density $\tree$ on
$\bigcup_{n\in\Nat}\Real^n$,
the HMC sampler can be extended using the
Direction Multiple Step NP-iMCMC sampler.
Given an input sample $\traceb_0 \in \Real^{k_0}$,
a $k_0$-dimensional initial state
$(\enta_0, \auxa_0)$ is formed
where $\enta_0 := \traceb_0$
and $\auxa_0$ drawn from
$\nkernel{n}(\enta, \placeholder) := \Gau_n$.
A direction variable $\dira_0$ is drawn to
determine
whether the leapfrog steps $\leapfrog$
or its inverse $\inv{\leapfrog}$
is performed on the initial state
$(\enta_0, \auxa_0)$,
one update at a time,
extending the dimension as required.
Say the initial state is extended to a
$n$-dimensional state
$(\enta_0, \auxa_0)$ and
is traversed to the $n$-dimensional new state
$(\enta^*, \auxa^*)$
which has an instance $\traceb$
in the support of $\tree$.
$\traceb$ is returned with probability
\begin{align*}
  \min\bigg\{1; \;
  & \frac
    {
      \tree{(\traceb)}\cdot
      \pdfGau_{n}(\enta^*)\cdot\pdfGau_{n}(\auxa^*)}
    {
      \tree{(\traceb_0)}\cdot
      \pdfGau_{n}(\enta_0)\cdot\pdfGau_{n}(\auxa_0)}
  \bigg\}.
\end{align*}

\paragraph{Pseudocode of NP-HMC}

\cref{code: steps in np-hmc} gives the
SPCF implementations
\codein{leapfrog[n][m]}
and \codein{leapfrogslice[n][m]},
where
\codein{leapfrog[n][m][True]}
and \codein{leapfrogslice[n][m][True]}
return the \codein{m}-th endofunction
and its slice in
the composition
$(\momstep_{\epsilon/2}\circ
\posstep_{\epsilon}\circ
\momstep_{\epsilon/2})^L$
of $3L$ updates respectively;
and similarly,
\codein{leapfrog[n][m][False]}
and \codein{leapfrogslice[n][m][False]}
return the \codein{m}-th endofunction
and its slice in
$(\inv{\momstep_{\epsilon/2}}\circ
\inv{\posstep_{\epsilon}}\circ
\inv{\momstep_{\epsilon/2}})^L.$

\cref{code: np-hmc} gives the SPCF implementation \codein{NPHMC}
of the NP-HMC sampler
as an instance of the Direction
Multiple Step NP-iMCMC sampler.
Importantly, the expensive
\codein{leapfrog[n][m]} function is called once
for each \codein{m} ranging from \codein{1} to \codein{3L}
the lightweight \codein{leapfrogslice}
is called in any subsequent re-runs.

\paragraph{Correctness}

Since both
$\momstep_{\epsilon/2}$ and
$\posstep_{\epsilon}$
are bijective and satisfies the \invoass{}
(\cref{hass: partial block diagonal inv}),
the correctness of NP-HMC is implied by
the correctness of Direction Multiple Step
NP-iMCMC sampler.

\subsection{Nonparametric Hamiltonian Monte Carlo with Persistence}
\label{app: gen np-hmc}

\begin{figure}
\begin{code}[
  label={code: gen nphmc},
  caption={Pseudocode for the NP-HMC with Persistence algorithm}
  ]
NPHMCwPersistent((x0,v0),d0) = PersistMom(CorruptMom((x0,v0),d0))

def HMCw(x,v): return w(x)

def CorruptMom((x0,v0),d0):
  u = [normal(v0[i]*sqrt(1-alpha^2), alpha^2) for i in range(len(v0))]
  return ((x0,u),d0)

def PersistMom((x0,v0),d0):
  k0 = dim(x0)                            # initialisation step
  # start of multiple step
  n = k0
  (x[0],v[0]) = (x0,v0)
  for m in range(1,3L+1):
    (x[m],v[m]) = leapfrog[n][m][d0](x[m-1],v[m-1])          # deterministic step
    while not intersect(instance(x[m],v[m]),support(HMCw)):  # extend step
      x[0] = x[0] + [normal]
      v[0] = v[0] + [normal]
      for i in range(1,m+1):
        (y,u) = leapfrogslice[n+1][i][d0](x[i-1],v[i-1])
        (x[i],v[i]) = (x[i]+y, v[i]+u)
      n = n + 1
  (@@)`d = not d0`                              # flip direction
  # end of multiple step
  (x,v) = intersect(instance(x[3L],v[3L]),support(HMCw))[0]  # accept/reject step
  return ((x,v), `not d`) if uniform < min{1, HMCw(x,v)/HMCw(x0,v0) *
                                            pdfnormal[n](x[3L])/pdfnormal[n](x[0]) *
                                            pdfnormal[n](v[3L])/pdfnormal[n](v[0]) }
           else ((x0,v0), `d`)
\end{code}
\end{figure}

With the catalogue of techniques
explored in \cref{sec: tecniques on ms np-imcmc},
novel irreversible variants of
the NP-HMC algorithm can be formed.
Here we focus on
the NP-HMC with Persistence algorithm
which can be seen as a nonparametric extension
of the Generalised HMC algorithm
\cite{HOROWITZ1991247}.

\subsubsection{Generalised HMC}

\citet{HOROWITZ1991247} made two changes to
the conventional HMC algorithm in order to
generate an irreversible Markov chain
on $\Real^n \times \Real^n$
and improve its performance.
\begin{enumerate}[1.]
  \item
  A ``corrupted'' momentum is
  used to move round the state space.

  \item
  The direction is ``persisted''
  if the proposal is accepted;
  otherwise it is negated.
\end{enumerate}
The resulting sampler is called the
\emph{Generalised HMC} algorithm as it is a
generalisation of the typical HMC sampler.

As shown in \cite{DBLP:conf/icml/NeklyudovWEV20},
the Generalised HMC algorithm can be
presented as a composition of
an iMCMC algorithm that
``corrupts'' the momentum and
a Persistent iMCMC algorithm
that uses Hamiltonian dynamics to find a new state
with a persisting direction.
We consider a similar approach in
our construction of a nonparametric extension
of Generalised HMC.

\subsubsection{NP-HMC with Persistence}

\paragraph{State Density}

Let the state $(\enta, \auxa)\in\Real^n\times\Real^n$ has density
$\tree'(\enta, \auxa) := \tree(\enta) $
w.r.t.~the normal distribution $\Gau_{2n}$.
It is clear that this density
$\tree'$ is integrable (\cref{hass: integrable tree})
and almost surely terminating (\cref{hass: almost surely terminating tree}).
By setting the parameter index map
to $\iparsp(n) := 2n$
and parameter space $\nparsp{n} :=
\Real^n\times\Real^n$,
the state $(\enta, \auxa)$ of length $2n$
is a $n$-dimensional parameter variable.
\codein{HMCw} in \cref{code: gen nphmc}
is a SPCF implementation of $\tree'$.

\paragraph{Corrupt Momentum}

Given the current state $(\enta_0,\auxa_0) \in \Real^n\times\Real^n$
with direction $\dira_0\in\Bool$,
a new momentum is drawn from the
distribution
${\Gau_n}(\auxa_0 \sqrt{1-\alpha^2},\alpha^2)$
for a hyper-parameter $\alpha\in[0,1)$.

This can be presented in the NP-iMCMC format
with the auxiliary variable $\auxb$ sampled from
${\Gau_n}(\auxa_0\sqrt{1-\alpha^2},\alpha^2)$
and the swap
$(((\enta_0, \auxa_0), \dira_0),\auxb) \mapsto
(((\enta_0, \auxb), \dira_0),\auxa_0)$
as the involution.
Since the new state $(\enta_0, \auxb)$
always have an instance in
the support of $\tree'$,
and the acceptance ratio is
\begin{align*}
  \min\Big\{1,
    \frac
    {
      \tree'(\enta_0,\auxb)\cdot
      \pdfGau_{2n}(\enta_0,\auxb)\cdot
      \pdf{\Bool}(\dira_0)\cdot
      \pdfGau_n(\auxa_0 \mid
      \auxb \sqrt{1-\alpha^2},\alpha^2)
    }
    {
      \tree'(\enta_0,\auxa_0)\cdot
      \pdfGau_{2n}(\enta_0,\auxa_0)\cdot
      \pdf{\Bool}(\dira_0)\cdot
      \pdfGau_n(\auxb \mid
      \auxa_0 \sqrt{1-\alpha^2},\alpha^2)
    }
  \Big\}
  = 1,
\end{align*}
the extend step (\cref{np-imcmc step: extend})
and
the accept/reject step (\cref{np-imcmc step: accept/reject})
of the NP-iMCMC sampler
can both be skipped.
This results in a sampler that has the
SPCF implementation
\codein{CorruptMom} in \cref{code: gen nphmc}.

\paragraph{Persist Momentum}

We consider the Persistent Multiple Step NP-iMCMC
algorithm (\cref{sec: persistent ms np-imcmc})
with the target density
$\tree'$ as follows.

Given a $k_0$-dimensional
parameter
$(\enta_0, \auxa_0)\in \nparsp{n} := \Real^n \times\Real^n$ and
direction $\dira_0 \in \Bool$,
a \emph{dummy} auxiliary variable $\auxb \in
\nauxsp{n} := \Real^n$
is sampled from
$\nkernel{n}((\enta_0, \auxa_0), \placeholder)
:= \Gau_n$
to form an initial state
$((\enta_0, \auxa_0), \auxb)$.
Depending on the direction $\dira_0$,
either
$\big(
(\momstep_{\epsilon/2}\times\id_{\Real^n}) \circ
(\posstep_{\epsilon}\times\id_{\Real^n}) \circ
(\momstep_{\epsilon/2}\times\id_{\Real^n})
\big)^L$
or its inverse
is performed on
$((\enta_0, \auxa_0), \auxb)$,
one update at a time,
extending the dimension as required.
Say the initial state is extended to a
$n$-dimensional
$((\enta_0^*, \auxa_0^*), \auxb^*)$ and
is traversed to the $n$-dimensional new state
$((\enta^*, \auxa^*), \auxb^*)$.
Then, the instance
$(\enta, \auxa) \in \support{\tree'}$ of
the $n$-dimensional parameter
$(\enta^*, \auxa^*)$
is returned
alongside the direction variable $\dira_0$
with probability
\begin{align*}
  \min\bigg\{1; \;
  & \frac
    {
      \tree'{(\enta,\auxa)}\cdot
      \pdfGau_{n}(\enta^*)\cdot\pdfGau_{n}(\auxa^*)}
    {
      \tree'{(\enta_0,\auxa_0)}\cdot
      \pdfGau_{n}(\enta_0^*)\cdot\pdfGau_{n}(\auxa_0^*)}
  \bigg\}.
\end{align*}

Note that the auxiliary variable $\auxb$
has no effect on the sampler.
Hence, \cref{code: gen nphmc}
gives a SPCF implementation
\codein{PersistMom} where
the stochastic step (Step 2) is skipped.

\paragraph{NP-HMC with Persistence}

Composing the samplers
which ``corrupts''
and persists the momentum
gives us the \defn{NP-HMC with Persistence}
algorithm, which is an nonparametric
extension of Generalised HMC.
\cref{code: gen nphmc} gives the SPCF
implementation \codein{NPHMCwPersistent}
by composing \codein{CorruptMom}
and \codein{PersistMom}.

\subsection{Nonparametric Look Ahead Hamiltonian Monte Carlo}
\label{app: look ahead np-hmc}

\begin{figure}
\begin{code}[
  label={code: look ahead nphmc},
  caption={Pseudocode for the NP Look Ahead HMC algorithm}
  ]
NPLookAheadHMC((x0,v0),d0) = ExtraLeapfrog(CorruptMom((x0,v0),d0)))

def ExtraLeapfrog((x0,v0),d0):
  k0 = dim(x0)                                  # initialisation step
  u = uniform                                   # stochastic step
  # start of multiple step
  n = k0
  m = 0
  (x[m],v[m]) = (x0,v0)

  stop = False
  while not stop:
    j = 1
    M = j*3*L
    # perform a set of leapfrog steps, i.e. to compute (x[i],v[i]) for i in range(m,M)
    while m < M+1:
      (x[m],v[m]) = leapfrog[n][m][d0](x[m-1],v[m-1])          # deterministic step
      while not intersect(instance(x[m],v[m]),support(HMCw)):  # extend step
        x[0] = x[0] + [normal]
        v[0] = v[0] + [normal]
        for i in range(1,m+1):
          (y,u) = leapfrogslice[n+1][i][d0](x[i-1],v[i-1])
          (x[i],v[i]) = (x[i]+y, v[i]+u)
        n = n + 1
      m = m + 1
    (x,v) = intersect(instance(x[M],v[M]),support(HMCw))[0]
    if u > min{1,HMCw(x,v)/HMCw(x0,v0) *
                 pdfnormal[n](x[M])/pdfnormal[n](x[0]) *
                 pdfnormal[n](v[M])/pdfnormal[n](v[0]) }:
      if j <= J:
        # perform an extra set of leapfrog steps
        j = j + 1
      else:
        # no leapfrog steps is performed
        (x,v) = (x0,v0)
        stop = True
        d = d0
    else:
      # enough leapfrog steps are performed
      stop = True
      d = not d0
  # end of multiple step
  return ((x,v), `not d`)
\end{code}
\end{figure}

Last but not least, we extend the
Look Ahead HMC algorithm \cite{DBLP:conf/icml/Sohl-DicksteinMD14},
which is equivalent to
the Extra Chance Generalised HMC algorithm \cite{DBLP:journals/jcphy/CamposS15}.

\subsubsection{Look Ahead HMC}

The Look Ahead HMC sampler modifies the
Generalised HMC algorithm by
performing \emph{extra} leapfrog steps when the proposal state is rejected.
This has the effect of increasing the acceptance rate for each proposal.

To see Look Ahead HMC as an instance of
Persistent iMCMC,
we consider the involution
$\iinv$ on $\Real^n\times\Real^n\times [0,1)\times \Bool$ given by
\begin{align*}
  \iinv(\enta,\auxa,u,\true)
  & := \begin{cases}
    (\leapfrog^{j}(\enta,\auxa),\displaystyle\frac{u}{\sigma_j},\false)
    & \text{if }
    \max\{\sigma_i\mid i < j\} \leq u < \min\{1, \sigma_j\} \\
    (\enta,\auxa,u,\true)
    & \text{if }
    \max\{\sigma_j\mid j \leq J\} \leq u
  \end{cases} \\
  \iinv(\enta,\auxa,u,\false)
  & := \begin{cases}
    (\leapfrog^{-j}(\enta,\auxa),\displaystyle\frac{u}{\sigma_j'},\true)
    & \text{if }
    \max\{\sigma_i'\mid i < j\} \leq u < \min\{1, \sigma_j'\} \\
    (\enta,\auxa,u,\false)
    & \text{if }
    \max\{\sigma_j'\mid j \leq J\} \leq u
  \end{cases}
\end{align*}
where
\[
  \sigma_j :=
  \displaystyle\frac
    {\spdf(\leapfrog^j(\enta,\auxa))}
    {\spdf(\enta,\auxa)},
  \qquad
  \sigma_j' :=
  \displaystyle\frac
    {\spdf(\leapfrog^{-j}(\enta,\auxa))}
    {\spdf(\enta,\auxa)},
  \qquad
  \leapfrog^{-j} := (\inv{\leapfrog})^j
\]
and $\spdf$ is the state density in HMC.

Note that in the involution,
$u$ determines how many sets ($j$) of leapfrog steps
are to be performed.
Say the direction is $\true$.
If the values of $\min\{1,\sigma_j\}$
for $j = 1,\dots, J$ are marked
on the unit interval $[0,1]$,
then
the probability that
$\leapfrog^j$ is performed
can be represented by
the distance between $\min\{1,\sigma_j\}$
and the highest of $\sigma_i$ for $i < j$,
if it is non-negative.
\cref{fig: look ahead kernel}
gives an example of the result of
$\iinv(\enta, \auxa,u,\true)$
for varying $u \in [0,1]$.

\begin{figure}
\centering
\begin{tikzpicture}
  \draw[|-, thick] (-5,0) node[below,yshift=-5pt] {$0$}
  -- (-3,0) node[below,yshift=-5pt] {$\sigma_1$};
  \draw[|-, thick] (-3,0) -- (-1.5,0) node[below,yshift=-5pt] {$\sigma_3$};
  \draw[|-, thick] (-1.5,0) -- (2,0) node[below,yshift=-5pt] {$\sigma_2$};
  \draw[|-, thick] (2,0) -- (4,0) node[below,yshift=-5pt] {$\sigma_4$};
  \draw[|-|, thick] (4,0) -- (5,0) node[below,yshift=-5pt] {$1$};

  \draw[<->] (-5,0.7) node[above,xshift=1cm]
  {$(\leapfrog^1(\enta, \auxa),\frac{u}{\sigma_1}, \false)$} -- (-3,0.7);
  \draw[<->] (-3,0.5) node[above,xshift=3.5cm]
  {$(\leapfrog^2(\enta, \auxa),\frac{u}{\sigma_2}, \false)$} -- (2,0.5);
  \draw[<->] (2,1.3) node[above,xshift=1cm]
  {$(\leapfrog^4(\enta, \auxa),\frac{u}{\sigma_4}, \false)$} -- (4,1.3);
  \draw[<->] (4,0.5) node[above,xshift=0.5cm]
  {$(\leapfrog^0(\enta, \auxa),\frac{u}{\sigma_0}, \false)$} -- (5,0.5);
  \draw[-] (-1.5,0) -- (-1.5,1.5) node[above]
  {$(\leapfrog^3(\enta, \auxa),\frac{u}{\sigma_3}, \false)$};
\end{tikzpicture}
\label{fig: look ahead kernel}
\caption{Result of $\iinv(\enta, \auxa,u,\true)$
for varying $u \in [0,1]$.}
\end{figure}

The Look Ahead HMC sampler
can be formulated as a
Persistent iMCMC sampler
with
the auxiliary kernel
$\ikernel((\enta,\auxa),\placeholder) := \Uni[0,1)$
and above involution $\iinv$.
Note that
the sampler always accept the proposal since
for $u \in [\max\{\sigma_i\mid i < j\}, \min\{1, \sigma_j\})$
with $j \in \set{1,\dots,J}$,
the acceptance ratio is
\begin{align*}
  \min\{1,
    \frac
    {\spdf(\leapfrog^j(\enta, \auxa))}
    {\spdf(\enta,\auxa)}
    \cdot
    \abs{\det{\grad{\iinv(\enta, \auxa,u,\true)}}}
  \}
  & =
  \min\{1,
    \sigma_j \cdot
    \abs{(\det{\grad{\leapfrog^j(\enta, \auxa)}})
    \cdot \frac{1}{\sigma_j}}
  \}
  =
  1
\end{align*}
and for $u \in [\max\{\sigma_j\mid j \leq J\},1]$,
the acceptance ratio is also $1$.
A similar argument can be made when the direction
is $\false$.

\subsubsection{NP Look Ahead HMC}

\paragraph{Extra Leapfrog}

Similar to the NP-HMC with Persistence,
we consider
the Persistent Multiple Step iMCMC algorithm
(\cref{sec: persistent ms np-imcmc})
that applies a random number of leapfrog function
(or its inverse)
to the current state
with the target density $\tree'(\enta,\auxa) := \tree(\enta)$.

Given a $k_0$-dimensional
parameter
$(\enta_0, \auxa_0)\in \Real^n \times\Real^n$ and
direction $\dira_0 \in \Bool$,
a random variable $u\in[0,1)$ and
a \emph{dummy} auxiliary variable
$\auxb_0 \in \Real^n$
are sampled from
the uniform distribution $\Uni(0,1)$ and
$\nkernel{n}((\enta_0, \auxa_0), \placeholder)
:= \Gau_n$ respectively
to form an initial state
$((\enta_0, \auxa_0), (u,\auxb_0))$.

If the direction $\dira_0$ is $\true$ and
$u \in [\max\{\sigma_i\mid i < j\}, \min\{1, \sigma_j\})$
for some $j>0$ where
\[
\sigma_j :=
\frac
{\max\{\tree'(\traceb,\auxb) \mid
\traceb \in \instances{\leapfrog^j(\enta,\auxa)}\}
\cdot\pdfGau_{2n}(\leapfrog^j(\enta,\auxa))}
{\max\{\tree'(\traceb,\auxb) \mid
\traceb \in \instances{(\enta,\auxa)}\}
\cdot\pdfGau_{2n}(\enta,\auxa)},
\]
leapfrog steps
$(\id_{\Real^n\times\Real^n}\times(\frac{1}{\sigma_j})\times\id_{\Real^n})\circ
(\leapfrog^j\times\id_{[0,1)\times\Real^n})$
are performed on
$((\enta_0, \auxa_0), (u,\auxb_0))$,
one update at a time,
extending the dimension as required
with a flipped direction $\false$.
Otherwise,
$u \geq \max\{\sigma_j\mid j \leq J\}$ and
no leapfrog steps is performed;
$((\enta_0, \auxa_0), (u,\auxb_0))$ is returned
with the direction $\true$ remains unchanged.
The treatment when $\dira_0$ is $\false$
is similar.

Say
the initial state with direction $\dira_0$
is extended to a
$n$-dimensional
$((\enta_0^*, \auxa_0^*), (u,\auxb_0^*))$ and
is traversed to the $n$-dimensional new state
$((\enta^*, \auxa^*), (u^*,\auxb^*))$ with
direction $\dira$.
The instance
$(\enta, \auxa) \in \support{\tree'}$ of
the $n$-dimensional parameter
$(\enta^*, \auxa^*)$
is returned
alongside a flipped direction $\mathsf{not}\ \dira$
with probability
\begin{align*}
  \min\bigg\{1; \;
  & \frac
    {
      \tree'{(\enta,\auxa)}\cdot
      \pdfGau_{2n}(\enta^*,\auxa^*)\cdot
      \pdf{\Uni(0,1)}(u^*)\cdot
      \pdfGau_{n}(\auxb^*)}
    {
      \tree'{(\enta_0,\auxa_0)}\cdot
      \pdfGau_{2n}(\enta_0^*,\auxa_0^*)\cdot
      \pdf{\Uni(0,1)}(u)\cdot
      \pdfGau_{n}(\auxb_0^*)}
    \cdot
    \abs{\frac{1}{\sigma_j}}\cdot
    \abs{\det{\grad{\leapfrog^j}(\enta_0,\auxa_0)}}
  \bigg\} = 1
\end{align*}
if $j > 0$.
Otherwise ($j=0$),
the acceptance ratio is also $1$.

Note that the auxiliary variable $\auxb_0$
has no effect
on the sampler.
Hence, \cref{code: look ahead nphmc}
gives a SPCF implementation
\codein{ExtraLeapfrog} where
the sampling of the auxiliary variable $\auxb_0$ is skipped.

\paragraph{NP Look Ahead HMC}
Combining \codein{ExtraLeapfrog} with \codein{CorruptMom},
the \codein{NPLookAheadHMC} function
in \cref{code: look ahead nphmc}
implements the
NP Look Ahead HMC sampler.

\clearpage

\else
\fi

\lo{\paragraph{Convention}
\begin{compactenum}
  \item git pull before starting to make any changes (helps prevent merge conflicts)
  \item one sentence per line (makes diffs more readable, and \LaTeX\ backward search more precise)
  \item use \textbackslash changed[authorname] when making changes you want others to check;
  after checking, uncolour the changes
  \item use cleveref (\textbackslash cref command) for referencing (helps with consistency)
  \item use macros judiciously: only if used in several places; give them understandable names (or if that would be too long, add a comment to the definition)
  \item use \textbackslash[...\textbackslash] for equations, not \$\$...\$\$ (for consistent spacing)
  \item captions and titles of (sub)section, definitions, theorems, etc.: all lowercase except opening word
  \item Use $:=$ for definition, and highlight the definiendum using \textbackslash {\tt defn}.
\end{compactenum}}

\end{document}
